\DeclareMathOperator*{\argmin}{arg\,min}
\DeclareMathOperator*{\argmax}{arg\,max}
\newtheorem{theorem}{Theorem}
\newtheorem{lemma}[theorem]{Lemma}
\newtheorem{definition}[theorem]{Definition}
\newcommand{\E}{\mathbb{E}}
 \newcommand{\KL}{{\mathbf d}_{\rm KL}}
\newcommand{\I}{\mathbb{I}}
\newcommand{\Z}{\mathbb{Z}}
\newcommand{\Lc}{\mathcal{L}}
\newcommand{\diffentropy}{{\bf h}}
\newcommand{\normal}{\mathcal{N}}
\newcommand{\relu}{\mathrm{ReLU}}
\def\E{\mathbb{E}}
\def\H{\mathbb{H}}
\def\diffentropy{\mathbf{h}}
\def\I{\mathbb{I}}
\def\Pr{\mathbb{P}}
\def\R{\mathbb{R}}
\def\1{\mathbf{1}}
\def\subgauss{\nu^2}
\def\proxy{\tilde{\theta}}
\def\proxyset{\tilde{\Theta}}
\def\proxytheta{\tilde{\theta}}
\newcommand{\sphere}{\mathbb{S}^{d-1}}
\definecolor{red}{RGB}{255,0,0}
\definecolor{blue}{RGB}{0,0,255}
\definecolor{green}{RGB}{0,255,0}
\definecolor{orange}{RGB}{255,165,0}
\definecolor{purple}{RGB}{128,0,128}
\definecolor{teal}{RGB}{0,128,128}
\definecolor{x}{RGB}{255,102,102}
\definecolor{darkgreen}{rgb}{0.0,0.5,0.0}
\definecolor{lightgray}{RGB}{230, 230, 230}
\newmdenv[linecolor=lightgray, backgroundcolor=lightgray,innerbottommargin=15pt,innertopmargin=10pt,skipabove=20pt,splittopskip=20pt,splitbottomskip=15pt]{graybox}
\newenvironment{summary}
    {\begin{graybox}
    \addcontentsline{toc}{subsection}{Summary}
    \subsection*{Summary}
    }
    {
    \end{graybox}
    }
\title{Information-Theoretic Foundations for Machine Learning}
\author[1\footnote{Correspondence to \texttt{hjjeon@stanford.edu}.}]{Hong Jun Jeon}
\author[2,3]{Benjamin Van Roy}
\affil[1]{Department of Computer Science, Stanford University}
\affil[2]{Department of Electrical Engineering, Stanford University}
\affil[3]{Department of Management Science and Engineering, Stanford University}
\date{}
\begin{document}

\maketitle

\begin{abstract}
    The progress of machine learning over the past decade is undeniable.  In retrospect, it is both remarkable and unsettling that this progress was achievable with little to no rigorous theory to guide experimentation.  Despite this fact, practitioners have been able to guide their future experimentation via observations from previous large-scale empirical investigations.  However, alluding to Plato's Allegory of the Cave, it is likely that the observations which form the field's notion of reality are but shadows representing fragments of that reality.  In this work, we propose a theoretical framework which attempts to answer what exists outside of the cave.  To the theorist, we provide a framework which is mathematically rigorous and leaves open many interesting ideas for future exploration.  To the practitioner, we provide a framework with simple results that offer intuition to guide future investigations across a wide range of learning paradigms.  Concretely, we provide a theoretical framework rooted in Bayesian statistics and Shannon's information theory which is general enough to unify the analysis of many phenomena in machine learning.  Our framework characterizes the expected log-loss of an optimal Bayesian learner as it learns from a stream of experience.  Unlike existing analyses that weaken with increasing data complexity, our theoretical tools provide accurate insights across diverse machine learning settings.  Throughout this work, we derive theoretical results and demonstrate their generality by applying them to derive insights specific to multiple settings.  These settings range from learning from data which is independently and identically distributed under an unknown distribution, to data which is sequential, to data which exhibits hierarchical structure amenable to meta-learning, and finally to data which is not fully explainable under the learner's beliefs (misspecification).  These results are particularly relevant as we strive to understand and overcome increasingly difficult machine learning challenges in this endlessly complex world.
\end{abstract}

\clearpage

\tableofcontents

\clearpage

\section{Introduction}\label{sec:introduction}

% \hong{Paragraph outlining the incredible success of empirical machine learning}

From conquering games such as go, which were thought to require human-level learning and abstraction capabilities \citep{silver2016mastering}, to  producing systems which are capable of displaying common sense and holding coherent dialogues with humans around the globe \citep{achiam2023gpt}, the progress of machine learning in the past decade has far exceeded expectations.  It is undeniable that these artifacts will be remembered throughout the future of humanity's pursuit of understanding intelligence.

In retrospect, it is both remarkable and unsettling that these milestones were achievable with little to no rigorous theory to guide experimentation.  While theorists have attempted to repurpose existing statistical tools to analyze modern machine learning, the insights have largely been insufficient to explain empirical observations.  \cite{zhang2021understanding} aptly demonstrated this point via a series of simple experiments which elucidated the fundamental incompatibility of empirically observed phenomena with existing notions of generalization.  Despite this shortage in theoretical understanding, practitioners have been able to guide their future experimentation based on prior large-scale empirical investigations.  However, without a clear idea of how these phenomena slot into a larger picture, many research efforts will continue to be led astray.  Alluding to Plato's Allegory of the Cave, it is likely that the \emph{observations} which form the field's notion of reality are but shadows representing \emph{fragments} of that reality.  As scientists, we strive to reach understanding, and since existing theoretical frameworks have failed to provide this understanding, pursuit of a framework which does is an enticing and worthwhile endeavor.

In this work, we propose a theoretical framework which attempts to answer what exists outside of the cave.  To the theorist, we provide a framework which is mathematically rigorous and leaves open many interesting ideas for future exploration.  To the practitioner, we provide a framework whose results are very intuitive, general, and which will help form principles to guide future investigations.  Concretely, we provide a theoretical framework rooted in Bayesian statistics which is general enough to unify the analysis of many machine learning paradigms.  These settings range from classical learning from exchangeable data, to data which exhibits strong sequential or hierarchical structure.  We also consider learning under misspecification, when the learning model is fundamentally incongruent with reality, a learning setting which becomes all the more pertinent as we tackle ever more complex machine learning problems.

Our theoretical framework draws inspiration from Shannon's theory of information \emph{and} his maxim of ``information first, then computation''.  
The turn of the twentieth century brought a wave of interest in communications research; work that would enable the transmission of signals across long distances.  Much of the work in encoding/decoding was approached heuristically, similarly to how deep learning is today.  Shannon's theory and maxim redirected attention to characterizing what was fundamentally \emph{possible} or \emph{impossible}, in the absence of computational constraints.  His theory guided the discovery of algorithms that achieved these fundamental limits and eventually practical implementations as well.

The aforementioned feats of machine learning and artificial intelligence have fueled optimism that anything is learnable with sufficient data and computation.  However, research directions have largely been informed by informal reasoning supported by a plethora of empirical studies.  While work in statistics provides some guidance, the results for the most part lack the generality required to explain the continuing onslaught of novel empirical findings.  This monograph aims to provide a general framework to elucidate what is possible by studying how the limits of performance in machine learning depend on the informational content of the data.  Our framework is based on Shannon's information theory and characterizes the dependence of performance on information in the absence of computational constraints.  We begin by characterizing the performance of an optimal Bayesian learner that observes data generated by a suite of data generating processes of increasing complexity.  Performance is measured by the expected log-loss, of which the reducible component is equal to the expected KL divergence between the clairvoyant predictive distribution will full knowledge of the data generating measure and the predictive distribution produced by our learner.  By characterizing the performance of the optimal Bayesian learner, we express what is fundamentally possible in machine learning and develop intuition that can \emph{guide} fruitful investigation. 

Unlike existing analyses which weaken with increasing data complexity, our theoretical tools provide accurate insights across diverse machine learning settings.  For example, previous theories about learning from sequential data often rely on specific and rigid mixing time assumptions.  In this monograph, we outline the work of \citet{jeon2024informationtheoretic} which leverages information-theoretic tools to characterize the sample complexity of learning from sequences which are autoregressively generated by transformers \citep{vaswani2017attention}.  We also present the extensions which enable the analysis of hierarchical data generating processes which resemble meta-learning and in-context learning in large language models (LLMs).  The fact that these analytic techniques remarkably apply whether data is exchangeable or exhibits more complex structure provides evidence that our findings are fundamental.

In recent years, we have observed that training larger machine learning models on more data continues to produce \emph{significantly} better performance.  
This continual improvement indicates that the data generating processes that we study are \emph{more complex} than the machine learning models which we fit.
We refer to this phenomenon as ``misspecification'' and it is prominently observed in natural language processing (NLP), where researchers have tried to mathematically characterize this improvement in performance \citep{kaplan2020scaling, hoffmann2022training}.  The ``neural scaling laws'' of \cite{kaplan2020scaling} and \cite{hoffmann2022training} characterize the rate at which out-of-sample log-loss decreases with increases in available compute and data.  While these works provide extensive empirical experimentation, their cursory mathematical analyses leave open many questions regarding how scaling laws change depending on the complexity of the data-generating process.  \cite{jeon2024informationtheoreticfoundationsneuralscaling} use the theoretical tools from this monograph to rigorously characterize the error incurred by a misspecified algorithm.  This monograph expands upon these results and delivers an improved theory which characterizes the error of a much more natural misspecified learner: one which performs Bayesian inference with respect to a misspecified prior distribution.  With these results, we study a data-generating process that is identified by a single hidden-layer network of \emph{infinite} width and characterize how an algorithm with finite compute budget should optimally allocate between parameter count and dataset size.  These results notably are consistent (up to logarithmic factors) with the findings of \cite{hoffmann2022training}, where the optimal dataset size and parameter count exhibit a linear relationship.

Despite the fact that our theory does not address computational constraints, empirical studies with neural networks suggest that practical stochastic gradient algorithms suffice to attain the tradeoffs that our theory establishes between information and performance \citep{zhu2022stochastic}. 
 Throughout this work, we derive theoretical results and demonstrate their generality by applying them to derive insights specific to settings ranging from data which is iid under an unknown distribution to data which is sequential to data which exhibits hierarchical structure amenable to meta-learning.  We conclude with a section dedicated to characterizing the performance of \emph{suboptimal} algorithms that are based on the aforementioned \emph{misspecified} models, an exciting and relevant direction for future work.

\section{Related Works}\label{sec:related_works}

\subsection{Frequentist and Bayesian Statistics}
We begin with a discussion of frequentist statistics, the predominant framework which encompasses existing theoretical results.  As its name would suggest, in frequentist statistics, probability describes how often an event occurs if a procedure is repeated many times.  For instance, suppose there exists an \emph{unknown} parameter $\theta \in \Re$ and a sample of size $T$: $(X_1, X_2, \ldots, X_T)$ which is drawn iid $\normal(\theta, 1)$.
After observing the sample, the frequentist statistician may construct a $95\%$ confidence interval for the unknown $\theta$.  However, recall that in frequentism, probability is assigned to how often an event occurs if a procedure is repeated many times.  For our example, this entails that if random samples of size $T$ were drawn repeatedly and their corresponding confidence intervals constructed, then $95\%$ of those confidence intervals would contain $\theta$.  Note that the unknown parameter $\theta$ is \emph{fixed} and hence not a \emph{random variable} in the frequentist framework.  As a result, the frequentist framework does not use the tools of probability to model uncertainty pertaining to $\theta$.

In contrast, Bayesian statistics treats \emph{all} unknown quantities as random variables.  A consequence is that these quantities must be assigned subjective probabilities which reflect one's prior beliefs pertaining to their values.  Returning to our example, the Bayesian may assign a prior distribution $\Pr(\theta\in\cdot) = \normal(0, 1)$ which reflects their beliefs prior to observing the sample.  After observing the sample $(x_1, x_2, \ldots, x_T)$, they may construct a 95\% \emph{credible} interval for $\theta$, an interval $(a,b)$ for which $\Pr(\theta\in(a,b)|X_1=x_1, X_2=x_2, \ldots, X_T=x_T) \geq 0.95$.  The posterior distribution $\Pr(\theta\in\cdot|X_1=x_1, X_2=x_2, \ldots, X_T=x_T)$ is computed via Bayes rule.  Note that unlike the frequentist confidence interval which pertains to repeated experimentation, the Bayesian credible interval states that for this particular sample, with 95\% probability, $\theta \in (a,b)$.  However, we note that this probability is \emph{subjective} as it relies upon the prior subjective probability $\Pr(\theta\in\cdot)$.  While this \emph{subjectivism} has been a topic of constant philosophical debate, we note that in the realm of decision theory, it is well known that the choices of a decision maker which abides by axioms of rationality can be explained as the result of a utility function and subjective probabilities assigned to events \citep{savage1972foundations}.

While the debate surrounding these two schools of thought has continued for almost a century, it is prudent to consider the purpose for such theory.  We are reminded of Laplace's prudent remark that ``Probability theory is nothing but common sense reduced to calculation''.  Theory's merit ought to stem from the results it can provide for specific problems \citep{jaynes1976confidence}.  Jaynes and Kempthorne espoused this viewpoint as their background in physics led to their interest in the use of probability to describe and predict phenomena of the physical world.  Machine learning too is rooted in predictions based on data produced by the physical world.  Therefore, we argue that the merits of machine learning theory also ought to stem from its ability to describe and predict phenomena of data generated by the physical world.  To this end, we believe that the results which we derive via our framework both better reflect what is observed empirically and are also general enough to unify many disparate areas of the field.

\subsection{PAC Learning}

The majority of existing theoretical results for the analysis of modern machine learning are set in the probably approximately correct (PAC) learning framework \citep{valiant1984theory}.  In PAC learning, an algorithm is presented with a sample of data and is tasked with returning a hypothesis from a hypothesis set which can accurately perform predictions out-of-sample.  The phrase \emph{probably approximately correct} comes from the detail that these results are phrased as follows: ``For any data distribution, with probability at least $1-\delta$ over the randomness of an iid sample, the excess out-of-sample error is $\leq \epsilon$''.  ``Probably'' refers to the $1-\delta$ probability and ``approximately correct'' the $\epsilon$ tolerance of out-of-sample error.  While this framework has facilitated the development of influential theoretical concepts such as VC dimension \citep{vapnik2013nature} and Rademacher complexity \citep{bartlett2002rademacher}, \cite{zhang2021understanding} have demonstrated that these tools are inherently insufficient to explain modern empirical phenomena.  Namely, they demonstrate empirically that while the Rademacher complexity of a deep neural network leads to vacuous theoretical results, the observed out-of-sample error of these deep neural networks is actually small.

We posit that the looseness of these theoretical results stems from the fact that they hold for \emph{any} data distribution and \emph{uniformly} over the hypothesis set.  While it is true that Rademacher complexity depends on the distribution of the inputs, it does \emph{not} depend on the joint distribution of the input \emph{and} outputs.  Meanwhile, data which we observe from the real world clearly contains inherent structure between input and output which facilitates sample-efficient learning.  Suppose we perform binary classification with input $X\sim \normal(0, I_d)$.  In case $1$, consider a data generating process in which the corresponding class $Y$ only depends on the first element of $X$.  In case $2$, consider a data generating process in which $Y$ depends on all $d$ elements of $X$.  Common sense would dictate that if we observed an equal number of samples from each data generating process and considered the same hypothesis spaces, the \emph{generalization error} of case $1$ ought to be lower than that of case $2$.  However, an analysis via VC dimension or Rademacher complexity would result in the \emph{same} generalization bound for case $1$ and $2$.  This problem is exacerbated by high dimensional input distributions and overparameterized hypothesis classes, both of which are prevalent qualities of deep learning.  Both empirical \citep{jiang2019fantastic} and theoretical \citep{gastpar2023fantastic} analyses of this subject have demonstrated that there are no generalization bounds which are \emph{uniformly} tight across all data distributions for overparameterized neural network classes.

% We posit that the looseness of these theoretical results stem from the fact that PAC learning results hold for \emph{any} data distribution and \emph{uniformly} over the hypothesis set.  Meanwhile, data which we observe from the real world clearly contains inherent structure.  If the same PAC result which is applied to real data must \emph{also} hold for data which is structureless, it is only natural that as the dimension of the data increases and the size of the hypothesis set increases, there is virtually nothing we can guarantee.

Several lines of analysis have attempted to ameliorate this via \emph{data dependent} bounds.  While these results can be derived for any data distribution, the \emph{choice} of data distribution will impact the resulting error bound.  Therefore, such a result will be vacuous (as expected) for a problem instance with unstructured data, but potentially much tighter for one which exhibits structure.  The main frameworks for data dependent PAC results involve PAC Bayes \citep{mcallester1998some} and the information-theoretic framework of \cite{xu2017information, hellström2024generalizationboundsperspectivesinformation}.  Both frameworks involve an algorithm which produces a predictive distribution of the hypothesis conditioned on the observed data (hence they analyze a Bayesian algorithm under the PAC framework).  While PAC Bayes results typically hold with high probability over random draws of the data while the information-theoretic results typically hold in expectation over random draws of the data.  Therefore, these results upper bound generalization error via the KL divergence or the mutual information between the observed data and the hypothesis respectively.

These data dependent results mark a significant step in understanding the puzzling empirical success of deep learning.  Notably, \cite{dziugaite2017computing} establish PAC-Bayes results for deep neural networks which result in bounds that dramatically improve upon those based on parameter count or VC dimension.  These results reflect the importance that the \emph{data generating process} has on the generalization error.  However, a limitation is the lack of theoretical tools which facilitate analytic derivations.  Namely, the aforementioned KL divergence/mutual information which bound generalization cannot be computed analytically outside of simple problem instances.  This is because these quantities involve the posterior distribution of the hypothesis conditioned on the data, which cannot be expressed analytically outside of simple instances which exhibit a conjugate distribution.  In contrast, our results analyze these quantities in a Bayesian setting, allowing us to develop general tools to bound mutual information analytically without needing to write down these complicated posterior distributions.  The conciseness and generality of these results lead us to believe that they are fundamental.

\subsection{Information Theory}

The results of this work elucidate the tight relation between error in learning and information measures of Shannon's theory \citep{shannon1948mathematical}.  Our framework characterizes the expected KL divergence of an optimal Bayesian learner.  For the case of learning from data $(X_1,\ldots,X_T)$ drawn iid $\mathcal{N}(\theta,1)$, the error is \emph{exactly} $\I(\theta;X_{0:T-1})/T$.  This result is well known and a special case of Theorem \ref{th:error_info}, which we will establish in Chapter \ref{sec:info_learn}.  Analogues of this result arise in the minimax setting via the well-known redundancy capacity theorem.  However, the mutual information between observed data and model parameters can be difficult to characterize for complex problem settings.  In this work, we derive upper \emph{and} lower bounds on in terms of the \emph{rate-distortion function} of $\theta$, a generalization of metric-entropy.  This rate-distortion function facilitates analysis as it is often simple to bound even for complex problem settings.

The learning setting which we explore in this work follows an existing line of work in universal prediction \citep{universalprediction}. Results of similar flavor to ours have been established in the minimax setting in \citep{yangbarron1999}.  This work characterizes the minimax error in the same problem setting in terms of the metric entropy and packing entropy.  We provide a detailed overview of these results in section \ref{subsec:minimax}.  While there is substantial similarity in the results, the practical significance of our methods comes from the relative ease in which priors can be specified in the Bayesian setting in comparison to the difficulty of specifying a suitable hypothesis set in the frequentist (minimax) setting.  While in theory, one may be able to devise a suitable hypothesis set which provides comparable sample complexity results to those of the Bayesian setting, in practice, this pursuit may be impossible for complex problems (such as those pertaining to deep neural networks).

A widely known framework involving information theory and machine learning is the information bottleneck method \citep{tishby2000information}.  On the surface, this work exhibits many similarities to ours as it draws a connection between information theory, rate-distortion theory and machine learning.  However, the two works diverge in their purpose.  The information bottleneck framework describes a learning objective rooted in information theory and prescribes a learning algorithm to solve this optimization problem.  While they leveraged their framework to produce early work on generalization in deep neural networks \citep{tishby2015deep}, the results remained very abstract.  While they devised metrics which they approximate empirically, they do not provide theoretical tools to analyze these metrics analytically.  In contrast, we present our framework with a collection of theoretical tools which facilitate analytic solutions.  This is an important property of a theoretical framework as it allows the researcher to \emph{forecast} what ought to be possible in practice.

As alluded to above, there exist notable information-theoretic generalization results in the PAC learning framework introduced by \cite{russo2019much} and expanded upon by \cite{xu2017information}.  While our work shares similar analytic techniques, we are able to strengthen the results and provide more streamlined theoretical tools by framing results in a Bayesian setting.  In particular, this framing allows us to upper \emph{and} lower bound the mutual information between the data and the data-generating function via the rate-distortion function, which we can characterize analytically for even complex data generating processes.  We find the results of our framing to be fundamental and we hope that the readers share this sentiment.  We also hope that they provide the reader with a new perspective on machine learning.  We note that the work of \citet{pmlr-v178-sefidgaran22a} developed independently but concurrently with \citep{NEURIPS2022_15cc8e4a} provides this rate-distortion characterization to upper bound the KL divergence/mutual information alluded to above in the frameworks of \citep{russo2019much} and \citep{xu2017information}.  Notably missing from this work is the matching lower bound in terms of the rate-distortion function and the application of these results to study complex problem instances involving neural networks and sequential and/or hierarchical data.

The recent advances in LLMs have incited an interest in the connection between learning and \emph{compression}.  In particular, researchers have posited that models which are better able to compress the observed data will achieve lower out-of-sample error.  This point is conveyed in \citep{deletang2024language}.  However, they do not provide a mathematically rigorous connection between learning and compression.  In this work we establish a rigorous connection between learning and optimal \emph{lossy} compression.  The loss incurred by an optimal Bayesian learner is upper and lower bounded by appropriate expressions containing the rate-distortion function (characterization of optimal lossy compression).  For this community, we hope that our work provides clarity to this matter and informs future experimentation and algorithm design. 

\section{A Framework for Learning} \label{sec:objective}

\subsection{Probabilistic Framework and Notation}\label{subsec:probability}
In our work, we define all random variables with respect to a common probability space $(\Omega, \mathbb{F}, \Pr)$.  Recall that a random variable $\theta$ is a measurable function $\Omega\mapsto\Theta$ from the sample space $\Omega$ to a set $\Theta$.

The probability measure $\Pr:\mathbb{F} \mapsto [0,1]$ assigns probabilities to events in the $\sigma-{\rm algebra}$ $\mathbb{F}$.  In particular, for any event $F \in \mathbb{F}$, $\Pr(F)$ denotes the probability of the event.  For events $F,G\in \mathbb{F}$ for which $\Pr(G) > 0$, $\Pr(F|G)$ denotes the probability of event $F$ conditioned on event $G$.

For each realization $z$ of a random variable $Z$, $\Pr(Z=z)$ is hence a function of $z$.  We denote the value of this function evaluated at $Z$ by $\Pr(Z)$.  Therefore, $\Pr(Z)$ is a random variable (since it takes realizations in $[0,1]$ depending on the value of $Z$).  Likewise for realizations $(y,z)$ of random variables $Y,Z$, $\Pr(Z=z|Y=y)$ is a function of $(y,z)$ and $\Pr(Z|Y)$ is a random variable which denotes the value of this function evaluated at $(Y,Z)$.

If random variable $Z:\Omega\mapsto\Re^K$ has density $p_Z$ w.r.t the Lebesgue measure, the conditional probability $\Pr(F|Z=z)$ is well-defined despite the fact that for all $z$, $\Pr(Z=z) = 0$.  If function $f(z) = \Pr(F|Z=z)$ and $Y:\Omega\mapsto\Re^K$ is a random variable whose range is a subset of $Z$'s, then we use the $\leftarrow$ symbol with $\Pr(F|Z\leftarrow Y)$ to denote $f(Y)$.  Note that this is different from $\Pr(F|Z=Y)$ since this conditions on the event $Z=Y$ while $\Pr(F|Z\leftarrow Y)$ indicates a change of measure.

For any random variable $\theta$, we use the notation $\Pr(\theta\in\cdot)$ to denote the \emph{distribution} of that random variable i.e. $\Pr(\theta\in\cdot) = \Pr(\theta^{-1}(\cdot))$, where $\theta^{-1}(\cdot)$ denotes the pre-image of $\cdot$ (the pre-image must be $\in \mathbb{F}$ due to measurability).  We make a clear distinction between a random variable and its distribution in this way to provide accurate definitions of information-theoretic quantities later in this work.  As mentioned in the introduction, our framework is Bayesian, and hence uses the tools of probability to model uncertainty about the unknown value of a variable of interest (for instance $\theta$).  This involves treating $\theta$ as a random variable with a prior distribution $\Pr(\theta\in\cdot)$ which encodes the designer's prior information about the value of this variable.  The designer will often never directly observe $\theta$, but rather a stream of \emph{data} which will contain information about $\theta$.  Machine Learning is therefore the process of reducing uncertainty about $\theta$ in ways that are relevant for making better predictions about the future of this data stream.

\subsection{Data Generating Process}\label{sec:continual_interaction}

In machine learning, we are interested in discerning the relationship between input and output pairs $(X, Y)$.  Most frameworks of machine learning focus on a static dataset of fixed size and hope to characterize the performance of a predictive algorithm which leverages the information from the dataset for future predictions.  However, any practical system will continually have access to additional observations as it interacts with the environment.  As a result, it is prudent to consider a framework in which the data arrives in an online fashion and the objective is to perform well across all time.

We consider a stochastic process which generates a sequence $((X_t, Y_{t+1}): t \in \Z_{+})$ of data pairs.  For all $t$, we let $H_t$ denote the history $(X_0, Y_1, \ldots, X_{t-1}, Y_{t}, X_t)$ of experience.  We assume that there exists an underlying latent variable $\theta$ which induces a conditional probability measure $\theta(\cdot|H_t) = \Pr(Y_{t+1}\in\cdot|\theta, H_t)$ to the next label $Y_{t+1}$, given $H_t$.  We further assume that for all $t$, $X_{t}\perp \theta|(H_{t-1}, Y_t)$ and $(X_0, X_1, \ldots) \perp \theta$.  In the case of an \emph{iid} data generating process, this conditional probability measure would only depend on $H_t$ via $X_t$.  Furthermore, such a latent variable $\theta$ must exist under an infinite exchangeability assumption on the sequence $((X_t, Y_{t+1}):t \in \Z_{+})$ by de Finetti's Theorem.  While we will first focus on this iid setting, we will also study learning settings in which the future data may be arbitrarily dependent on $H_t$ even when conditioned on $\theta$.  As our framework is Bayesian, we represent our uncertainty about $\theta$ by modeling it as a random variable with prior distribution $\Pr(\theta\in\cdot)$.

\subsection{Log-Loss}

Log-loss, also commonly referred to as {\it cross-entropy loss}, is the predominant loss function used to train deep neural networks.  The log-loss of a predictive distribution $P$ for a random variable $Y$ is $-\ln P(Y)$.  Our theoretical framework focuses on the analysis of expected log-loss.

A natural question is why we restrict our attention to the log-loss. After all, many theoretical frameworks are compatible with a wide range of loss functions.  The reason is that the log-loss uniquely sits at the intersection of several important considerations: empirical relevance, proper learning, and analytic elegance.

Further, the log-loss is a (strictly) \emph{proper} loss function.  This means that the data-generating distribution uniquely minimizes the expected log-loss:
$$\inf_{P}\ \E\left[-\ln P(Y_{t+1})|\theta, H_t\right] \ = \ \E\left[-\ln \Pr(Y_{t+1}|\theta, H_t)|\theta, H_t\right].$$
The fact that this loss function is proper ensures convergence of predictions to the data generating distribution.

Finally, as we will see in Section \ref{sec:avg_reward}, minimizing the expected log-loss is equivalent to minimizing the expected KL divergence between the data generating and predictive distributions.  The KL divergence, by virtue of being an f-divergence, exhibits many convenient analytic properties such as chain rules and data processing inequalities.  And among f-divergences, except for in degenerate cases, the KL divergence is the only f-divergence that offers a proper loss function \citep{binaryAlphabet,duchi2016lecture}.

\subsection{Error}\label{sec:avg_reward}

Minimizing the expected log-loss is equivalent to minimizing the expected KL divergence because, as we will see, the two differ by a constant.  But the expected KL divergence is better suited as a measure of error because its infimum is zero.  As such, we will express results in terms of the KL divergence instead of the log-loss.  For a countable range $\mathcal{Y}$, the KL divergence is defined as follows.

\begin{definition}{\bf (KL divergence)}\label{def:kl}
    Let $P$ and $Q$ be probability mass functions on $\mathcal{Y}$.  The KL divergence between $P$ and $Q$ is:
    $$\KL(P\| Q) = \sum_{y\in\mathcal{Y}} P(y) \ln \frac{P(y)}{Q(y)}.$$
\end{definition}

Note that for uncountable $\mathcal{Y}$, the KL divergence is defined by replacing $P$ and $Q$ with their appropriate Radon-Nikodym derivatives and replacing the sum with an integral.  We now establish the equivalence of minimizing expected log-loss and KL divergence:
\begin{lemma}{\bf (minimizing expected log-loss equals minimizing KL divergence)}
    For any random variable $Y:\Omega\mapsto\mathcal{Y}$,
    $$\argmin_{P}\ \E\left[- \ln P(Y) \right]  = \argmin_{P}\ \KL(\Pr(Y\in\cdot) \| P$$
\end{lemma}
\begin{proof}
    \begin{align*}
        \argmin_{P}\ \E\left[- \ln P(Y) \right]
        & = \argmin_{P}\ -\sum_{y\in \mathcal{Y}} \Pr(Y=y) \ln P(y)\\
        & \overset{(a)}{=}  \argmin_{P}\ \sum_{y\in \mathcal{Y}} \Pr(Y = y) \ln \Pr(Y=y)  - \sum_{y\in \mathcal{Y}} \Pr(Y=y) \ln P(y) \\
        & = \argmin_{P} \KL(\Pr(Y = \cdot) \| P),
    \end{align*}
    where $(a)$ follows from the fact that the first term does not depend on $P$.
\end{proof}
Going forward, we will focus on KL divergence as our primary notion of error.

\subsection{Achievable Error}
For all $t\in \Z_{+}$, our algorithm is tasked with providing a predictive distribution $P_t$ of $Y_{t+1}$ which may be derived from the history of data $H_t$ which has already been observed.  We denote such an algorithm as $\pi$.  For each $t$, this algorithm produces a predictive distribution $P_t = \pi(H_t)$.  To denote the clairvoyant predictive distribution, we use $P^*_t = \Pr(Y_{t+1}\in\cdot|\theta, H_t)$.  We will study the average error relative to this clairvoyant prediction over each horizon $T$:
$$\Lc_{T, \pi} = \frac{1}{T}\sum_{t=0}^{T-1} \E_{\pi} \left[ \KL\left(P^*_t \| P_t \right)\right].$$

As discussed in Section \ref{subsec:probability}, we define all random variables with respect to a common probability space.  As a result, the expectation operator $\E$ integrates over all random variables which we do not condition on.  We use the subscript $\pi$ in $\E_{\pi}$ to indicate the dependence of $P_t$, and thus the expectation, on $\pi$.  As $Y_{t+1}$ is the random variable which represents the next label that is generated by the underlying stochastic process, $P_t(Y_{t+1})$ denotes the probability that our prediction $P_t$ assigned to label $Y_{t+1}$.

Since we are interested in characterizing the limits of what is possible via machine learning, a natural question which arises is: For all $T$, which $\pi$ minimizes $\Lc_{T,\pi}$?  Since log-loss is a proper scoring rule, the optimal algorithm $\pi$ is one such that for all $t$, $P_t = \Pr(Y_{t+1}\in\cdot|H_t)$.  This distribution is commonly referred to as the posterior predictive distribution and going forward we denote it by $\hat{P}_t$.  The following result establishes optimality of $\hat{P}_t$.

\begin{lemma}{\bf (posterior predictive distribution is optimal)}\label{le:bayes_opt}
    For all $t \in \Z_{+}$, 
    $$\hat{P}_t \overset{a.s}{=}\ \argmin_{\pi}\ \E_{\pi}\left[ \KL\left(P^*_t\| P_t\right) \mid H_t\right].$$
\end{lemma}
\begin{proof}
    \begin{align*}
        \argmin_{\pi}\ \E_{\pi}\left[\KL\left(P^*_t\| P_t\right) \mid H_t\right]
        & \overset{a.s.}{=} \argmin_{\pi}\ \E_{\pi}\left[\KL(P^*_t\|\hat{P}_t) + \ln\frac{\hat{P}_t(Y_{t+1})}{P_t(Y_{t+1})}\bigg| H_t\right]\\
        & \overset{a.s.}{=} \argmin_{\pi}\ \E\left[\KL(P^*_t\|\hat{P}_t) | H_t\right] + \KL(\hat{P}_t\|P_t).
    \end{align*}
    The result follows from the fact that KL divergence is non-negative and the tower property.
\end{proof}

This result is rather convenient since it prescribes that across all problem instances, the optimal prediction is $\hat{P}_t$.  This is widely considered an advantage of Bayesianism over frequentism; the Bayesian need not specify an ad hoc algorithm to analyze/solve a problem.  While in practice it may be intractable to compute $\hat{P}_t$ exactly, for the purposes of characterizing \emph{achievable} error, it provides immense utility.  Going forward, we will restrict our attention to the \emph{optimal} achievable error which we denote by:
$$\Lc_T = \frac{1}{T}\sum_{t=0}^{T-1} \E\left[\KL(P^*_t \|\hat{P}_t )\right].$$

The process of learning should result in $\Lc_T$ vanishing as $T$ increases.  While we have established that Bayesian inference provides optimal predictions at each timestep, in its current form, it is unclear how to characterize $\Lc_T$ for problems in which the posterior distribution does not exhibit an analytic form.  In section \ref{sec:info_learn}, we will establish the connection between our learning framework and information theory.  This connection will facilitate the analysis of \emph{arbitrary} learning problems, even those for which $\hat{P}_t$ cannot be expressed analytically.

{\bf Remarks on Optimal Learner:}\\
To the reader, it may be unclear why we are concerned with characterizing the performance of an optimal learner.  After all, the predominant methods which are used to train deep neural networks are stochastic gradient methods.  The first reason is that studying an optimal learner allows us to understand the fundamental limits of \emph{information}.  This follows the maxim of Claude Shannon: ``information first, then computation'', alluding to the idea that performing mathematical analysis via a pristine framework which obviates away the inefficiencies and complexities of computation can provide clear intuition and targets that practical algorithms can aim for.

This point of view is not new in the realm of statistics, as many analyses in the frequentist framework study idealized learners such as the empirical risk minimizer or maximum likelihood estimator.  As for the specific posterior predictive distribution which we study, theoretical analyses exist in the frequentist setting through the work of \citet{clarkebarron1990} and the PAC-Bayesian framework \citep{NIPS2016_84d2004b}.

Despite the aforementioned ignorance with regards to computation, there is evidence that under certain problem settings, the theoretically optimal sample complexity rates are achieved via stochastic gradient methods.  For uniformly strongly convex problems, stochastic gradient methods with Polyak averaging achieve the optimal sample complexity rates suggested by the Cramer-Rao lower bound \citep{gadat2017optimalnonasymptoticboundruppertpolyak}.  Beyond this setting, the empirical analysis of \citet{zhu2022stochastic} has explored the effectiveness of Adam optimizer for learning from data which is generated by a random single-hidden layer teacher network.  When the width of the student network is optimized, Adam optimizer achieves the sample complexity rates prescribed by prior information-theoretic results \citep{NEURIPS2022_15cc8e4a} (also presented in Section \ref{sec:dnn}).  Further exploring where the performance of stochastic gradient methods and theoretical limits may agree or diverge is an interesting direction for future research.

In the following section, we overview requisite definitions and tools from information theory to establish the connection between learning and information theory.  For readers who are new to information theory, we provide the following section for completeness.  Even for readers who are familiar with information theory, there may be details or results in the following section that are worth revisiting.
\newpage
\begin{summary}
\begin{itemize}
\item The algorithm's observations through time $t$ form a {\bf history} $H_t = (X_0,Y_1,\ldots X_{t-1},Y_{t},X_t)$.
\item The algorithm $\pi$ produces for all $t$ a predictive distribution $P_t$ of $Y_{t+1}$ given the history $H_t$.
\item We denote the clairvoyant prediction via $P^*_t = \Pr(Y_{t+1}\in\cdot|\theta, H_t)$.
\item For any horizon $T \in \Z_{+}$, we define the error of an algorithm $\pi$ as
$$\Lc_{T, \pi} = \frac{1}{T}\sum_{t=0}^{T-1} \E_{\pi} \left[ \KL\left(P^*_t \| P_t \right)\right].$$
\item We denote the posterior predictive distribution by $\hat{P}_t = \Pr(Y_{t+1}\in\cdot|H_t)$.
\item $\pi$ which generates the prediction $\hat{P}_t$ for each $t\in \mathbb{Z}_{+}$ minimizes $\Lc_{T,\pi}$.
\item We denote the estimation error incurred by the optimal algorithm by
$$\Lc_T = \frac{1}{T}\sum_{t=0}^{T-1} \E\left[\KL\left(P^*_t \|\hat{P}_t \right)\right].$$
\end{itemize}
\end{summary}
\clearpage

\section{Requisite Information Theory}\label{sec:info_theory}

In this section we outline definitions and results from information theory which we will refer to in later sections of this monograph.  For a comprehensive overview of the topic, we point the reader to \citep{cover2012elements}.

\subsection{Entropy}
In this text, $\H(X)$ denotes the \emph{entropy} of a discrete random variable $X: \Omega\mapsto\mathcal{X}$.  $\H$ is defined as follows:
$$\H(X) = \sum_{x \in \mathcal{X}} \Pr(X=x) \ln \frac{1}{\Pr(X=x)}.$$
Throughout this monograph, we use the convention that $0\ln 0 = 0$.  While there are many colloquial interpretations of entropy which describe it as the expected ``surprise'' associated with outcomes of a random variable, we provide a concrete motivation for entropy based on coding theory.

We begin our exposition of entropy with an introduction to coding theory.  We first define a code for a random variable.

\begin{definition}
    A code $C$ for random variable $X:\Omega\mapsto\mathcal{X}$ is a function that maps $\mathcal{X}\mapsto \{0,1\}^*$, where $\{0,1\}^*$ denotes the set of all binary strings. 
\end{definition}

When we send a text message to our friend, the characters that comprise our message can be thought of as the realizations of random variables.  In many applications involving digital data transfer, a message (outcome of a random variable) is encoded into a binary string which is passed through a communication channel, and decoded at an endpoint.  Since the binary string arrives in a stream, it would be convenient if the message could be uniquely decoded as the data is arriving.  A necessary and sufficient condition for online decodability is that the code $C$ is \emph{prefix-free}; i.e., no element in the image of $C$ is a \emph{prefix} of another element in the image of $C$.  We use $\mathcal{C}_X$ to denote the set of prefix-free codes for a random variable $X$.

Since these codes are stored, transmitted, and decoded, the memory footprint becomes a significant design consideration.  A natural question which arises is: ``how do we devise \emph{optimal} prefix-free codes?''  The notion of optimality which gives rise to Shannon entropy is the following:
$$\argmin_{C\in\mathcal{C}_X}\ \sum_{x\in\mathcal{X}} \Pr(X=x) \cdot \frac{\text{len}\left(C(x)\right)}{\log_2(e)},$$
where $\text{len}(c)$ denotes the length of binary string $c$.  A prefix-free code which minimizes this objective will on average require the fewest number of bits to transmit information.  A naive prefix-free code is one which maps each of the $|\mathcal{X}|$ outcomes of $X$ to a unique binary string of length $\lceil \log_2|\mathcal{X}| \rceil$.  While such a code would be reasonable if all outcomes of $X$ were equally likely, such a code would be highly suboptimal if some outcomes are much more or less likely than others.  A competent coding scheme ought to map more likely outcomes to \emph{shorter} strings and less likely outcomes to \emph{longer} strings.

\begin{figure}[H]
    \centering
    \includegraphics[width=0.4\textwidth]{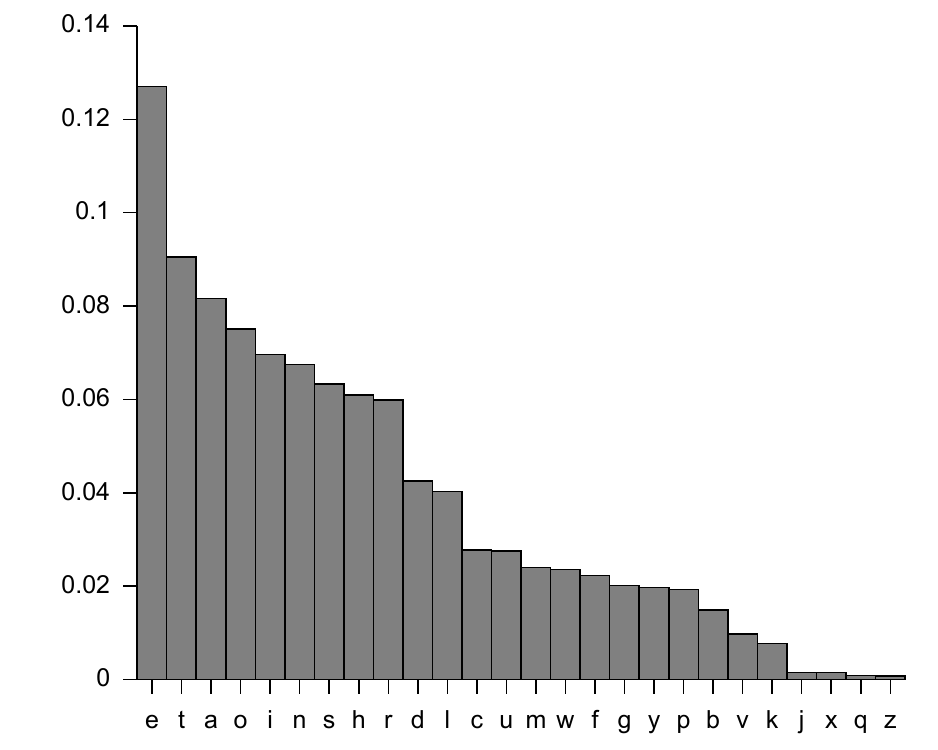}
    \caption{The English alphabet consists of characters of varying frequency across common text corpuses.  Notably, the vowels appear with greater frequency.  A coding scheme ought to map these more frequently appearing characters to shorter strings.}
    \label{fig:frequency}
\end{figure}

The following result establishes the tight connection between the \emph{entropy} of $X$ and its optimal prefix-free code.
\begin{theorem}{\bf (entropy characterizes optimal prefix-free code length)}\label{th:entropy_code}
    For all discrete random variables $X:\Omega\mapsto\mathcal{X}$,
    $$\H(X) \ \leq\ \min_{C\in\mathcal{C}_X}\ \sum_{x\in\mathcal{X}} \Pr(X=x)\cdot \frac{{\rm len}(C(x))}{\log_2(e)}\ \leq\ \H(X)+\frac{1}{\log_2(e)}.$$
\end{theorem}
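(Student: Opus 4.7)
The plan is to prove the two-sided bound by the classical argument built around Kraft's inequality, treating the lower bound via an application of Gibbs' inequality and the upper bound via an explicit (Shannon–Fano) construction.

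First I would state Kraft's inequality: for any prefix-free code $C \in \mathcal{C}_X$ with lengths $\ell_x = \mathrm{len}(C(x))$, we have $\sum_{x\in\mathcal{X}} 2^{-\ell_x} \leq 1$. This is a standard fact obtained by identifying prefix-free codes with leaves of a binary tree and comparing subtree sizes; I would either cite \citep{cover2012elements} or sketch the tree argument. I would also state the converse: given any collection of positive integers $\{\ell_x\}$ with $\sum_x 2^{-\ell_x} \leq 1$, there exists a prefix-free code realizing those lengths.

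For the lower bound, set $p_x = \Pr(X=x)$, $\ell_x = \mathrm{len}(C(x))$, and $K = \sum_x 2^{-\ell_x} \leq 1$. Define a probability mass function $q_x = 2^{-\ell_x}/K$. Then
\begin{align*}
\H(X) - (\ln 2)\sum_x p_x \ell_x
&= \sum_x p_x \ln\frac{1}{p_x} - \sum_x p_x \ln 2^{\ell_x} \\
&= \sum_x p_x \ln\frac{q_x K}{p_x} \\
&= -\KL(p\|q) + \ln K \;\leq\; 0,
\end{align*}
since $\KL(p\|q)\geq 0$ and $\ln K \leq 0$. Dividing by $\log_2(e)$'s reciprocal is done by noting $\ln 2 = 1/\log_2(e)$, yielding $\H(X) \leq \sum_x p_x \ell_x/\log_2(e)$, which is the left inequality.

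For the upper bound, I would exhibit a concrete code. Choose $\ell_x = \lceil \log_2(1/p_x)\rceil$. Then $2^{-\ell_x} \leq p_x$, so $\sum_x 2^{-\ell_x}\leq 1$, and by the converse of Kraft's inequality there exists $C\in\mathcal{C}_X$ with these lengths. Since $\lceil \log_2(1/p_x)\rceil \leq \log_2(1/p_x) + 1$,
\begin{align*}
\sum_x p_x \cdot \frac{\ell_x}{\log_2(e)}
&\leq \sum_x p_x\left(\frac{\log_2(1/p_x)+1}{\log_2(e)}\right)
= \H(X) + \frac{1}{\log_2(e)},
\end{align*}
using $\log_2(1/p_x)/\log_2(e) = \ln(1/p_x)$. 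Since the minimum over $\mathcal{C}_X$ is at most this value, the right inequality follows.

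The main obstacle is really just cleanly handling the change of base between bits and nats; the mathematical content (Kraft's inequality and Gibbs' inequality) is standard and, given Kraft, the rest is a short computation. I would spend the most care making the conversion factor $1/\log_2(e) = \ln 2$ visible at each step so that the stated bounds line up exactly.
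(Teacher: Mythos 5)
Your proof is correct. The paper states this theorem without proof (it is the classical source-coding bound, deferred to \citep{cover2012elements}), and your argument --- Kraft's inequality combined with non-negativity of $\KL$ for the lower bound, and the Shannon code $\ell_x = \lceil \log_2(1/p_x)\rceil$ with the converse of Kraft's inequality for the upper bound --- is precisely the standard derivation, with the base change $\ln 2 = 1/\log_2(e)$ handled correctly; the only cosmetic loose end is that outcomes with $\Pr(X=x)=0$ should be excluded from the code construction (consistent with the paper's convention $0\ln 0 = 0$).
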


This result demonstrates that the entropy of a random variable tightly characterizes the fundamental limit to which it can be losslessly compressed.  As a result, the entropy reflects the inherent complexity of a random variable.  This connection is useful to keep in mind as there exist the following analogies between our Bayesian and the frequentist frameworks:
\begin{align*}
    \text{frequentist}
    & \mapsto \text{Bayesian}\\
    \Theta
    & \mapsto \Pr(\theta\in\cdot)\\
    |\Theta|
    & \mapsto \H(\theta),
\end{align*}
where $\theta:\Omega\mapsto \Theta$ and $|\Theta|$ denotes the cardinality of $\Theta$.

\subsection{Conditional Entropy}
$\H(X|Y)$ denotes the \emph{conditional entropy} of a discrete random variable $X:\Omega\mapsto\mathcal{X}$ conditioned on another discrete random variable $Y:\Omega\mapsto\mathcal{Y}$.  The conditional entropy is defined as follows:
$$\H(X|Y) = \sum_{x\in \mathcal{X}, y\in\mathcal{Y}} \Pr(X=x, Y=y) \ln \frac{1}{\Pr(X=x|Y=y)}.$$
Note that \emph{unlike} conditional expectation, conditional entropy is a \emph{number} (and not a random variable).  Upon closer inspection it is clear that conditional entropy is also always non-negative and is $0$ only when $Y$ fully determines $X$.  On the other hand, when $X \perp Y$, we have that $\H(X|Y) = \H(X)$.  We provide the following result which facilitates mathematical manipulations involving the information-theoretic quantities outlined thus far.

\begin{lemma}{\bf (chain rule of conditional entropy)}\label{le:chain_rule_ent}
    For all discrete random variables $X: \Omega\mapsto \mathcal{X}, Y: \Omega\mapsto\mathcal{Y}$,
    $$\H(X, Y)\ =\ \H(X) + \H(Y|X) \ =\ \H(Y) + \H(X|Y).$$
\end{lemma}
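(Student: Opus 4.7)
The plan is to prove both equalities in the chain rule by starting from the definition of joint entropy and applying the elementary product rule of probability $\Pr(X=x,Y=y) = \Pr(X=x)\Pr(Y=y \mid X=x)$ inside the logarithm. The second equality will then follow by symmetry (swapping the roles of $X$ and $Y$), so it suffices to establish $\H(X,Y) = \H(X) + \H(Y \mid X)$.

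Concretely, I would expand
\[
\H(X,Y) \;=\; \sum_{x\in\mathcal{X},\, y\in\mathcal{Y}} \Pr(X=x,Y=y)\, \ln\frac{1}{\Pr(X=x,Y=y)},
\]
and substitute $\Pr(X=x,Y=y) = \Pr(X=x)\,\Pr(Y=y \mid X=x)$ inside the $\ln$, which splits the sum into
\[
\sum_{x,y} \Pr(X=x,Y=y)\,\ln\frac{1}{\Pr(X=x)} \;+\; \sum_{x,y}\Pr(X=x,Y=y)\,\ln\frac{1}{\Pr(Y=y\mid X=x)}.
\]
In the first sum, I would marginalize over $y$ using $\sum_y \Pr(X=x,Y=y) = \Pr(X=x)$, recovering $\H(X)$. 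The second sum matches the definition of $\H(Y\mid X)$ verbatim. Then swapping the roles of $X$ and $Y$ in the factorization $\Pr(X=x,Y=y) = \Pr(Y=y)\,\Pr(X=x\mid Y=y)$ gives the other equality.

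The only subtle point — and where the ``main obstacle'' (such as it is) lies — is handling terms where $\Pr(X=x,Y=y) = 0$, which can cause $\Pr(Y=y \mid X=x)$ to be undefined or the $\ln$ to blow up. I would handle this using the paper's stated convention that $0\ln 0 = 0$: any $(x,y)$ with $\Pr(X=x,Y=y) = 0$ contributes zero to every sum on both sides, so the rearrangement is valid term by term. For $x$ with $\Pr(X=x) = 0$, the entire row contributes nothing and can be dropped, which also sidesteps the issue of defining $\Pr(Y=y \mid X=x)$ on a null event. With this convention in place, every manipulation is a finite (or countable, nonnegative) rearrangement of terms and the identity follows.
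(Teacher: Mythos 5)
Your proposal is correct and is essentially the paper's own argument run in reverse: the paper starts from $\H(X)+\H(Y|X)$, marginalizes $\H(X)$ over $y$, and merges the logarithms into $\ln\frac{1}{\Pr(X=x,Y=y)}$, whereas you start from $\H(X,Y)$ and split the logarithm via the product rule. Your explicit handling of the $0\ln 0$ convention is a fine (if routine) addition that the paper leaves implicit.
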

\begin{proof}
    \begin{align*}
        \H(X) + \H(Y|X)
        & = \sum_{x\in\mathcal{X}} \Pr(X=x)\ln\frac{1}{\Pr(X=x)} + \sum_{x\in\mathcal{X}, y\in\mathcal{Y}} \Pr(X=x, Y=y)\ln\frac{1}{\Pr(Y=y|X=x)}\\
        & = \sum_{x\in\mathcal{X}, y\in\mathcal{Y}} \Pr(X=x, Y=y)\ln\frac{1}{\Pr(X=x)} + \sum_{x\in\mathcal{X}, y\in\mathcal{Y}} \Pr(X=x, Y=y)\ln\frac{\Pr(X=x)}{\Pr(X=x, Y=y)}\\
        & = \sum_{x\in\mathcal{X}, y\in\mathcal{Y}} \Pr(X=x, Y=y)\ln\frac{1}{\Pr(X=x, Y=y)}\\
        & = \H(X, Y).
    \end{align*}
    The second equality in the lemma statement follows from the same technique shown above.
\end{proof}

If $\H(X,Y)$ denotes the average length of a prefix-free code for $(X,Y)$ jointly, Lemma \ref{le:chain_rule_ent} establishes that $\H(X|Y)$ reflects the average length of a prefix-free code for $X$ after $Y$ is \emph{already} observed.  Evidently if $X\perp Y$, then observing $Y$ does not provide any information which enables a shorter code for $X$ (hence, $\H(X|Y) = \H(X)$).  However, in the other extreme in which $X \overset{a.s.}{=} Y$,  observing $Y$ means that $X$ is also known.  As a result, a trivial code which maps every outcome of $X$ to the null string $\emptyset$ will suffice (hence, $\H(X|Y) = 0$).  

\subsection{Mutual Information}
$\I(X;Y)$ denotes the \emph{mutual information} between two random variables $X:\Omega\mapsto \mathcal{X}$ and $Y:\Omega\mapsto\mathcal{Y}$.  Concretely,
$$\I(X;Y)\ =\ \KL\left(\Pr((X,Y)\in\cdot)\| \Pr(X\in\cdot ) \otimes \Pr(Y\in\cdot)\right),$$
where $\Pr(X\in\cdot) \otimes \Pr(Y\in\cdot)$ denotes the outer product distribution.  Note that KL divergence is a non-symmetric function which maps two \emph{probability distributions} to $\Re_+ \cup \{\infty\}$.  For discrete random variables, we have the following equivalence between mutual information and differences of (conditional) entropies:
$$\I(X;Y)\ =\ \H(X) - \H(X|Y)\ =\ \H(Y) - \H(Y|X).$$
Note that mutual information is symmetric i.e. $\I(X;Y) = \I(Y;X)$ and it is also always non-negative (follows directly as a consequence of Lemma \ref{le:gibbs}).  Intuitively, the mutual information $\I(X;Y)$ represents the amount of information that $X$ conveys about $Y$ and vice versa.  As such, if $X$ fully determines $Y$, then $\I(X;Y) = \H(X) = \H(Y)$.  Meanwhile if $X \perp Y$, then $\I(X;Y) = 0$ as the two random variables convey no information about each other.  As with conditional entropy, we provide the following result which facilitates mathematical analyses involving mutual information:

\begin{lemma}{\bf(chain rule of mutual information)}\label{le:chain_rule}
    For all random variables $X:\Omega\mapsto\mathcal{X}, Y:\Omega\mapsto\mathcal{Y}, Z:\Omega\mapsto\mathcal{Z}$,
    $$\I(Y;Z|X) + \I(X;Z) = \I(X,Y;Z).$$
\end{lemma}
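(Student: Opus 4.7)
The plan is to prove the chain rule by expressing each mutual information as a difference of (conditional) entropies and then algebraically collapsing the sum. In the discrete setting this is straightforward: I would first record the identity
$$\I(U;V) = \H(V) - \H(V|U),$$
which follows by expanding the definition of mutual information via KL divergence together with the chain rule of conditional entropy (Lemma \ref{le:chain_rule_ent}). Specializing this to $(U,V) = (X,Z)$ gives $\I(X;Z) = \H(Z) - \H(Z|X)$, and to $(U,V) = ((X,Y), Z)$ gives $\I(X,Y;Z) = \H(Z) - \H(Z|X,Y)$.

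Next, I would verify the conditional analogue
$$\I(Y;Z|X) = \H(Z|X) - \H(Z|X,Y).$$
This can be obtained either by applying the entropy-difference identity inside the conditional expectation (so that for each realization $x$ of $X$ one has $\I(Y;Z|X=x) = \H(Z|X=x) - \H(Z|X=x, Y)$ and then taking expectation over $X$), or equivalently by expanding $\I(Y;Z|X)$ via Lemma \ref{le:chain_rule_ent} as $\H(Y|X) + \H(Z|X) - \H(Y,Z|X)$ and then using $\H(Y,Z|X) = \H(Y|X) + \H(Z|X,Y)$.

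Step three is the telescoping: adding the two pieces gives
\begin{align*}
    \I(X;Z) + \I(Y;Z|X)
    &= \bigl[\H(Z) - \H(Z|X)\bigr] + \bigl[\H(Z|X) - \H(Z|X,Y)\bigr]\\
    &= \H(Z) - \H(Z|X,Y)\\
    &= \I(X,Y;Z),
\end{align*}
which is exactly the desired identity.

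The only genuine subtlety — and the step I expect to be the main obstacle — is that the lemma is stated for arbitrary random variables, not necessarily discrete, whereas $\H$ in the monograph has been introduced only in the discrete case. To handle continuous or mixed $X,Y,Z$, I would fall back on the KL-divergence definition of mutual information and invoke the chain rule for Radon–Nikodym derivatives: writing the joint law of $(X,Y,Z)$ as a product of a conditional of $Z$ given $(X,Y)$ against the marginal of $(X,Y)$, and doing the same decomposition for the product-of-marginals reference measure, the logarithm of the density ratio splits additively into a term corresponding to $\I(X;Z)$ and a term corresponding to $\I(Y;Z|X)$. Taking expectations then recovers the chain rule without appealing to entropy at all, so the statement survives intact in the general case.
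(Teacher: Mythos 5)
Your proof is correct, but it takes a different route from the paper's. The paper proves the identity in one pass directly from the KL-divergence definition: it writes $\I(X,Y;Z)$ as a single sum over $(x,y,z)$ of $\Pr((X,Y,Z)=(x,y,z))$ times a log-ratio, and then factors that log-ratio additively into the term giving $\I(Y;Z|X)$ and the term giving $\I(X;Z)$, noting at the outset that the same manipulation carries over to the continuous case with sums replaced by integrals and probability masses by Radon--Nikodym derivatives. You instead reduce everything to entropy differences ($\I(X;Z)=\H(Z)-\H(Z|X)$, $\I(X,Y;Z)=\H(Z)-\H(Z|X,Y)$, $\I(Y;Z|X)=\H(Z|X)-\H(Z|X,Y)$) and telescope; your three identities and the cancellation are all correct, and this is arguably the more transparent argument in the discrete setting since the structure of the chain rule is visible at a glance. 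The trade-off is exactly the one you identify: the entropy-difference route leans on $\H$, which the monograph defines only for discrete variables, so it does not cover the lemma as stated for arbitrary random variables without an extra step. Your proposed fix --- decomposing the joint density ratio via conditional Radon--Nikodym derivatives so the log splits additively and taking expectations --- is sound and is, in substance, the paper's own proof; so your general-case argument converges to theirs, while your discrete argument offers a cleaner alternative presentation.
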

\begin{proof}
    We provide a proof for discrete KL divergence.  The same holds for continuous case with the sums exchanged for integrals and the probability measures exchanged with the Radon-Nikodym derivative.
    \begin{align*}
        \I(X, Y;Z)
        & = \KL\left(\Pr((X, Y, Z)\in \cdot)\ \|\ \Pr((X, Y)\in\cdot) \otimes \Pr(Z\in \cdot)\right)\\
        & = \sum_{(x,y,z) \in \mathcal{X}\times\mathcal{Y}\times\mathcal{Z}}\Pr((X,Y,Z)=(x,y,z))\ln \frac{\Pr((X,Y,Z)=(x,y,z))}{\Pr((X,Y)=(x,y)) \cdot \Pr(Z=z)} \\
        & = \sum_{(x,y,z) \in \mathcal{X}\times\mathcal{Y}\times\mathcal{Z}}\Pr((X,Y,Z)=(x,y,z))\ln \frac{\Pr((Y,Z)=(y,z)|X=x)\cdot\Pr(X=x)}{\Pr((X,Y)=(x,y)) \cdot \Pr(Z=z)} \\
        & = \sum_{(x,y,z) \in \mathcal{X}\times\mathcal{Y}\times\mathcal{Z}}\Pr((X,Y,Z)=(x,y,z))\ln \frac{\Pr((Y,Z)=(y,z)|X=x)}{\Pr(Y=y|X=x) \cdot \Pr(Z=z)} \\
        & = \sum_{(x,y,z) \in \mathcal{X}\times\mathcal{Y}\times\mathcal{Z}}\Pr((X,Y,Z)=(x,y,z))\left(\ln \frac{\Pr((Y,Z)=(y,z)|X=x)}{\Pr(Y=y|X=x) \cdot \Pr(Z=z|X=x)} + \ln\frac{\Pr(Z=z|X=x)}{\Pr(Z=z)}\right) \\
        & = \I(Y;Z|X) + \sum_{(x,z)\in\mathcal{X}\times\mathcal{Z}} \Pr((X,Z)=(x,z))\ln\frac{\Pr((X,Z)=(x,z))}{\Pr(X=x)\cdot\Pr(Z=z)}\\
        & = \I(Y;Z|X) + \I(X;Z).
    \end{align*}
\end{proof}

\begin{figure}[ht]
    \centering
    \includegraphics[width=0.5\textwidth]{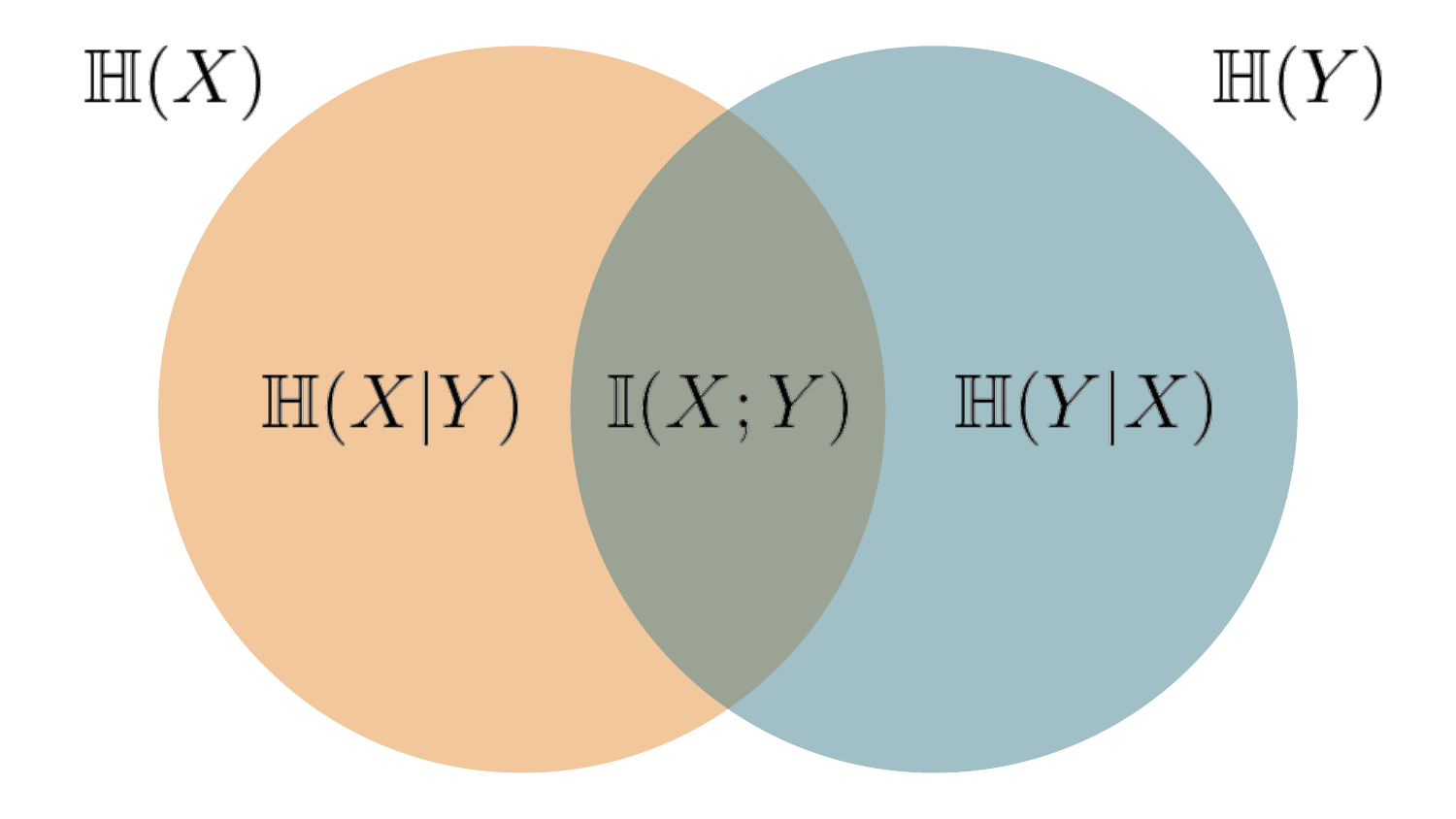}
    \caption{This venn diagram illustrates the relationships between the introduced information-theoretic quantities.}
    \label{fig:venn}
\end{figure}

\subsection{Differential Entropy}\label{subsec:diff_entropy}

While we have defined information-theoretic quantities for discrete random variables, outside of mutual information we have not broached a notion of information regarding \emph{continuous} random variables.  For a continuous random variable $X:\Omega\mapsto\mathcal{X}$ with density $p_X$ (w.r.t the Lebesgue measure), we denote the differential entropy of $X$ by
$$\diffentropy(X) \ =\ \int_{x\in\mathcal{X}} p_X(x)\ln\frac{1}{p_X(x)} d\mu(x),$$
where $\mu(\cdot)$ denotes the Lebesgue measure.  While differential entropy ostensibly resembles discrete entropy, the two are different in almost all regards.  For instance, while the discrete entropy of a random variable is always non-negative, the differential entropy can often be negative.  Look no further than $X = \text{Uniform}([0,1/2])$.  In this case, $\diffentropy(X) = -\ln(2)$.  Furthermore, while discrete entropy is invariant under one-to-one transformations, the differential entropy is not.  For instance, $\diffentropy(2X) = \diffentropy(X) + \ln 2$.  A measure of information should \emph{not} be negative nor should it be dependent on \emph{units} used.  For these reasons, unlike discrete entropy, differential entropy is \emph{not} a meaningful informational quantity by itself.  The correct extension of discrete entropy to continuous random variables exists via rate-distortion theory, which we will present in the following section.

While differential entropy itself is not a meaningful measure of information, \emph{differences} in (conditional) differential entropies are still equal to mutual information.  Concretely, for continuous random variables $X, Y$ with finite (conditional) differential entropies,
$$\I(X;Y)\ =\ \diffentropy(X) - \diffentropy(X|Y)\ =\ \diffentropy(Y) - \diffentropy(Y|X).$$

\subsection{Requisite Results from Information Theory}

We now present an amalgamation of widely known and requisite results from information theory.  Various proofs throughout this monograph will refer to the results of this section.

\begin{lemma}{\bf(log-sum inequality)}
    For all $n \in \mathbb{Z}_{++}$, if $a_1, \ldots, a_n \geq 0$, $b_1, \ldots, b_n \geq 0$, and $a = \sum_{i=1}^{n} a_i, b = \sum_{i=1}^{n} b_i$, then
    $$\sum_{i=1}^{n} a_i \ln \frac{a_i}{b_i} \geq a \ln \frac{a}{b}.$$
\end{lemma}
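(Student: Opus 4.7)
The plan is to derive the log-sum inequality as a consequence of the convexity of the function $f(x) = x \ln x$ via Jensen's inequality. The key observation is that the left-hand side can be rewritten as a convex combination of values of $f$: specifically, factor each summand as $a_i \ln(a_i/b_i) = b_i \cdot (a_i/b_i) \ln(a_i/b_i) = b_i \, f(a_i/b_i)$, whenever $b_i > 0$.

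First I would verify that $f(x) = x \ln x$ is convex on $[0,\infty)$ (using the convention $f(0)=0$) by computing $f''(x) = 1/x > 0$ on $(0,\infty)$. Next, assuming without loss of generality that $b > 0$ and all $b_i > 0$ (the degenerate cases are handled separately below), set weights $\lambda_i = b_i/b$, which satisfy $\lambda_i \geq 0$ and $\sum_i \lambda_i = 1$. Applying Jensen's inequality to the convex function $f$ with these weights and inputs $a_i/b_i$ gives
\[
\sum_{i=1}^n \lambda_i\, f\!\left(\frac{a_i}{b_i}\right) \;\geq\; f\!\left(\sum_{i=1}^n \lambda_i \cdot \frac{a_i}{b_i}\right) \;=\; f\!\left(\frac{a}{b}\right) \;=\; \frac{a}{b}\ln\frac{a}{b}.
\]
Multiplying both sides by $b$ and using $b \lambda_i = b_i$ then yields $\sum_i a_i \ln(a_i/b_i) \geq a \ln(a/b)$, which is the desired inequality.

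The main (minor) obstacle is handling the edge cases where some $a_i$ or $b_i$ equal zero, since these involve the conventions $0 \ln 0 = 0$ and $c/0 = +\infty$ for $c>0$. I would dispose of these by cases: if $b = 0$ then all $b_i = 0$, forcing all $a_i = 0$ (otherwise the left side is $+\infty$ trivially), and both sides are $0$; if some $b_i = 0$ while $a_i > 0$, the left side is $+\infty$ and the inequality is trivial; if $a_i = 0$, the $i$-th summand is $0$ and we can simply drop that index. After removing such trivial indices, the Jensen argument above applies verbatim to the remaining strictly positive terms, completing the proof.
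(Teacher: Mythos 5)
Your proof is correct and follows essentially the same route as the paper's: rewrite each term as $b_i\,f(a_i/b_i)$ for the convex function $f(x)=x\ln x$ and apply Jensen's inequality with weights $b_i/b$. Your explicit treatment of the zero-valued edge cases is a welcome addition that the paper's proof omits.
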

\begin{proof}
    \begin{align*}
        \sum_{i=1}^{n} a_i \ln \frac{a_i}{b_i}
        & = \sum_{i=1}^{n} b_i \cdot \frac{a_i}{b_i} \ln \frac{a_i}{b_i}\\
        & = b \sum_{i=1}^{n}\frac{b_i}{b}\cdot \frac{a_i}{b_i}\ln\frac{a_i}{b_i}\\
        & \overset{(a)}{\geq} b \left(\sum_{i=1}^{n}\frac{b_i}{b} \cdot \frac{a_i}{b_i}\right) \ln \left(\sum_{i=1}^{n}\frac{b_i}{b}\cdot\frac{a_i}{b_i}\right)\\
        & = b \cdot \frac{a}{b} \ln \frac{a}{b}\\
        & = a \ln \frac{a}{b},
    \end{align*}
    where (a) follows from Jensen's inequality applied to the function $x \ln x$.
\end{proof}

\begin{lemma}{\bf(conditioning reduces entropy)}\label{le:cond_ent}
    For all discrete random variables $X:\Omega\mapsto\mathcal{X}, Y:\Omega\mapsto\mathcal{Y}$,
    $$\H(X) \geq \H(X|Y).$$
\end{lemma}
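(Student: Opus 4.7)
The plan is to rewrite the difference $\H(X)-\H(X|Y)$ as the mutual information $\I(X;Y)$ in its explicit discrete form, and then apply the log-sum inequality established just above to conclude non-negativity.

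First, I would use the marginalization identity $\Pr(X=x)=\sum_{y\in\mathcal{Y}}\Pr(X=x,Y=y)$ to expand the $\H(X)$ term as a double sum over $(x,y)\in\mathcal{X}\times\mathcal{Y}$. Combining this with the definition of $\H(X|Y)$ and merging the two logarithms yields
\[
\H(X)-\H(X|Y)\;=\;\sum_{x\in\mathcal{X},\,y\in\mathcal{Y}}\Pr(X=x,Y=y)\,\ln\frac{\Pr(X=x,Y=y)}{\Pr(X=x)\Pr(Y=y)},
\]
which is precisely $\I(X;Y)$ written out using the substitution $\Pr(X=x,Y=y)=\Pr(X=x|Y=y)\Pr(Y=y)$ inside the logarithm.

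Next, I would apply the log-sum inequality with $a_i=\Pr(X=x,Y=y)$ and $b_i=\Pr(X=x)\Pr(Y=y)$, where the index $i$ ranges over the countable set $\mathcal{X}\times\mathcal{Y}$. Both $\sum_i a_i$ and $\sum_i b_i$ equal $1$, so the right-hand side of the log-sum inequality is $1\cdot\ln(1/1)=0$. This gives $\I(X;Y)\ge 0$, which rearranges to the desired $\H(X)\ge\H(X|Y)$.

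The argument is short, so there is no substantive obstacle. The only points requiring mild care are the convention $0\ln 0=0$ for pairs with zero probability (so that terms with $\Pr(Y=y)=0$ drop out cleanly on both sides), and justifying the interchange of summation when expanding $\Pr(X=x)$ into a double sum; both are standard for the countable discrete setting assumed in the lemma.
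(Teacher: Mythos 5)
Your proof is correct. You and the paper both reduce the claim to the log-sum inequality, but you instantiate it differently. The paper applies the log-sum inequality once for each fixed $x$, with the index running over $y$, taking $a_i = \Pr(X=x,Y=y)$ and $b_i = \Pr(Y=y)$ (so $a = \Pr(X=x)$, $b=1$), then negates and sums over $x$; it never explicitly forms $\I(X;Y)$. You instead apply the log-sum inequality a single time over the whole product set $\mathcal{X}\times\mathcal{Y}$ with $b_i = \Pr(X=x)\Pr(Y=y)$, which is exactly the proof that $\I(X;Y) = \KL\left(\Pr((X,Y)\in\cdot)\ \|\ \Pr(X\in\cdot)\otimes\Pr(Y\in\cdot)\right) \geq 0$ — i.e., you route through the non-negativity of KL divergence (the paper's Lemma \ref{le:gibbs}, proved there via Jensen), whereas the paper's proof of this lemma is self-contained at the level of the entropy sums. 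Your version has the advantage of making the identity $\H(X)-\H(X|Y)=\I(X;Y)$ explicit and of being the more standard textbook argument; the paper's version avoids invoking mutual information before it is formally needed. Your closing remarks on the $0\ln 0$ convention and the interchange of summation are appropriate and handle the only delicate points.
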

\begin{proof}
    \begin{align*}
        \H(X)
        & = \sum_{x\in\mathcal{X}}\Pr(X=x)\ln\frac{1}{\Pr(X=x)}\\
        & = \sum_{x \in \mathcal{X}}\sum_{y\in\mathcal{Y}}\Pr(X=x, Y=y)\ln\frac{1}{\Pr(X=x)}\\
        & \overset{(a)}{\geq} \sum_{x\in\mathcal{X}}\sum_{y\in\mathcal{Y}}\Pr(X=x, Y=y)\ln\frac{\Pr(Y=y)}{\Pr(X=x,Y=y)}\\
        & = \sum_{x\in\mathcal{X}}\sum_{y\in\mathcal{Y}}\Pr(X=x, Y=y)\ln\frac{1}{\Pr(X=x|Y=y)}\\
        & = \H(X|Y),
    \end{align*}
    where $(a)$ follows from negating the log-sum inequality and setting $a_i = \Pr(X=x, Y=y), b_i = \Pr(Y=y)$ and $a = \Pr(X=x), b = 1$.
\end{proof}

\begin{lemma}{\bf(conditioning reduces differential entropy)}
For all continuous random variables $X:\Omega\mapsto\mathcal{X}, Y:\Omega\mapsto\mathcal{Y}$, if $\diffentropy(X), \diffentropy(X|Y)$ exist and are both finite, then
$$\diffentropy(X) \geq \diffentropy(X|Y).$$
\end{lemma}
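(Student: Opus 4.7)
The plan is to mimic the discrete argument via the identity that expresses the drop in (differential) entropy as a mutual information, and then invoke the non-negativity of mutual information. Concretely, in subsection \ref{subsec:diff_entropy} it is already observed that for continuous random variables with finite (conditional) differential entropies,
$$\I(X;Y) \;=\; \diffentropy(X) - \diffentropy(X|Y).$$
Since $\I(X;Y)$ is defined as a KL divergence between $\Pr((X,Y)\in\cdot)$ and $\Pr(X\in\cdot)\otimes\Pr(Y\in\cdot)$, it takes values in $\R_+\cup\{\infty\}$. Rearranging the above identity immediately yields $\diffentropy(X) \geq \diffentropy(X|Y)$, which is exactly the claim.

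If one prefers a self-contained derivation that does not rely on the quoted identity, I would instead carry out the direct analogue of the proof of Lemma \ref{le:cond_ent}. Let $p_{X,Y}$, $p_X$, $p_Y$ denote the joint and marginal densities with respect to Lebesgue measure, and let $p_{X|Y}(x|y)=p_{X,Y}(x,y)/p_Y(y)$. Then, after writing $\diffentropy(X)=\int p_{X,Y}(x,y)\ln(1/p_X(x))\,d\mu(x,y)$ by marginalizing out $y$, the difference can be assembled as
$$\diffentropy(X)-\diffentropy(X|Y) \;=\; \int p_{X,Y}(x,y)\,\ln\!\frac{p_{X,Y}(x,y)}{p_X(x)\,p_Y(y)}\,d\mu(x,y),$$
which is precisely $\KL\!\bigl(\Pr((X,Y)\in\cdot)\,\|\,\Pr(X\in\cdot)\otimes\Pr(Y\in\cdot)\bigr)=\I(X;Y)\geq 0$. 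Non-negativity of KL follows from Jensen's inequality applied to $t\mapsto t\ln t$, exactly as in the log-sum inequality used in the discrete proof, just with sums replaced by Lebesgue integrals.

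The main obstacle, and the reason for the finiteness hypothesis in the statement, is to make the rearrangement $\diffentropy(X)-\diffentropy(X|Y)=\I(X;Y)$ actually valid: without assuming that both $\diffentropy(X)$ and $\diffentropy(X|Y)$ are finite, one risks an $\infty-\infty$ expression, and the manipulations above splitting $\ln(p_{X,Y}/(p_X p_Y))$ into $\ln(1/p_X)-\ln(1/p_{X|Y})$ and separately integrating each term would not be justified by Fubini. Under the stated finiteness assumption, however, the integrands decompose absolutely and the two-line argument goes through without further subtlety.
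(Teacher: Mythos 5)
Your proof is correct, but it takes a different route from the paper's. The paper disposes of this lemma by discretization: it constructs a sequence of partitions of $\mathcal{X}$ and $\mathcal{Y}$, applies the already-proven discrete inequality $\H(X)\geq \H(X|Y)$ (Lemma on conditioning reducing entropy, via the log-sum inequality) to the quantized variables, and passes to the limit. Your first argument instead rearranges the identity $\I(X;Y)=\diffentropy(X)-\diffentropy(X|Y)$ and invokes non-negativity of mutual information, and your second argument is the direct continuous analogue of the discrete proof, assembling the difference into the KL divergence $\KL\bigl(\Pr((X,Y)\in\cdot)\,\|\,\Pr(X\in\cdot)\otimes\Pr(Y\in\cdot)\bigr)$ and applying Jensen. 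Both are valid given the stated finiteness hypothesis, and you correctly identify that finiteness is what licenses the $\infty-\infty$-free rearrangement and the term-by-term integration. The trade-off: your route is shorter and self-contained but implicitly requires a joint density with respect to Lebesgue measure so that $p_{X|Y}$ and the integral decomposition make sense; the paper's partition-and-limit route is slightly more laborious but reuses the discrete lemma verbatim and sidesteps density-existence questions, since mutual information and the limiting entropies are defined through quantizations. Within this paper's conventions (densities are assumed throughout the continuous discussion), either argument is acceptable.
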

\begin{proof}
    The proof follows from the same reasoning as in Lemma \ref{le:cond_ent} by constructing a sequence of partitions of $\mathcal{X}, \mathcal{Y}$ and taking the associated limits.
\end{proof}

\begin{lemma}{\bf (mutual information as expected KL divergence)}\label{le:kl_mi}
    For all random variables $X:\Omega\mapsto\mathcal{X}, Y:\Omega\mapsto\mathcal{Y}$,
    $$\I(X;Y) = \E\left[\KL\left(\Pr(Y\in\cdot|X)\ \|\ \Pr(Y\in\cdot)\right)\right].$$
\end{lemma}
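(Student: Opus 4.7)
The plan is to unfold both sides of the identity using the definition of mutual information as a KL divergence between the joint distribution and the product of marginals, and then to apply the chain rule of probability to rewrite the joint in terms of the conditional.

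First, I would start from the definition
\[
\I(X;Y) \;=\; \KL\bigl(\Pr((X,Y)\in\cdot)\,\|\,\Pr(X\in\cdot)\otimes\Pr(Y\in\cdot)\bigr),
\]
and work in the discrete case for concreteness (the continuous case follows by the same computation with Radon--Nikodym derivatives in place of point masses, or by the standard partition-and-limit argument used earlier in the section). Writing the KL divergence as a sum,
\[
\I(X;Y) \;=\; \sum_{x\in\mathcal{X},\,y\in\mathcal{Y}} \Pr(X=x,Y=y)\,\ln\frac{\Pr(X=x,Y=y)}{\Pr(X=x)\Pr(Y=y)}.
\]

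Next, I would apply the chain rule of probability, $\Pr(X=x,Y=y) = \Pr(X=x)\Pr(Y=y\mid X=x)$, to the numerator inside the logarithm and cancel the $\Pr(X=x)$ factors. This transforms the summand into $\Pr(X=x,Y=y)\ln\frac{\Pr(Y=y\mid X=x)}{\Pr(Y=y)}$. Then I would factor the joint probability in the outer weighting the same way, splitting the double sum so that the inner sum over $y$ is exactly the KL divergence $\KL(\Pr(Y\in\cdot\mid X=x)\,\|\,\Pr(Y\in\cdot))$, weighted by $\Pr(X=x)$, i.e., the expectation over $X$ of this conditional KL divergence.

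The step to watch is the notational one: the right-hand side uses the convention from Section \ref{subsec:probability} in which $\Pr(Y\in\cdot\mid X)$ denotes the random probability measure $x\mapsto \Pr(Y\in\cdot\mid X=x)$ evaluated at $X$, so the expectation $\E[\,\cdot\,]$ is integrating over $X$ the deterministic function $x\mapsto\KL(\Pr(Y\in\cdot\mid X=x)\,\|\,\Pr(Y\in\cdot))$. Once this is made explicit, the derivation is a one-line algebraic rearrangement and there is no genuine obstacle; the only subtle point, worth a brief remark, is the continuous case, where one should either assume regular conditional distributions exist (which they do for Polish-valued random variables) or invoke the same partition-limit argument used to extend ``conditioning reduces entropy'' from the discrete to the differential setting.
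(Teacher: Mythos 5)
Your proposal is correct and follows essentially the same route as the paper: unfold $\I(X;Y)$ as the KL divergence between the joint and the product of marginals, apply the chain rule $\Pr(X=x,Y=y)=\Pr(X=x)\Pr(Y=y\mid X=x)$ inside the logarithm, and regroup the double sum as an expectation over $X$ of the conditional KL divergence. The paper likewise restricts to the discrete case and defers the continuous extension to a remark about appropriate technical assumptions, so there is nothing to add.
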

\begin{proof}
    We prove the result for discrete random variables $X, Y$.  With appropriate technical assumptions, the result can also be extended to continuous random variables which exhibit density functions.
    \begin{align*}
        \I(X;Y)
        & = \sum_{x\in\mathcal{X}}\sum_{y\in\mathcal{Y}}\Pr(X=x,Y=y)\ln\frac{\Pr(X=x, Y=y)}{\Pr(X=x)\cdot\Pr(Y=y)}\\
        & = \sum_{x\in\mathcal{X}}\Pr(X=x)\sum_{y\in\mathcal{Y}}\Pr(Y=y|X=x)\ln\frac{\Pr(Y=y|X=x)}{\Pr(Y=y)}\\
        & = \E\left[\KL\left(\Pr(Y\in\cdot|X)\ \|\ \Pr(Y\in\cdot)\right)\right].
    \end{align*}
\end{proof}

\begin{lemma}{\bf (non-negativity of KL divergence)}\label{le:gibbs}
    For all probability distributions $P:\mathbb{F}\mapsto [0,1], Q:\mathbb{F}\mapsto [0,1]$,
    $$\KL\left(P \ \|\ Q\right) \geq 0.$$
    Furthermore,
    $$\KL\left(P\ \|\ Q\right) = 0 \iff \text{ for all } \nu\in\mathbb{F}, \left(P(\nu) > 0 \implies P(\nu) = Q(\nu)\right).$$
\end{lemma}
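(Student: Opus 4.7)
The plan is to derive the inequality directly from the log-sum inequality proved just above, and then trace through the equality case in Jensen's inequality to obtain the characterization.

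For the discrete case, I would fix a countable partition $\{\nu_i\}_{i \in \mathbb{N}}$ of $\Omega$ and apply the log-sum inequality with $a_i = P(\nu_i)$ and $b_i = Q(\nu_i)$. Since $P$ and $Q$ are probability measures, $a = \sum_i a_i = 1$ and $b = \sum_i b_i = 1$, so the log-sum inequality immediately yields
\[
\sum_i P(\nu_i)\ln\frac{P(\nu_i)}{Q(\nu_i)} \;\geq\; 1 \cdot \ln\frac{1}{1} \;=\; 0.
\]
Taking the supremum of the left-hand side over all measurable partitions recovers $\KL(P\|Q)$ via its standard variational definition, which gives $\KL(P\|Q) \geq 0$. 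For the case when $P$ admits a density $dP/dQ$ with respect to $Q$, I would instead apply Jensen's inequality to the convex function $-\ln$: writing $\KL(P\|Q) = -\E_P[\ln(dQ/dP)]$ (and using $\ln x \leq x - 1$), the bound collapses to $-\ln \int (dQ/dP)\, dP = -\ln 1 = 0$. If $P$ is not absolutely continuous with respect to $Q$, then $\KL(P\|Q) = +\infty$ by convention and the inequality is trivial.

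For the equality case, I would revisit the underlying Jensen step. Equality in the log-sum inequality (which used Jensen applied to $x \ln x$) holds iff all ratios $a_i/b_i$ with $b_i > 0$ coincide. Combined with $\sum a_i = \sum b_i = 1$, the common ratio must equal $1$, so $P(\nu_i) = Q(\nu_i)$ for every cell in the partition with $Q(\nu_i) > 0$. Refining over partitions and using monotone class arguments, this extends to: $P(\nu) = Q(\nu)$ for every $\nu \in \mathbb{F}$ with $Q(\nu) > 0$. The implication stated in the lemma, namely $P(\nu) > 0 \Rightarrow P(\nu) = Q(\nu)$, then follows because $P(\nu) > 0$ together with $\KL(P\|Q) < \infty$ forces $Q(\nu) > 0$ (otherwise $P \not\ll Q$ and the divergence would be infinite). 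Conversely, if the stated implication holds for every $\nu \in \mathbb{F}$, then applying it to $\nu$ and its complement forces $Q = P$ identically, which gives $\KL(P\|Q) = 0$ directly from the definition.

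The main obstacle is the slightly nonstandard phrasing of the equality condition: the natural statement of Gibbs' inequality is $\KL(P\|Q) = 0 \iff P = Q$, and one must verify carefully that the implication as written (quantifying only over $\nu$ with $P(\nu) > 0$) is in fact equivalent to full agreement of the two measures on $\mathbb{F}$. As observed above, this equivalence is immediate once one applies the condition to complements: if $P(\nu) = 0$, then $P(\Omega \setminus \nu) = 1 > 0$ forces $Q(\Omega \setminus \nu) = 1$, hence $Q(\nu) = 0 = P(\nu)$. Aside from this small bookkeeping point, the rest of the proof is routine given the log-sum inequality.
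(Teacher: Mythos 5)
Your proposal is correct and is essentially the paper's argument: the paper applies Jensen's inequality to $-\ln$ directly and reads off the equality case, while you route the same convexity argument through the log-sum inequality (itself Jensen for $x\ln x$), which is an equivalent path to the same conclusion. Your extra care on the measure-theoretic side — the infinite-divergence case when $P \not\ll Q$, and the observation that the stated equality condition (quantified only over $\nu$ with $P(\nu) > 0$) is equivalent to $P = Q$ by passing to complements — is sound and slightly more rigorous than the paper's treatment, but does not constitute a different approach.
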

\begin{proof}
    \begin{align*}
        \KL\left(P \ \|\ Q\right)
        & = \sum_{\nu\in\Omega} P(\nu) \ln \frac{P(\nu)}{Q(\nu)}\\
        & = \sum_{\nu\in\Omega} - P(\nu) \ln \frac{Q(\nu)}{P(\nu)}\\
        & \overset{(a)}{\geq} - \ln \sum_{\nu\in\Omega} P(\nu) \cdot \frac{Q(\nu)}{P(\nu)}\\
        & = - \ln \sum_{\nu\in\Omega} Q(\nu)\\
        & = 0.
    \end{align*}
    where $(a)$ follows from Jensen's inequality.  To prove the second result, consider the case in which Jensen's inequality holds with \emph{equality}.  This occurs iff $Q(\nu)/P(\nu) = \sum_{\nu\in\Omega}P(\nu)\cdot Q(\nu)/P(\nu) = 1$. This occurs only when $Q(\nu) = P(\nu)$ for all $\nu \in \mathbb{F}$ for which $P(\nu) > 0$.
\end{proof}

\begin{lemma}{\bf (maximum differential entropy)}\label{le:max_entropy}
    For all density functions $q:\Re^{d}\mapsto\Re_{+}$, for all $i,j\in[d]$,
    let $\Sigma_{i,j} =\int_{x\in\Re^{d}} x_ix_j q(x) dx$.  If $X \sim \normal(0, \Sigma)$, then
    $$\diffentropy(X) \geq \int_{x\in\Re^d} q(x)\ln\frac{1}{q(x)}dx.$$
\end{lemma}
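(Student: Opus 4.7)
The plan is to apply the non-negativity of KL divergence (Lemma \ref{le:gibbs}) with a carefully chosen comparison distribution, exploiting the fact that the log-density of a Gaussian is a quadratic form in $x$, so its expectation depends on the distribution only through its first and second moments.

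Concretely, let $p$ denote the density of $\normal(0,\Sigma)$, so that
$$\ln p(x) = -\tfrac{d}{2}\ln(2\pi) - \tfrac{1}{2}\ln|\Sigma| - \tfrac{1}{2} x^{\top}\Sigma^{-1} x.$$
The first step is to write, via Lemma \ref{le:gibbs},
$$0 \ \leq\ \KL(q \,\|\, p) \ =\ \int q(x)\ln q(x)\, dx \;-\; \int q(x) \ln p(x)\, dx,$$
so that rearranging gives $\int q(x)\ln(1/q(x))\,dx \leq -\int q(x)\ln p(x)\, dx$.

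The second step is to evaluate the cross-entropy $-\int q(x)\ln p(x)\, dx$ using the moment hypothesis. The only nontrivial piece is $\int q(x)\, x^{\top}\Sigma^{-1} x \, dx$. Expanding the quadratic form as $\sum_{i,j}(\Sigma^{-1})_{ij} x_i x_j$ and using the defining property $\int q(x) x_i x_j \, dx = \Sigma_{ij}$, this integral becomes $\sum_{i,j}(\Sigma^{-1})_{ij}\Sigma_{ji} = \mathrm{tr}(\Sigma^{-1}\Sigma) = d$. Therefore
$$-\int q(x)\ln p(x)\, dx \ =\ \tfrac{d}{2}\ln(2\pi) + \tfrac{1}{2}\ln|\Sigma| + \tfrac{d}{2},$$
which is exactly $\diffentropy(X)$ for $X\sim\normal(0,\Sigma)$ by the standard formula for Gaussian differential entropy (obtained by plugging $p$ itself into $\int p\ln(1/p)$). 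Combining with the first step gives the claim.

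The only obstacles are minor technical ones: first, the trace computation requires $\Sigma$ to be invertible, so a brief remark is warranted that if $\Sigma$ is singular then $q$ is supported on a lower-dimensional affine subspace and the inequality should be interpreted in the (possibly extended) sense $\diffentropy(X) = +\infty$ or reduced by projecting to the range of $\Sigma$; second, one should note implicitly that the calculation assumes finite second moments of $q$, which is granted by the hypothesis that the integrals defining $\Sigma_{ij}$ exist. Neither of these requires real work beyond a sentence of care.
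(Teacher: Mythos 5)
Your proof is correct and is essentially the paper's own argument: both rest on the observation that the cross-entropy $-\int q(x)\ln p(x)\,dx$ equals $\diffentropy(X)$ because the Gaussian log-density is a quadratic form whose expectation depends only on the second moments (which $q$ and $p$ share by hypothesis), combined with non-negativity of KL divergence. You merely run the chain of equalities in the opposite direction from the paper, and your added remarks on invertibility of $\Sigma$ and finiteness of second moments are reasonable points of care that the paper omits.
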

\begin{proof}
    Let $p$ denote the probability density function of $X$.
    \begin{align*}
        \diffentropy(X)
        & = \int_{x\in\Re^d} - p(x) \left(\frac{1}{2}\ln\left((2\pi e)^d|\Sigma|\right)- \frac{x^\top\Sigma^{-1}x}{2}\right)dx\\
        & = -\frac{1}{2}\ln\left((2\pi e)^d|\Sigma|\right) + \int_{x\in\Re^d} p(x) \cdot \frac{x^\top \Sigma^{-1} x}{2}dx\\
        & \overset{(a)}{=} -\frac{1}{2}\ln\left((2\pi e)^d|\Sigma|\right) + \int_{x\in\Re^d} q(x) \cdot \frac{x^\top \Sigma^{-1} x}{2}dx\\
        & = \int_{x\in\Re^d} q(x) \ln \frac{1}{p(x)}dx\\
        & = \int_{x\in\Re^d} q(x) \ln \frac{1}{q(x)}dx +  \int_{x\in\Re^d} q(x) \ln \frac{q(x)}{p(x)}dx\\
        & \overset{(b)}{\geq} \int_{x\in\Re^d} q(x) \ln \frac{1}{q(x)}dx
    \end{align*}
    where $(a)$ follows from the equivalence of covariances assumption and $(b)$ follows from Lemma \ref{le:gibbs}.
\end{proof}

\begin{lemma}{\bf (data processing inequality)}
    Let $X, Y, Z$ be random variables for which $X\perp Z|Y$, then
    $$\I(X;Z) \leq \I(Y;Z) \qquad \mathrm{and} \qquad \I(X;Z) \leq \I(X;Y).$$
\end{lemma}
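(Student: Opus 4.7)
The plan is to invoke the chain rule of mutual information (Lemma \ref{le:chain_rule}) twice, expanding $\I(X, Y; Z)$ and $\I(X; Y, Z)$ in two different ways, and then use the conditional independence hypothesis $X \perp Z \mid Y$ to kill the appropriate term.

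First I would apply the chain rule to $\I(X, Y; Z)$. It can be decomposed as $\I(Y;Z) + \I(X;Z \mid Y)$ (peeling off $Y$ first) or as $\I(X;Z) + \I(Y;Z \mid X)$ (peeling off $X$ first). The hypothesis $X \perp Z \mid Y$ is precisely the statement that $\I(X;Z \mid Y) = 0$, so from the two decompositions I get
\[
\I(Y;Z) \;=\; \I(X;Z) + \I(Y;Z \mid X).
\]
Since conditional mutual information is a non-negative quantity (it is an expectation of a KL divergence, which is non-negative by Lemma \ref{le:gibbs}), this yields $\I(X;Z) \le \I(Y;Z)$.

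Next I would apply the chain rule symmetrically to $\I(X; Y, Z)$, writing it as $\I(X;Y) + \I(X;Z \mid Y)$ or as $\I(X;Z) + \I(X;Y \mid Z)$. Again the assumption $X \perp Z \mid Y$ eliminates the first conditional term, leaving
\[
\I(X;Y) \;=\; \I(X;Z) + \I(X;Y \mid Z) \;\ge\; \I(X;Z),
\]
which gives the second inequality.

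The only subtlety is that Lemma \ref{le:chain_rule} as stated gives a chain rule for unconditional mutual information, whereas the argument uses the non-negativity of the conditional mutual information $\I(Y;Z \mid X)$ and $\I(X;Y \mid Z)$ that appear in the decompositions. This is not really an obstacle: conditional mutual information can be written as an expectation over the conditioning variable of an unconditional mutual information computed under the conditional distribution, and each such unconditional term is non-negative by Lemma \ref{le:gibbs}, so the expectation is non-negative. Once that is noted, both inequalities follow immediately with no further computation.
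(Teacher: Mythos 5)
Your proposal is correct and is essentially the paper's own argument: both proofs expand $\I(X,Y;Z)$ (respectively $\I(X;Y,Z)$) via the chain rule in two ways, use the hypothesis to annihilate $\I(X;Z\mid Y)$, and invoke non-negativity of the remaining conditional mutual information term. The only difference is presentational — the paper phrases it as $\I(X;Z)\leq\I(X,Y;Z)=\I(Y;Z)+\I(X;Z\mid Y)=\I(Y;Z)$ while you rearrange the same identity — and your remark about non-negativity of conditional mutual information is a reasonable clarification of a step the paper leaves implicit.
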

\begin{proof}
\begin{align*}
    \I(X;Z)
    & \leq \I(X,Y;Z)\\
    & = \I(Y;Z) + \I(X;Z|Y)\\
    & \overset{(a)}{=} \I(Y;Z),
\end{align*}
where $(a)$ follows from the independence assumption.  Similarly,
\begin{align*}
    \I(X;Z)
    & \leq \I(X;Y,Z)\\
    & = \I(X;Y) + \I(X;Z|Y)\\
    & = \I(X;Y).
\end{align*}
\end{proof}

\newpage
\begin{summary}
    \begin{itemize}
        \item The \emph{entropy} of a discrete random variable $X: \Omega\mapsto\mathcal{X}$ is
        $$\H(X) = \sum_{x \in \mathcal{X}} \Pr(X=x) \ln \frac{1}{\Pr(X=x)}.$$
        \item The \emph{conditional entropy} of a discrete random variable $X:\Omega\mapsto\mathcal{X}$ conditioned on another discrete random variable $Y:\Omega\mapsto\mathcal{Y}$ is
        $$\H(X|Y) = \sum_{x\in \mathcal{X}, y\in\mathcal{Y}} \Pr(X=x, Y=y) \ln \frac{1}{\Pr(X=x|Y=y)}.$$
        \item The \emph{mutual information} between two random variables $X:\Omega\mapsto \mathcal{X}$ and $Y:\Omega\mapsto\mathcal{Y}$ is
        $$\I(X;Y)\ =\ \KL\left(\Pr((X,Y)\in\cdot)\| \Pr(X\in\cdot ) \otimes \Pr(Y\in\cdot)\right),$$
        where $\Pr(X\in\cdot) \otimes \Pr(Y\in\cdot)$ denotes the outer product distribution.  For discrete random variables,
        $$\I(X;Y)\ =\ \H(X) - \H(X|Y)\ =\ \H(Y) - \H(Y|X).$$
        \item {\bf(chain rule of mutual information)}
        For all random variables $X:\Omega\mapsto\mathcal{X}, Y:\Omega\mapsto\mathcal{Y}, Z:\Omega\mapsto\mathcal{Z}$,
        $$\I(Y;Z|X) + \I(X;Z) = \I(X,Y;Z).$$
        \item The \emph{differential entropy} of a continuous random variable $X:\Omega\mapsto\mathcal{X}$ with density $p_X$ (w.r.t the Lebesgue measure $\mu(\cdot)$) is
        $$\diffentropy(X) \ =\ \int_{x\in\mathcal{X}} p_X(x)\ln\frac{1}{p_X(x)} d\mu(x).$$
        \item Differential entropy can be negative and is unit-dependent.
        \item For continuous random variables $X:\Omega\mapsto \mathcal{X}, Y:\Omega\mapsto\mathcal{Y}$,
        $$\I(X;Y)\ =\ \diffentropy(X) - \diffentropy(X|Y)\ =\ \diffentropy(Y) - \diffentropy(Y|X).$$
        \item {\bf (mutual information as expected KL divergence)}
        For all random variables $X:\Omega\mapsto\mathcal{X}, Y:\Omega\mapsto\mathcal{Y}$,
        $$\I(X;Y) = \E\left[\KL\left(\Pr(Y\in\cdot|X)\ \|\ \Pr(Y\in\cdot)\right)\right].$$
    \end{itemize}
\end{summary}

\clearpage

\section{Connecting Learning and Information Theory}\label{sec:info_learn}

In this section, we will leverage the requisite information-theoretic results of section \ref{sec:info_theory} to derive general upper and lower bounds for the error 
$$\Lc_T = \frac{1}{T}\sum_{t=0}^{T-1} \E\left[\KL(P^*_t \|\hat{P}_t )\right]$$
of the Bayesian posterior predictive $\hat{P}_t$.  The results of this section will facilitate the analysis of concrete problem instances in the following sections.

\subsection{Error is Information}

We now establish the elegant connection between $\Lc_T$ and mutual information.

\begin{theorem}{\bf (optimal error equals total information)}\label{th:error_info}
    For all $T \in \Z_{++}$,
    $$\Lc_T = \frac{\I(H_T;\theta)}{T}.$$
\end{theorem}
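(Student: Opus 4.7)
The plan is to establish the identity by showing both sides collapse to the common sum $\sum_{t=0}^{T-1}\I(\theta;Y_{t+1}\mid H_t)$ of per-step conditional mutual informations. The first move is to use Lemma \ref{le:bayes_opt} together with the definition $\hat{P}_t(\cdot)=\Pr(Y_{t+1}\in\cdot\mid H_t)$ to identify $\E[-\ln\hat{P}_t(Y_{t+1})]=\H(Y_{t+1}\mid H_t)$, so that the log-loss contribution to $\mathcal{L}_T$ equals $T^{-1}\sum_{t=0}^{T-1}\H(Y_{t+1}\mid H_t)$. The theorem thus reduces to verifying
\begin{equation*}
    \sum_{t=0}^{T-1}\H(Y_{t+1}\mid H_t)\;-\;\H(Y_{1:T}\mid\theta,X_{0:T-1})\;=\;\I(\theta;H_T).
\end{equation*}

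To handle the right-hand side I would apply the chain rule of mutual information (Lemma \ref{le:chain_rule}) to $\I(\theta;H_T)$, peeling $H_T=(X_0,Y_1,X_1,\ldots,Y_T,X_T)$ apart one coordinate at a time to obtain
\begin{equation*}
    \I(\theta;H_T)=\I(\theta;X_0)+\sum_{t=1}^{T}\bigl[\I(\theta;Y_t\mid H_{t-1})+\I(\theta;X_t\mid H_{t-1},Y_t)\bigr].
\end{equation*}
The hypothesis $(X_0,X_1,\ldots)\perp\theta$ immediately gives $\I(\theta;X_0)=0$, and the hypothesis $X_t\perp\theta\mid(H_{t-1},Y_t)$ makes every $\I(\theta;X_t\mid H_{t-1},Y_t)$ vanish. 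What survives is
\begin{equation*}
    \I(\theta;H_T)=\sum_{t=0}^{T-1}\I(\theta;Y_{t+1}\mid H_t)=\sum_{t=0}^{T-1}\bigl[\H(Y_{t+1}\mid H_t)-\H(Y_{t+1}\mid\theta,H_t)\bigr],
\end{equation*}
which lines up with the log-loss piece term-by-term.

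Comparing against the reduced identity, the only remaining task is to identify $\sum_{t=0}^{T-1}\H(Y_{t+1}\mid\theta,H_t)$ with $\H(Y_{1:T}\mid\theta,X_{0:T-1})$. The chain rule of entropy gives only $\H(Y_{1:T}\mid\theta,X_{0:T-1})=\sum_{t}\H(Y_{t+1}\mid\theta,X_{0:T-1},Y_{1:t})$, so one must argue that additionally conditioning on the \emph{future} inputs $X_{t+1:T-1}$ does not alter $Y_{t+1}$'s law given $(\theta,H_t)$. The route I would take is to unfold the joint-density factorization implied by the generative model---combining the assumption $(X_0,X_1,\ldots)\perp\theta$ with the defining property $\theta(\cdot\mid H_t)=\Pr(Y_{t+1}\in\cdot\mid\theta,H_t)$---to read off $\Pr(Y_{1:T}\mid\theta,X_{0:T-1})=\prod_{t=0}^{T-1}\theta(Y_{t+1}\mid H_t)$, after which $-\E\ln(\cdot)$ decomposes the joint conditional entropy additively. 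I expect this last decomposition---where the independence of the $X$-sequence from $\theta$ is really used---to be the main technical obstacle; once it is in hand, substituting back into the reduced identity closes the proof.
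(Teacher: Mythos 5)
Your proposal is correct and follows essentially the same route as the paper's proof: both reduce $\Lc_T$ to $\sum_{t=0}^{T-1}\I(Y_{t+1};\theta\mid H_t)$ and then invoke the chain rule together with the assumptions $\theta\perp X_0$ and $X_t\perp\theta\mid(H_{t-1},Y_t)$ to telescope this sum into $\I(H_T;\theta)$. The one step you flag as the main obstacle---identifying $\sum_t\H(Y_{t+1}\mid\theta,H_t)$ with $\H(Y_{1:T}\mid\theta,X_{0:T-1})$ via the generative factorization---is asserted without detail in Section \ref{sec:avg_reward} of the paper, so your treatment is, if anything, slightly more explicit there.
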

\begin{proof}
    \begin{align*}
        \Lc_T
        & = \frac{1}{T}\sum_{t=0}^{T-1} \E\left[-\ln \hat{P}_t(Y_{t+1})\right] - \frac{\H(Y_{1:T}|\theta, X_{0:T-1})}{T}\\
        & = \frac{1}{T}\sum_{t=0}^{T-1} \E\left[\ln \frac{\Pr(Y_{t+1}|\theta, H_t)}{\hat{P}_t(Y_{t+1})}\right]\\
        & = \frac{1}{T}\sum_{t=0}^{T-1} \E\left[\KL\left(\Pr(Y_{t+1}\in\cdot|\theta, H_t) \ \|\ \Pr(Y_{t+1}\in\cdot|H_t)\right)\right]\\
        & = \frac{1}{T}\sum_{t=0}^{T-1}\I(Y_{t+1};\theta|H_t)\\
        & \overset{(a)}{=} \frac{1}{T}\sum_{t=0}^{T-1}\I(Y_{t+1};\theta|H_t) + \frac{1}{T} \I(X_0;\theta) + \frac{1}{T}\sum_{t=1}^{T-1} \I(X_t;\theta|H_{t-1}, Y_t) \\
        & = \frac{\I(H_T;\theta)}{T}.
    \end{align*}
    where $(a)$ follows from the assumption that for all $t$, $X_{t}\perp\theta|(H_{t-1}, Y_t)$.
\end{proof}

The  error incurred by an optimal algorithm over horizon $T$ is \emph{exactly} equal to the total information acquired about $\theta$ from observing the data $H_T$.  Every nat of \emph{information} about $\theta$ can only be acquired via incurring \emph{error} on a prediction which relied on that information.  Evidently, an optimal algorithm never makes the same mistake twice, hence resulting in the equality between total loss incurred and total information acquired.  The duality of error and information has long been known, dating back to results such as the redundancy-capacity theorem which equates the minimax KL divergence error to the channel capacity.  As minimax error characterizes performance under the worst possible scenario, it is natural that it would be equivalent to the channel capacity, which maximizes the mutual information across all possible prior distributions.  We provide results of this flavor in Section \ref{subsec:minimax}.  In this section, we restrict our attention to the Bayesian error and establish novel connections to rate-distortion theory.

A natural question which may arise when inspecting Theorem \ref{th:error_info} is: ``Does $\Lc_T$ decay to $0$ in $T$ and if so, at what rate?''.  Ostensibly, the numerator $\I(\theta;H_T)$ appears to be \emph{growing} in $T$, so it is not immediately obvious that even an optimal learner will experience vanishing error.  A simple instance to initially consider is $\theta$ which is a discrete random variable.  In such an instance, we can always provide the upper bound:
$$\Lc_T \leq \frac{\H(\theta)}{T}.$$
Therefore, for any $\theta$ for which $\H(\theta) < \infty$, we have that $\Lc_T = O(1/T)$.

However, what about a more realistic scenario in which $\theta$ is a \emph{continuous} random variable?  A naive idea would be to simply supplant the discrete entropy $\H(\theta)$ with the \emph{differential entropy} $\diffentropy(\theta)$.  However, while differences in conditional differential entropy equal mutual information (just as with discrete entropy), differential entropy does \emph{not} upper bound mutual information.  This is because differential entropy can be \emph{negative} (as discussed in section \ref{subsec:diff_entropy}).  The appropriate extension of discrete entropy to continuous random variables can be established via \emph{rate-distortion theory}.

\subsection{Characterizing Error via Rate-Distortion Theory}

A rate-distortion function characterizes the optimal compression rate for a given notion of \emph{distortion}.  Equivalently, for a random variable $\theta$, it quantifies the amount of information about $\theta$ necessary to achieve distortion less than or equal to $\epsilon$.  To interpret this statement, consider a random variable $\tilde{\theta}$ that approximates $\theta$.  The mutual information $\I(\theta;\tilde{\theta})$ quantifies the amount of information that $\tilde{\theta}$ conveys about $\theta$.  The rate-distortion function minimizes this rate $\I(\theta;\tilde{\theta})$ across approximations $\tilde{\theta}$ that satisfy the aforementioned distortion constraint.

% We begin with the definition of the rate-distortion function.

% \begin{definition}{\bf (rate-distortion function)}\label{def:rd}
%     Let $\epsilon \geq 0, \theta: \Omega\mapsto \Theta$ be a random variable, and $\rho$ a distortion function which maps $\theta$ and another random variable $\tilde{\theta}$ to $\Re_{+}$.  The rate-distortion function evaluated for random variable $\theta$ at tolerance $\epsilon$ is defined as:
%     $$\H_{\rho, \epsilon}(\theta) :=\inf_{\tilde{\theta}\in \tilde{\Theta}_\epsilon} \underbrace{\I(\theta;\tilde{\theta})}_{\rm rate},$$
%     where
%     $$\tilde{\Theta}_{\rho,\epsilon} = \left\{\tilde{\theta}: \underbrace{\E\left[\rho(\theta, \tilde{\theta})\right]}_{\rm distortion} \leq \epsilon\right\}.$$
% \end{definition}

We now introduce notation that we will use to define a rate-distortion function.  First, let $\tilde{\Theta}$ be the set of random variables $\tilde{\theta}$ such that, for all $t,\ \tilde{\theta}\perp Y_{t+1}|(\theta, H_t)$.  In other words, $\tilde{\theta}$ does not offer any predictive information beyond what $\theta$ does.

Recall that as a gold standard, we defined $P^*_t(\cdot) = \Pr(Y_{t+1} \in \cdot | \theta, H_t)$, the prediction a clairvoyant with full knowledge of the data generating process, identified by $\theta$, would make.  Consider another prediction $\tilde{P}_t(\cdot) = \Pr(Y_{t+1} \in \cdot | \tilde{\theta}, H_t)$, which represents an approximation of the gold standard based on $\tilde{\theta}$ instead of $\theta$.  While rate-distortion theory can be developed for any distortion measure, in this monograph, we will focus on the following distortion measure:
$$\frac{1}{T} \sum_{t=0}^{T-1} \E\left[\KL\left(P^*_t \ \|\ \tilde{P}_t\right)\right].$$
For each $T$, this is the average error incurred by $\tilde{P}_t$ over $T$ timesteps.

Since our notion of distortion depends on $T$, so does our definition of rate-distortion:
\begin{definition}{\bf ($T$-horizon rate-distortion function)}
    For horizon $T$ and threshold $\epsilon \geq 0$, the $T$-horizon rate-distortion function is:
    $$\H_{\epsilon,T}(\theta) = \inf_{\tilde{\theta}\in\tilde{\Theta}_{\epsilon,T}}\ \I(\theta; \tilde{\theta}),$$
    where
    $$\tilde{\Theta}_{\epsilon,T} = \left\{\tilde{\theta} \in \tilde{\Theta}:\ \underbrace{\frac{1}{T}\sum_{t=0}^{T-1} \E\left[\KL\left(P^*_t \ \|\ \tilde{P}_t\right)\right] \leq \epsilon}_{\rm distortion\ threshold} \right\}.$$
\end{definition}

The following result upper and lower bounds the optimal error in terms of the $T$-horizon rate-distortion function:

\begin{theorem}{\bf ($T$-horizon rate-distortion  error bounds)}\label{th:rd_bounds}
    For all $T \in \Z_{++}$,
    $$\sup_{\epsilon \geq 0}\ \min\left\{\frac{\H_{\epsilon, T}(\theta)}{T}, \epsilon\right\}\ \leq\ \Lc_T\ \leq\ \inf_{\epsilon \geq 0}\ \frac{\H_{\epsilon,T}(\theta)}{T} + \epsilon.$$
\end{theorem}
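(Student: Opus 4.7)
By Theorem \ref{th:error_info}, $\Lc_T = \I(H_T;\theta)/T$, so the task reduces to sandwiching this single mutual information by the rate-distortion function. The lever that makes both bounds work is the observation that the constraint $\tilde{\theta}\perp H_T\mid\theta$ defining $\tilde{\Theta}_{\epsilon,T}$ is exactly the Markov chain $\tilde{\theta} - \theta - H_T$.

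For the upper bound, I would fix any $\tilde{\theta} \in \tilde{\Theta}_{\epsilon,T}$ and apply the chain rule of mutual information (Lemma \ref{le:chain_rule}) in two orderings:
\begin{align*}
    \I(H_T;\theta,\tilde{\theta}) \;=\; \I(H_T;\theta) + \I(H_T;\tilde{\theta}\mid\theta) \;=\; \I(H_T;\tilde{\theta}) + \I(H_T;\theta\mid\tilde{\theta}).
\end{align*}
The Markov assumption forces $\I(H_T;\tilde{\theta}\mid\theta)=0$, so the two right-hand sides equate to give $\I(H_T;\theta) = \I(H_T;\tilde{\theta}) + \I(H_T;\theta\mid\tilde{\theta})$. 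The data processing inequality along the chain yields $\I(H_T;\tilde{\theta}) \leq \I(\theta;\tilde{\theta})$, while feasibility gives $\I(H_T;\theta\mid\tilde{\theta}) \leq T\epsilon$. Dividing by $T$, infimizing over $\tilde{\theta}\in\tilde{\Theta}_{\epsilon,T}$, and then infimizing over $\epsilon$ delivers $\Lc_T \leq \inf_{\epsilon\geq 0}\bigl(\H_{\epsilon,T}(\theta)/T + \epsilon\bigr)$.

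For the lower bound, I would show the pointwise inequality $\min\{\H_{\epsilon,T}(\theta)/T,\,\epsilon\} \leq \Lc_T$ for every $\epsilon \geq 0$ and then take the supremum. The argument splits into two regimes. If $\epsilon \leq \Lc_T$, the minimum is already dominated by $\epsilon$ and we are done. If instead $\epsilon > \Lc_T$, I would exhibit a trivial feasible point: take $\tilde{\theta}$ to be any constant (deterministic) random variable. Then $\tilde{\theta}\perp H_T\mid\theta$ holds automatically, and $\I(H_T;\theta\mid\tilde{\theta}) = \I(H_T;\theta) = T\Lc_T \leq T\epsilon$, so $\tilde{\theta}\in\tilde{\Theta}_{\epsilon,T}$. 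Since $\I(\theta;\tilde{\theta})=0$ for this choice, $\H_{\epsilon,T}(\theta)=0$ and the minimum collapses to $0 \leq \Lc_T$.

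The proof is, at heart, a chain-rule calculation plus data processing, so there is no real technical obstacle once the Markov structure induced by $\tilde{\theta}\perp H_T\mid\theta$ is brought forward. The subtlest point worth flagging in the writeup is the case analysis for the lower bound: the rate-distortion function $\H_{\epsilon,T}(\theta)$ becomes vacuously zero past the threshold $\epsilon = \Lc_T$ (because the constant-channel is feasible there), which is exactly what makes the ``$\min$'' inside the supremum the correct shape rather than $\H_{\epsilon,T}(\theta)/T$ alone.
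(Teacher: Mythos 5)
Your proof is correct. The upper bound is essentially the paper's argument: the paper performs the chain-rule decomposition timestep-by-timestep ($\I(Y_{t+1};\theta|H_t) = \I(Y_{t+1};\tilde{\theta}|H_t) + \I(Y_{t+1};\theta|\tilde{\theta},H_t)$ summed over $t$), whereas you do it in one shot on the full history; the two are identical in content, and yours is arguably cleaner since it makes the roles of the Markov condition $\tilde{\theta}\perp H_T\mid\theta$ and the data processing inequality explicit in a single display. The lower bound, however, is where you genuinely diverge. The paper argues by contradiction with a specific witness: assuming $\I(H_T;\theta) < \H_{\epsilon,T}(\theta)$, it takes $\tilde{\theta} = \tilde{H}_T$ to be an independent copy of the history, notes its rate is below the rate-distortion value so it must violate the distortion constraint, and then uses a conditioning-reduces-entropy argument (relying on $Y_{t+1}\perp\tilde{H}_t\mid(\theta,H_t)$) to show $\I(H_T;\theta) \geq \epsilon T$. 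Your case split avoids constructing any nontrivial proxy at all: when $\epsilon \leq \Lc_T$ the bound is immediate, and when $\epsilon > \Lc_T$ the constant proxy is feasible so $\H_{\epsilon,T}(\theta) = 0$ and the minimum collapses to $0 \leq \Lc_T$. This is more elementary and requires no structural assumptions on the data-generating process beyond nonnegativity of mutual information, whereas the paper's independent-copy device is the one that carries over to the iid-specialized lower bound (Lemma on estimation error for iid data), where the distortion is defined via a single-step quantity $\I(Y_1;\theta|\tilde{\theta},X_0)$ and the constant-proxy degeneracy argument no longer applies directly. For the theorem as stated, your route is a valid and tidier proof.
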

\begin{proof}
    We begin with a proof of the upper bound.
    \begin{align*}
        \Lc_{T}
        & \overset{(a)}{=} \frac{\I(H_T;\theta)}{T}\\
        & = \frac{1}{T}\sum_{t=0}^{T-1} \I(Y_{t+1};\theta| H_t)\\
        & = \frac{1}{T}\sum_{t=0}^{T-1} \I(Y_{t+1};\theta,\tilde{\theta}|H_t)\\
        & = \frac{1}{T}\sum_{t=0}^{T-1} \I(Y_{t+1};\tilde{\theta}|H_t) + \I(Y_{t+1};\theta|\tilde{\theta}, H_t)\\
        & = \frac{\I(H_T;\tilde{\theta})}{T} + \frac{1}{T}\sum_{t=0}^{T-1}\I(Y_{t+1};\theta|\tilde{\theta}, H_t)\\
        & \overset{(b)}{\leq} \inf_{\epsilon \geq 0}\inf_{\tilde{\theta}\in\tilde{\Theta}_{\epsilon,T}}\ \frac{\I(\theta;\tilde{\theta})}{T} + \frac{1}{T}\sum_{t=0}^{T-1}\I(Y_{t+1};\theta|\tilde{\theta}, H_t)\\
        & \overset{(c)}{=} \inf_{\epsilon \geq 0}\inf_{\tilde{\theta}\in\tilde{\Theta}_{\epsilon,T}}\ \frac{\I(\theta;\tilde{\theta})}{T} + \frac{1}{T} \sum_{t=0}^{T-1} \E\left[ \KL(P^*_t\|\tilde{P}_t ) \right]\\
        & \overset{(d)}{\leq} \inf_{\epsilon \geq 0}\ \frac{\H_{\epsilon, T}(\theta)}{T} + \epsilon\\
    \end{align*}
    where $(a)$ follows from Theorem \ref{th:error_info}, $(b)$ follows from the data processing inequality applied to the Markov chain $\tilde{\theta} \rightarrow \theta \rightarrow H_t$, $(c)$ follows from Lemma \ref{le:kl_mi}, and $(d)$ follows from the fact that $\tilde{\theta}\in \tilde{\Theta}_{\epsilon, T}$.

    We now proceed with the lower bound.  Suppose that $\I(H_T;\theta) < \H_{\epsilon, T}(\theta)$.  Let $\tilde{\theta} = \tilde{H}_T \notin \tilde{\Theta}_{\epsilon, T}$ where $\tilde{H}_T$ is another history sampled in the same manner as $H_T$.
    \begin{align*}
        \I(H_T;\theta)
        & = \sum_{t=0}^{T-1}\I(Y_{t+1};\theta|H_t)\\
        & \overset{(a)}{\geq} \sum_{t=0}^{T-1}\I(Y_{t+1};\theta|\tilde{H}_{t}, H_t)\\
        & = \sum_{t=0}^{T-1}\I(Y_{t+1};\theta|\tilde{\theta}, H_t)\\
        & = \sum_{t=0}^{T-1} \E\left[\KL(P^*_t\| \tilde{P}_t)\right]\\
        & \overset{(b)}{\geq} \epsilon T,
    \end{align*}
    where $(a)$ follows from the fact that conditioning reduces entropy and that $Y_{t+1}\perp \tilde{H}_t|(\theta, H_t)$ and $(b)$ follows from the fact that $\tilde{\theta} \notin \tilde{\Theta}_{\epsilon, T}$.  Therefore, for all $\epsilon \geq 0$, $\I(H_{T};\theta) \geq \min\{ H_{\epsilon, T}, \epsilon T\}$.  The result follows.
\end{proof}

Theorem \ref{th:rd_bounds} establishes the tight relation between  error and the rate-distortion function.  The result is very general and facilitates the analysis of concrete problem instances in supervised learning.  In the following section, we will study 3 concrete instances of increasing complexity to demonstrate how Theorem \ref{th:rd_bounds} facilitates analysis.

We also note the qualitative similarity between Theorem \ref{th:rd_bounds} and classical results from PAC learning which all characterize  error as a fraction involving a complexity function of the confidence set and the dataset size (VC-dimension, Rademacher complexity, log-covering number).  In our framework, the rate-distortion function serves as the ``complexity'' function which characterizes the difficulty of learning $\theta$ for the purposes of prediction.  In the following section, we will use this general result to derive concrete error bounds for a suite of problems involving data pairs which are iid when conditioned on $\theta$.

\subsection{Comparisons to Minimax Error}\label{subsec:minimax}

The monograph focuses on the analysis of the expected error under a prior distribution over models.  The preceding sections introduced definitions and concepts that form a basis for such analysis, which some would refer to as a {\it Bayesian approach}.

It is worth noting that for each result we will establish via the Bayesian approach, it is likely that one could derive an analogous result via worst case error over a confidence set, which some refer to as a {\it frequentist approach}.  A motivation for the Bayesian approach, though, is its elegance.  In particular, to translate any given result to its frequentist counterpart, one typically tries to identify an analytically convenient confidence set and, if the arduous task is worth the investment, produces a messier analysis and loss bound.

In this section, we discuss the frequentist approach and its connection to the Bayesian framework of this monograph.  The purpose is to help a reader more familiar with worst case analysis to draw connections as they read this monograph.  However, the content of this section is otherwise unnecessary for reading the rest of this monograph.

Frequentist results study the \emph{minimax error} with respect to a confidence set $\Theta$:
$$\underline{\Lc}_{T} \ =\ 
    \inf_{P\in\mathcal{P}}\sup_{\nu\in\Theta} \frac{1}{T}\sum_{t=0}^{T-1} \E\left[
    \KL(P^*_t\| P)|\theta=\nu\right],$$
where $\mathcal{P}$ denotes the set of all functions mapping histories in $\mathcal{H}$ to probability distributions defined on $\mathcal{Y}$.  Notably, for any horizon $T$ and random variable $\theta: \Omega\mapsto \Theta$, the minimax error $\underline{\Lc}_{T}$ is greater than or equal to the Bayesian error $\Lc_T$.  Therefore, the rate-distortion lower bound from Theorem \ref{th:rd_bounds} also provides a lower bound for $\underline{\Lc}_{T}$.

There exists a large body of prior work which characterizes the minimax error using techniques that mirror ours.  Just as we are able to exactly equate Bayesian error to mutual information, prior work equates minimax error to channel capacity, a worst-case analogue to mutual information.  Furthermore, just as we upper and lower bound Bayesian error via the rate-distortion function, existing work upper and lower bounds frequentist error via metric entropy \citep{yangbarron1999}.  In the following section we present these results along with their requisite definitions and notation.

\subsubsection{Minimax Error Equals Channel Capacity}

While Theorem \ref{th:error_info} established the equivalence between the Bayesian error and mutual information, there exists an analogous result which relates minimax error to the \emph{channel capacity} of the conditional distribution $\Pr(H_T\in\cdot|\theta)$.

\begin{definition} {\bf (channel capacity)}  Let $X:\Omega\mapsto\mathcal{X}, Y:\Omega\mapsto\mathcal{Y}$ be arbitrary random variables.  The \emph{channel capacity} of the conditional probability measure $\Pr(Y\in\cdot|X)$ is defined as:
$$\mathfrak{C}(Y;X) := \sup_{\tilde{X}:\Omega\mapsto\mathcal{X}} \E\left[\KL\left(\Pr(Y\in\cdot|X\leftarrow\tilde{X}) \ \| \ \sum_{x\in\mathcal{X}}\Pr(\tilde{X}=x)\cdot\Pr(Y\in\cdot|X=x) \right)\right].$$
\end{definition}

If $X$ is a continuous random variable, then the sum above should be replaced with an integral and the probability measure replaced with the Radon-Nikodym derivative.  The channel capacity $\mathfrak{C}(Y;X)$ finds the maximum mutual information $\I(Y;X)$ under a thought experiment in which we can \emph{change} the probability measure $\Pr(X\in\cdot)$.  The supremum $\tilde{X}:\Omega\mapsto\mathcal{X}$ and the change of measure $\leftarrow$ in the above definition conveys this point.  With channel capacity defined, we present a classical result which equates minimax error to the channel capacity of $\Pr(H_T\in\cdot|\theta)$.

\begin{theorem}{\bf (redundancy-capacity)}
    For all $T \in \mathbb{Z}_{++}$, if $\mathcal{P}$ is compact and convex, then
    $$\underline{\Lc}_T = \frac{\mathfrak{C}(H_T;\theta)}{T}.$$
\end{theorem}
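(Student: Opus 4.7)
The plan is to reduce the multi-step predictive loss to a single KL divergence over the full trajectory, then apply Sion's minimax theorem to exchange the inner supremum with the outer infimum, and finally identify the resulting expression as channel capacity. This is the classical program behind the redundancy-capacity theorem, adapted to the sequential data-generating process of Section \ref{sec:continual_interaction}.

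First, I would use the chain rule of KL divergence. Because $X_t \perp \theta \mid (H_{t-1}, Y_t)$, the conditional law $\Pr(H_T\in\cdot|\theta=\nu)$ factorizes as a product of $X$-transitions (which do not depend on $\nu$) and $Y$-transitions (which do). Any predictor $P\in\mathcal{P}$ induces a joint law $Q_P$ on $H_T$ obtained by replacing the $Y$-transitions with $P(Y_{t+1}\in\cdot|H_t)$ while keeping the $X$-transitions fixed. Telescoping the chain rule then gives
$$\sum_{t=0}^{T-1} \E\!\left[\KL\!\big(\Pr(Y_{t+1}\in\cdot|\theta, H_t)\,\|\, P(Y_{t+1}\in\cdot|H_t)\big)\,\big|\,\theta=\nu\right] = \KL\!\big(\Pr(H_T\in\cdot|\theta=\nu)\,\|\, Q_P(\cdot)\big),$$
so $T\cdot\underline{\Lc}_T = \inf_{P\in\mathcal{P}}\sup_{\nu\in\Theta} \KL(\Pr(H_T|\theta=\nu)\|Q_P)$.

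Next, for any probability measure $\mu$ on $\Theta$ we have $\sup_{\nu}\KL(\Pr(H_T|\theta=\nu)\|Q_P) \geq \int \KL(\Pr(H_T|\theta=\nu)\|Q_P)\,d\mu(\nu)$, with equality recovered by taking Dirac $\mu$'s. Define $f(P,\mu) := \int \KL(\Pr(H_T|\theta=\nu)\|Q_P)\,d\mu(\nu)$; this is convex and lower semicontinuous in $P$ (convexity of KL in its second argument plus the compactness-convexity hypothesis on $\mathcal{P}$), and linear—hence concave and continuous—in $\mu$ under the weak topology. Sion's minimax theorem then yields $\inf_P \sup_\mu f(P,\mu) = \sup_\mu \inf_P f(P,\mu)$. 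For each fixed $\mu$, the inner infimum is attained (by the argument of Lemma \ref{le:bayes_opt} applied jointly to $H_T$) at the Bayesian marginal $Q_P^\star = \int \Pr(H_T|\theta=\nu)\,d\mu(\nu)$, and the minimal value equals the mutual information $\I(\theta_\mu; H_T)$ where $\theta_\mu\sim\mu$. Taking the supremum over $\mu$ of this mutual information is precisely $\mathfrak{C}(H_T;\theta)$ by the definition via the change-of-measure $\leftarrow$, so dividing by $T$ gives the claim.

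The main obstacle will be verifying the hypotheses of Sion's minimax theorem in this infinite-dimensional setting: identifying a topology in which $\mathcal{P}$ is compact, $f(\cdot,\mu)$ is lower semicontinuous, and KL-divergence is well-behaved jointly. A secondary subtlety is the edge case $\mathfrak{C}(H_T;\theta)=\infty$, which needs a direct argument (e.g.\ approximating by priors supported on a growing finite subset of $\Theta$ and noting that no single $Q_P$ can simultaneously approximate every $\Pr(H_T|\theta=\nu)$) to conclude that $\underline{\Lc}_T$ also diverges. Once these technicalities are handled, the rest is a straightforward application of the chain rule and the Bayesian optimality of the posterior predictive.
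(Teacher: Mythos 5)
Your proposal follows essentially the same route as the paper's proof: relax the sup over $\nu\in\Theta$ to a sup over priors, invoke a minimax theorem justified by the compactness and convexity of $\mathcal{P}$, identify the inner infimum via the Bayes-optimality of the posterior predictive (Lemma \ref{le:bayes_opt}), and telescope the per-timestep KL divergences into a trajectory-level KL via the chain rule before reading off the channel capacity. The only differences are cosmetic (you apply the chain rule before the minimax swap rather than after) and that you explicitly flag the topological hypotheses and the $\mathfrak{C}=\infty$ edge case, which the paper leaves implicit.
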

\begin{proof}
    \begin{align*}
        \underline{\Lc}_T
        & = \inf_{P\in\mathcal{P}}\sup_{\nu\in\Theta} \frac{1}{T}\sum_{t=0}^{T-1} \E\left[
    \KL(\Pr(Y_{t+1}\in\cdot|\theta, X_t)\| P(Y_{t+1}\in\cdot|H_t))|\theta=\nu\right]\\
        & \overset{(a)}{=} \inf_{P\in\mathcal{P}}\sup_{\rho\in\Delta_\Theta} \frac{1}{T}\sum_{t=0}^{T-1}\sum_{\nu\in\Theta} \rho(\nu) \cdot \E\left[
    \KL(\Pr(Y_{t+1}\in\cdot|\theta, X_t)\| P(Y_{t+1}\in\cdot|H_t))|\theta=\nu\right]\\
        & \overset{(b)}{=} \sup_{\rho\in\Delta_{\Theta}} \inf_{P\in\mathcal{P}} \frac{1}{T}\sum_{t=0}^{T-1} \sum_{\nu\in\Theta}\rho(\nu) \cdot \E\left[
    \KL(\Pr(Y_{t+1}\in\cdot|\theta, X_t)\| P(Y_{t+1}\in\cdot|H_t))|\theta=\nu\right]\\
        & \overset{(c)}{=} \sup_{\rho\in\Delta_{\Theta}} \frac{1}{T}\sum_{t=0}^{T-1}\sum_{\nu\in\Theta} \rho(\nu) \cdot \E\left[
    \KL(\Pr(Y_{t+1}\in\cdot|\theta, X_t)\| P_{\rho}(Y_{t+1}\in\cdot|H_t))|\theta=\nu\right]\\
        & \overset{(d)}{=} \sup_{\rho\in\Delta_{\Theta}}\sum_{\nu\in\Theta}\rho(\nu)\cdot \frac{\E\left[\KL\left(\Pr(H_T\in\cdot|\theta)\ \|\ P_{\rho}(H_T\in\cdot)\right)|\theta=\nu\right]}{T}\\
        & \overset{(e)}{=} \sup_{\tilde{\theta}: \Omega\mapsto\Theta} \sum_{\nu\in\Theta}\Pr(\tilde{\theta}=\nu)\cdot \frac{\E\left[\KL\left(\Pr(H_T\in\cdot|\theta)\ \|\ P_{\tilde{\theta}}(H_T\in\cdot)\right)|\theta=\nu\right]}{T}\\
        & = \sup_{\tilde{\theta}: \Omega\mapsto\Theta} \frac{\E\left[\KL\left(\Pr(H_T\in\cdot|\theta\leftarrow \tilde{\theta})\ \| \ P_{\tilde{\theta}}(H_T\in\cdot)\right)\right]}{T}\\
    \end{align*}
    where in $(a)$, $\Delta_{\Theta}$ denotes the probability simplex on $\Theta$, $(b)$ follows from the minimax theorem, $(c)$ follows from Lemma \ref{le:bayes_opt} where 
    $$P_{\rho}(Y_{t+1}\in\cdot|H_t) = \frac{\sum_{\nu\in\Theta} \rho(\nu)\cdot \Pr(H_{t+1}\in\cdot|\theta=\nu)}{\sum_{\nu\in\Theta} \rho(\nu)\cdot\Pr(H_{t}\in\cdot|\theta=\nu)},$$
     $(d)$ follows from the chain rule of KL divergence, where in $(e)$, 
     $$P_{\tilde{\theta}}(H_T\in\cdot) = \sum_{\nu\in\Theta} \Pr(\tilde{\theta}=\nu) \cdot \Pr(H_T\in\cdot|\theta = \nu).$$
     The result follows.
\end{proof}

The result bears many similarities to Theorem \ref{th:error_info}.  Since minimax regret considers worst-case performance across all realizations of $\theta$, it is intuitive that the regret would be the mutual information between $H_T$ and $\theta$ under a suitable worst-case change of measure.  This value turns out to exactly equal the channel capacity.  This result along with Theorem \ref{th:error_info} establish an intimate connection between the optimal error and the information shared between $\theta$ and the observed data $H_T$.  However, just as with mutual information, bounding the channel capacity for complex problems may not be analytically feasible.  In these situations, we must again resort to upper and lower bounds which we explore in the following sections.

\subsubsection{Minimax Upper Bounds}

We begin with the definition of \emph{metric entropy}, the frequentist analogue to the rate-distortion function in the Bayesian framework. 

\begin{definition}{\bf (metric entropy)}
    Let $\epsilon \geq 0$, $\Theta$ be a set, and $\rho:\Theta\times\Theta\to\Re_{+}$ a distortion function.  The metric entropy of set $\Theta$ at tolerance $\epsilon$ is:
    $$\inf_{\tilde{\Theta}\in \Theta_\epsilon}\ \ln|\tilde{\Theta}|$$
    where
    $$\Theta_\epsilon = \left\{ \tilde{\Theta}\subseteq \Theta:\ \sup_{\nu\in\Theta}\inf_{\tilde{\nu} \in \tilde{\Theta}} \rho(\nu, \tilde{\nu}) \leq \epsilon \right\}.$$
\end{definition}

We note that the rate-distortion function generalizes metric entropy: metric entropy relies on discrete quantizations of $\Theta$ whereas rate-distortion theory allows continuous random variables with finite mutual information with $\theta$.  Furthermore, the metric entropy assigns a cost of $\ln|\tilde{\Theta}|$ to a quantization while the rate-distortion function assigns a cost of $\I(\theta;\tilde{\theta})$.  These subtle differences come from the fact that the Bayes error is an average with respect to a prior distribution while the minimax error is a maximum across a confidence set.  In practice, the way that these priors (for Bayesian) and sets (for frequentist) are constructed will impact the eventual bound more than the differences between metric entropy and rate-distortion theory.  However, in this work, we find that analyzing complex problems in machine learning with standard priors (such as Gaussian) produces elegant sample complexity results.  While the corresponding frequentist results are bound to exist, construction of confidence sets that lead to such elegant results is technically challenging.

In order to relate metric entropy to $\underline{\Lc}_T$, we use the following distortion function:

$$\rho(\nu, \tilde{\nu})= \frac{1}{T}\sum_{t=0}^{T-1}\E\left[\KL\left(P^*_t \ \|\ \Pr(Y_{t+1}\in\cdot|\theta=\tilde{\nu}, H_t)\right)\right].$$

We use the notation $\mathfrak{H}_{\epsilon, T}(\Theta)$ to denote the metric entropy:

$$\mathfrak{H}_{\epsilon, T}(\Theta) = \inf_{\tilde{\Theta}\in\Theta_\epsilon}\ \ln |\tilde{\Theta}|,$$
where
$$\Theta_{\epsilon} = \left\{\tilde{\Theta} \subset \Theta;\ \sup_{\nu\in\Theta}\inf_{\tilde{\nu}\in\tilde{\Theta}} \frac{1}{T}\sum_{t=0}^{T-1} \E\left[\KL\left(P^*_t\|\Pr(Y_{t+1}\in\cdot|\theta=\tilde{\nu}, H_t)\right)|\theta=\nu\right] \leq \epsilon \right\}.$$
% \begin{definition}{\bf (packing number)}
    
% \end{definition}

With this requisite notation in place, we present the minimax upper bound (Theorem 2 of \citep{yangbarron1999}).

\begin{theorem}{\bf (minimax upper bound)}\label{th:metric_entropy_ub}
    For all $T \in \mathbb{Z}_{++}$,
    $$\underline{\Lc}_{T} \leq \inf_{\epsilon \geq 0}\ \frac{\mathfrak{H}_{\epsilon, T}(\Theta)}{T} + \epsilon.$$
\end{theorem}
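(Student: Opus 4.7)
The plan is to exhibit a single concrete predictor $P\in\mathcal{P}$ that witnesses the claimed bound. Fix an arbitrary $\epsilon\geq 0$ and let $\tilde{\Theta}\subseteq \Theta$ be a minimal $\epsilon$-cover with $\ln|\tilde{\Theta}|=\mathfrak{H}_{\epsilon,T}(\Theta)$. I would take $P$ to be the Bayesian mixture predictor obtained by placing a uniform prior on $\tilde{\Theta}$, i.e.
$$P(Y_{t+1}\in\cdot\mid H_t)=\sum_{\tilde{\nu}\in\tilde{\Theta}}w_t(\tilde{\nu})\,\Pr(Y_{t+1}\in\cdot\mid \theta=\tilde{\nu},H_t),$$
with posterior weights $w_t(\tilde{\nu})\propto \Pr(H_t\mid\theta=\tilde{\nu})$. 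Coupled with the environment's (parameter-free) $X_t$ transitions, this predictor induces the joint law $Q(H_T\in\cdot)=|\tilde{\Theta}|^{-1}\sum_{\tilde{\nu}\in\tilde{\Theta}}\Pr(H_T\in\cdot\mid\theta=\tilde{\nu})$ on histories.

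Next I would apply the chain rule of KL to compress the cumulative predictive loss of $P$, conditional on $\theta=\nu$, into a single joint divergence: because $X_t\perp\theta\mid(H_{t-1},Y_t)$, all $X$-contributions to the joint KL vanish, yielding
$$\sum_{t=0}^{T-1}\E\bigl[\KL(\Pr(Y_{t+1}\in\cdot\mid\theta,H_t)\,\|\,P(Y_{t+1}\in\cdot\mid H_t))\bigm|\theta=\nu\bigr]=\KL\bigl(\Pr(H_T\in\cdot\mid\theta=\nu)\,\|\,Q(H_T\in\cdot)\bigr).$$
The standard Barron mixture trick, lower bounding $Q$ by keeping just one summand, then gives, for any $\tilde{\nu}\in\tilde{\Theta}$,
$$\KL\bigl(\Pr(H_T\in\cdot\mid\theta=\nu)\,\|\,Q(H_T\in\cdot)\bigr)\leq\ln|\tilde{\Theta}|+\KL\bigl(\Pr(H_T\in\cdot\mid\theta=\nu)\,\|\,\Pr(H_T\in\cdot\mid\theta=\tilde{\nu})\bigr).$$

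Invoking the cover property, for each $\nu$ I would choose $\tilde{\nu}\in\tilde{\Theta}$ so that its average predictive distortion against $\nu$ is at most $\epsilon$; a second application of the chain rule identifies this average with $T^{-1}\KL(\Pr(H_T\in\cdot\mid\theta=\nu)\,\|\,\Pr(H_T\in\cdot\mid\theta=\tilde{\nu}))$, bounding the last term by $\epsilon T$. Dividing through by $T$, taking the supremum over $\nu\in\Theta$ on the left, and finally the infimum over $\epsilon$ yields $\underline{\Lc}_T\leq \mathfrak{H}_{\epsilon,T}(\Theta)/T+\epsilon$.

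The main obstacle is the chain-rule bookkeeping linking the per-step predictive KLs to a single joint KL on histories: one must verify both that the $X$-transition contributions cancel (so the joint KL equals the sum of $Y$-predictive KLs under the true measure) and that the mixture predictor $P$ indeed induces the symmetric mixture joint $Q$ used in the Barron bound. Minor secondary issues include the case when $\mathfrak{H}_{\epsilon,T}$ is not attained, handled by passing to a sequence of near-optimal covers, and confirming that $P$ lies in $\mathcal{P}$, which holds by construction.
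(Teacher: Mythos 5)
Your proposal is correct and follows essentially the same route as the paper: a uniform Bayesian mixture over a minimal $\epsilon$-cover, the chain rule of KL to collapse the per-step predictive losses into a single joint divergence on histories, the single-summand lower bound on the mixture to extract the $\ln|\tilde{\Theta}|$ term, and the cover property to bound the residual by $\epsilon T$. The bookkeeping concerns you flag (cancellation of the $X$-transition terms and the identification of the induced joint law with the symmetric mixture) are exactly the steps the paper's proof also relies on implicitly.
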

\begin{proof}
    To upper bound $\underline{\Lc}_T$, we characterize the loss of a particular predictive distribution $\hat{P}(\cdot)$.  This predictive distribution is constructed via Bayes rule on a uniform prior distribution on $\Theta_\epsilon$, where $\Theta_\epsilon$ is a minimal $\epsilon$-cover.  We outline some key pieces of notation below:
    \begin{align*}
        \hat{P}(\theta|H_t)
        & = \frac{\frac{1}{|\Theta_\epsilon|}\cdot \Pr(H_t|\theta)}{\sum_{\tilde{\nu}\in\Theta_\epsilon}\ \frac{1}{|\Theta_\epsilon|}\cdot \Pr(H_t|\theta=\tilde{\nu})}\\
        \hat{P}(H_t)
        & = \sum_{\tilde{\nu} \in\Theta_\epsilon}\ \frac{1}{|\Theta_\epsilon|}\cdot\Pr(H_t|\theta=\tilde{\nu})\\
        \hat{P}(Y_{t+1}\in\cdot|H_t)
        & = \sum_{\tilde{\nu}\in\Theta_\epsilon}  \Pr(Y_{t+1}\in\cdot|\theta=\tilde{\nu}, H_t)\cdot \hat{P}(\theta=\tilde{\nu}|H_t).
    \end{align*}
    Finally, let $\tilde{\nu}^* = \argmin_{\nu\in\Theta_\epsilon} $
    \begin{align*}
        \underline{\Lc}_T
        & = \inf_{P}\sup_{\nu\in\Theta} \frac{1}{T}\sum_{t=0}^{T-1}\ \E\left[\KL(\Pr(Y_{t+1}\in\cdot|\theta, X_t)\| P(Y_{t+1}\in\cdot|H_t))|\theta=\nu\right]\\
        & \leq \inf_{\epsilon\geq 0}\sup_{\nu\in\Theta} \frac{1}{T}\sum_{t=0}^{T-1}\ \E\left[\KL(\Pr(Y_{t+1}\in\cdot|\theta, X_t)\| \hat{P}(Y_{t+1}\in\cdot|H_t))|\theta=\nu\right]\\
        & \overset{(a)}{=} \inf_{\epsilon\geq 0}\sup_{\nu\in\Theta} \frac{1}{T} \E\left[ \KL\left( \Pr(H_T\in\cdot|\theta) \| \hat{P}(H_T\in\cdot) \right)\big| \theta=\nu \right]\\
        & = \inf_{\epsilon\geq 0}\sup_{\nu\in\Theta} \frac{1}{T}\cdot \E\left[
        \ln\frac{\Pr(H_T|\theta)}{\hat{P}(H_T)} \bigg|\theta=\nu\right]\\
        & \leq \inf_{\epsilon\geq 0}\sup_{\nu\in\Theta} \frac{1}{T}\cdot \E\left[
        \ln\frac{\Pr(H_T|\theta)}{\frac{1}{|\Theta_\epsilon|} \cdot \Pr(H_t|\theta=\tilde{\nu}^*)} \bigg|\theta=\nu\right]\\
        & \overset{(b)}{=} \inf_{\epsilon\geq 0}\frac{\ln |\Theta_\epsilon|}{T} + \frac{1}{T}\sum_{t=0}^{T-1}\sup_{\nu\in\Theta}\E\left[ \KL\left(\Pr(Y_{t+1}\in\cdot|\theta, H_t) \| \Pr(Y_{t+1}\in\cdot|\theta=\tilde{\nu}^*, H_t) \right)|\theta=\nu \right]\\
        & \overset{(c)}{\leq} \inf_{\epsilon\geq 0}\ \frac{\mathfrak{H}_{\epsilon, T}(\Theta)}{T} + \epsilon,
    \end{align*}
    where $(a)$ and $(b)$ follow from the chain rule of KL divergence and the definition of $\hat{P}$ and $(c)$ follows from the definition of metric entropy.
\end{proof}

We note that Theorem \ref{th:metric_entropy_ub} and Theorem \ref{th:rd_bounds} very closely resemble each other.  For any upper bound on the Bayes regret which we derive via Theorem \ref{th:rd_bounds}, a similar technique can be applied to bound the minimax regret via Theorem \ref{th:metric_entropy_ub}.  However, as mentioned above, the choice of confidence set will dramatically impact the resulting bounds in the frequentist framework.  Our observation is that for a problem such as learning with deep neural networks, naive specification of the confidence set will result in vacuous worst-case sample complexity bounds.  Constructing confidence sets which better reflect the problems we may encounter in the real world and are amenable to analysis is a significant outstanding challenge.  However, Theorem \ref{th:metric_entropy_ub} serves as a bridge which relates the results of this monograph to those of the frequentist framework.

\subsubsection{Minimax Lower Bounds}

The lower bound for Bayesian regret via the rate-distortion function in Theorem \ref{th:rd_bounds}, in tandem with the corresponding rate-distortion upper bounds, provides a tight characterization of the Bayesian error.  Tight characterizations of this sort are not unforeseen as \citet{yangbarron1999} demonstrate similar tight characterizations of the minimax error in terms of metric and packing entropy.  Theorem $1$ of \citep{yangbarron1999} establishes a lower bound on the minimax regret which we detail here for completeness.  The result holds for loss functions which obey a triangle inequality.  It is well known that the square-root KL divergence does not generally exhibit a triangle inequality, however, if the set of densities corresponding to the probability measures $\{ \Pr(Y_{t+1}\in\cdot|\theta=\nu, X_t) : \nu \in \Theta \}$ have uniformly bounded logarithms, then the square root KL divergence satisfies the following \emph{local} triangle inequality:

\begin{lemma}{\bf (local triangle inequality)}\label{le:local_triangle}
    If there exists $M < \infty$ such that
    $$\sup_{P' \in \mathcal{P}} \|\ln dP'\|_\infty < M,$$
    where $dP'$ denotes the Radon-Nikodym derivative of $P'$ w.r.t the Lebesgue measure, then there exists $A \in (0, 1], \epsilon_0 > 0$ such that for any probability measures $P(\cdot), P'(\cdot) \in \mathcal{P}$ and any probability measure $\tilde{P}(\cdot)$ defined on the same $\sigma$ algebra, if
    $$\max\left\{ \KL\left(P(\cdot)\|\tilde{P}(\cdot)\right), \KL\left(P'(\cdot)\|\tilde{P}(\cdot)\right) \right\} \leq \epsilon_0^2,$$
    then
    $$\sqrt{\KL(P(\cdot)\|\tilde{P}(\cdot))} + \sqrt{\KL(P'(\cdot)\| \tilde{P}(\cdot))} \geq A \cdot \sqrt{\KL(P(\cdot)\| P'(\cdot))}.$$
\end{lemma}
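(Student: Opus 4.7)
The plan is to route the triangle inequality through the Hellinger distance, which is a genuine metric, and to use the bounded-log-density hypothesis only in order to translate between Hellinger distance and KL divergence. Defining $H(P,Q)^2 = \tfrac{1}{2}\int(\sqrt{dP} - \sqrt{dQ})^2\, d\mu$, the metric triangle inequality $H(P,P') \leq H(P,\tilde P) + H(P',\tilde P)$ and the universal bound $H^2(P,Q) \leq \tfrac{1}{2}\KL(P\|Q)$ both hold for arbitrary probability measures, requiring nothing about $\tilde P$. Combined they give
\[
H(P,P') \;\leq\; \tfrac{1}{\sqrt{2}}\left(\sqrt{\KL(P\|\tilde P)} + \sqrt{\KL(P'\|\tilde P)}\right).
\]

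The remaining ingredient is a reverse inequality $\KL(P\|P') \leq C(M)\,H^2(P,P')$ that uses the hypothesis $\|\ln dP'\|_\infty < M$ on elements of $\mathcal{P}$. Writing $s = \sqrt{dP/dP'}$, the hypothesis forces $s \in [e^{-M}, e^M]$ pointwise (densities of $P, P'$ are bounded and bounded away from zero, so the ratio is bounded), hence
\[
\KL(P\|P') \;\leq\; \chi^2(P,P') \;=\; \int(s^2-1)^2\, dP' \;\leq\; (e^M+1)^2 \int (s-1)^2\, dP' \;=\; 2(e^M+1)^2\,H^2(P,P').
\]
Chaining the two displays yields $\sqrt{\KL(P\|P')} \leq (e^M+1)\bigl(\sqrt{\KL(P\|\tilde P)} + \sqrt{\KL(P'\|\tilde P)}\bigr)$, so the lemma holds with $A = 1/(e^M+1) \in (0,1]$.

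Notice that this derivation does not invoke the local restriction $\KL(P\|\tilde P), \KL(P'\|\tilde P) \leq \epsilon_0^2$, so any $\epsilon_0$ will suffice. The role of $\epsilon_0$ is rather to \emph{sharpen} $A$: the asymptotic expansion $\KL(P\|Q) = 4 H^2(P,Q)(1 + o(1))$ as $P \to Q$ means that restricting attention to a small KL neighborhood of $\tilde P$ lets one replace the constant $2(e^M+1)^2$ by a constant arbitrarily close to $4$, yielding $A$ arbitrarily close to $1/\sqrt{2}$; this is the route taken in Yang--Barron (1999). The principal technical obstacle is the reverse Pinsker-type inequality $\KL \leq C(M)\,H^2$, which is \emph{false} without the bounded log-density hypothesis; my proof handles it via the substitution $s = \sqrt{dP/dP'}$ together with the factorization $s^2-1 = (s-1)(s+1)$ and the pointwise bound $s+1 \leq e^M+1$. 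Everything else is an immediate consequence of the Hellinger triangle inequality.
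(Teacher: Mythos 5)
Your proof is correct and complete. Note that the paper itself does not prove this lemma --- it is stated as imported from \citet{yangbarron1999} --- so there is no in-paper argument to compare against; your write-up actually fills a gap in the exposition. The route you take (Hellinger triangle inequality, plus $H^2 \leq \tfrac{1}{2}\KL$ for arbitrary measures to handle the unconstrained $\tilde P$, plus a reverse Pinsker-type bound $\KL(P\|P') \leq \chi^2(P,P') \leq 2(e^M+1)^2 H^2(P,P')$ that uses the bounded-log-density hypothesis only on $P, P' \in \mathcal{P}$) is the standard one, and every step checks out: $s = \sqrt{dP/dP'} \in [e^{-M}, e^M]$ does follow from the uniform bound on $\|\ln dP'\|_\infty$ over $\mathcal{P}$, the factorization $(s^2-1)^2 = (s-1)^2(s+1)^2$ with $s+1 \leq e^M+1$ is right, and $\int (s-1)^2\,dP' = 2H^2(P,P')$ under your normalization. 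Your constant $A = 1/(e^M+1) \in (0,1]$ is valid, and your observation that the locality condition $\max\{\KL(P\|\tilde P), \KL(P'\|\tilde P)\} \leq \epsilon_0^2$ is not needed for existence of \emph{some} $A$ --- it only serves to sharpen $A$ toward the value obtained from the local expansion $\KL \approx 4H^2$, as in Yang--Barron --- is accurate. The only caveat worth flagging is notational, and it is the paper's fault rather than yours: $\mathcal{P}$ was earlier defined as a set of history-to-distribution maps, whereas the lemma implicitly treats it as a class of densities with uniformly bounded logarithms; your reading is the intended one.
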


Before we provide the lower bound, we first introduce another important quantity for characterizing the complexity of a set $\Theta$.

\begin{definition}{\bf (packing entropy)}
    Let $\epsilon \geq 0$, $\Theta$ be a set, and $\rho:\Theta\times\Theta\to\Re_{+}$ a distortion function.  The packing entropy of set $\Theta$ at tolerance $\epsilon$ is:
    $$\sup_{\tilde{\Theta}\in \Theta_\epsilon}\ \ln|\tilde{\Theta}|$$
    where
    $$\Theta_\epsilon = \left\{ \tilde{\Theta}\subseteq \Theta:\ \inf_{\tilde{\nu} \in \tilde{\Theta}}\inf_{\nu\neq\tilde{\nu}\in\tilde{\Theta}} \rho(\nu, \tilde{\nu}) \geq \epsilon \right\}.$$
\end{definition}

While the metric entropy is the logarithm of the minimum size quantization necessary to \emph{cover} $\Theta$ up to tolerance $\epsilon$, the packing entropy is the logarithm of the maximum size quantization which \emph{packs} $\Theta$ with tolerance $\epsilon$.  This packing ensures that no two elements of the quantization are within $\epsilon$ distortion of one another.  In order to relate packing entropy to $\underline{\Lc}_T$, we use the following distortion:

$$\rho(\nu, \tilde{\nu})= \E\left[\sqrt{\KL\left(\Pr(Y_{1}\in\cdot|\theta=\nu, X_0) \ \|\ \Pr(Y_{1}\in\cdot|\theta=\tilde{\nu}, X_0)\right)}\right].$$

We use the notation $\mathfrak{M}_{\epsilon}(\Theta)$ to denote the packing entropy:

$$\mathfrak{M}_{\epsilon}(\Theta) = \sup_{\tilde{\Theta}\in\Theta_\epsilon}\ \ln |\tilde{\Theta}|,$$
where
$$\Theta_\epsilon = \left\{\tilde{\Theta} \subset \Theta : \inf_{\tilde{\nu}\in\tilde{\Theta}}\inf_{\nu\neq\tilde{\nu}\in\tilde{\Theta}} \E\left[\sqrt{\KL\left(P^*_0\|\Pr(Y_{1}\in\cdot|\theta=\tilde{\nu}, X_0)\right)}|\theta=\nu\right] \geq \epsilon\right\}.$$

With this requisite notation in place, we present the minimax lower bound (Theorem 1 of \citep{yangbarron1999}).

\begin{theorem}{\bf (minimax lower bound)}\label{th:minimax_lower}
    Let $\Theta$ be a set.  For all $t\in \mathbb{Z}_{++}$, let $\epsilon_t > 0$ satisfy $\epsilon_t = \mathfrak{H}_{\epsilon, t}(\Theta)/t$ and $\underline{\epsilon}_{t}$ satisfy $\mathfrak{M}_{\underline{\epsilon}_T}(\Theta) = 4t\epsilon_t + 2\ln 2$.  If $\underline{\epsilon}_{T} < 2\epsilon_0$ then
    $$\underline{\Lc}_T \geq \frac{A^2}{8}\underline{\epsilon}_T^2,$$
    where $A, \epsilon_0$ are the constants from Lemma \ref{le:local_triangle}.
\end{theorem}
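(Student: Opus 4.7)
The plan is to adapt the classical Fano-plus-covering template of Yang and Barron. I would lower bound the minimax risk by the Bayes risk over a uniform prior supported on a maximal packing, convert any low-risk algorithm into an estimator of the true parameter via Lemma \ref{le:local_triangle}, and then argue via Fano's inequality that no such estimator can succeed once the covering is too small relative to the packing.

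Concretely, I would first fix a maximal $\underline{\epsilon}_T$-packing $\tilde{\Theta} \subseteq \Theta$ of cardinality $N = \exp(\mathfrak{M}_{\underline{\epsilon}_T}(\Theta))$ and take $\theta \sim \mathrm{Unif}(\tilde{\Theta})$, which already yields a lower bound on the minimax loss. For any candidate algorithm $P$, I would define an induced identifier $\hat{\nu}(H_T) \in \tilde{\Theta}$ by choosing the packing element whose conditional predictive distribution $\Pr(Y_1\in\cdot|\theta=\tilde{\nu}, X_0)$ is closest in KL to $P(Y_1\in\cdot|H_0)$. On the event that both $\KL(\Pr(\cdot|\theta, X_0)\|P)$ and $\KL(\Pr(\cdot|\hat{\nu}, X_0)\|P)$ are at most $\epsilon_0^2$, the packing separation $\rho_0(\theta, \hat{\nu}) \geq \underline{\epsilon}_T$ together with Lemma \ref{le:local_triangle} (applicable because $\underline{\epsilon}_T < 2\epsilon_0$) forces $\sqrt{\KL(\Pr(\cdot|\theta, X_0)\|P)} \geq A\underline{\epsilon}_T/2$ whenever $\hat{\nu} \neq \theta$. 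Taking expectations and disposing of the complementary (large-KL) event by a Markov-style partition yields
$$\underline{\Lc}_T \;\geq\; \frac{A^2 \underline{\epsilon}_T^2}{4} \cdot \Pr(\hat{\nu} \neq \theta).$$

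Next, Fano's inequality gives $\Pr(\hat{\nu}\neq\theta) \geq 1 - (I(\theta;H_T) + \ln 2)/\ln N$. I would bound $I(\theta;H_T)$ via the $\epsilon_T$-cover: for a minimal cover $\tilde{\Theta}'$ of $\Theta$ and the mixture $Q = |\tilde{\Theta}'|^{-1}\sum_{\tilde{\nu}\in\tilde{\Theta}'}\Pr(H_T\in\cdot|\theta=\tilde{\nu})$, the bound $I(\theta;H_T) \leq \E[\KL(\Pr(H_T\in\cdot|\theta)\|Q)]$, combined with comparison to the cover element nearest to $\theta$ and the chain rule of KL (the $X_t$ contributions vanish by $X_t\perp\theta|(H_{t-1}, Y_t)$), delivers $I(\theta;H_T) \leq \mathfrak{H}_{\epsilon_T, T}(\Theta) + T\epsilon_T = 2T\epsilon_T$, where the last equality uses the critical-radius condition $\mathfrak{H}_{\epsilon_T, T}(\Theta) = T\epsilon_T$. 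Plugging $I(\theta;H_T) \leq 2T\epsilon_T$ and $\ln N = 4T\epsilon_T + 2\ln 2$ into Fano gives $\Pr(\hat{\nu}\neq\theta) \geq 1/2$, and chaining with the previous display yields $\underline{\Lc}_T \geq A^2\underline{\epsilon}_T^2/8$.

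The principal obstacle is the bookkeeping around Lemma \ref{le:local_triangle}: its hypothesis only engages in the local KL-ball of radius $\epsilon_0$, so handling the complementary event where the true-versus-$P$ KL is large requires either a randomized or truncated variant of $\hat{\nu}$, or a careful sample-space partition showing that the large-loss contribution already dominates the target lower bound. A secondary subtlety is matching the single-step packing distortion $\rho_0$ to the $T$-averaged minimax loss: in the iid setting this tensorizes cleanly because both the $T$-step KL and the cumulative loss decompose into per-step pieces, but tracking the interplay between $\E[\sqrt{\KL}]$ and $\sqrt{\E[\KL]}$ carefully enough to obtain exactly the constant $A^2/8$ (rather than a looser variant) is where the remaining technical care concentrates.
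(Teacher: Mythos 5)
Your proposal follows essentially the same route as the paper's proof: restrict to a uniform prior on a maximal $\underline{\epsilon}_T$-packing, use the local triangle inequality of Lemma \ref{le:local_triangle} to convert a packing-element identifier into a lower bound of $(A\underline{\epsilon}_T/2)^2\Pr(\hat{\nu}\neq\theta)$, apply Fano's inequality, and control $\I(\theta;H_T)$ via the $\epsilon_T$-cover mixture (which is exactly the content of Theorem \ref{th:metric_entropy_ub}) so that the critical-radius choices $\mathfrak{H}_{\epsilon_T,T}(\Theta)=T\epsilon_T$ and $\mathfrak{M}_{\underline{\epsilon}_T}(\Theta)=4T\epsilon_T+2\ln 2$ force the error probability above $1/2$. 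The technical caveats you flag (handling the large-KL event and the $\E[\sqrt{\KL}]$ versus $\sqrt{\E[\KL]}$ bookkeeping) are present, and glossed over to the same degree, in the paper's own argument.
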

\begin{proof}
    Let $\Theta_{\underline{\epsilon}_T}$ be the $\underline{\epsilon}_T$-packing which attains $\mathfrak{M}_{\underline{\epsilon}_T}(\Theta)$.  Let 
    $$\tilde{\theta} = \argmin_{\theta'\in\Theta_{\underline{\epsilon}_T}}\ \E\left[\sqrt{\KL\left(\Pr(Y_{T}\in\cdot|\theta, X_{T-1}) \| \Pr(Y_{T}\in\cdot|H_{T-1}\right)}|\theta=\theta'\right].$$
    Therefore, $\tilde{\theta}$ is the element in the packing $\Theta_{\underline{\epsilon}_T}$ with minimum expected square root KL divergence from the posterior distribution $\Pr(Y_{T}\in\cdot|H_{T-1})$.  Going forward, we assume that $\Pr(\theta\in\cdot) = {\rm Uniform}(\Theta_{\underline{\epsilon}_T})$.

    For brevity, we let 
    \begin{align*}
        d(\theta, P_T)
        & = \E[\sqrt{\KL(\Pr(Y_{T}\in\cdot|\theta, X_{T-1})\| \Pr(Y_{T}\in\cdot|H_{T-1}))}|\theta]\\
        d(\tilde{\theta}, P_T)
        & = \E[\sqrt{\KL(\Pr(Y_{T}\in\cdot|\theta, X_{T-1})\| \Pr(Y_{T}\in\cdot|H_{T-1}))}|\theta=\tilde{\theta}].
    \end{align*}

    Then,

    \begin{align*}
        \theta\neq \tilde{\theta}
        & \overset{(a)}{\implies} d(\theta, P_T) + d(\tilde{\theta}, P_T) \geq A\underline{\epsilon}_T {\rm\ if\ } \max \left\{d(\theta, P_T), d(\tilde{\theta}, P_T)\right\} < \frac{A\underline{\epsilon}_T}{2}\\
        & \implies d(\theta, P_T) + d(\tilde{\theta}, P_T) \geq A\underline{\epsilon}_T {\rm\ if\ } d(\theta, P_T) < \frac{A\underline{\epsilon}_T}{2}\\
        & \implies 2d(\theta, P_T) \geq A\underline{\epsilon}_T {\rm\ if\ } d(\theta, P_T) < \frac{A\underline{\epsilon}_T}{2}\\
        & \implies d(\theta, P_T) \geq \frac{A\underline{\epsilon}_T}{2} {\rm\ if\ } d(\theta, P_T) < \frac{A\underline{\epsilon}_T}{2}\\
        & \implies d(\theta, P_T) \geq \frac{A\epsilon_T}{2},
    \end{align*}
    where $(a)$ follows from the local triangle inequality, the fact that $\Theta_{\underline{\epsilon}_T}$ is an $\underline{\epsilon}_T$ packing, and $\theta \neq \tilde{\theta}$.

    Then,
    \begin{align*}
        \underline{\Lc}_{T}
        & = \inf_{P}\sup_{\nu\in\Theta} \frac{1}{T}\sum_{t=0}^{T-1}\ \E\left[\KL(\Pr(Y_{t+1}\in\cdot|\theta, X_t)\| P(Y_{t+1}\in\cdot|H_t))|\theta=\nu\right]\\
        & \geq \inf_{P}\sup_{\nu\in\Theta}\  \E\left[\KL(\Pr(Y_{T}\in\cdot|\theta, X_{T-1})\| P(Y_{T}\in\cdot|H_{T-1}))|\theta=\nu\right]\\
        & \overset{(a)}{\geq} \inf_{P}\sup_{\nu\in\Theta_{\underline{\epsilon}_T}}\  \E\left[\KL(\Pr(Y_{T}\in\cdot|\theta, X_{T-1})\| P(Y_{T}\in\cdot|H_{T-1}))|\theta=\nu\right]\\
        & \geq \inf_{P}\  \E\left[\KL(\Pr(Y_{T}\in\cdot|\theta, X_{T-1})\| P(Y_{T}\in\cdot|H_{T-1}))\right]\\
        & = \E\left[\KL(\Pr(Y_{T}\in\cdot|\theta, X_{T-1})\| \Pr(Y_{T}\in\cdot|H_{T-1}))\right]\\
        & \geq \left(\frac{A}{2} \underline{\epsilon}_T\right)^2 \cdot \Pr\left(\KL\left(\Pr(Y_{T}\in\cdot|\theta, X_{T-1})\|\Pr(Y_{T}\in\cdot|H_{T-1})\right) \geq \left(\frac{A}{2}\underline{\epsilon}_T\right)^2\right)\\
        & \overset{(b)}{\geq} \left(\frac{A}{2} \underline{\epsilon}_T\right)^2 \cdot \Pr(\tilde{\theta} \neq \theta)\\
        & \overset{(c)}{\geq} \left(\frac{A}{2} \underline{\epsilon}_T\right)^2 \cdot \left(1 - \frac{\I(\theta;H_{T}) + \ln 2}{\H(\theta)}\right)\\
        & \overset{(d)}{\geq} \left(\frac{A}{2} \underline{\epsilon}_T\right)^2 \cdot \left(1 - \frac{\mathfrak{H}_{\epsilon_T, T}(\Theta) + T\epsilon_T +  \ln 2}{\H(\theta)}\right)\\
        & \geq \frac{A^2}{8}\underline{\epsilon}_T^2,
    \end{align*}
    where $(a)$ follows from the fact that $\Theta_{\underline{\epsilon}_T} \subseteq \Theta$, $(b)$ follows from the above implication, $(c)$ follows from Fano's inequality, $(d)$ follows from the fact that expected error lower bounds minimax error and Theorem \ref{th:metric_entropy_ub}.
\end{proof}

We note that \citet{yangbarron1999} provide techniques which relax some of the conditions in Theorem \ref{th:minimax_lower} pertaining to the local triangle inequality by instead constructing covers/packings according to the Hellinger distance (Corollary 3).

As we alluded to above, we observe that for problem settings which involve complex confidence sets $\Theta$, the result of bounding minimax error via Theorem \ref{th:metric_entropy_ub} may be substantially worse than the result of bounding Bayesian error via Theorem \ref{th:rd_bounds}.  Much as the entropy of a discrete random variable may be much smaller than the logarithm of the number of possible realizations, the Bayesian error can be much smaller than the minimax error when the confidence set $\Theta$ is not chosen judiciously.  In this work, we elect to study the Bayesian error as it appears that good sample complexity results are feasible even under \emph{simple} choices of the prior distribution.  Analyses of similar flavor may extend to the frequentist setting if suitable effort is taken in designing the confidence set.

\newpage
\begin{summary}
\begin{itemize}
\item {\bf (optimal  error equals total information)} For horizon $T\in\Z_{++}$, the  error of the optimal algorithm is denoted as $\Lc_{T}$ and is
$$\Lc_{T} = \frac{\I(\theta;H_T)}{T}.$$
\item For all $T\in \Z_{++}$,
$$\Lc_{T} \leq \frac{\H(\theta)}{T}.$$
Therefore, for a discrete random variable $\theta$ with finite entropy, $\Lc_T = O(1/T)$.
\item The extension of this result to continuous random variables can be made via rate-distortion theory.
\item  Let $\tilde{\Theta}$ be the set of random variables $\tilde{\theta}$ such that $\tilde{\theta}\perp Y_{t+1}|(\theta, H_t)$.
\item For tolerance $\epsilon$ and horizon $T$, we define the $T$-horizon rate-distortion function as
$$\H_{\epsilon, T}(\theta) = \inf_{\tilde{\theta}\in\tilde{\Theta}_{\epsilon, T}}\ \I(\theta; \tilde{\theta}),$$
where
$$\tilde{\Theta}_{\epsilon,T} = \left\{\tilde{\theta} \in \tilde{\Theta}: \ \underbrace{\frac{1}{T}\sum_{t=0}^{T-1} \E\left[\KL\left(P^*_t \ \|\ \tilde{P}_t\right)\right] \leq \epsilon}_{\rm distortion\ threshold}\ \right\}.$$
\item {\bf ($T$-horizon rate-distortion error bounds)}
    For all $T \in \Z_{++}$,
    $$\sup_{\epsilon \geq 0}\ \min\left\{\frac{\H_{\epsilon, T}(\theta)}{T}, \epsilon\right\}\ \leq\ \Lc_T\ \leq\ \inf_{\epsilon \geq 0}\ \frac{\H_{\epsilon,T}(\theta)}{T} + \epsilon.$$
\end{itemize}
\end{summary}
\clearpage

\section{Learning from iid Data}

In this section, we restrict our attention to the analysis of learning from data that is independently and identically distributed (iid) conditioned on $\theta$.  Formally, we assume that $\{(X_t, Y_{t+1})\}_{t\in \mathbb{Z}_+}$ is iid conditioned on $\theta$.  This specializes the framework introduced in previous sections, and therefore, our results from those sections hold under this iid assumption.  In this section, we will derive stronger theoretical results under the iid assumption.  In particular, we establish results for four specific data generating processes arising from $1)$ linear regression, $2)$ logistic regression, $3)$ finite-width multi-layer perceptrons, $4)$ infinite-width multi-layer perceptrons.  We hope that this suite of examples provides the reader with enough intuition and tools to analyze their own problems of interest.

\subsection{Theoretical Results Tailored for iid Data}

Theorem \ref{th:rd_bounds} provides upper and lower bounds for the optimal error at horizon $T$ via the $T$-horizon rate-distortion function $\H_{\epsilon,T}$.  Under the iid assumption, we can simplify analysis by deriving bounds based on the $0$-horizon rate-distortion function $\H_{\epsilon,0}$.  Going forward, we will write $\H_{\epsilon}$ as shorthand for $\H_{\epsilon,0}$.  Hence,
$$\H_\epsilon(\theta) = \inf_{\tilde{\theta} \in \tilde{\Theta}_\epsilon} \I(\theta; \tilde{\theta}),$$
where 
$$\tilde{\Theta}_\epsilon = \tilde{\Theta}_{\epsilon,0} = \{\tilde{\theta}\in\tilde{\Theta}: \E[\KL(P^*_0\|\tilde{P}_0)] \leq \epsilon\}.$$

The aforementioned bounds on $\Lc_T$ in terms of $\H_{\epsilon}(\theta)$ derive from two lemmas, which we now establish.  The first states that, under the iid assumption, error is monotonically non-increasing with time.  Recall that $\hat{P}_t = \Pr(Y_{t+1} \in \cdot|H_t)$ is our prediction, while $P^*_t = \Pr(Y_{t+1} \in \cdot | \theta, H_t)$ is the gold standard that could be produced by a clairvoyant.
\begin{lemma}{\bf(monotonicity of error)}\label{le:monotonicity}
    If $((X_t, Y_{t+1}) : t\in \Z_{+})$ is an iid stochastic process when conditioned on $\theta$, then for all $t\in\Z_{+}$,
    $$\E\left[\KL(P^*_{t+1}\|\hat{P}_{t+1})\right] \leq \E\left[\KL(P^*_{t}\|\hat{P}_t)\right].$$
\end{lemma}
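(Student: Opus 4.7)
The plan is to prove the monotonicity by combining an exchangeability identity with a short chain-rule manipulation.

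First, I will invoke the iid assumption (together with $(X_0, X_1, \ldots) \perp \theta$) to argue that the joint law of $(H_t, X_t, Y_{t+1}, X_{t+1}, Y_{t+2}, \theta)$ is invariant under swapping the two data pairs $(X_t, Y_{t+1})$ and $(X_{t+1}, Y_{t+2})$. Since conditional mutual information is a functional of the joint distribution alone, this symmetry gives the key identity
\begin{equation*}
    \I(Y_{t+2}; \theta \mid H_{t+1}) \;=\; \I(Y_{t+1}; \theta \mid H_t, X_{t+1}, Y_{t+2}).
\end{equation*}
Once this is in place, it suffices to show that enlarging the conditioning set from $H_t$ to $(H_t, X_{t+1}, Y_{t+2})$ can only shrink the information that $Y_{t+1}$ carries about $\theta$.

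For the shrinkage step, I will decompose $\I(Y_{t+1}, X_{t+1}, Y_{t+2}; \theta \mid H_t)$ in two ways via the chain rule (Lemma \ref{le:chain_rule}) to obtain
\begin{equation*}
    \I(Y_{t+1}; \theta \mid H_t) - \I(Y_{t+1}; \theta \mid H_t, X_{t+1}, Y_{t+2}) \;=\; \I(X_{t+1}, Y_{t+2}; \theta \mid H_t) - \I(X_{t+1}, Y_{t+2}; \theta \mid H_t, Y_{t+1}).
\end{equation*}
I will then apply the chain rule once more, this time to $\I(X_{t+1}, Y_{t+2}; \theta, Y_{t+1} \mid H_t)$, and observe that the iid hypothesis forces $(X_{t+1}, Y_{t+2}) \perp Y_{t+1} \mid (\theta, H_t)$, so $\I(X_{t+1}, Y_{t+2}; Y_{t+1} \mid \theta, H_t) = 0$. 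This collapses the right-hand side above to $\I(X_{t+1}, Y_{t+2}; Y_{t+1} \mid H_t)$, which is non-negative. Chaining this with the exchangeability identity delivers the claimed inequality.

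The main obstacle I anticipate is the first step: one has to verify that the \emph{entire} tuple $(H_t, X_t, Y_{t+1}, X_{t+1}, Y_{t+2}, \theta)$, rather than just the conditional law of $(X_t, Y_{t+1})$ given $\theta$, is symmetric under the swap. This follows because the joint density factors as $\Pr(\theta)\prod_s \Pr(X_s)\Pr(Y_{s+1} \mid \theta, X_s)$ with the same conditional kernel for every index $s$, so any permutation of indices leaves the product invariant; nevertheless, writing this out carefully (so that one cannot accidentally permute only part of the tuple) is the step that requires the most attention. After that, everything reduces to bookkeeping with the chain rule and non-negativity of mutual information.
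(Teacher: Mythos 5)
Your proof is correct, and it rests on the same two ingredients as the paper's (exchangeability of the pairs given $\theta$, and conditional independence of distinct pairs given $\theta$), but it routes them differently. The paper writes $\I(Y_{t+2};\theta|H_{t+1})$ as a difference of conditional differential entropies, uses $Y_{t+2}\perp(X_t,Y_{t+1})|(\theta,X_{t+1})$ to strip the conditioning from the entropy-given-$\theta$ term, invokes ``conditioning reduces differential entropy'' for the inequality, and only applies the identical-distribution swap at the very end. You apply the swap first, reducing the claim to showing that conditioning on one extra fresh pair $(X_{t+1},Y_{t+2})$ cannot increase $\I(Y_{t+1};\theta|\cdot)$, and you establish that shrinkage purely with chain-rule decompositions of mutual information plus the vanishing of $\I(X_{t+1},Y_{t+2};Y_{t+1}|\theta,H_t)$; the residual term $\I(X_{t+1},Y_{t+2};Y_{t+1}|H_t)\ge 0$ you end up with is exactly the quantity the paper's entropy inequality discards, viewed after the swap. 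Your organization has a mild technical advantage: by never introducing differential entropies you avoid the existence and finiteness caveats that the paper's ``conditioning reduces differential entropy'' lemma explicitly requires (though you should still note that rearranging the chain rule into a difference presumes the mutual informations involved are finite). Your flagged concern about the swap is also the right one to worry about, and your resolution is sound: the conditional-iid hypothesis makes $\Pr(\text{data}|\theta)$ a product of identical kernels over the pairs, so permuting whole pairs leaves the full joint law, and hence any conditional mutual information computed from it, unchanged.
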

\begin{proof}
    We have
    \begin{align*}
        \E\left[\KL(P^*_{t+1}\|\hat{P}_{t+1})\right]
        & = \I(\theta; Y_{t+2} | H_{t+1})\\
        & = \diffentropy(Y_{t+2}|H_{t+1}) - \diffentropy(Y_{t+2}|\theta, H_{t+1})\\
        &\overset{(a)}{=} \diffentropy(Y_{t+2}|H_{t+1}) - \diffentropy(Y_{t+2}|\theta, H_{t-1}, Y_t, X_{t+1})\\
        &\overset{(b)}{\leq} \diffentropy(Y_{t+2}|H_{t-1}, Y_t, X_{t+1}) - \diffentropy(Y_{t+2}|\theta, H_{t-1}, Y_t, X_{t+1})\\
        &= \I(Y_{t+2};\theta | H_{t-1}, Y_{t}, X_{t+1}) \\
        &\overset{(c)}{=} \I(Y_{t+1};\theta | H_{t-1}, Y_t, X_t) \\
        &= \I(Y_{t+1};\theta | H_t) \\
        & = \E\left[\KL(P^*_{t}\|\hat{P}_t)\right],
    \end{align*}
    where $(a)$ follows since $Y_{t+2} \perp (X_{t}, Y_{t+1})|(\theta, X_{t+1})$, $(b)$ follows from the fact that conditioning reduces differential entropy, and $(c)$ follows from the fact that $(X_t, Y_{t+1})$ and $(X_{t+1}, Y_{t+2})$ are identically distributed conditioned on $(H_{t-1}, Y_t)$.
\end{proof}

This result is intuitive.  Since $H_{t+1}$ provides more information than $H_t$, the prediction made at time $t+1$ ought to be better.

A second lemma bounds the $T$-horizon distortion via the $0$-horizon distortion.  Recall that $\tilde{P}_t = \Pr(Y_{t+1} \in \cdot | \tilde{\theta}, H_t)$ is the prediction based on an approximation $\tilde{\theta}$ to $\theta$.
\begin{lemma}{\bf ($0$-horizon distortion upper bounds $T$-horizon distortion)}\label{cor:distortion_ub}
    If $((X_t, Y_{t+1}) : t\in \Z_{+})$ is iid conditioned on $\theta$ and, for all $t \in\Z_{+}$, $Y_{t+1}\perp\tilde{\theta}|(\theta, X_t)$ then, for all $T\in \Z_{++}$,
    $$\underbrace{\frac{1}{T}\sum_{t=0}^{T-1} \E\left[\KL(P^*_t \| \tilde{P}_t )\right]}_{T-{\rm horizon\ distortion}} \leq \underbrace{\E\left[ \KL (P^*_0 \| \tilde{P}_0)\right]}_{0-{\rm horizon\ distortion}}.$$
\end{lemma}
\begin{proof}
    \begin{align*}
        \frac{1}{T}\sum_{t=0}^{T-1} \E\left[\KL(P^*_t \| \tilde{P}_t )\right]
        & = \frac{\I(H_T;\theta|\tilde{\theta})}{T}\\
        & = \frac{1}{T}\sum_{t=0}^{T-1} \I(X_t, Y_{t+1};\theta|\tilde{\theta}, H_{t-1}, Y_t)\\
        & \overset{(a)}{=} \frac{1}{T}\sum_{t=0}^{T-1} \I(Y_{t+1};\theta|\tilde{\theta}, H_{t})\\
        & \overset{(b)}{\leq} \I(Y_1;\theta|\tilde{\theta}, X_0)\\
        & =  \E\left[ \KL (P^*_0 \| \tilde{P}_0)\right],
    \end{align*}
    where $(a)$ follows from the assumption that $X_t \perp \theta$ and $(b)$ follows from the same proof technique as in Lemma \ref{le:monotonicity}.
\end{proof}

These two results lead to simpler upper and lower error bounds than those presented in Theorem \ref{th:rd_bounds}.  In particular, under the iid assumption, for any $T$, we can upper and lower bound $\Lc_{T}$ via the $0$-horizon instead of $T$-horizon rate-distortion function.  The $0$-horizon rate-distortion function is simpler to analyze, and this will allow us to establish tighter bounds later for specific data generating processes.

\begin{theorem}{\bf (0-horizon rate-distortion error bound for iid data)}\label{th:horizon-indep-rd}
    If $((X_t, Y_{t+1}) : t\in \Z_{+})$ is iid conditioned on $\theta$, then, for all $T\in \Z_{++},$
    $$\sup_{\epsilon \geq 0}\ \min\left\{ \frac{\H_\epsilon(\theta)}{T},
    \epsilon \right\}\ \leq\ \Lc_{T} \ \leq\ \inf_{\epsilon \geq 0}\ \frac{\H_{\epsilon}(\theta)}{T} + \epsilon.$$
\end{theorem}
\begin{proof}
    We begin with the lower bound.  Fix $T \in \mathbb{Z}_{++}$.  Let $\tilde{\theta} = (\tilde{H}_{T-2}, \tilde{Y}_{T-1})$ be independent from but distributed identically with $(H_{T-2}, Y_{T-1})$ when conditioned on $\theta$.

    Fix $\epsilon \geq 0$.  If $\Lc_{T} < \H_\epsilon(\theta)/T$ then 
    $$\Lc_{T} < \H_\epsilon(\theta)/T \ \implies\ \I(H_T;\theta)
        < \H_\epsilon(\theta)\ \implies\ \I(\tilde{\theta};\theta) < \H_\epsilon(\theta).$$
    Since the rate of $\tilde{\theta}$ is \emph{lower} than the rate-distortion function $\H_\epsilon(\theta)$, $\tilde{\theta} \notin \tilde{\Theta}_\epsilon = \{\theta\in\tilde{\Theta}: \E[\KL(P^*_0\|\tilde{P}_0)] \leq \epsilon\}$.  As a result,
    \begin{align*}
        \Lc_{T}
        &\overset{(a)}{=} \frac{\I(H_T;\theta)}{T} \\
        &\overset{(b)}{=} \frac{1}{T}\sum_{t=0}^{T-1} \I(Y_{t+1};\theta | H_t) \\
        &\overset{(c)}{\geq} \I(Y_T;\theta | H_{T-1}) \\
        &= \I(Y_T;\theta | \tilde{\theta}, X_{T-1}) \\
        & = \E\left[\KL(P^*_{T-1} \| \tilde{P}_{T-1})\right] \\
        & \overset{(d)}{=} \E\left[\KL(P^*_{0} \| \tilde{P}_{0})\right] \\
        &\overset{(e)}{>} \epsilon,
    \end{align*}
    where $(a)$ follows from Lemma \ref{th:error_info}, $(b)$ follows from the chain rule of mutual information, $(c)$ follows from Lemma \ref{le:monotonicity}, $(d)$ follows from the iid assumption, and $(e)$ follows from the fact that $\tilde{\theta} \notin \tilde{\Theta}_\epsilon$.  Therefore, 
    $$\Lc_T \geq \min\{\H_\epsilon(\theta)/T,\ \epsilon\}.$$
    Since this holds for any $\epsilon \geq 0$, the lower bound result follows.

    We next establish the upper bound.  For all $\tilde{\theta} \in \tilde{\Theta}_{\epsilon}$,
    \begin{align*}
        \Lc_T
        & = \frac{\I(H_T;\theta)}{T}\\
        & = \frac{1}{T} \sum_{t=0}^{T-1} \I(Y_{t+1};\theta|H_t)\\
        & \overset{(a)}{=} \inf_{\tilde{\theta}}\ \frac{1}{T} \sum_{t=0}^{T-1} \I(Y_{t+1};\theta,\tilde{\theta}|H_t)\\
        & = \inf_{\tilde{\theta}}\ \frac{1}{T} \sum_{t=0}^{T-1} \I(Y_{t+1};\tilde{\theta}|H_t) + \I(Y_{t+1};\theta|\tilde{\theta}, H_t)\\
        & \leq \inf_{\tilde{\theta}}\ \frac{\I(H_T;\tilde{\theta})}{T} + \frac{1}{T} \sum_{t=0}^{T-1}\E\left[\KL(P^*_t\|\tilde{P}_t)\right]\\
        & \overset{(b)}{\leq} \inf_{\tilde{\theta}}\ \frac{\I(H_T;\tilde{\theta})}{T} + \E\left[\KL(P^*_0\|\tilde{P}_0)\right]\\
        & = \inf_{\epsilon \geq 0} \frac{\H_\epsilon(\theta)}{T} + \epsilon,
    \end{align*}
    where $(a)$ follows from our regularity assumption and $(b)$ follows from Lemma \ref{cor:distortion_ub}.
\end{proof}

In the following four sections we will apply this result to specific data generating processes.  We begin with linear regression.

\subsection{Linear Regression}
\label{se:linear-regression}

\begin{figure}[H]
    \centering
    \includegraphics[width=\textwidth]{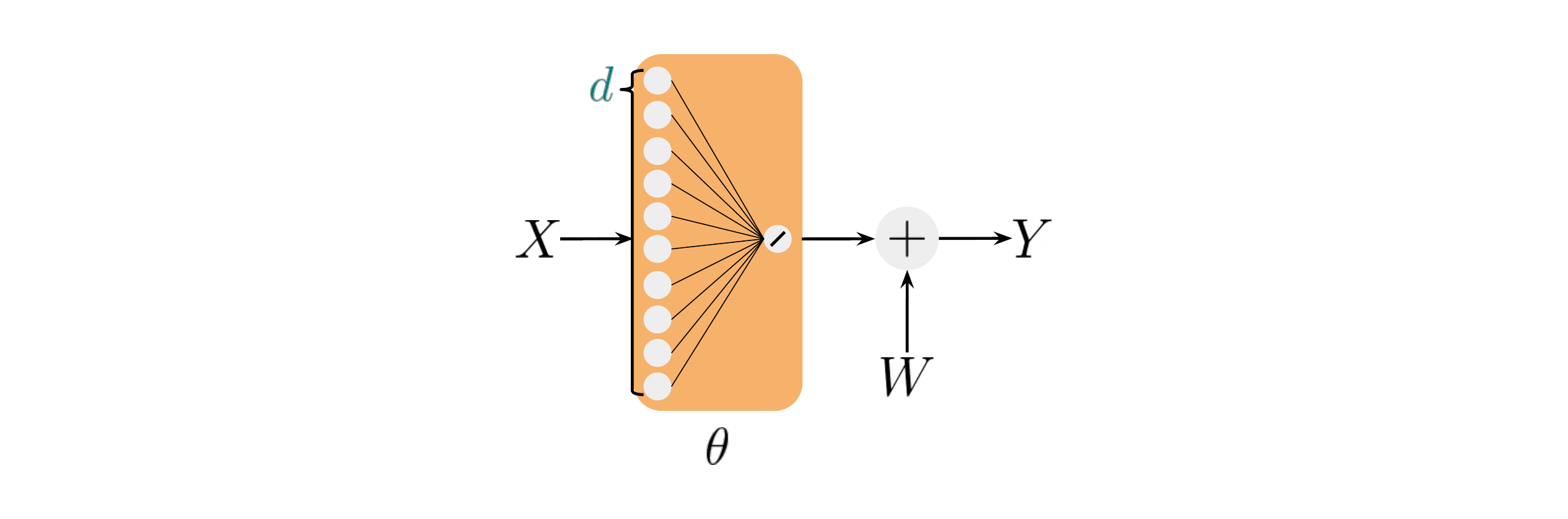}
    \caption{We depict our linear regression data generating process above.  It consists of an input vector $X$ of dimension $d$, an unknown weight vector $\theta$ of dimension $d$, and a final output $Y$ which is the sum of $\theta^\top X$ and independent Gaussian noise $W$.}
    \label{fig:lin_reg}
\end{figure}

\subsubsection{Data Generating Process}

In linear regression, the variable that we are interested in estimating is a random vector $\theta\in\Re^d$.  In our Bayesian framework, a known prior distribution $\Pr(\theta\in\cdot)$ expresses uncertainty about $\theta$.  We assume that $\Pr(\theta\in\cdot) = \normal(0, I_d/d)$.  For all $t \in \Z_{+}$, inputs and outputs are generated according to a random vector $X_t \overset{iid}{\sim} \normal(0, I_d)$ and
$$Y_{t+1} = \theta^\top X_t + W_{t+1},$$
where $W_{t}\overset{iid}{\sim} \normal(0, \sigma^2)$ for known variance $\sigma^2$.  It is straightforward to extend results of this section to Gaussian input and prior distributions with different means and covariance matrices and even distributions that are not Gaussian.  We study our specific Gaussian case to reduce clutter without compromising insight.

\subsubsection{Preliminary Results}

In this section, we establish preliminary results that will allow us to streamline our error analysis using the tools established in Section \ref{sec:info_learn}.  These results characterize the rate-distortion function for our linear regression model.

Our first result pertains to levels of error that require no information.  In particular, because no information is required to achieve the error delivered by an uninformed prediction, there should be a threshold $\overline{\epsilon}$ such that $\H_\epsilon(\theta) = 0$ for all $\epsilon \geq \overline{\epsilon}$.  This result establishes the threshold.
\begin{lemma}{\bf (linear regression 0 rate-distortion threshold)}\label{le:lin_reg_0}
    For all $d\in \Z_{++}, \sigma^2 \in \Re_{++}$, if for all $t\in \Z_{+}$, $(X_t, Y_{t+1})$ are generated according to the linear regression process and $\epsilon \geq \frac{1}{2}\ln(1 + 1/\sigma^2)$, then for all $T\in \Z_{++}$,
    $$\H_{\epsilon}(\theta) = 0.$$
\end{lemma}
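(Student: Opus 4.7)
The plan is to exhibit a feasible $\tilde{\theta}$ attaining rate zero. Since the rate-distortion function $\H_{\epsilon,T}(\theta) = \inf_{\tilde{\theta}\in\tilde{\Theta}_{\epsilon,T}} \I(\theta;\tilde{\theta})$ is always non-negative, it suffices to find \emph{any} $\tilde{\theta}$ in the feasible set $\tilde{\Theta}_{\epsilon,T}$ with $\I(\theta;\tilde{\theta})=0$. The natural candidate is the trivial choice $\tilde{\theta}$ independent of all other random variables (e.g. a deterministic constant). This immediately gives $\I(\theta;\tilde{\theta}) = 0$ and the conditional independence $\tilde{\theta}\perp H_T | \theta$.

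The work is then to verify the distortion constraint $\I(H_T;\theta|\tilde{\theta})/T \leq \epsilon$. Because the data-generating process is iid conditioned on $\theta$, Corollary \ref{cor:distortion_ub} applies and reduces the problem to bounding the single-step quantity $\I(Y_1;\theta|\tilde{\theta},X_0) = \I(Y_1;\theta|X_0)$, where the equality uses that $\tilde{\theta}$ is independent of $(\theta,X_0,Y_1)$. So the task collapses to showing
\[
\I(Y_1;\theta|X_0) \;\leq\; \tfrac{1}{2}\ln\!\left(1 + 1/\sigma^2\right).
\]

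This is a direct Gaussian calculation. Conditioning on $X_0 = x$, we have $Y_1 \mid \theta, X_0=x \sim \normal(\theta^\top x, \sigma^2)$ while $\theta^\top x \sim \normal(0, \|x\|^2/d)$ under the prior, hence $Y_1 \mid X_0=x \sim \normal(0, \|x\|^2/d + \sigma^2)$. Using the standard formula for the differential entropy of a Gaussian,
\[
\I(Y_1;\theta|X_0=x) \;=\; \diffentropy(Y_1|X_0=x) - \diffentropy(Y_1|\theta,X_0=x) \;=\; \tfrac{1}{2}\ln\!\left(1 + \tfrac{\|x\|^2}{d\sigma^2}\right).
\]
Taking expectation over $X_0 \sim \normal(0,I_d)$ and applying Jensen's inequality to the concave function $\ln(\cdot)$, together with $\E[\|X_0\|^2] = d$, yields
\[
\I(Y_1;\theta|X_0) \;\leq\; \tfrac{1}{2}\ln\!\left(1 + \tfrac{\E[\|X_0\|^2]}{d\sigma^2}\right) \;=\; \tfrac{1}{2}\ln\!\left(1+1/\sigma^2\right) \;\leq\; \epsilon,
\]
so $\tilde{\theta}\in\tilde{\Theta}_{\epsilon,T}$ and the lemma follows.

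No step here is a genuine obstacle: the only choices are picking the trivial $\tilde{\theta}$ and invoking Corollary \ref{cor:distortion_ub} to avoid having to directly manipulate $\I(H_T;\theta|\tilde{\theta})$ across the full horizon $T$. The Gaussian-entropy computation and the Jensen bound are both routine, and the isotropy of the prior ($\theta\sim\normal(0,I_d/d)$) is precisely what makes $\E[(\theta^\top X_0)^2]=1$, which is what produces the clean threshold $\tfrac{1}{2}\ln(1+1/\sigma^2)$.
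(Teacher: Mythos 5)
Your proposal is correct and follows essentially the same route as the paper: take the trivial (independent) $\tilde{\theta}$, invoke Corollary \ref{cor:distortion_ub} to reduce to the single-step quantity $\I(Y_1;\theta|X_0)$, and bound it by $\tfrac{1}{2}\ln(1+1/\sigma^2)$ via the Gaussian entropy formula and Jensen's inequality. The only cosmetic difference is that you compute $\diffentropy(Y_1|X_0=x)$ exactly using the Gaussian prior, whereas the paper upper bounds it via the maximum-entropy inequality (Lemma \ref{le:max_entropy}), which is why the paper's argument extends to non-Gaussian priors with the same covariance.
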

\begin{proof}
    Let $\tilde{\theta} = \emptyset$. Then,
    \begin{align*}
        \E\left[\KL(P^*_0 \| \tilde{P}_0)\right]
        & = \I(Y_1;\theta|\tilde{\theta}, X_0)\\
        & = \diffentropy(Y_1|\tilde{\theta}, X_0) - \diffentropy(Y_1|\theta, \tilde{\theta}, X_0)\\
        & = \diffentropy(Y_1|X_0) - \diffentropy(W)\\
        & \overset{(a)}{\leq} \E\left[\frac{1}{2}\ln\left(2\pi e \left(\sigma^2 + \text{Var}[\theta^\top X|X]\right)\right)\right] - \frac{1}{2}\ln\left(2\pi e \sigma^2\right)\\
        & \overset{(b)}{=} \E\left[\frac{1}{2}\ln\left(2\pi e \left(\sigma^2 + \frac{\|X\|^2_2}{d}\right)\right)\right] - \frac{1}{2}\ln\left(2\pi e \sigma^2\right)\\
        & \overset{(c)}{\leq} \frac{1}{2}\ln\left(2\pi e \left(\sigma^2 + \frac{\E\left[\|X\|^2_2\right]}{d}\right)\right) - \frac{1}{2}\ln\left(2\pi e \sigma^2\right)\\
        & = \frac{1}{2}\ln\left(1 + \frac{1}{\sigma^2}\right)\\
        & \leq \epsilon,
    \end{align*}
    where $(a)$ follows from Lemma \ref{le:max_entropy}, $(b)$ follows from the fact that $\theta$ has covariance matrix $I_{d}/d$, and $(c)$ follows from Jensen's inequality.  The above establishes that $\tilde{\theta}\in \tilde{\Theta}_{\epsilon}$.  The result follows from the fact that $\I(\theta;\tilde{\theta}) = 0$.
\end{proof}

The preceding result addresses the trivial case where error is so large that no information is required.  Our next result addresses all levels of error.
\begin{lemma}{\bf (linear regression $0$-horizon rate-distortion upper bound)}\label{le:lin_reg_rd_ub}
    For all $d \in \Z_{++}$, $\sigma^2\in \Re_{++}$, and $\epsilon \in \Re_{++}$, if for all $t \in \Z_{+}$, $(X_t, Y_{t+1})$ are generated according to the linear regression process, then for all $T$,
    $$\H_{\epsilon}(\theta) \leq \left(\frac{d}{2}\ln\left(\frac{1}{\sigma^2(e^{2\epsilon}-1)}\right)\right)_+.$$
\end{lemma}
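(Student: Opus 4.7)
The plan is to construct an explicit $\tilde\theta$ that lies in $\tilde\Theta_{\epsilon,T}$ and whose rate $\I(\theta;\tilde\theta)$ matches the claimed bound. The natural candidate, borrowed from the classical Gaussian rate--distortion calculation, is the \emph{backward test channel}: introduce an independent Gaussian $\tilde Z\sim\normal(0,bI_d)$ (with $\tilde Z$ independent of $\theta$ and of the entire process $((X_t,W_{t+1}):t\in\Z_+)$), and let $\tilde\theta$ be chosen so that $\theta = \tilde\theta + \tilde Z$ with $\tilde\theta\perp\tilde Z$. Concretely, one may realize this via the forward representation $\tilde\theta = (1-bd)\theta + \sqrt{b(1-bd)}\,G$ for an independent $G\sim\normal(0,I_d)$; this makes the verification $\tilde\theta\perp H_T\mid\theta$ trivial, since conditional on $\theta$ the variable $\tilde\theta$ depends only on the exogenous noise $G$.

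With this construction in hand, I would carry out three short computations. First, apply Corollary \ref{cor:distortion_ub} to reduce the distortion requirement to bounding $\I(Y_1;\theta\mid\tilde\theta,X_0)$. Conditional on $(\tilde\theta,X_0)$, the law of $\theta$ is $\normal(\tilde\theta,bI_d)$, so $Y_1=\theta^\top X_0+W_1$ is conditionally $\normal(\tilde\theta^\top X_0,\,b\|X_0\|_2^2+\sigma^2)$; taking differential entropies and using Jensen on $\ln$ to push $\E[\|X_0\|_2^2]=d$ inside yields $\I(Y_1;\theta\mid\tilde\theta,X_0)\le \tfrac12\ln(1+bd/\sigma^2)$, which is $\le\epsilon$ precisely when $bd\le\sigma^2(e^{2\epsilon}-1)$. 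Second, compute the rate directly from the Gaussian entropy formula: $\I(\theta;\tilde\theta)=\diffentropy(\theta)-\diffentropy(\theta\mid\tilde\theta)=\diffentropy(\theta)-\diffentropy(\tilde Z)=\tfrac{d}{2}\ln(1/(bd))$. Third, optimize: choose $b$ as large as the distortion constraint allows, namely $bd=\sigma^2(e^{2\epsilon}-1)$, to obtain rate $\tfrac{d}{2}\ln(1/(\sigma^2(e^{2\epsilon}-1)))$.

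Finally, I would reconcile the two regimes signalled by the $(\cdot)_+$. When $\sigma^2(e^{2\epsilon}-1)\ge 1$ the expression above is nonpositive and moreover the constraint $bd\le 1$ (needed so that $\tilde Z$ has covariance no larger than that of $\theta$) is the binding one; then one instead takes $\tilde\theta\equiv 0$, which by Lemma \ref{le:lin_reg_0} already lies in $\tilde\Theta_{\epsilon,T}$ and has zero rate, recovering the $(\cdot)_+$. The main obstacle is really just the choice of construction; once one uses the backward test channel so that $\theta$ is Gaussian given $\tilde\theta$ (rather than the more tempting but messier $\tilde\theta=\theta+Z$), every remaining step is an elementary Gaussian-entropy computation combined with one application of Corollary \ref{cor:distortion_ub} and one application of Jensen.
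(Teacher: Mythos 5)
Your proposal is correct and is essentially the paper's proof: the paper likewise constructs a jointly Gaussian test channel ($\tilde{\theta}=\theta+V$ with $V\sim\normal(0,(\delta^2/d)I_d)$ and $\delta^2=\sigma^2(e^{2\epsilon}-1)/(1-\sigma^2(e^{2\epsilon}-1))$), bounds the distortion via Corollary \ref{cor:distortion_ub}, Lemma \ref{le:max_entropy} and Jensen, computes the rate from Gaussian entropies, and disposes of the degenerate regime via Lemma \ref{le:lin_reg_0}. Your backward-channel $\tilde{\theta}$ is just a deterministic rescaling of the paper's proxy (with $bd=\sigma^2(e^{2\epsilon}-1)$ one gets $\delta^2=bd/(1-bd)$), so it induces the same joint law and the same rate $\tfrac{d}{2}\ln\left(\tfrac{1}{\sigma^2(e^{2\epsilon}-1)}\right)$; the only difference is that making $\theta\mid\tilde{\theta}$ explicitly Gaussian spares you the paper's intermediate step of identifying $\E[\theta\mid\tilde{\theta}]$ inside the entropy.
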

\begin{proof}
    Fix $\sigma^2\in \Re_{++}$. Let $\tilde{\theta} = \theta + V$, where $V \sim \mathcal{N}(0, \delta^2 I_d / d)$ for $\delta^2 = \frac{\sigma^2(e^{2\epsilon}-1)}{1 - \sigma^2(e^{2\epsilon}-1)}$ and $V\perp \theta$. Note that $\delta^2 \geq 0$ for all $0 < \epsilon < \frac{1}{2}\ln\left(1 + \frac{1}{\sigma^2}\right)$. We begin by showing that $\tilde{\theta} \in\tilde{\Theta}_{\epsilon}$.
    \begin{align*}
        \E\left[\KL(P^*_0 \| \tilde{P}_0)\right]
        & = \I(Y_{1};\theta|\tilde{\theta}, X_0)\\
        & = \diffentropy(Y_1|\tilde{\theta}, X_0) - \diffentropy(Y_1|\theta,\tilde{\theta}, X_0)\\
        & = \diffentropy(W_1 + \theta^\top X|\tilde{\theta}, X_0) - \diffentropy(W_1)\\
        & = \diffentropy\left(W_1 + \theta^\top X_0 - \frac{\tilde{\theta}^\top X_0}{1+\delta^2}\bigg|\tilde{\theta}, X_0\right) - \diffentropy(W_1)\\
        & = \diffentropy\left(W_1 + \left(\frac{\delta^2}{1+\delta^2}\theta + \frac{1}{1+\delta^2}V\right)^\top X_0 \Big|\tilde{\theta}, X_0\right) - \diffentropy(W_1)\\
        & \leq \diffentropy\left(W_1 + \left(\frac{\delta^2}{1+\delta^2}\theta + \frac{1}{1+\delta^2}V\right)^\top X_0\Big|X_0\right) - \diffentropy(W_1)\\
        & \overset{(a)}{\leq} \E\left[\frac{1}{2}\ln\left(2\pi e\left(\sigma^2 + \left(\frac{\delta^4}{d(1 + \delta^2)^2} + \frac{\delta^2}{d(1 + \delta^2)^2}\right)\|X_0\|^2_2\right)\right)\right] - \frac{1}{2}\ln\left(2\pi e \sigma^2\right)\\
        & = \E\left[\frac{1}{2}\ln\left(1 + \frac{\delta^2\|X_0\|^2_2}{d(1 + \delta^2)\sigma^2}\right)\right]\\
        & \overset{(b)}{\leq} \frac{1}{2}\ln\left(1 + \frac{\delta^2}{(1+\delta^2)\sigma^2}\right)\\
        & = \frac{1}{2}\ln\left(e^{2\epsilon}\right)\\
        & = \epsilon,
    \end{align*}
    where $(a)$ follows from Lemma \ref{le:max_entropy} and $(b)$ follows from Jensen's inequality. Next, we upper bound the rate of $\tilde{\theta}$:
    \begin{align*}
        \I(\theta;\tilde{\theta})
        & = \diffentropy(\tilde{\theta}) - \diffentropy(\tilde{\theta}|\theta)\\
        & = \diffentropy(\tilde{\theta}) - \diffentropy(V)\\
        & \overset{(a)}{\leq} \frac{d}{2}\ln\left(2\pi e \left(\frac{\delta^2 + 1}{d}\right)\right) - \frac{d}{2}\ln\left(2\pi e \left(\frac{\delta^2}{d}\right)\right)\\
        & = \frac{d}{2}\ln\left(1 + \frac{1}{\delta^2}\right)\\
        & = \frac{d}{2}\ln\left(\frac{1}{\sigma^2\left(e^{2\epsilon}-1\right)}\right),
    \end{align*}
    where $(a)$ follows from Lemma \ref{le:max_entropy}. The result follows from Lemma \ref{le:lin_reg_0} and the fact that $\tilde{\theta} \in \tilde{\Theta}_{\epsilon}$ for all $T\in \Z_{++}$ and $\epsilon < 1/2\ln(1+1/\sigma^2)$.
\end{proof}

Proofs of our previous two results did not rely on the assumption that $\Pr(\theta\in\cdot) = \normal(0, I_d/d)$ but only on the fact that the elements of $\theta$ are independent with variances that sum to $1$.  To establish a \emph{lower bound} on the rate-distortion function, we rely on Gaussianity.
\begin{lemma}{\bf (linear regression $0$-horizon rate-distortion lower bound)}\label{le:lin_reg_rd_lb}
    For all $d \in \mathbb{Z}_{++}$ s.t. $d > 2$, $\sigma^2\in\Re_{++}$, and $\epsilon \in \Re_{++}$, if for all $t \in \Z_{+}$, $(X_t, Y_{t+1})$ are generated according to the linear regression process, then
    $$\H_{\epsilon}(\theta)\ \geq\ \left(\frac{d}{2}\ln\left(\frac{1}{(8+\frac{d}{d-2}\sigma^2)\epsilon}\right)\right)_+.$$ 
\end{lemma}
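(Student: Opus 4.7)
The plan is a Shannon-style rate--distortion lower bound. Fix any $\tilde\theta \in \tilde\Theta_\epsilon$; I aim to bound $I(\theta;\tilde\theta)$ from below uniformly. Since $\theta \sim \mathcal{N}(0, I_d/d)$ satisfies $\diffentropy(\theta) = \tfrac{d}{2}\ln(2\pi e/d)$, for any estimator $\hat\theta$ measurable with respect to $\tilde\theta$ (and possibly some auxiliary data), the Gaussian max-entropy bound applied to $\theta - \hat\theta$ gives
$$I(\theta;\tilde\theta,\mathrm{aux}) \;\ge\; \diffentropy(\theta) - \diffentropy(\theta - \hat\theta) \;\ge\; \tfrac{d}{2}\ln\bigl(1/\E[\|\theta-\hat\theta\|^2]\bigr).$$
To keep the cost of the auxiliary data small, I use $\tilde\theta \perp H_\infty \mid \theta$ to introduce a single virtual pair $(X_0,Y_1)$; the chain rule and membership in $\tilde\Theta_\epsilon$ give $I(\theta;\tilde\theta,X_0,Y_1) = I(\theta;\tilde\theta) + I(Y_1;\theta\mid\tilde\theta,X_0) \le I(\theta;\tilde\theta) + \epsilon$. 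So the task reduces to exhibiting $\hat\theta$, a function of $(\tilde\theta,X_0,Y_1)$, with $\E[\|\theta-\hat\theta\|^2]\le (8+\tfrac{d}{d-2}\sigma^2)\epsilon$.

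For the estimator, I combine the Bayes posterior mean $\hat\theta_0 = \E[\theta\mid\tilde\theta]$ with a rank-one correction along $X_0$:
$$\hat\theta = \hat\theta_0 + \frac{Y_1 - \hat\theta_0^\top X_0}{\|X_0\|^2}\,X_0.$$
Writing $\theta - \hat\theta = -(I - X_0X_0^\top/\|X_0\|^2)(\theta - \hat\theta_0) - W_1 X_0/\|X_0\|^2$ and using that $X_0 \sim \mathcal{N}(0, I_d)$ is independent of $(\theta,\tilde\theta, W_1)$, the rotational identity $\E[X_0 X_0^\top/\|X_0\|^2] = I_d/d$ together with the chi-squared moment $\E[1/\|X_0\|^2] = 1/(d-2)$ (which is precisely where the hypothesis $d>2$ is used) yield, after a short computation,
$$\E[\|\theta - \hat\theta\|^2] = \tfrac{d-1}{d}\,\E[\|\theta-\hat\theta_0\|^2] + \tfrac{\sigma^2}{d-2}.$$
This directly produces the $\tfrac{d}{d-2}\sigma^2$ component of the target constant.

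The main obstacle is converting the KL-type constraint $I(Y_1;\theta\mid\tilde\theta,X_0)\le\epsilon$ into a quadratic control $\E[\|\theta - \hat\theta_0\|^2] \le 8\epsilon$ on the Bayes risk of $\hat\theta_0$. The bridging observation is that $\E[\|\theta-\hat\theta_0\|^2] = \E[\mathrm{tr}(\mathrm{Cov}(\theta\mid\tilde\theta))] = \E[X_0^\top\mathrm{Cov}(\theta\mid\tilde\theta)X_0]$ by isotropy of $X_0$, while the distortion constraint, via the Gaussian channel structure and Bayes-optimality of the posterior predictive, controls the logarithmic moment $\E[\ln(1 + X_0^\top\mathrm{Cov}(\theta\mid\tilde\theta)X_0/\sigma^2)]$. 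Inverting this logarithmic bound to obtain a linear-in-$\epsilon$ quadratic control is the delicate step; elementary inequalities such as $\ln(1+x) \ge x/(1+x)$, combined with a tail/concentration argument for the generalized chi-square $X_0^\top\mathrm{Cov}(\theta\mid\tilde\theta)X_0$ (or, alternatively, a convexification argument that reduces the analysis to Gaussian $\tilde\theta$, where the rate--distortion infimum is attained), yield the required constant $8$. Assembling all the pieces, and using the positive part to absorb the regime in which the bound would otherwise be vacuous, completes the argument.
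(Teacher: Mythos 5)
Your outer skeleton matches the paper's: convert the distortion constraint into a mean-squared-error bound on an estimator of $\theta$, then invoke the Gaussian source rate--distortion lower bound (your $\diffentropy(\theta)-\diffentropy(\theta-\hat\theta)$ computation is just a proof of Theorem 10.3.3 of Cover--Thomas, which the paper cites directly). But the middle step, which is the entire technical content of the lemma, has two genuine problems. First, your rank-one correction misplaces the $\sigma^2$ term: your own identity gives $\E[\|\theta-\hat\theta\|^2] = \tfrac{d-1}{d}\E[\|\theta-\hat\theta_0\|^2] + \tfrac{\sigma^2}{d-2}$, where the second term is an additive constant independent of $\epsilon$. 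The target is $(8+\tfrac{d}{d-2}\sigma^2)\epsilon$, in which $\tfrac{d}{d-2}\sigma^2$ \emph{multiplies} $\epsilon$. With your estimator the bound saturates at $\tfrac{d}{2}\ln\bigl(\tfrac{d-2}{\sigma^2}\bigr)$ as $\epsilon\to 0$ instead of diverging like $\tfrac{d}{2}\ln(1/\epsilon)$, so it cannot yield the lemma in the regime where the lemma has content. (This is unavoidable: one noisy scalar observation $(X_0,Y_1)$ cannot reduce the error of a $d$-dimensional estimate below a fixed noise floor.) The auxiliary pair also costs you an extra $-\epsilon$ via the chain rule, which the stated bound does not allow.

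Second, the step you defer --- converting $\I(Y_1;\theta\mid\tilde\theta,X_0)\le\epsilon$ into $\E[\|\theta-\E[\theta\mid\tilde\theta]\|^2]\lesssim\epsilon$ --- is exactly where the paper does all the work, and your sketch does not close it. A bound on $\E[\ln(1+Q/\sigma^2)]$ does not control $\E[Q]$: mass at a large value $M$ contributes $p\ln(1+M/\sigma^2)$ to the log-moment but $pM$ to the mean, so no inequality of the form $\ln(1+x)\ge x/(1+x)$ plus concentration can recover a linear-in-$\epsilon$ bound with a universal constant. The paper instead lower-bounds the per-sample KL divergence by the squared error via the Donsker--Varadhan variational representation, using that $Y-\E[Y\mid\tilde\theta,X]$ is $(4\|X\|^2/d+\sigma^2)$-subgaussian given $(\tilde\theta,X)$ (Lemmas \ref{le:subgaussian1}--\ref{le:mse-mutual-info-inequality}); the $8$ comes from the factor $4$ in that variance proxy and the $\tfrac{d}{d-2}\sigma^2$ from $\E[d\sigma^2/\|X\|^2]=\tfrac{d}{d-2}\sigma^2$ inside the same expression, both multiplying $\epsilon$. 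Once you have that MSE bound on $\E[\theta\mid\tilde\theta]$ alone, the auxiliary observation and the rank-one correction become unnecessary, and the $-\epsilon$ slack disappears. You need to supply the subgaussian Donsker--Varadhan argument (or an equivalent) to make the proof go through.
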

\begin{proof}
    Fix $\sigma^2 \in \Re_{++}$, $\epsilon \in \mathbb{Z}_+$, and $\tilde{\theta} \in \tilde{\Theta}_{\epsilon}$. Then,
    \begin{align*}
        \epsilon
        & \overset{(a)}{\geq} \E\left[\KL(P^*_0 \| \tilde{P}_0)\right]\\
        & =  \I(Y_1;\theta|\tilde{\theta}, X_0)\\
        & \overset{(b)}{\geq} \E\left[\frac{\|X_0\|^2_2}{2(4\|X_0\|_2^2+d\sigma^2)}\right]\E\left[\|\theta-\E[\theta|\tilde{\theta}]\|^2_2\right]\\
        & = \E\left[\frac{1}{\frac{2(4\|X_0\|_2^2+d\sigma^2)}{\|X_0\|_2^2}}\right]\E\left[\|\theta-\E[\theta|\tilde{\theta}]\|^2_2\right]\\
        & = \E\left[\frac{1}{8 + \frac{d\sigma^2}{\|X_0\|^2_2}}\right]\E\left[\|\theta-\E[\theta|\tilde{\theta}]\|^2_2\right]\\
        & \overset{(c)}{\geq} \frac{1}{\E\left[8 + \frac{d\sigma^2}{\|X_0\|^2_2}\right]}\E\left[\|\theta-\E[\theta|\tilde{\theta}]\|^2_2\right]\\
        & \overset{(d)}{=} \frac{1}{8 + \frac{d\sigma^2}{d-2}}\E\left[\|\theta-\E[\theta|\tilde{\theta}]\|^2_2\right]\\
        & \geq \frac{1}{8 + \frac{d}{d-2}\sigma^2}\E\left[\|\theta-\E[\theta|\tilde{\theta}]\|^2_2\right]
    \end{align*}
    where $(a)$ follows from the fact that $\tilde{\theta} \in \tilde{\Theta}_\epsilon$, $(b)$ follows from Lemma \ref{le:mse-mutual-info-inequality}, $(c)$ follows from Jensen's inequality, and $(d)$ follows from the fact that $\E[1/\chi^2(d)] = 1/(d-2)$.
    
    Since the above condition is an implication that holds for arbitrary $\tilde{\theta}\in\tilde{\Theta}_\epsilon$, minimizing the rate $\I(\theta;\tilde{\theta})$ over the set of proxies that satisfy 
    $\E[\|\theta-\E[\theta|\tilde{\theta}]\|^2_2] \leq (8+\frac{d}{d-2}\sigma^2) \epsilon$ will provide a lower bound. However, this is simply the rate-distortion problem for a multivariate source under squared error distortion. For this problem there exists a well known lower bound (Theorem 10.3.3 of \citep{10.5555/1146355}). The lower bound follows as a result.\newline
\end{proof}
It is worth noting that this result extends to sub-Gaussian $\theta$, as established in Appendix \ref{apdx:lin_reg_lb}.

\subsubsection{Main Result}

The rate-distortion bounds which we established in the previous section allow us to directly apply Theorem \ref{th:horizon-indep-rd} to arrive at error bounds.

\begin{theorem}{\bf(linear regression error bounds)}\label{th:lin_reg_error_bounds}
    For all $d \in \mathbb{Z}_{++}$, $\sigma^2 \geq 0$, if for all $t$, $(X_t, Y_{t+1})$ are generated according to the linear regression processes, then for all $T$,
     $$\frac{d}{2T}W\left(\frac{2T}{d(8+\frac{d}{d-2}\sigma^2)}\right)\ \leq\ \Lc_{T} \ \leq\ \left(\frac{d}{2T}\ln\left(\frac{T}{\sigma^2 d}\right)\right)_+ + \frac{1}{2T}\ln\left(1+\frac{d}{T}\right),$$
     where $W$ is the Lambert W function.
\end{theorem}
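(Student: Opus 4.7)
The plan is to apply Theorem \ref{th:rd_bounds} for the upper bound and the iid-specific estimation error lower bound (which uses $\H_\epsilon(\theta)$ rather than $\H_{\epsilon, T}(\theta)$) for the lower bound, then substitute the rate-distortion characterizations from Lemmas \ref{le:lin_reg_rd_ub} and \ref{le:lin_reg_rd_lb}. This reduces the proof to a one-dimensional optimization in $\epsilon$ on each side; no further probabilistic input is required.

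For the upper bound, Lemma \ref{le:lin_reg_rd_ub} supplies $\H_{\epsilon, T}(\theta) \leq \bigl(\tfrac{d}{2}\ln(1/(\sigma^2(e^{2\epsilon}-1)))\bigr)_+$, so Theorem \ref{th:rd_bounds} gives $\Lc_T \leq \inf_{\epsilon \geq 0} \bigl(\tfrac{d}{2T}\ln(1/(\sigma^2(e^{2\epsilon}-1)))\bigr)_+ + \epsilon$. Rather than differentiating I would plug in the clean choice $\epsilon^\star = \tfrac{1}{2}\ln(1+d/T)$, so that $\sigma^2(e^{2\epsilon^\star}-1) = \sigma^2 d/T$; direct substitution collapses the first term to $\bigl(\tfrac{d}{2T}\ln(T/(\sigma^2 d))\bigr)_+$, yielding the claimed form. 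The positive-part that survives in the final bound is inherited from the threshold behavior in Lemma \ref{le:lin_reg_0}: when $T \leq \sigma^2 d$, the trivial zero-rate proxy already meets the distortion budget and the first term vanishes.

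For the lower bound, Lemma \ref{le:lin_reg_rd_lb} gives $\H_\epsilon(\theta) \geq \bigl(\tfrac{d}{2}\ln(1/(c\epsilon))\bigr)_+$ with $c = 8 + \tfrac{d}{d-2}\sigma^2$. The iid lower bound $\sup_{\epsilon \geq 0} \min\{\H_\epsilon(\theta)/T, \epsilon\}$ is maximized where the two arguments of $\min$ agree, since the first is decreasing and the second increasing in $\epsilon$. Balancing yields $\epsilon = \tfrac{d}{2T}\ln(1/(c\epsilon))$. Setting $v = \ln(1/(c\epsilon))$ transforms this into $v e^v = 2T/(cd)$, which is by definition solved by $v = W(2T/(cd))$. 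Combining with the Lambert identity $e^{-W(x)} = W(x)/x$, the balanced value of $\epsilon$ simplifies to $\tfrac{d}{2T}\, W(2T/(cd))$, exactly matching the claimed lower bound.

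The main obstacle is spotting and correctly executing the Lambert $W$ substitution in the lower bound; once the equation $ve^v = 2T/(cd)$ has been isolated, the closed-form answer follows mechanically from the defining identity $W(x)e^{W(x)} = x$. Everything else, including picking the right $\epsilon^\star$ in the upper bound and keeping track of the positive-part clippings inherited from Lemmas \ref{le:lin_reg_0} and \ref{le:lin_reg_rd_lb}, is routine substitution.
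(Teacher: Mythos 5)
Your proposal follows the paper's proof essentially verbatim: Theorem \ref{th:rd_bounds} plus Lemma \ref{le:lin_reg_rd_ub} with an explicit choice of $\epsilon$ for the upper bound, and the iid estimation-error lower bound plus Lemma \ref{le:lin_reg_rd_lb} with the balancing/Lambert-$W$ substitution for the lower bound; your Lambert-$W$ manipulation via $ve^v = 2T/(cd)$ and $e^{-W(x)} = W(x)/x$ is correct and in fact cleaner than the paper's own intermediate step. One caveat on the upper bound: with $\epsilon^\star = \tfrac{1}{2}\ln(1+d/T)$ the additive term you must carry into the final bound is $\epsilon^\star = \tfrac{1}{2}\ln(1+d/T)$, not the theorem's $\tfrac{1}{2T}\ln(1+d/T)$, so your chain does not literally produce the claimed form. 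This mismatch is inherited from the paper rather than introduced by you --- the paper's stated choice of $\epsilon$ does not reproduce its own displayed expression either, and the theorem's second term appears to be a typo for $\tfrac{1}{2}\ln\left(1+\tfrac{d}{T}\right)$, since for $T \gg d$ the bound as printed falls below $\inf_{\epsilon \geq 0}\left\{\H_{\epsilon,T}(\theta)/T + \epsilon\right\}$ and hence cannot be obtained by this method at all.
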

\begin{proof}
    \begin{align*}
        \Lc_T
        & \overset{(a)}{\leq} \inf_{\epsilon\geq 0}\ \frac{\H_{\epsilon}(\theta)}{T} + \epsilon\\
        & \overset{(b)}{\leq} \inf_{\epsilon\geq 0}\ \frac{\left(d\ln\left(\frac{1}{\sigma^2(e^{2\epsilon}-1)}\right)\right)_+}{2T} + \epsilon\\
        & \overset{(c)}{\leq} \frac{\left(d\ln\left(\frac{1}{\sigma^2\left(\left(1 + \frac{d}{T}\right)^T-1\right)}\right)\right)_+}{2T} + \frac{1}{2T}\ln\left(1 + \frac{d}{T}\right)\\
        & \leq \left(\frac{d}{2T}\ln\left(\frac{T}{\sigma^2 d}\right)\right)_+ + \frac{1}{2T}\ln\left(1 + \frac{d}{T}\right),
    \end{align*}
    where $(a)$ follows from Theorem \ref{th:horizon-indep-rd}, $(b)$ follows from Lemma \ref{le:lin_reg_rd_ub}, and $(c)$ follows from setting $\epsilon = \frac{1}{2T}\ln\left(1 + \frac{d}{T}\right)$.
    \begin{align*}
        \Lc_T
        & \geq \sup_{\epsilon\geq 0}\ \min\left\{\frac{\H_\epsilon(\theta)}{T}, \epsilon\right\}\\
        & \geq \sup_{\epsilon\geq 0}\ \min\left\{\left(\frac{d}{2T}\ln\left(\frac{1}{(8+\frac{d}{d-2}\sigma^2)\epsilon}\right)\right)_+,\ \epsilon \right\}\\
        & = \epsilon \text{ s.t. } \left(\frac{d}{2T}\ln\left(\frac{1}{(8+\frac{d}{d-2}\sigma^2)\epsilon}\right)\right)_+ = \epsilon\\
        & \overset{(a)}{=} \frac{e^{-W\left(\frac{8 + \frac{d}{d-2}\sigma^2}{2}\right)}}{\frac{2T}{d(8+\frac{d}{d-2}\sigma^2)}}\\
        & \overset{(b)}{\geq} \frac{e^{-\ln\left(8 + \frac{d}{d-2}\sigma^2\right)}}{\frac{2T}{d(8+\frac{d}{d-2}\sigma^2)}}\\
        & = \frac{d}{2T},
    \end{align*}
    where in $(a)$, $W$ denotes the Lambert $W$ function and $(b)$ follows from the fact that for all $x \geq e$, $W(x) \leq \ln(2x)$.
\end{proof}

Notably, this bound is consistent with classical results in statistics which dictate that mean squared error grows linearly in the problem dimension $d$ and decays linearly in the number of samples observed $T$.  In the following section, we will observe that qualitatively similar results also hold for logistic regression.

\subsection{Logistic Regression}
\begin{figure}[H]
    \centering
    \includegraphics[width=\textwidth]{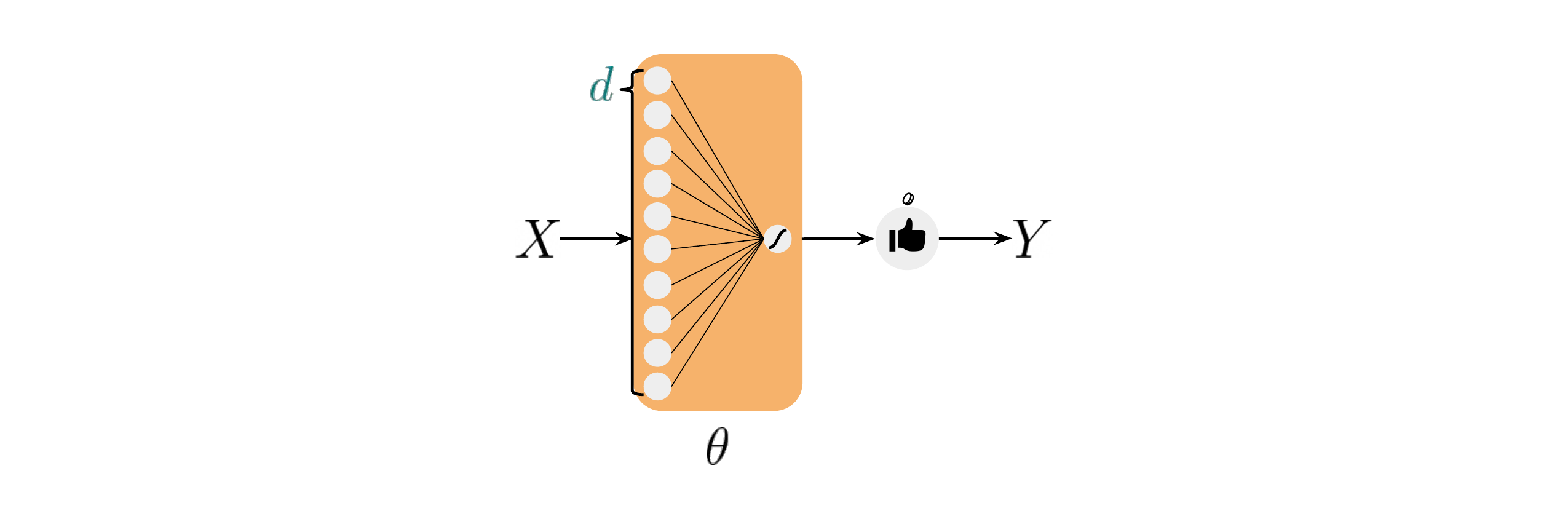}
    \caption{We depict our logistic regression data generating process above.  It consists of an input vector $X$ of dimension $d$, an unknown weight vector $\theta$ of dimension $d$, and a final binary output $Y\in\{0,1\}$.  $Y$ is \emph{sampled} according to probabilities generated by the sigmoid function applied to $\theta^\top X$.}
    \label{fig:log_reg}
\end{figure}

\subsubsection{Data Generating Process}
We next study logistic regression to demonstrate the application of our tools to \emph{classification}.  Just as in linear regression, in logistic regression, the variable that we are interested in estimating is a random vector $\theta \in \Re^d$.  We assume that $\Pr(\theta\in\cdot) = \mathcal{N}(0, I_d/d)$.  For all $t \in \mathbb{Z}_{+}$, inputs and outputs are generated according to a random vector $X_t \overset{iid}{\sim} \normal(0, I_d)$ and
$$Y_{t+1} = 
\begin{cases}
    1 & \text{ w.p. } \frac{1}{1+e^{-\theta^\top X_t}}, \\
    0 & \text{ otherwise.} 
\end{cases}$$
Our results can be generalized beyond these prior and input distributions.

\subsubsection{Preliminary Results}

We begin with the following result, which upper bounds the binary KL divergence of a sigmoidal output via the squared difference between the logits.
\begin{lemma}{\bf(squared error upper bounds binary KL divergence)}\label{le:kl_ub}
    For all $x, y \in \Re$,
    $$\frac{1}{1+e^{-x}}\ln\frac{1+e^{-y}}{1+e^{-x}} + \frac{1}{1+e^x}\ln\frac{1+e^{y}}{1+e^{x}} \leq \frac{(x-y)^2}{8}.$$
\end{lemma}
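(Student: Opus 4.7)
The plan is to recognize that the left-hand side is exactly the binary KL divergence $\KL(\mathrm{Ber}(\sigma(x))\,\|\,\mathrm{Ber}(\sigma(y)))$, where $\sigma(z) = 1/(1+e^{-z})$ is the sigmoid. Writing $p = \sigma(x)$ and $q = \sigma(y)$, one checks that $\frac{1}{1+e^{-x}} = p$, $\frac{1}{1+e^x} = 1-p$, $\frac{1+e^{-y}}{1+e^{-x}} = p/q$, and $\frac{1+e^{y}}{1+e^{x}} = (1-p)/(1-q)$, so the left-hand side becomes $p\ln(p/q) + (1-p)\ln((1-p)/(1-q))$.

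The key idea is then to bound this via a Taylor expansion in $y$ with $x$ held fixed. Define $f(y) := \KL(\mathrm{Ber}(\sigma(x))\,\|\,\mathrm{Ber}(\sigma(y)))$. Using $\ln(1-\sigma(y)) = \ln\sigma(y) - y$, one can rewrite
\begin{equation*}
f(y) = \text{(terms depending only on } x) + \ln(1+e^{-y}) + (1-\sigma(x))\,y.
\end{equation*}
Differentiating once yields $f'(y) = \sigma(y) - \sigma(x)$, so that $f(x) = 0$ and $f'(x) = 0$. Differentiating again gives $f''(y) = \sigma'(y) = \sigma(y)(1-\sigma(y))$, which attains its maximum value $1/4$ at $y=0$ and so satisfies $f''(y) \leq 1/4$ globally.

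The result then follows from Taylor's theorem with the Lagrange form of the remainder: for some $\xi$ between $x$ and $y$,
\begin{equation*}
f(y) = f(x) + f'(x)(y-x) + \tfrac{1}{2} f''(\xi)(y-x)^2 \leq 0 + 0 + \tfrac{1}{2}\cdot\tfrac{1}{4}(y-x)^2 = \tfrac{(x-y)^2}{8}.
\end{equation*}

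This is essentially routine once the reduction is made; the only place a misstep is likely is in bookkeeping the signs and log-ratios when rewriting the stated expression as $\KL(p\|q)$, and in verifying that the first derivative vanishes at $y=x$ so that the linear Taylor term disappears. No deep obstacle arises: the bound $\sigma(y)(1-\sigma(y)) \leq 1/4$ is a one-variable calculus fact (maximize $t(1-t)$ for $t \in (0,1)$), and its role here is exactly analogous to the classical statement that the logistic log-partition function is $(1/4)$-smooth.
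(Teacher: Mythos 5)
Your proof is correct. The paper states Lemma \ref{le:kl_ub} without providing any proof, so there is no argument of the authors' to compare against; your write-up in fact fills that gap. The reduction is right: with $p=\sigma(x)$, $q=\sigma(y)$ one has $\tfrac{1}{1+e^{-x}}=p$, $\tfrac{1}{1+e^{x}}=1-p$, $\tfrac{1+e^{-y}}{1+e^{-x}}=p/q$, and $\tfrac{1+e^{y}}{1+e^{x}}=(1-p)/(1-q)$, so the left-hand side is exactly $\KL\left(\mathrm{Ber}(p)\,\|\,\mathrm{Ber}(q)\right)$. Your computation $f'(y)=\sigma(y)-\sigma(x)$ checks out (using $\ln(1-\sigma(y))=-y-\ln(1+e^{-y})$), giving $f(x)=f'(x)=0$, and $f''(y)=\sigma(y)(1-\sigma(y))\leq 1/4$ with the Lagrange remainder yields the claimed $(x-y)^2/8$ bound. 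This is the canonical argument (equivalently, $1/4$-smoothness of the logistic log-partition function), and it is exactly what the paper's downstream uses of the lemma (e.g.\ Lemma \ref{le:log_reg_rd} and Lemma \ref{le:ARK_rd}) require.
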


In contrast to a linear model with aleatoric randomness in the form of Gaussian noise, the logistic model exhibits aleatoric randomness via a Bernoulli sample.  This will complicate calculation of the $0$-horizon distortion $\E[\KL(P^*_t\| \tilde{P}_t)]$, since the prediction $\tilde{P}_t = \Pr(Y_{t+1} \in \cdot | \tilde{\theta}, X_0)$ is not Gaussian for natural choices of $\tilde{\theta}$.  To circumvent this issue, we provide a general upper bound for the $0$-horizon distortion function via the distortion of a prediction under a change of measure: $\tilde{P}'_t = \Pr(Y_{t+1} \in\cdot|\theta \leftarrow \tilde{\theta}, X_t)$.  Recall that the dependence of this prediction on $\tilde{\theta}$ and $X_t$ is expressed by a function $f$ such that, for all $(\tilde{\nu}, x) \in \tilde{\Theta} \times \mathcal{X}$, $f(x, \tilde{\nu}) = \Pr(Y_{t+1} \in\cdot|\theta=\tilde{\nu}, X_t=x)$.  The distortion incurred by $\tilde{P}'_t$, which does not depend on $t$, is in general much simpler to upper bound for concrete problem instances.
\begin{lemma}\label{le:change_measure_ub}{\bf ($\tilde{P}_0$ achieves lower error than the change of measure prediction)}
    If $\tilde{\theta}:\Omega\mapsto\tilde{\Theta}$ is a random variable such that for all $t\in \Z_{+}$, $Y_{t+1}\perp \tilde{\theta}|\theta, H_t$ and $\tilde{\Theta} \subseteq \Theta$, then
    $$\E\left[\KL(P^*_0 \| \tilde{P}_0)\right]\ \leq\ \E\left[\KL(P^*_0 \| \tilde{P}'_0)\right].$$
\end{lemma}
\begin{proof}
    \begin{align*}
        \E\left[\KL(P^*_0\|\tilde{P}_0)\right]
        &= \E\left[\KL(P^*_0\|\tilde{P}_0)\right] + \E\left[\KL\left(\tilde{P}_0 \| \tilde{P}'_0\right)\right]\\
        & \overset{(a)}{\leq} \E\left[\KL(P^*_0\|\tilde{P}_0)\right] + \E\left[\KL\left(\Pr(Y_{1}\in\cdot|\tilde{\theta}, X_0)\|\Pr(Y_{1}\in\cdot|\theta\leftarrow\tilde{\theta}, X_0)\right)\right]\\
        & = \E\left[\KL(P^*_0\|\tilde{P}_0)\right] + \E\left[\E\left[\ln\frac{\Pr(Y_{1}|\tilde{\theta}, X_0)}{\Pr(Y_{1}|\theta\leftarrow\tilde{\theta}, X_0)}\bigg|\tilde{\theta}, X_0\right]\right]\\
        & = \E\left[ \ln\frac{\Pr(Y_{1}|\theta, X_0)}{\Pr(Y_{1}|\tilde{\theta}, X_0)} \right] + \E\left[\ln\frac{\Pr(Y_{1}|\tilde{\theta}, X_0)}{\Pr(Y_{1}|\theta \leftarrow \tilde{\theta}, X_0)}\right]\\
        & = \E\left[ \ln\frac{\Pr(Y_{1}|\theta, X_0)}{\Pr(Y_{1}|\theta \leftarrow \tilde{\theta}, X_0)} \right]\\
        & = \E\left[\KL(P^*_0 \| \tilde{P}'_0)\right] \\
    \end{align*}
    where $(a)$ follows from the fact that KL divergence is non-negative.
\end{proof}

Using Lemmas \ref{le:kl_ub} and \ref{le:change_measure_ub}, we establish the following upper bound on the $0$-horizon rate-distortion function for logistic regression.

\begin{lemma}{\bf (logistic regression $0$-horizon rate-distortion upper bound)}\label{le:log_reg_rd}
    For all $d \in \mathbb{Z}_{++}$ and $\epsilon \in \Re_{++}$, if, for all $t \in \mathbb{Z}_{+}$, $(X_t, Y_{t+1})$ are generated according to the logistic regression process then
    $$\H_{\epsilon}(\theta) \ \leq\ \frac{d}{2}\ln\left(1 + \frac{1}{8\epsilon}\right).$$
\end{lemma}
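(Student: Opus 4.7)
The plan is to follow the same template as the linear regression rate–distortion upper bound (Lemma \ref{le:lin_reg_rd_ub}): construct an explicit additive-Gaussian proxy $\tilde{\theta} = \theta + V$ that lies in $\tilde{\Theta}_{\epsilon,T}$, and then bound its rate $\I(\theta;\tilde{\theta})$ using maximum-entropy considerations. Concretely, I would take $V \sim \mathcal{N}(0, (\delta^2/d) I_d)$ independent of $(\theta, (X_t)_{t\geq 0}, (W_t)_{t\geq 1})$, and choose $\delta^2 = 8\epsilon$.

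First I would verify the distortion constraint. Because $V$ is exogenous we have $\tilde{\theta}\perp H_T \mid \theta$, so Corollary \ref{cor:distortion_ub} gives
\[
\frac{\I(H_T;\theta\mid\tilde{\theta})}{T}\ \leq\ \I(Y_1;\theta\mid\tilde{\theta}, X_0)\ =\ \E\!\left[\KL\!\left(\Pr(Y_1\in\cdot\mid\theta, X_0)\,\Big\|\,\Pr(Y_1\in\cdot\mid\tilde{\theta}, X_0)\right)\right],
\]
where the equality uses $Y_1 \perp \tilde{\theta}\mid(\theta,X_0)$. To avoid reasoning about the intractable posterior $\Pr(Y_1\in\cdot\mid\tilde{\theta}, X_0)$, I would invoke Lemma \ref{le:bayes_opt}: the Bayesian posterior minimizes expected log-loss over all $(\tilde{\theta}, X_0)$-measurable predictors, so replacing it with any surrogate distribution can only increase the expected KL. Choosing the surrogate $Q = \mathrm{Ber}(\sigma(\tilde{\theta}^\top X_0))$ and applying Lemma \ref{le:kl_ub} pointwise yields
\[
\I(Y_1;\theta\mid\tilde{\theta}, X_0)\ \leq\ \E\!\left[\tfrac{1}{8}(\theta^\top X_0 - \tilde{\theta}^\top X_0)^2\right]\ =\ \tfrac{1}{8}\E[(V^\top X_0)^2]\ =\ \tfrac{\delta^2}{8}\ =\ \epsilon,
\]
where the second equality uses $V\perp X_0$ together with $\E[VV^\top]=(\delta^2/d)I_d$ and $\E[X_0 X_0^\top]=I_d$. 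Hence $\tilde{\theta}\in\tilde{\Theta}_{\epsilon,T}$.

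Next I would bound the rate. Since $\tilde{\theta} = \theta+V$ is itself Gaussian with covariance $(1+\delta^2)/d\cdot I_d$ and $V$ is Gaussian with covariance $\delta^2/d\cdot I_d$,
\[
\I(\theta;\tilde{\theta})\ =\ \diffentropy(\tilde{\theta}) - \diffentropy(\tilde{\theta}\mid\theta)\ =\ \diffentropy(\tilde{\theta}) - \diffentropy(V)\ =\ \tfrac{d}{2}\ln\!\left(1 + \tfrac{1}{\delta^2}\right),
\]
using Lemma \ref{le:max_entropy} (which is in fact tight for Gaussians). Substituting $\delta^2=8\epsilon$ gives $\I(\theta;\tilde{\theta}) \leq \tfrac{d}{2}\ln(1+1/(8\epsilon))$, which is the claimed bound on $\H_{\epsilon,T}(\theta)$.

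The only subtle step is the passage from the true posterior predictive to the surrogate $\mathrm{Ber}(\sigma(\tilde{\theta}^\top X_0))$, which is essential because the logistic posterior has no closed form; this is what makes Lemma \ref{le:kl_ub} (the quadratic upper bound on binary KL) the right tool. Everything else is a direct Gaussian-channel computation parallel to Lemma \ref{le:lin_reg_rd_ub}, and the factor $1/8$ from Lemma \ref{le:kl_ub} is precisely what produces the constant $8$ inside the logarithm.
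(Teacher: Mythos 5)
Your proposal is correct and follows essentially the same route as the paper: the same Gaussian proxy $\tilde{\theta}=\theta+V$ with per-coordinate noise variance $8\epsilon/d$, the same reduction of distortion via Corollary \ref{cor:distortion_ub} and the quadratic bound of Lemma \ref{le:kl_ub}, and the same Gaussian rate computation. The only cosmetic difference is that you justify passing from the posterior predictive to the plug-in $\mathrm{Ber}(\sigma(\tilde{\theta}^\top X_0))$ via the optimality of the Bayesian posterior (Lemma \ref{le:bayes_opt}), whereas the paper cites Lemma \ref{le:change_measure_ub}, which formalizes exactly that step.
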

\begin{proof}
    Let $\tilde{\theta} = \theta + Z$ where $Z \perp \theta$ and $Z\sim \normal(0, 8\epsilon I /d)$. Then,
    \begin{align*}
        \E\left[\KL(P^*_0\| \tilde{P}_0)\right]
        & \overset{(a)}{\leq} \E\left[\KL\left(P^*_0 \ \|\ \tilde{P}'_0\right)\right]\\
        & = \E\left[\KL\left(\Pr(Y_1\in\cdot|\theta, X_0) \ \|\ \Pr(Y_1\in\cdot|\theta \leftarrow \tilde{\theta}, X_0)\right)\right]\\
        & = \E\left[\frac{\ln\left(\frac{1+e^{-\tilde{\theta}^\top X_0}}{1+e^{-\theta^\top X_0}}\right)}{1+e^{-\theta^\top X_0}} + \frac{\ln\left(\frac{1+e^{\tilde{\theta}^\top X_0}}{1+e^{\theta^\top X_0}}\right)}{1+e^{\theta^\top X_0}}\right]\\
        & \overset{(b)}{\leq} \frac{\E\left[\left(\theta^\top X_0 - \tilde{\theta}^\top X_0\right)^2\right]}{8}\\
        & = \frac{\E\left[\|\theta - \tilde{\theta}\|^2_2\right]}{8}\\
        & = \epsilon,
    \end{align*}
    where $(a)$ follows from Lemma \ref{le:change_measure_ub} and $(b)$ follows from Lemma \ref{le:kl_ub}.
    
    We now upper bound the rate.
    \begin{align*}
        \I(\theta;\tilde{\theta})
        & = \diffentropy(\tilde{\theta}) - \diffentropy(\tilde{\theta}|\theta)\\
        & \overset{(a)}{\leq} \frac{d}{2}\ln\left(2\pi e \left(\frac{1}{d} + \frac{8\epsilon}{d}\right)\right) - \frac{d}{2}\ln\left(2\pi e \frac{8\epsilon}{d}\right)\\
        & = \frac{d}{2}\ln\left(1 + \frac{1}{8\epsilon}\right),
    \end{align*}
    where $(a)$ follows from Lemma \ref{le:max_entropy}.  The result follows.
\end{proof}

\subsubsection{Main Result}

With the $0$-horizon rate-distortion upper bound in place, we establish the following upper bound on the optimal horizon $T$ error.

\begin{theorem}{\bf (logistic regression estimation error upper bound)} For all $d\in \Z_{++}$, if for all $t$, $(X_t, Y_{t+1})$ is generated according to the logistic regression process then, for all $T,$
    $$ \Lc_T\ \leq\ \frac{d}{2T}\left(1 + \ln\left(1 + \frac{T}{4d}\right)\right).$$
\end{theorem}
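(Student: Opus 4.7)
The plan is to apply the general rate-distortion upper bound (Theorem \ref{th:rd_bounds}) and then invoke the logistic-regression-specific rate-distortion bound (Lemma \ref{le:log_reg_rd}), optimizing the tolerance parameter $\epsilon$ so that the two terms balance to give the stated expression.

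First I would write
\[
\Lc_T \;\leq\; \inf_{\epsilon \geq 0}\, \frac{\H_{\epsilon, T}(\theta)}{T} + \epsilon
\;\leq\; \inf_{\epsilon \geq 0}\, \frac{d}{2T}\ln\!\left(1 + \frac{1}{8\epsilon}\right) + \epsilon,
\]
which uses Theorem \ref{th:rd_bounds} for the first inequality and Lemma \ref{le:log_reg_rd} for the second. The remaining task is a one-dimensional minimization; rather than taking a derivative and solving transcendentally, I would just plug in a clever choice of $\epsilon$ that makes the argument of the logarithm match the target bound.

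The key observation is that picking $\epsilon = \tfrac{d}{2T}$ yields $\tfrac{1}{8\epsilon} = \tfrac{T}{4d}$, so
\[
\frac{d}{2T}\ln\!\left(1 + \frac{1}{8\epsilon}\right) + \epsilon
\;=\; \frac{d}{2T}\ln\!\left(1 + \frac{T}{4d}\right) + \frac{d}{2T}
\;=\; \frac{d}{2T}\!\left(1 + \ln\!\left(1 + \frac{T}{4d}\right)\right),
\]
which is exactly the claimed bound. Since this is an upper bound on an infimum, the inequality holds for any fixed $\epsilon$ and in particular for this choice, so we are done.

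There is no real obstacle here: essentially all the work was done in establishing Lemma \ref{le:log_reg_rd} (the squared-error bound on the sigmoid KL-divergence and the Gaussian proxy construction). The only subtlety worth flagging is verifying that the choice $\epsilon = d/(2T)$ is near-optimal; one could confirm this by differentiating $\epsilon + \tfrac{d}{2T}\ln(1+1/(8\epsilon))$ in $\epsilon$ and checking that the critical point is of the same order (up to constants that get absorbed into the $+1$ term inside the parentheses). This mirrors exactly the strategy used in the proof of Theorem \ref{th:lin_reg_error_bounds} for linear regression, where a comparably chosen $\epsilon$ is plugged into the analogous rate-distortion upper bound.
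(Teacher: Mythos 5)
Your proposal is correct and follows exactly the paper's own proof: apply Theorem \ref{th:rd_bounds}, substitute the rate-distortion bound of Lemma \ref{le:log_reg_rd}, and set $\epsilon = d/(2T)$ so that $1/(8\epsilon) = T/(4d)$. Your arithmetic is in fact cleaner than the paper's, whose intermediate step writes $\ln(1+\tfrac{d}{8\epsilon})$ (a stray factor of $d$ inconsistent with the lemma statement) before arriving at the same final bound.
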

\begin{proof}
    \begin{align*}
        \Lc_T
        & \overset{(a)}{\leq} \inf_{\epsilon \geq 0}\ \frac{\H_{\epsilon}(\theta)}{T} + \epsilon\\
        & \overset{(b)}{\leq} \inf_{\epsilon \geq 0}\ \frac{\frac{d}{2}\ln\left(1 + \frac{1}{8\epsilon}\right)}{T} + \epsilon\\
        & \overset{(c)}{\leq} \frac{d}{2T}\ln\left(1 + \frac{T}{4d}\right) + \frac{d}{2T},
    \end{align*}
    where $(a)$ follows from Theorem \ref{th:horizon-indep-rd}, $(b)$ follows from Lemma \ref{le:log_reg_rd}, and $(c)$ follows from setting $\epsilon = \frac{d}{2T}$.
\end{proof}

Just as with linear regression, this error bound is $\tilde{\mathcal{O}}(d/T)$.  In the next two sections, we consider much more complex data generating processes, which are associated with multi-layer perceptrons.

\subsection{Finite-Width Multi-Layer Perceptrons}\label{sec:dnn}
\begin{figure}[H]
    \centering
    \includegraphics[width=\textwidth]{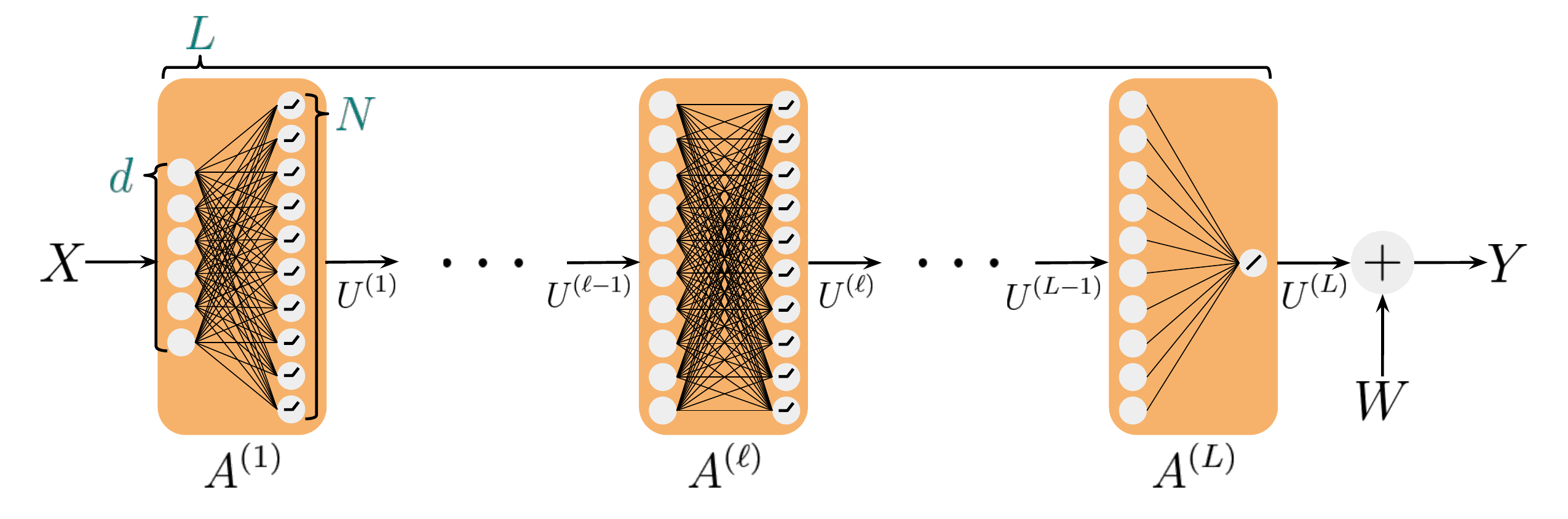}
    \caption{We depict our finite-width multi-layer perceptron data generating process above.  It consists of input dimension $d$, width $N$, depth $L$, with output dimension $1$, and ReLU activation units.  We denote the weights at layer $\ell$ by $A^{(\ell)}$ and the output of layer $\ell$ by $U^{(\ell)}$.  We assume that the final output $Y$ is the sum of the final network output $U^{(L)}$ and independent Gaussian noise $Z$.}
    \label{fig:dnn}
\end{figure}

\subsubsection{Data Generating Process}

For a finite-width multi-layer perceptron (MLP), $\theta$ consists of random matrices $(A^{(1)}, \ldots, A^{(L)})$, which represent the weights that parameterize each of the $L$ layers.  For simplicity, we assume that the width of every hidden layer is identical and equal to a positive integer $N$ and that the MLP has input dimension $d$ and output dimension $1$.  In our Bayesian framework, a known prior distribution $\Pr(\theta \in\cdot)$ expresses uncertainty over the MLP weights.  We assume the following prior distribution:
$$\Pr(A^{(\ell)}) = \begin{cases}
    \normal(0, \frac{I_{N\cdot d}}{d}) & \text{ if } \ell = 1, \\
    \normal(0, \frac{I_{N\cdot N}}{N}) & \text{ if } 2 \leq \ell \leq L-1, \\
    \normal(0, \frac{I_{N}}{N}) & \text{ if } \ell = L.
\end{cases}$$
We further make the assumption that the weights \emph{across} layers are independent.  These Gaussianity assumptions are not required to prove our results, though the specified covariance structure is.  We make the Gaussianity assumptions only to simplify the proofs.

For all $t \in \Z_+$, inputs and outputs are generated according to a random vector $X_t\overset{iid}{\sim}\normal(0, I_d)$ and
\begin{align*}
    U^{(0)}_t
    & = X_t\\
    U^{(\ell+1)}_t & = \text{ReLU}\left(A^{(\ell+1)}U^{(\ell)}_t\right)\\
    Y_{t+1} & = U^{(L)}_t + W_{t+1},
\end{align*}
where $W_{t} \overset{iid}{\sim} \normal(0, \sigma^2)$ for known variance $\sigma^2$.

\subsubsection{Preliminary Results}

In this section, we cover several lemmas which facilitate error analysis for MLPs.  Lemma \ref{le:nn_distortion_ub} establishes an upper bound on the horizon $0$ distortion function in the finite-width MLP setting.  This result allows us to easily derive upper bounds for the $0$-horizon rate-distortion function in Theorem \ref{th:nn_rd_bound}.  We begin with the following three Lemmas (\ref{le:decomp}, \ref{le:true_input_inequality}, \ref{le:nn_layer_dist}) which facilitate the derivation of distortion upper bound in Lemma \ref{le:nn_distortion_ub}.  We begin with an initial result which decomposes distortion into a sum of simpler terms.

We first introduce some requisite notation to necessary to understand the following result.  We restrict our attention to $\tilde{\theta} = (\tilde{A}^{1}, \tilde{A}^{2}, ..., \tilde{A}^{(L)})$ which are independent from each other.  Under this assumption, we abbreviate the prediction based on an approximation of the first $\ell$ layers $\Pr(Y_{t+1}\in\cdot|A^{(\ell+1,L)}, \tilde{A}^{(1:\ell)}, H_t)$ as $\tilde{P}^{(\ell)}_t$.  The following result expresses the $0$-horizon distortion as the sum of distortions between consecutive approximate predictions across all layers.

\begin{lemma}{\bf ($0$-horizon distortion decomposition)}\label{le:decomp}
    If $(X_0,Y_1)$ are generated by the finite-width MLP process and $\tilde{\theta} = (\tilde{A}_1, \ldots, \tilde{A}_L)$ are independent random variables such that, $Y_1\perp\tilde{A}^{(1:L)}|A^{(1:L)}, X_0$, then
    $$\E\left[\KL(P^*_0 \| \tilde{P}_0)\right]\ =\ \sum_{\ell=1}^{L} \E\left[\KL(\tilde{P}^{(\ell-1)}_0 \| \tilde{P}^{(\ell)}_0)\right].$$
\end{lemma}
\begin{proof}
    \begin{align*}
        \E\left[\KL(P^*_0 \| \tilde{P}_0)\right]
        & = \I(Y_1;A^{(1:L)}|\tilde{A}^{(1:L)}, X_0)\\
        & \overset{(a)}{=} \sum_{\ell=1}^{L} \I(Y_1;A^{(\ell)}|A^{(\ell+1:L)}, \tilde{A}^{(1:L)}, X_0)\\
        & \overset{(b)}{=} \sum_{\ell=1}^{L} \I(Y_1;A^{(\ell)}|A^{(\ell+1:L)}, \tilde{A}^{(1:\ell)}, X_0)\\
        & = \sum_{\ell=1}^{L} \E\left[\KL(\Pr(Y_1\in\cdot|A^{(\ell:L)}, \tilde{A}^{(1:\ell-1)}, X_0) \| \Pr(Y_1\in\cdot|A^{(\ell+1:L)}, \tilde{A}^{(1:\ell)}, X_0))\right]\\
        & = \sum_{\ell=1}^{L} \E\left[\KL(\tilde{P}^{(\ell-1)}_0 \| \tilde{P}^{(\ell)}_0)\right],
    \end{align*}
    where $(a)$ follows from the chain rule and $(b)$ follows from conditional independence assumptions.
\end{proof}

With this decomposition in mind, we can derive a suitable upper bound for the total distortion by bounding each \emph{individual} term of the decomposition.  The following two results establish upper bounds for the individual terms.

\begin{lemma}
\label{le:true_input_inequality}
If $(X_0,Y_1)$ are generated by the finite-width MLP process and $\tilde{A}_1, \ldots, \tilde{A}_L$ are independent random variables such that, $Y_1\perp\tilde{A}^{(1:L)}|A^{(1:L)}, X_0$, then for all $\ell \in \{1, \hdots, L\}$,
$$\E\left[\KL(\tilde{P}^{(\ell-1)}_0\|\tilde{P}^{(\ell)}_0)\right]\ \leq\ 
\E\left[\KL(\tilde{P}_0^{(\ell-1)}\| \Pr(Y_1\in\cdot|A^{(\ell+1:L)}, \tilde{A}^{(\ell)}, U^{(\ell)}_0))\right].$$
\end{lemma}
\begin{proof}
    \begin{align*}
        \E\left[\KL(\tilde{P}^{(\ell-1)}_0\|\tilde{P}^{(\ell)}_0)\right]
        & = \I(Y_1; A^{(\ell)}|A^{(\ell+1:L)}, \tilde{A}^{(1:\ell)}, X_0)\\
        & = \I(Y_1, X_0, \tilde{A}^{(1:\ell-1)}; A^{(\ell)}|A^{(\ell+1:L)},\tilde{A}^{(\ell)}) - \I(X_0, \tilde{A}^{(1:\ell-1)}; A^{(\ell)}|A^{(\ell+1:L)},\tilde{A}^{(\ell)})\\
        & = \I(Y_1, X_0, \tilde{A}^{(\ell-1:1)} ;A^{(\ell)}|A^{(\ell+1:L)},\tilde{A}^{(\ell)})\\
        & = \I(Y_1; A^{(\ell)}|A^{(\ell+1:L)}, \tilde{A}^{(\ell)}) + \I(X_0, \tilde{A}^{(1:\ell-1)};A^{(\ell)}|Y_1, A^{(\ell+1:L)}, \tilde{A}^{(\ell)})\\
        & \overset{(a)}{\leq} \I(Y_1;A^{(\ell)}|A^{(\ell+1:L)}, \tilde{A}^{(\ell)}) + \I(X_0, A^{(1:\ell-1)};A^{(\ell)}|Y_1, A^{(\ell+1:L)}, \tilde{A}^{(\ell)})\\
        & = \I(Y_1, X_0, A^{(1:\ell-1)}; A^{(\ell)}|A^{(\ell+1:L)}, \tilde{A}^{(\ell)})\\
        & \overset{(b)}{=} \I(Y_1;A^{(\ell)}|X_0, A^{(1:\ell-1)}, A^{(\ell+1:L)}, \tilde{A}^{(\ell)})\\
        & \overset{(c)}{=} \I(Y_1;A^{(\ell)}|U^{(\ell-1)}_0, A^{(\ell+1:L)}, \tilde{A}^{(\ell)})\\
        & = \E\left[\KL(\tilde{P}_0^{(\ell-1)}\| \Pr(Y_1\in\cdot|A^{(\ell+1:L)}, \tilde{A}^{(\ell)}, U^{(\ell)}_0))\right],
    \end{align*}
    where $(a)$ follows from the fact that $A^{(\ell)} \perp \tilde{A}^{(1:\ell-1)}|(X_0, Y_1, A^{(1:\ell-1)})$ and the data processing inequality, $(b)$ follows from the fact that $\I(A^{(\ell)};A^{(1:\ell-1)}, X_0|A^{(\ell+1:L)}, \tilde{A}^{(\ell)}) = 0$, and $(c)$ follows from the fact that $Y_1\perp (A^{(1:\ell-1)}, X_0)|U^{(\ell-1)}$.
\end{proof}

This result allows us to simplify our analysis since the prediction in the RHS consists of just the layer $\ell$ approximation $\tilde{A}^{(\ell)}$ as opposed to approximations for the first $\ell$ layers $\tilde{A}^{(1:\ell)}$.  The following result uses Lemma \ref{le:true_input_inequality} to derive an upper bound for each individual term in the decomposition of Lemma \ref{le:decomp}.

\begin{lemma}\label{le:nn_layer_dist}
    For all $d,N,L\in \Z_{++}, \sigma^2\in\Re_{++}$, if $(X_t,Y_{t+1})$ are generated according to the finite-width MLP process and $\tilde{A}_1, \ldots, \tilde{A}_L$ are independent random variables such that for all $Y_{t+1}\perp\tilde{A}^{(1:L)}|(A^{(1:L)}, X_t)$, then for all $\ell \in [L]$,
    $$ \E\left[\KL(\tilde{P}^{(\ell-1)}_0 \| \tilde{P}^{(\ell)}_0)\right]\ \leq\ \frac{1}{2}\ln\left(1 + \frac{\E\left[\left\|(A^{(\ell)}-\tilde{A}^{(\ell)})U^{(\ell-1)}\right\|^2_2\right]}{\sigma^2 N}\right).$$
\end{lemma}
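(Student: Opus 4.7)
The plan is to first apply Lemma \ref{le:true_input_inequality} to reduce the target to $\I(Y; A^{(\ell)} \mid A^{(\ell+1:L)}, \tilde{A}^{(\ell)}, U^{(\ell-1)})$, which conditions on the true intermediate activation and drops the earlier proxy weights. I would then expand this as a difference of differential entropies:
$$\I(Y; A^{(\ell)} \mid A^{(\ell+1:L)}, \tilde{A}^{(\ell)}, U^{(\ell-1)}) = \diffentropy(Y \mid A^{(\ell+1:L)}, \tilde{A}^{(\ell)}, U^{(\ell-1)}) - \diffentropy(Y \mid A^{(\ell:L)}, U^{(\ell-1)}).$$
Conditional on $A^{(\ell:L)}$ and $U^{(\ell-1)}$, the output is $Y = U^{(L)} + W$ with $W \sim \normal(0, \sigma^2)$ independent of everything else, so the second term is exactly $\tfrac{1}{2}\ln(2\pi e \sigma^2)$.

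For the first differential entropy, I would invoke the maximum-entropy upper bound (Lemma \ref{le:max_entropy}), but centered at $\tilde{U}^{(L)}$ rather than at the conditional mean, where $\tilde{U}^{(L)}$ denotes the deterministic output of the forward pass through layers $\ell, \ldots, L$ with $A^{(\ell)}$ replaced by $\tilde{A}^{(\ell)}$. Because $\tilde{U}^{(L)}$ is measurable with respect to $(A^{(\ell+1:L)}, \tilde{A}^{(\ell)}, U^{(\ell-1)})$, this is legitimate and yields a bound in terms of the conditional second moment of $(U^{(L)} - \tilde{U}^{(L)}) + W$, which equals the conditional expectation of $(U^{(L)} - \tilde{U}^{(L)})^2 + \sigma^2$. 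Applying Jensen's inequality to the concave outer $\ln$ pulls the expectation inside, and subtracting the two entropies produces
$$\I(Y; A^{(\ell)} \mid A^{(\ell+1:L)}, \tilde{A}^{(\ell)}, U^{(\ell-1)}) \leq \frac{1}{2}\ln\left(1 + \frac{\E[(U^{(L)} - \tilde{U}^{(L)})^2]}{\sigma^2}\right).$$

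The remaining step is to show $\E[(U^{(L)} - \tilde{U}^{(L)})^2] \leq \tfrac{1}{N}\, \E\!\left[\|(A^{(\ell)} - \tilde{A}^{(\ell)}) U^{(\ell-1)}\|_2^2\right]$ by telescoping the perturbation layer by layer. At layer $\ell$ itself, the $1$-Lipschitz property of ReLU gives $\|U^{(\ell)} - \tilde{U}^{(\ell)}\|_2^2 \leq \|(A^{(\ell)} - \tilde{A}^{(\ell)}) U^{(\ell-1)}\|_2^2$. For each intermediate hidden layer $\ell' \in \{\ell+1, \ldots, L-1\}$, the weight matrix $A^{(\ell')}$ has iid $\normal(0, 1/N)$ entries and is independent of the perturbation entering that layer, so averaging over $A^{(\ell')}$ preserves the expected squared norm exactly, and the subsequent ReLU can only decrease it. The output row $A^{(L)}$, being $1 \times N$ with variance $1/N$ entries, is what contributes the single factor of $1/N$ when we finally evaluate $\E[(A^{(L)}(U^{(L-1)} - \tilde{U}^{(L-1)}))^2]$. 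I expect the main obstacle to be keeping the conditional-independence bookkeeping straight across the telescope: at every layer one must condition on the right $\sigma$-algebra so that the next weight matrix is genuinely independent of the accumulated perturbation and the tower property cleanly chains the expectations together. Once this is in place, the rest reduces to routine Gaussian moment calculations combined with the Lipschitz property of ReLU.
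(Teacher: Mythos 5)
Your proposal is correct and follows essentially the same route as the paper's proof: reduce via Lemma \ref{le:true_input_inequality}, bound the conditional differential entropy by a Gaussian centered at the perturbed forward pass, apply Jensen, and then propagate the layer-$\ell$ perturbation through the remaining layers using the ReLU contraction together with the fact that the independent $\normal(0,1/N)$ weight matrices preserve expected squared norm (with the final $1\times N$ output row supplying the $1/N$). The independence bookkeeping you flag is exactly the point the paper handles with its variance assumption $\E[A^{(\ell')\top}A^{(\ell')}]\propto I$ at step $(e)$.
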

\begin{proof}
    In the proof below, we use the notation $f_{A^{(\ell:L)}}$ to denote the depth $L+1-\ell$ MLP with ReLU activation units and weights parameterized by $A^{(\ell:L)}$.
    \begin{align*}
        \E\left[\KL(\tilde{P}^{(\ell-1)}_0 \| \tilde{P}^{(\ell)}_0)\right]
        & \overset{(a)}{\leq} \E\left[\KL(\tilde{P}_0^{(\ell-1)}\| \Pr(Y_1\in\cdot|A^{(\ell+1:L)}, \tilde{A}^{(\ell)}, U^{(\ell)}_0))\right]\\
        & = \I(Y_1;A^{(\ell)}|A^{(\ell+1:L)}, \tilde{A}^{(\ell)}, U_0^{(\ell-1)})\\
        & = \diffentropy(Y_1|A^{(\ell+1:L)}, \tilde{A}^{(\ell)}, U_0^{(\ell-1)}) - \diffentropy(Y_1|A^{(\ell+1:L)}, U_0^{(\ell-1)})\\
        & = \diffentropy(Y_1|A^{(\ell+1:L)}, \tilde{A}^{(\ell)}, U_0^{(\ell-1)}) - \frac{1}{2}\ln\left(2\pi e \sigma^2\right)\\
        & \overset{(b)}{\leq} \E\left[\frac{1}{2}\ln\left(\frac{\E\left[\left(Y_1 - f_{\tilde{A}^{(\ell)}, A^{(\ell+1:L)}}\left(U_0^{(\ell-1)}\right)\right)^2\Big|A^{(\ell+1:L)},\tilde{A}^{(\ell)}, U_0^{(\ell-1)}\right]}{\sigma^2}\right)\right]\\
        & \leq \E\left[\frac{1}{2}\ln\left(1 + \frac{\left(f_{A^{(\ell:L)}}\left(U_0^{(\ell-1)}\right) - f_{\tilde{A}^{(\ell)}, A^{(\ell+1:L)}}\left(U_0^{(\ell-1)}\right)\right)^2}{\sigma^2}\right)\right]\\
        & \overset{(c)}{\leq} \E\left[\frac{1}{2}\ln\left(1 + \frac{\left(A^{(L)}A^{(L-1)}\ldots A^{(\ell+1)}(A^{(\ell)}-\tilde{A}^{(\ell)})U_0^{(\ell-1)}\right)^2}{\sigma^2}\right)\right]\\
        & \overset{(d)}{\leq} \frac{1}{2}\ln\left(1 + \frac{\E\left[\left(A^{(L)}A^{(L-1)}\ldots A^{(\ell+1)}(A^{(\ell)}-\tilde{A}^{(\ell)})U_0^{(\ell-1)}\right)^2\right]}{\sigma^2}\right)\\
        & \overset{(e)}{=} \frac{1}{2}\ln\left(1 + \frac{\E\left[\left\|(A^{(\ell)}-\tilde{A}^{(\ell)})U_0^{(\ell-1)}\right\|^2_2\right]}{\sigma^2 N}\right)\\
    \end{align*}
    where $(a)$ follows from Lemma \ref{le:true_input_inequality}, $(b)$ follows from Lemma \ref{le:max_entropy}, $(c)$ comes from the fact that for all $n$ and $x,y\in\Re^n$, $\|x-y\|^2 \geq \|\relu(x)-\relu(y)\|^2_2$, $(d)$ follows from Jensen's inequality, and $(e)$ follows from the independence and variance assumptions of the finite-width MLP data generating process.
\end{proof}

With this result, we now derive an upper bound for the $0$-horizon distortion.

\begin{lemma}{\bf ($0$-horizon distortion upper bound)}\label{le:nn_distortion_ub}
    For all $d,N,L\in \Z_{++}, \sigma^2\in\Re_{++}$, if $(X_t,Y_{t+1})$ are generated by the finite-width MLP process and $\tilde{A}_1, \ldots, \tilde{A}_L$ are independent random variables such that for all $t$, $Y_{t+1} \perp\tilde{A}^{(1:L)}|(A^{(1:L)}, X_t)$, then
    $$\E\left[\KL(P^*_t \| \tilde{P}_t)\right]\ \leq\ \sum_{\ell = 1}^{L} \frac{1}{2}\ln\left(1 + \frac{\E\left[\left\|\left(A^{(\ell)} - \tilde{A}^{(\ell)}\right) U^{(\ell-1)}\right\|^2_2\right]}{\sigma^2 N}\right).$$
\end{lemma}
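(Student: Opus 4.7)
The plan is to combine the two building blocks established immediately before this lemma. Observe that the hypotheses of Lemma \ref{le:nn_distortion_ub} (independence of $\tilde A^{(1)},\ldots,\tilde A^{(L)}$ and the conditional independence $Y\perp \tilde A^{(1:L)}\mid (A^{(1:L)},X)$) are exactly those required by both Lemma \ref{le:decomp} and Lemma \ref{le:nn_layer_dist}, so both can be invoked with no additional verification.

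First I would apply Lemma \ref{le:decomp} to split the joint distortion into a telescoping sum over layers:
$$\I(Y;A^{(1:L)}\mid\tilde{A}^{(1:L)},X)\ =\ \sum_{\ell=1}^{L} \I(Y;A^{(\ell)}\mid A^{(\ell+1:L)},\tilde{A}^{(1:\ell)},X).$$
Then I would bound each summand on the right-hand side individually via Lemma \ref{le:nn_layer_dist}, which gives
$$\I(Y;A^{(\ell)}\mid A^{(\ell+1:L)},\tilde{A}^{(1:\ell)},X)\ \leq\ \frac{1}{2}\ln\left(1 + \frac{\E\left[\left\|(A^{(\ell)}-\tilde{A}^{(\ell)})U^{(\ell-1)}\right\|_2^2\right]}{\sigma^2 N}\right).$$
Summing the $L$ per-layer inequalities yields exactly the claimed bound.

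There is no genuine obstacle at this stage: the intellectual work lives entirely in the preceding results. Lemma \ref{le:decomp} handles the chain-rule decomposition and uses the independence/conditional-independence structure to drop the ``future'' approximations $\tilde A^{(\ell+1:L)}$ from the conditioning. Lemma \ref{le:true_input_inequality} then justifies replacing the approximate inputs $(X,\tilde A^{(1:\ell-1)})$ by the true hidden representation $U^{(\ell-1)}$, which is where the nontrivial data-processing argument is carried out. Finally, Lemma \ref{le:nn_layer_dist} supplies the single-layer squared-error-over-variance bound using Gaussian maximum entropy (Lemma \ref{le:max_entropy}), Jensen's inequality, and the $1$-Lipschitz property of ReLU. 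By the time we reach Lemma \ref{le:nn_distortion_ub}, the proof is simply a one-line combination of decompose-and-bound-termwise.
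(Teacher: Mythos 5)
Your proposal matches the paper's proof exactly: the paper also applies Lemma \ref{le:decomp} to decompose the joint distortion into per-layer terms and then bounds each term via Lemma \ref{le:nn_layer_dist}. Correct, and no meaningful difference in approach.
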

\begin{proof}
    \begin{align*}
        \E\left[\KL(P^*_t \| \tilde{P}_t)\right]
        & \overset{(a)}{=} \sum_{\ell=1}^{L} \E\left[\KL(\tilde{P}^{(\ell+1)}_0 \| \tilde{P}^{(\ell)}_0)\right]\\
        & \overset{(b)}{\leq} \sum_{\ell = 1}^{L}\frac{1}{2}\ln\left(1 + \frac{\E\left[\left\|\left(A^{(\ell)} - \tilde{A}^{(\ell)}\right) U^{(\ell-1)}\right\|^2_2\right]}{\sigma^2 N}\right),
    \end{align*}
    where $(a)$ follows from Lemma \ref{le:decomp} and $(b)$ follows from Lemma \ref{le:nn_layer_dist}.
\end{proof}

It is worth noting that the above upper bound exhibits only a linear dependence on $L$ (ignoring logarithmic factors).  This is an improvement over results whose analyses depend on VC dimension \citep{bartlett1998almost,bartlett2019nearly} for which the dependence on depth $L$ is quadratic.  This is because in worst-case analysis, an analogous error term to the logarithm of Lemma \ref{le:nn_distortion_ub} may compound \emph{exponentially} in the depth of the network.  The average-case framework allows us to avoid such a penalty since this error is \emph{in expectation} not exponential in the depth.  We now present the final preliminary result which upper bounds the $0$-horizon rate-distortion function.

\begin{theorem}{\bf (finite-width MLP $0$-horizon rate-distortion upper bound)}\label{th:nn_rd_bound}
     For all $d,N,L\in \Z_{++}, \sigma^2,\epsilon \in\Re_{++}$, if $(X_t,Y_{t+1})$ are generated according to the finite-width MLP process then
     $$\H_\epsilon(\theta)\ \leq\ \left(\frac{(L-2)N^2+N+dN}{2}\right)\ln\left(1 + \frac{1}{\sigma^2\left(e^{\frac{2\epsilon}{L}}-1\right)}\right).$$
\end{theorem}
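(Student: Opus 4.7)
The plan is to mirror the construction used in the linear regression rate--distortion upper bound (Lemma \ref{le:lin_reg_rd_ub}), but applied layer-by-layer, and then combine the contributions via the distortion decomposition already established in Lemma \ref{le:nn_distortion_ub}. Concretely, I would define the proxy $\tilde{A}^{(1:L)}$ by adding independent Gaussian noise that matches the prior covariance structure of each layer: let $\tilde{A}^{(\ell)} = A^{(\ell)} + V^{(\ell)}$ where the $V^{(\ell)}$ are mutually independent, independent of $(A^{(1:L)}, X_{0:\infty}, W_{1:\infty})$, and $V^{(\ell)}$ has the same entrywise Gaussian covariance as $A^{(\ell)}$ scaled by a common factor $\delta^2$. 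A natural choice, suggested by working the constraint backwards, is $\delta^2 = \sigma^2(e^{2\epsilon/L}-1)$.

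To certify that $\tilde{A}^{(1:L)} \in \tilde{\Theta}_{\epsilon}$, I would invoke Lemma \ref{le:nn_distortion_ub} and show each of the $L$ summands is at most $\epsilon/L$. The key auxiliary estimate is $\E[\|U^{(\ell-1)}\|_2^2] \leq N$ for all $\ell \geq 2$, which I would prove by induction: the ReLU is $1$-Lipschitz with $\mathrm{ReLU}(0)=0$ so $\E[\|U^{(\ell)}\|_2^2] \leq \E[\|A^{(\ell)} U^{(\ell-1)}\|_2^2]$, and the prior variance scaling $1/N$ on middle-layer entries gives $\E[\|A^{(\ell)} U^{(\ell-1)}\|_2^2] = \E[\|U^{(\ell-1)}\|_2^2]$, while the base case uses the $1/d$ scaling of $A^{(1)}$ together with $\E[\|X\|_2^2] = d$ to get $\E[\|U^{(1)}\|_2^2] \leq N$. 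Substituting this together with the independence of $V^{(\ell)}$ from $U^{(\ell-1)}$, a direct computation yields
\[
\E\bigl[\|(A^{(\ell)} - \tilde{A}^{(\ell)}) U^{(\ell-1)}\|_2^2\bigr] \;\leq\; \delta^2 \cdot N
\]
for $1 \leq \ell \leq L-1$ (and strictly smaller for $\ell = L$ because $V^{(L)}$ is a $1 \times N$ row), so each term is $\tfrac{1}{2}\ln(1+\delta^2/\sigma^2) = \epsilon/L$, and they sum to $\epsilon$.

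To bound the rate $\I(A^{(1:L)};\tilde{A}^{(1:L)})$, I would exploit the prior independence of the layers together with the independence of the noise matrices $V^{(\ell)}$ to decompose
\[
\I(A^{(1:L)};\tilde{A}^{(1:L)}) \;=\; \sum_{\ell=1}^{L} \I(A^{(\ell)};\tilde{A}^{(\ell)}),
\]
and handle each layer exactly as in the linear regression proof: $\I(A^{(\ell)};\tilde A^{(\ell)}) = \diffentropy(\tilde A^{(\ell)}) - \diffentropy(V^{(\ell)})$, which by Lemma \ref{le:max_entropy} is at most $(p_\ell/2)\ln(1 + 1/\delta^2)$, where $p_\ell$ is the parameter count at layer $\ell$. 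Summing $p_1 = dN$, $p_\ell = N^2$ for $2 \leq \ell \leq L-1$, and $p_L = N$ gives the stated prefactor, and substituting $\delta^2 = \sigma^2(e^{2\epsilon/L}-1)$ produces the logarithm in the theorem.

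The principal obstacle is the layer-wise propagation bound $\E[\|U^{(\ell-1)}\|_2^2] \leq N$: without it, the distortion at middle layers would accumulate a prefactor growing in $\ell$, which would contaminate the clean $\epsilon/L$ per-layer bookkeeping and destroy the linear-in-parameter-count rate. Everything else reduces to the same Gaussian entropy and change-of-measure manipulations already used in Lemmas \ref{le:lin_reg_rd_ub} and \ref{le:nn_layer_dist}; the care needed is just to verify that the $1/d$ versus $1/N$ scaling in the first and last layers lines up with the $dN$ and $N$ terms in the stated parameter count.
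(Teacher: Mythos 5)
Your proposal is correct and follows essentially the same route as the paper's proof: the same layerwise Gaussian proxy with entrywise noise variance $\sigma^2(e^{2\epsilon/L}-1)/d^{(\ell-1)}$, the same appeal to Lemma \ref{le:nn_distortion_ub} to certify distortion $\epsilon/L$ per layer, and the same independent-layer decomposition of the rate via Lemma \ref{le:max_entropy}. If anything, you are more careful than the paper, which asserts the distortion sum equals $\epsilon$ without explicitly proving the propagation bound $\E[\|U^{(\ell-1)}\|_2^2]\leq d^{(\ell-1)}$ that your induction supplies.
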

\begin{proof}
    For all $\ell \in [L]$, let $\tilde{A}^{(\ell)} = A^{(\ell)} + Z^{(\ell)}$ where $Z^{(\ell)} \perp A^{(\ell)}$ and each element of $Z^{(\ell)}$ is $\overset{iid}{\sim} \normal(0, \delta^2)$, where $\delta^2 = \sigma^2(e^{2\epsilon/L}-1)/d^{(\ell-1)}$ and $d^{(\ell-1)}$ denotes the dimension of $U^{(\ell-1)}$. Then,
    \begin{align*}
        \E\left[\KL(P^*_0 \| \tilde{P}_0)\right]
        & = \I(Y_1;A^{(1:L)}|\tilde{A}^{(1:L)},X_0)\\
        & \leq \sum_{\ell=1}^{L}\frac{1}{2}\ln\left(1 + \frac{\E\left[\left\|\left(A^{(\ell)} - \tilde{A}^{(\ell)}\right) U^{(\ell-1)}_0\right\|^2_2\right]}{\sigma^2 N}\right)\\
        & = \sum_{\ell=1}^{L}\frac{1}{2}\ln\left(1 + \frac{\E\left[\left\|Z^{(\ell)} U^{(\ell-1)}_0\right\|^2_2\right]}{\sigma^2 N}\right)\\
        & = \sum_{\ell=1}^{L}\frac{1}{2}\ln\left(1 + \frac{\sigma^2N(e^{\frac{2\epsilon}{L}}-1)\E\left[\|U^{(\ell-1)}_0\|^2_2\right]}{\sigma^2 N d^{(\ell-1)}}\right)\\
        & = \epsilon,
    \end{align*}
    where $(a)$ follows from Lemma \ref{le:nn_distortion_ub}, and $(b)$ follows from the fact that $\E[Z^{(\ell)\top}Z^{(\ell)}] = \frac{\sigma^2N(e^{2\epsilon/L}-1)}{d^{(\ell-1)}}I_{d^{(\ell-1)}}$.
    \begin{align*}
        \I(A^{(1:L)};\tilde{A}^{(1:L)})
        & = \sum_{\ell=1}^{L}\ \I(A^{(\ell)};\tilde{A}^{(\ell)})\\
        & = \sum_{\ell=1}^{L}\ \diffentropy(\tilde{A}^{(\ell)}) - \diffentropy(\tilde{A}^{(\ell)}|A^{(\ell)})\\
        & \overset{(a)}{\leq} \sum_{\ell=1}^{L} \frac{d^{(\ell-1)}d^{(\ell)}}{2}\ln\left(2\pi e \left(\delta^2 + \frac{1}{d^{(\ell-1)}}\right)\right) - \frac{d^{(\ell-1)}d^{(\ell)}}{2}\ln\left(2\pi e \delta^2\right)\\
        & = \sum_{\ell=1}^{L}\frac{d^{(\ell-1)}d^{(\ell)}}{2}\ln\left(1 + \frac{1}{\delta^2 d^{(\ell-1)}}\right)\\
        & = \sum_{\ell=1}^{L}\frac{d^{(\ell-1)}d^{(\ell)}}{2}\ln\left(1 + \frac{1}{\sigma^2(e^{\frac{2\epsilon}{L}}-1)}\right)\\
        & = \left(\frac{(L-2)N^2+N+dN}{2}\right)\ln\left(1 + \frac{1}{\sigma^2\left(e^{\frac{2\epsilon}{L}}-1\right)}\right),
    \end{align*}
    where $(a)$ follows from Lemma \ref{le:max_entropy}.  The result follows from the definition of the rate-distortion function.
\end{proof}

Note that the bound in Theorem \ref{th:nn_rd_bound} is only \emph{linear} in the parameter count of the MLP.  In the following subsection, we will leverage this rate-distortion upper bound and Theorem \ref{th:horizon-indep-rd} to arrive at an upper bound on the error for the finite-width MLP setting.

\subsubsection{Main Result}

With the theoretical tools developed in the previous section, we now establish the main result, which upper bounds optimal error with data generated by the finite-width MLP process.

\begin{theorem}{\bf (finite-width MLP error upper bound)}\label{th:nn_est_ub}
    For all $d, N, L \in \Z_{++}, \sigma^2 \in \Re_{++}$, if for all $t$, $(X_t, Y_{t+1})$ are generated according to the finite-width MLP process then, for all $T$, 
    $$\Lc_{T} \ \leq\ \frac{P}{2T}\left(1 + \ln\left(1 + \frac{2LT}{\sigma^2P}\right)\right),$$
    where $P = (L-2)N^2 + N + dN$ denotes the total parameter count of the network.
\end{theorem}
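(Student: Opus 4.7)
The plan is to combine Theorem \ref{th:rd_bounds} with the rate-distortion upper bound of Theorem \ref{th:nn_rd_bound} and then optimize over the distortion tolerance $\epsilon$. The work is essentially done by the two preceding theorems, so this last step is primarily a bookkeeping exercise, exactly analogous to how the linear regression bound (Theorem \ref{th:lin_reg_error_bounds}) and the logistic regression bound were derived.

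First, I would invoke the upper-bound half of Theorem \ref{th:rd_bounds}, namely $\Lc_T \leq \inf_{\epsilon \geq 0} \H_{\epsilon,T}(A^{(1:L)})/T + \epsilon$. Then I would observe that $\H_{\epsilon,T}(A^{(1:L)}) \leq \H_{\epsilon}(A^{(1:L)})$: any $\tilde{\theta} \in \tilde{\Theta}_\epsilon$ satisfies $\I(Y_1;A^{(1:L)}\mid\tilde\theta,X_0) \leq \epsilon$ and so, by Corollary \ref{cor:distortion_ub} applied in the iid deep-network setting, also $\I(H_T;A^{(1:L)}\mid\tilde\theta)/T \leq \epsilon$, giving $\tilde\theta \in \tilde{\Theta}_{\epsilon,T}$. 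Substituting Theorem \ref{th:nn_rd_bound} yields
\[
\Lc_T \;\leq\; \inf_{\epsilon \geq 0}\; \frac{P}{2T}\ln\!\left(1 + \frac{1}{\sigma^2\left(e^{2\epsilon/L}-1\right)}\right) + \epsilon.
\]

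Next I would pick $\epsilon = P/(2T)$, mirroring the choice used in the logistic regression proof. With this substitution the exponent inside the log becomes $2\epsilon/L = P/(LT)$, and the elementary inequality $e^x - 1 \geq x$ (valid for all $x \geq 0$) gives $e^{P/(LT)} - 1 \geq P/(LT)$, hence
\[
\frac{1}{\sigma^2(e^{P/(LT)}-1)} \;\leq\; \frac{LT}{\sigma^2 P} \;\leq\; \frac{2LT}{\sigma^2 P}.
\]
Plugging this into the log and collecting the $\epsilon = P/(2T)$ term yields exactly the claimed bound $\frac{P}{2T}\bigl(1 + \ln(1 + 2LT/(\sigma^2 P))\bigr)$.

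There is no real obstacle here: the heavy lifting (the layer-by-layer distortion decomposition in Lemma \ref{le:decomp}, the ``true input beats proxied input'' inequality of Lemma \ref{le:true_input_inequality}, and the per-layer KL control in Lemma \ref{le:nn_layer_dist}) has already been completed en route to Theorem \ref{th:nn_rd_bound}. The only mild subtlety worth double-checking is that one is allowed to replace $\H_{\epsilon,T}$ with $\H_\epsilon$, which I handle above via Corollary \ref{cor:distortion_ub}. The slight slack of a factor of two in the argument of the logarithm, introduced when passing from $LT/(\sigma^2 P)$ to $2LT/(\sigma^2 P)$, is what gives a clean closed form and matches the theorem statement exactly.
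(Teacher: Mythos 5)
Your proposal is correct and follows the paper's own proof essentially verbatim: invoke the upper bound of Theorem \ref{th:rd_bounds}, substitute the rate-distortion bound of Theorem \ref{th:nn_rd_bound}, set $\epsilon = P/(2T)$, and control the resulting logarithm via $e^x - 1 \geq x$. Your explicit justification that $\H_{\epsilon,T}(A^{(1:L)}) \leq \H_{\epsilon}(A^{(1:L)})$ via Corollary \ref{cor:distortion_ub} is a welcome bit of rigor that the paper leaves implicit (and your bookkeeping of the exponent $2\epsilon/L = P/(LT)$ is in fact cleaner than the paper's), but the route is the same.
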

\begin{proof}
    \begin{align*}
        \Lc_{T}
        & \overset{(a)}{\leq} \inf_{\epsilon\geq 0}\ \frac{\H_\epsilon(A^{(1:L)})}{T} + \epsilon\\
        & \overset{(b)}{\leq} \inf_{\epsilon\geq 0}\ \frac{P\ln\left(1 + \frac{1}{\sigma^2\left(e^{\frac{2\epsilon}{L}}-1\right)}\right)}{2T} + \epsilon\\
        & \overset{(c)}{\leq} \frac{P\left(1 + \ln \left(1 + \frac{1}{\sigma^2\left(e^{\frac{P}{2LT}} - 1\right)}\right)\right)}{2T}\\
        & \overset{(d)}{\leq} \frac{P}{2T}\left(1 + \ln\left(1 + \frac{2LT}{\sigma^2P}\right)\right),
    \end{align*}
    where $(a)$ follows from Theorem \ref{th:horizon-indep-rd}, $(b)$ follows from Theorem \ref{th:nn_rd_bound}, $(c)$ follows by setting $\epsilon = P/2T$, and $(d)$ follows from the fact that for all $x \in \Re_{+}$, 
    $$\ln\left(1+\frac{1}{\sigma^2(e^x-1)}\right) \leq \ln\left(1+\frac{1}{\sigma^2 x}\right).$$
\end{proof}

Theorem \ref{th:nn_est_ub} establishes an upper bound which is only \emph{linear} in the total parameter count of the network ($O(P)$).  This notably improves upon existing results from the frequentist line of analysis \citep{bartlett1998almost,pmlr-v65-harvey17a} which derive an upper bound which is $\tilde{O}(PL)$.  As mentioned in the previous section, we are able to arrive at these stronger results by leveraging an expectation with respect to the prior distribution as opposed to a worst-case assumption over a confidence set.

Since we observe empirically that \emph{deep} MLPs are able to effectively learn even in the presence of limited data, our error analysis in the Bayesian framework provides results which are closer to qualitative observations made in empirical studies.  However, the results of this section are not sufficient to explain how learning may be possible when the dataset size is \emph{smaller} than the parameter count of the model which generated the data.  Such results require stronger assumptions about the \emph{dependence} between weights in the neural network.  In the following section, we explore this phenomenon in a setting in which an \emph{infinite} width MLP generates the data, but still a learning algorithm can attain small error relatively modest amounts of data.

\subsection{Infinite-Width Multi-Layer Perceptrons}\label{subsec:nonparametric}
\begin{figure}[H]
    \centering
    \includegraphics[width=\textwidth]{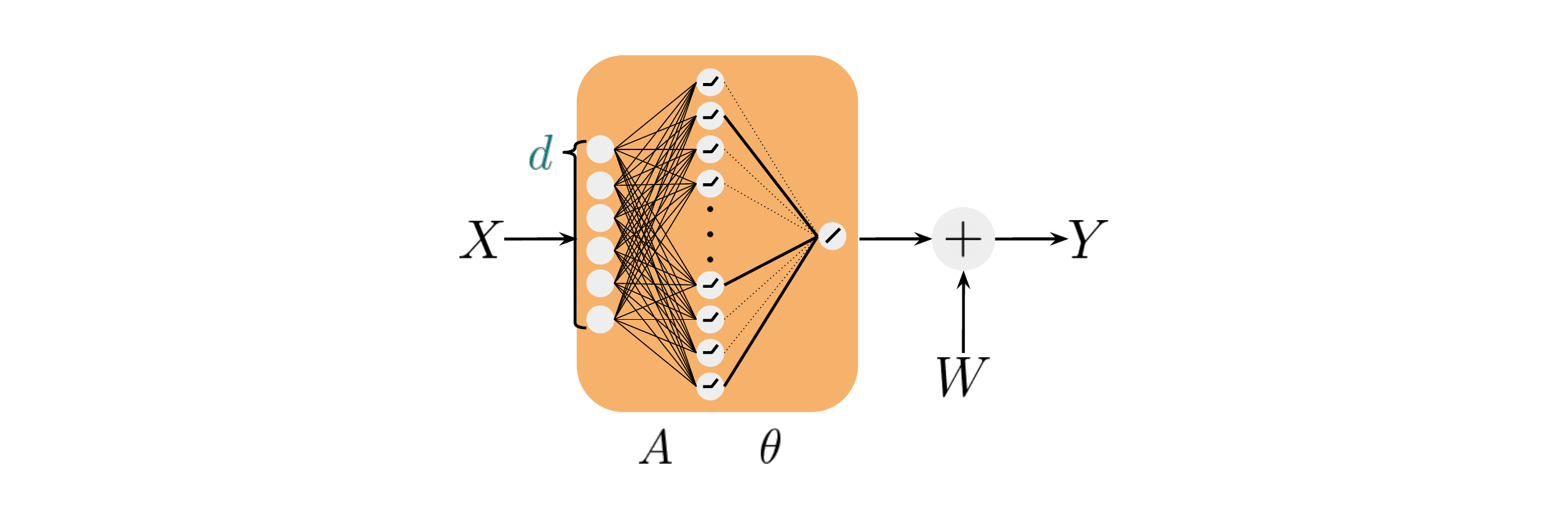}
    \caption{We depict our infinite-width MLP data generating process above.  It consists of input dimension $d$, an infinite number of hidden units with ReLU activations, and output dimension $1$.  We denote the weights of the first layer via an infinite dimensional matrix $A$ and the weights of the output layer by an infinite dimensional vector $\theta$.  To enforce structure which enables learning, we assume an appropriate prior distribution which results in a \emph{concentration} of the weights of $\theta$ as depicted by the solid (as opposed to dotted) lines in the above diagram.  We assume that the final output $Y$ is the sum of the final network output and independent Gaussian noise $Z$.}
    \label{fig:nonparamteric}
\end{figure}

In this section, we study a data generating process based on a two-layer neural network of \emph{infinite} width.  While the results we present in this section are limited to a two-layer architecture, the techniques can be generalized to treat infinite-width MLPs with any number of layers.

\subsubsection{Data Generating Process}\label{sec:inf_mlp}

For an infinite-width MLP, we are interested in estimating a function which can be uniquely identified by an infinite-dimensional matrix $A \in \Re^{\infty\times d}$ which represents the first-layer weights, and an infinite-dimensional vector $\theta\in\Re^{\infty}$ of output-layer weights.  In our Bayesian framework, there is a known prior distribution $\Pr((A,\theta)\in\cdot)$.  The neural network has input dimension $d$ and output dimension $1$.  In this analysis, we restrict our attention to a particular prior distribution on the weights of the network.  This prior distribution is describe by a \emph{Dirichlet process}, which we now describe.

A Dirichlet process takes as input a \emph{scale parameter} $K$ and a base distribution $P$.  In our case, we $P$ to be the uniform distribution $\text{Unif}(\sphere)$ over the unit sphere in $d$ dimensions.  Each realization of our Dirichlet process is a probability mass functions with countable support in $\sphere$.  Hence, each realization can be expressed as a countable subset $A_1,A_2,A_3, \ldots \in \sphere$ and probabilities $\bar{\theta}_1,\bar{\theta}_2,\bar{\theta}_3, \cdots$ assigned to elements of that subset.  Notably, with probability one, $\bar{\theta} \geq 0$ and $\sum_{n=1}^{\infty} \bar{\theta}_n=1$.  

For all $t$, let $X_{t} \overset{iid}{\sim} \normal(0, I_d)$ and 
$$Y_{t+1} = W_{t+1} + \sqrt{K}\cdot \sum_{n=1}^{\infty} \theta_n \relu(A_n^\top X_t),$$
with
$$\theta_n = \begin{cases}
    \bar{\theta}_n & \text{ w.p. } 0.5, \\
    -\bar{\theta}_n & \text{ w.p. } 0.5,
\end{cases}$$
where $A_1,A_2,A_3, \ldots$ and $\bar{\theta}_1,\bar{\theta}_2,\bar{\theta}_3, \cdots$ express an independent sample from our Dirichlet process and $W_{t}\overset{iid}{\sim} \normal(0,\sigma^2)$ is independent Gaussian noise of known variance $\sigma^2$.

The scale parameter $K$ influences diffusiveness of  $\bar{\theta}_1,\bar{\theta}_2,\bar{\theta}_3, \cdots$.  As $K$ decreases, the distribution becomes increasingly sparse, with probability eventually concentrating on a single component $\bar{\theta}_n = 1$ when $K=0$.  As $K$ increases, the distribution tends toward uniform.

In spite of the infinite width, the variance $Y_{t+1}$ is finite.  This property is shared by the neural tangent kernel (NTK), which also represents an infinite width neural network.  However, unlike the NTK setting, with the Dirichlet prior, weights are sparse.

\subsubsection{Preliminary Results}

Techniques used in previous sections to bound the rate-distortion function do not extend gracefully to infinite-width MLPs.  In particular, they would yield infinite, and thus vacuous, upper bounds.  We now introduce an alternative technique, which builds on the following lemma, which we establish via adapting the analysis of  \cite{barron1993universal}.
\begin{lemma}{\bf(approximation via multinomial)}\label{le:dir_apx}
    For all $K, m\in \Z_{++},$ let $(A,\bar{\theta})$ be drawn from the Dirichlet process with scale parameter $K$ and base distribution ${\rm Unif}(\sphere)$ and for all $n\in \Z_{++}$,
    $$\theta_n = \begin{cases}
        \bar{\theta}_n & \text{ w.p. } 0.5\\
        -\bar{\theta}_n & \text{ w.p. } 0.5\\
    \end{cases}.$$
    If for all $i \in [m]$, $c_i \overset{iid}{\sim} {\rm Categorical}(\bar{\theta})$ and $\tilde{A}_i = A_{c_i}$, then
    $$\E\left[ \left(\underbrace{\sqrt{K}\cdot \sum_{n=1}^{\infty} \theta_n \relu(A_n^\top X_t)}_{\rm true\ function} - 
    \underbrace{\frac{\sqrt{K}}{m}\sum_{i=1}^{m} {\rm sign}(\theta_{c_i})\cdot \relu(\tilde{A}_i^\top X_t)}_{\rm approximation} \right)^2\right]\ \leq\ \frac{K}{m}.$$
\end{lemma}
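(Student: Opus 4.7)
The plan is to treat the approximation as an unbiased Monte Carlo estimator of the true function and bound the expected squared error by controlling its variance. The overall structure is a straightforward bias-variance decomposition conditioned on the latent objects $(A, \bar\theta, \theta)$ and the input $X_t$.

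First, I would condition on $(A, \bar\theta, \theta, X_t)$ and compute the conditional mean of a single term in the approximating sum, namely $Z_i := \mathrm{sign}(\theta_{c_i})\cdot \relu(\tilde A_i^\top X_t)$. The only randomness in $Z_i$ given the conditioning is the index $c_i \sim \mathrm{Categorical}(\bar\theta)$, so $\E[Z_i \mid A,\theta, X_t] = \sum_n \bar\theta_n\,\mathrm{sign}(\theta_n)\,\relu(A_n^\top X_t)$. The key identity is that $\bar\theta_n\cdot \mathrm{sign}(\theta_n) = \theta_n$ (because by construction $|\theta_n| = \bar\theta_n$), so this conditional mean equals $\sum_n \theta_n \relu(A_n^\top X_t)$. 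Multiplying by $\sqrt{K}$ shows that the approximation is exactly an unbiased estimator of the true function pointwise in $X_t$.

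Next, because the $c_i$ are i.i.d. given $(A,\bar\theta,\theta,X_t)$, the conditional variance of the approximation scales like $1/m$:
\[
\mathrm{Var}\!\left(\tfrac{\sqrt K}{m}\sum_{i=1}^m Z_i \,\Big|\, A,\theta,X_t\right) \;=\; \tfrac{K}{m}\,\mathrm{Var}(Z_1\mid A,\theta,X_t) \;\leq\; \tfrac{K}{m}\,\E[Z_1^2\mid A,\theta,X_t]
\;=\; \tfrac{K}{m}\sum_{n=1}^\infty \bar\theta_n\,\relu(A_n^\top X_t)^2.
\]
Combining unbiasedness with this variance bound, the conditional expected squared error is at most $(K/m)\sum_n \bar\theta_n \relu(A_n^\top X_t)^2$.

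Finally, I would take the outer expectation over $(A,\bar\theta,X_t)$ and use the elementary bound $\relu(u)^2 \leq u^2$ together with $\E[(A_n^\top X_t)^2 \mid A_n] = \|A_n\|_2^2 = 1$, which holds because the base distribution of the Dirichlet process is supported on $\sphere$, so every atom $A_n$ is a unit vector. Since $\bar\theta$ is a probability mass function, $\sum_n \bar\theta_n = 1$, and the whole outer expectation collapses to $K/m$. The only subtle point (hardly an obstacle) is bookkeeping on the conditioning: making sure that $X_t$ is independent of $(A,\bar\theta,\theta)$ and of the resampling indices $c_i$, which follows directly from the data-generating process stated in the setup. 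No exchange of limits or sums with expectations is needed beyond monotone convergence applied to the nonnegative series $\sum_n \bar\theta_n \relu(A_n^\top X_t)^2$.
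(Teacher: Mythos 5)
Your proof is correct and follows essentially the same route as the paper's: both exploit that the Monte Carlo average is conditionally unbiased for the true function and then reduce the problem to bounding $\tfrac{K}{m^2}\sum_{i}\E[Z_i^2]$ via $\relu(u)^2\le u^2$, the unit norm of the atoms, and $\sum_n\bar\theta_n=1$. If anything, your explicit conditional-variance bookkeeping treats the cross terms more carefully than the paper's one-line appeal to $\E[{\rm sign}(\theta_{c_i})]=0$ — for $i\ne j$ those terms do not literally vanish (they are cancelled by the $-\E[(\text{true function})^2]$ term the paper discards) — but the two arguments are the same in substance.
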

\begin{proof}
    \begin{align*}
        &\quad \E\left[ \left(\sqrt{K}\cdot \sum_{n=1}^{\infty} \theta_n \relu(A_n^\top X_t) - 
        \frac{\sqrt{K}}{m}\sum_{i=1}^{m} {\rm sign}(\theta_{c_i})\cdot\relu(\tilde{A}_i^\top X_t) \right)^2\right]\\
        & \overset{(a)}{=} \E\left[\left(\frac{\sqrt{K}}{m}\sum_{i=1}^{m} {\rm sign}(\theta_{c_i})\cdot\relu(\tilde{A}_i^\top X_t) \right)^2 -  \left(\sqrt{K}\cdot \sum_{n=1}^{\infty} \theta_n \relu(A_n^\top X_t)\right)^2\right]\\
        & \leq \E\left[\left(\frac{\sqrt{K}}{m}\sum_{i=1}^{m} {\rm sign}(\theta_{c_i})\cdot\relu(\tilde{A}_i^\top X_t) \right)^2\right]\\
        & \overset{(b)}{=} \frac{K}{m^2} \sum_{i=1}^{m} \E\left[ (\relu(\tilde{A}_i^\top X_t))^2\right]\\
        & \leq \frac{K}{m^2} \sum_{i=1}^{m} \E\left[ (\tilde{A}_i^\top X_t)^2\right]\\
        & = \frac{K}{m},
    \end{align*}
    where $(a)$ follows from the fact that the two expressions in the difference have the same expectation and $(b)$ follows from the fact that the $c_i$'s are independent and $\E[{\rm sign}(\theta_{c_i})] = 0$.
\end{proof}

Despite the fact that the MLP is of \emph{infinite} width, we can achieve finite distortion via a simple finite-sample approximation.  We now upper bound the nats of information contained in this finite-sample approximation to arrive at a rate-distortion upper bound.

\begin{restatable}{lemma}{dirMulEnt}{\bf(Dirichlet-multinomial entropy bound)}\label{le:dir_concen}
    For all $K, m \in \Z_{++}$, let $(A,\bar{\theta})$ be drawn from a Dirichlet process with scale parameter $K$ and base distribution ${\rm Unif}(\sphere)$.  If for all $i \in [m]$, $\bar{A}_i \overset{iid}{\sim} {\rm Categorical}(\bar{\theta})$, and $\tilde{A}_i = \argmin_{a\in \sphere_\epsilon} \|\bar{A}_i - a\|^2_2$ is the closest vector in an epsilon-cover of $\sphere$, then
    $$\H(\tilde{A}) \ \leq\  K\ln\left(1 + \frac{m}{K} \right)\left(\ln m + d\ln\frac{3}{{\epsilon}^2}\right),$$
    where $\tilde{A} = (\tilde{A}_1, \tilde{A}_2, \ldots, \tilde{A}_m)$.
\end{restatable}
\begin{proof}
    As the proof of this result requires significant mathematical machinery, we refer the reader to Appendix \ref{apdx:dir_mult} for the result.
\end{proof}

We now present one more preliminary result, which uses Lemmas \ref{le:dir_apx} and \ref{le:dir_concen} to upper bound the $0$-horizon rate-distortion function.

\begin{theorem}{\bf(infinite-width MLP $0$-horizon rate-distortion upper bound)}\label{th:nonparametric_rd_ub}
    For all $d, K \in \Z_{++}$ and $\sigma^2,\epsilon \in \Re_{++}$, if $(X,Y)$ are generated according to the infinite-width MLP process with parameters $\theta, A$, then
    $$\H_{\epsilon}(\theta,A)\ \leq\ K\ln\left(1 + \frac{2}{\sigma^2\epsilon}\right)\ln\left(\frac{2K}{\sigma^2\epsilon}\right) + 2dK\ln\left(1 + \frac{2}{\sigma^2\epsilon}\right)\ln\left(1+\frac{4}{\epsilon}\right).$$
\end{theorem}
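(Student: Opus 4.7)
The plan is to construct an explicit compression $(\tilde\theta,\tilde A)$ by a two-stage procedure—thinning the Dirichlet parameterization by iid multinomial sampling, then rounding each sampled direction onto an $\epsilon'$-cover of $\sphere$—and to tune the sample size $m$ and the cover resolution $\epsilon'$ so that the distortion lies below $\epsilon$ while the rate matches the stated bound. Concretely, conditional on $(\bar\theta, A)$ I draw $c_1,\dots,c_m \overset{iid}{\sim} \text{Categorical}(\bar\theta)$, fix a minimal $\epsilon'$-net $\mathcal{A}_{\epsilon'}\subset\sphere$ with $|\mathcal{A}_{\epsilon'}|\leq(1+2/\epsilon')^d$, let $\tilde A_i\in\mathcal{A}_{\epsilon'}$ denote the nearest cover point to $A_{c_i}$, and record signs $\tilde s_i=\text{sign}(\theta_{c_i})$. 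The object $(\tilde\theta,\tilde A)$ is the resulting multiset $\{(\tilde A_i,\tilde s_i)\}_{i=1}^m$, and the induced predictor is $h(X)=\tfrac{\sqrt K}{m}\sum_{i=1}^m \tilde s_i \relu(\tilde A_i^\top X)$. Since the sampling and quantization use only exogenous randomness given $(\theta,A)$, the Markov requirement $(\tilde\theta,\tilde A)\perp H_\infty\mid(\theta,A)$ built into $\tilde\Theta_\epsilon$ holds automatically.

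For the distortion, a Gaussian max-entropy argument of the type used in Lemma~\ref{le:nn_layer_dist} yields $\I(Y_1;(\theta,A)\mid (\tilde\theta,\tilde A),X_0) \leq \tfrac12\ln(1+\E[(f-h)^2]/\sigma^2)$, where $f$ is the true data-generating function, so it suffices to bound $\E[(f-h)^2]$. I split $\E[(f-h)^2]\leq 2\E[(f-g)^2]+2\E[(g-h)^2]$ with $g$ the unquantized multinomial approximation of Lemma~\ref{le:dir_apx}, which gives $\E[(f-g)^2]\leq K/m$ directly. To bound $\E[(g-h)^2]$ I apply the ReLU Lipschitz inequality to reduce to the squared norm of a signed sum of $(A_{c_i}-\tilde A_i)^\top X$ terms, then exploit that the signs $\tilde s_j$ attached to distinct unique categories are independent symmetric $\pm 1$ by construction of the $\theta$ prior. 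The resulting cross-term cancellation, combined with $\|A_{c_i}-\tilde A_i\|^2\leq(\epsilon')^2$, the Gaussian identity $\E[(v^\top X)^2]=\|v\|^2$, and the Dirichlet second-moment identity $\E[\sum_n \bar\theta_n^2]=1/(K+1)$ (which controls $\E[\sum_k n_k^2]$), yields $\E[(g-h)^2]=O((\epsilon')^2)$. Selecting $m\approx 2K/(\sigma^2\epsilon)$ and $\epsilon'\approx \epsilon/2$ then drives $\E[(f-h)^2]$ below $\sigma^2(e^{2\epsilon}-1)$, and hence the distortion below $\epsilon$.

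For the rate, I use the upper bound $\I((\theta,A);(\tilde\theta,\tilde A))\leq \H(\tilde\theta,\tilde A)$ and control the entropy by the expected length of a prefix-free code (Theorem~\ref{th:entropy_code}): enumerate the $N_m$ distinct $(\tilde A,\tilde s)$ pairs occurring in the multiset, spending $\ln 2+d\ln(1+2/\epsilon')$ nats per pair, and for each pair transmit its multiplicity in at most $\ln m$ nats. The expected code length is at most $\E[N_m]\cdot\bigl[\ln(2|\mathcal{A}_{\epsilon'}|)+\ln m\bigr]$ up to logarithmic lower-order terms, and Lemma~\ref{le:dir_concen} supplies $\E[N_m]\leq K\ln(1+m/K)$. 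Substituting $m=2K/(\sigma^2\epsilon)$ and $\epsilon'=\epsilon/2$ and simplifying reproduces the two terms of the stated bound.

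The principal obstacle is the quantization-error estimate. A naive bound $|\relu(a^\top X)-\relu(b^\top X)|\leq \|a-b\|\,\|X\|$, summed without any cancellation across the $m$ sampled neurons, forces $\epsilon'$ to be on the order of $\sqrt{\sigma^2\epsilon/(Kd)}$, inserting $K$ and $d$ inside the covering logarithm and destroying the claimed form. Recovering the clean $d\ln(1+4/\epsilon)$ factor requires simultaneously exploiting (i) the independent symmetric signs of distinct Dirichlet atoms to annihilate the cross terms of $\E[(g-h)^2]$, (ii) the rotational symmetry of the Gaussian input $X$ so that $\E[((A-\tilde A)^\top X)^2]=\|A-\tilde A\|^2$ with no hidden $d$, and (iii) the Dirichlet identity $\E[\sum_n\bar\theta_n^2]=1/(K+1)$, which tames the concentration of mass onto the few dominant atoms that govern $\E[\sum_k n_k^2]$.
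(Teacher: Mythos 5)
Your proposal is correct and follows essentially the same route as the paper's proof: the same multinomial thinning of the Dirichlet atoms combined with rounding onto an $\epsilon'$-cover of $\sphere$, the same distortion chain (max-entropy plus Jensen, the split through the unquantized approximation of Lemma~\ref{le:dir_apx}, sign cancellation of cross terms, the ReLU Lipschitz bound, and Gaussian rotational invariance), and the same rate bound via a prefix-free code of the $N_m$ unique atoms with $\E[N_m]$ controlled by Lemma~\ref{le:dir_concen}. The only deviation is your invocation of the Dirichlet second-moment identity to handle repeated categories, which the paper's argument does not use (it collapses the squared signed sum directly to the diagonal sum); your version is, if anything, a slightly more careful treatment of that step.
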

\begin{proof}
    Suppose we set $\epsilon' = m\sigma^2(e^{2K/(m\sigma^2)}-1)/2K - 1 \geq 0$ and $m = \lceil\frac{K}{\sigma^2\epsilon}\rceil$.
    For all $i \in [m]$, let $\bar{A}_i \overset{iid}{\sim} {\rm Categorical}(\bar{\theta})$.  Let $\sphere_{\epsilon'}$ denote an $\epsilon'$-cover of $\sphere$ with respect to $\|\cdot\|^2_2$ and for all $i \in \Z_{++}$, let
    $$\tilde{A}_{i,\epsilon} = \argmin_{a \in \mathcal{A}_\epsilon}\ \|\bar{A}_i - a\|^2_2.$$
    Let $\tilde{A} = (\tilde{A}_1, \ldots, \tilde{A}_m)$ and  $\tilde{\theta} = (\tilde{A}, {\rm sign}(\theta_{\bar{A}_1}), \ldots, {\rm sign}(\theta_{\bar{A}_m}))$.
    Then,
    \begin{align*}
        \E\left[\KL\left(P^*_0 \| \tilde{P}_0\right)\right]
        & =\I(Y_1;\theta,A|\tilde{\theta},X_0)\\
        & = \diffentropy(Y_1|\tilde{\theta},X_0) - \diffentropy(Y_1|\theta,A,X_0)\\
        & = \diffentropy\left(Y_1 - \frac{\sqrt{K}}{m}\sum_{i=1}^{m}{\rm sign}(\theta_{\bar{A}_i})\cdot\relu(\tilde{A}_i^\top X_0) \bigg|\tilde{\theta}, \tilde{A}\right) - \diffentropy(W)\\
        & \overset{(a)}{\leq} \E\left[\frac{1}{2}\ln\left(\frac{\left(Y_1  - \frac{\sqrt{K}}{m}\sum_{i=1}^{m}{\rm sign}(\theta_{\bar{A}_i})\cdot\relu(\tilde{A}_i^\top X_0) \right)^2}{\sigma^2}\right)\right]\\
        & \overset{(b)}{\leq} \frac{1}{2}\ln\left(\frac{\E\left[\left(Y_1 - \frac{\sqrt{K}}{m}\sum_{i=1}^{m}{\rm sign}(\theta_{\bar{A}_i})\cdot\relu(\tilde{A}_i^\top X_0) \right)^2\right]}{\sigma^2}\right)\\
        & = \frac{1}{2}\ln\left(1 + \frac{\left(\sqrt{K}\cdot \sum_{n=1}^{\infty} \theta_n \relu(A_n^\top X_0) - 
        \frac{\sqrt{K}}{m}\sum_{i=1}^{m} {\rm sign}(\theta_{\bar{A}_i})\cdot\relu(\tilde{A}_i^\top X_0)\right)^2}{\sigma^2}\right)\\
        & \overset{(c)}{\leq} \frac{1}{2}\ln\left(1 + \frac{\frac{2K}{m} + \frac{2K}{m^2}\E\left[\left(\sum_{i=1}^{m} {\rm sign}(\theta_{\bar{A}_i}) \relu(\bar{A}_i^\top X_0) - \sum_{i=1}^{m} {\rm sign}(\theta_{\bar{A}_i})\cdot\relu(\tilde{A}_i^\top X_0)\right)^2\right]}{\sigma^2}\right)\\
        & = \frac{1}{2}\ln\left(1 + \frac{\frac{2K}{m} + \frac{2K}{m^2}\sum_{i=1}^{m}\E\left[(\relu(\bar{A}^\top_i X_0) -\relu(\tilde{A}_i^\top X_0))^2\right]}{\sigma^2}\right)\\
        & \overset{(d)}{\leq} \frac{1}{2}\ln\left(1 + \frac{\frac{2K}{m} + \frac{2K}{m}\E\left[(\bar{A}^\top_i X_0 -\tilde{A}_i^\top X_0)^2\right]}{\sigma^2}\right)\\
        & \overset{(e)}{\leq} \frac{1}{2}\ln\left(1 + \frac{2K(1+\epsilon')}{\sigma^2 m}\right)\\
        & = \frac{K}{\sigma^2 m}\\
        & \leq \epsilon,
    \end{align*}
    where $(a)$ follows from Theorem \ref{le:max_entropy} and the fact that conditioning reduces differential entropy, $(b)$ follows from Jensen's inequality, $(c)$ follows from Lemma \ref{le:dir_apx} and the fact that for all $x,y,z\in\Re$, $(x-y)^2 \leq 2(x-z)^2 + 2(z-y)^2$, $(d)$ follows from the fact that for all $x,y\in\Re,\ (x-y)^2 \geq (\relu(x)-\relu(y))^2$, and $(e)$ follows from the fact that $\tilde{A}_i$ is the closest element in an $\epsilon'$ cover of $\sphere$.
    
    We now upper bound the rate of $(\tilde{\theta})$
    \begin{align*}
        \I(\theta, A; \tilde{\theta})
        & \leq \H(\tilde{\theta})\\
        & \overset{(a)}{\leq} K\ln\left(1 + \frac{m}{K}\right)\left(\ln 2m + d\ln\frac{3}{{\epsilon'}^2}\right)\\
        & \overset{(b)}{\leq} K\ln\left(1 + \frac{2}{\sigma^2\epsilon}\right)\left(\ln\frac{2K}{\sigma^2\epsilon} + 2d\ln\left(\frac{\epsilon\sqrt{3}}{e^{\epsilon}-1-\epsilon}\right)\right)\\
        & \overset{(c)}{\leq} K\ln\left(1 + \frac{2}{\sigma^2\epsilon}\right)\ln\left(\frac{2K}{\sigma^2\epsilon}\right) + 2dK\ln\left(1 + \frac{2}{\sigma^2\epsilon}\right)\ln\left(1+\frac{4}{\epsilon}\right),
    \end{align*}
    where $(a)$ follows from Lemma \ref{le:dir_concen} plus the factor of 2 within the logarithm due to the signs, $(b)$ follows by upper bounding the quantity via $m = 2K/(\sigma^2\epsilon)$ as opposed to $\lceil K/(\sigma^2\epsilon)\rceil$, and $(c)$ follows from the fact that for all $\epsilon \geq 0$, $\ln(\epsilon\sqrt{3}/(e^\epsilon - 1 - \epsilon)) \leq \ln(1+4/\epsilon)$.
\end{proof}

\subsubsection{Main Result}
With the theoretical tools derived in the previous section, we now establish the following upper bound on the error of an optimal agent learning from data generated by the Dirichlet process.
\begin{theorem}{\bf (infinite-width MLP error upper bound)}
    For all $d, K \in \Z_{++}$, if for all $t$, $(X_t,Y_{t+1})$ are generated according to the finite-width MLP process, then for all $T$,
    $$\Lc_{T}\ \leq\ \frac{K}{T}\ln\left(1 + \frac{T}{\sigma^2 d}\right)\ln\left(\frac{T}{\sigma^2 d}\right) + \frac{2dK}{T}\left(1 + \ln\left(1 + \frac{T}{\sigma^2 d}\right) \ln\left(1 + \frac{T}{dK}\right)\right).$$
\end{theorem}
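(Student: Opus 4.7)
The plan is to chain together Theorem \ref{th:rd_bounds} with Theorem \ref{th:nonparametric_rd_ub} and then optimize over the tolerance parameter $\epsilon$. The result is essentially a book-keeping exercise once these two ingredients are in place; the only genuine choice is where to set $\epsilon$.

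First, I would observe that since the Dirichlet neural network data-generating process is iid when conditioned on $(\theta, A)$, Corollary \ref{cor:distortion_ub} implies that any $(\tilde{\theta}, \tilde{A})$ satisfying $\I(Y_1;\theta,A \mid \tilde{\theta},\tilde{A}, X_0) \leq \epsilon$ also satisfies $\I(H_T;\theta,A \mid \tilde{\theta},\tilde{A})/T \leq \epsilon$ for every $T$. Consequently $\H_{\epsilon, T}(\theta, A) \leq \H_{\epsilon}(\theta, A)$, and the upper bound half of Theorem \ref{th:rd_bounds} gives
$$\Lc_T \ \leq\ \inf_{\epsilon > 0}\ \frac{\H_\epsilon(\theta, A)}{T} + \epsilon.$$
Substituting the rate-distortion bound from Theorem \ref{th:nonparametric_rd_ub} then yields
$$\Lc_T \ \leq\ \inf_{\epsilon > 0}\ \frac{K\ln\!\left(1 + \tfrac{2}{\sigma^2\epsilon}\right)\ln\!\left(\tfrac{2K}{\sigma^2\epsilon}\right) + 2dK\ln\!\left(1 + \tfrac{2}{\sigma^2\epsilon}\right)\ln\!\left(1+\tfrac{4}{\epsilon}\right)}{T} + \epsilon.$$

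Next, I would evaluate this infimum at the particular choice $\epsilon = 2dK/T$. This value is motivated by balancing the additive $\epsilon$ term against the $2dK/T$ prefactor that appears on the second logarithmic term after dividing by $T$; it is the natural analogue of the $\epsilon = P/(2T)$ choice used in the proof of Theorem \ref{th:nn_est_ub}. Plugging in gives an expression involving $\ln(1 + T/(\sigma^2 dK))$, $\ln(T/(\sigma^2 d))$, and $\ln(1 + 2T/(dK))$, plus the additive $2dK/T$. I would then apply the elementary monotonicity bounds $\ln(1 + T/(\sigma^2 dK)) \leq \ln(1 + T/(\sigma^2 d))$ (valid since $K \geq 1$) and absorb the factor of $2$ inside $\ln(1 + 2T/(dK))$ into the additive $+1$ contributed by $\epsilon = 2dK/T$ in order to match the form of the stated bound.

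The main obstacle is purely algebraic: confirming that the choice $\epsilon = 2dK/T$ leads to the claimed two-term expression without loose constants, and in particular checking that the simplifications absorbing constants into the $+1$ inside the parenthetical $(1 + \ln(1+\tfrac{T}{\sigma^2 d})\ln(1+\tfrac{T}{dK}))$ are sharp enough. All the conceptual work was already done in establishing Theorem \ref{th:nonparametric_rd_ub} (especially the Dirichlet-multinomial concentration of Lemma \ref{le:dir_concen} and the multinomial approximation of Lemma \ref{le:dir_apx}); this final theorem is the immediate corollary obtained by feeding that rate-distortion bound into the general information-theoretic learning bound.
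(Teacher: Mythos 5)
Your proposal is correct and follows exactly the paper's own (one-line) proof: apply the rate-distortion upper bound of Theorem \ref{th:rd_bounds}, substitute the Dirichlet process rate-distortion bound of Theorem \ref{th:nonparametric_rd_ub}, and set $\epsilon = 2dK/T$. The algebraic simplifications you flag (e.g.\ replacing $\ln(1+\tfrac{T}{\sigma^2 dK})$ by $\ln(1+\tfrac{T}{\sigma^2 d})$ and handling the factor of $2$ inside $\ln(1+\tfrac{2T}{dK})$) are precisely the book-keeping the paper leaves implicit.
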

\begin{proof}
    The result follows from Theorems \ref{th:horizon-indep-rd} and \ref{th:nonparametric_rd_ub} and setting $\epsilon = 2dk/T$.
\end{proof}

Notably, this result is $\tilde{\mathcal{O}}(dK/T)$ despite the fact that the Dirichlet neural network process is described by a neural network with \emph{infinite} width.  The scale parameter $K$ determines the degree of concentration which occurs in the output-layer weights of the network and hence controls the difficulty of learning.  This example of learning under data generated by an infinite-width MLP process further demonstrates the flexibility and ingenuity of proof techniques which are encompassed in our framework.  We hope that the suite of examples provided in this section enables the reader to analyze whichever complex processes they may have in mind.

\newpage
\begin{summary}
\begin{itemize}
    \item We abbreviate the $0$-horizon rate-distortion function $H_{\epsilon,0}$ as $\H_{\epsilon}$
    \item {\bf(monotonicity of per-timestep error)}
    If for all $s \in \Z_{+}$, $(X_s, Y_{s+1})$ is sampled iid from some distribution $\Pr(\cdot|\theta)$, then for all $t\in\Z_{+}$,
    $$\I(Y_{t+2};\theta|H_{t+1})\ \leq \ \I(Y_{t+1};\theta|H_{t}).$$
    \item {\bf ($0$-horizon distortion upper bounds $T$-horizon distortion)}
    If $((X_t, Y_{t+1}) : t\in \Z_{+})$ is iid conditioned on $\theta$ and, for all $t \in\Z_{+}$, $Y_{t+1}\perp\tilde{\theta}|(\theta, X_t)$ then, for all $T$,
    $$\underbrace{\frac{1}{T}\sum_{t=0}^{T-1} \E\left[\KL(P^*_t \| \tilde{P}_t )\right]}_{T-{\rm horizon\ distortion}} \leq \underbrace{\E\left[ \KL (P^*_0 \| \tilde{P}_0)\right]}_{0-{\rm horizon\ distortion}}.$$
    \item We abbreviate the $0$-horizon rate-distortion function $\H_{\epsilon, 0}$ with $\H_\epsilon$.
    \item {\bf (0-horizon rate-distortion error bound for iid data)}
    If $((X_t, Y_{t+1}) : t\in \Z_{+})$ is iid conditioned on $\theta$, then, for all $T\in \Z_{++},$
    $$\sup_{\epsilon \geq 0}\ \min\left\{ \frac{\H_\epsilon(\theta)}{T},
    \epsilon \right\}\ \leq\ \Lc_{T} \ \leq\ \inf_{\epsilon \geq 0}\ \frac{\H_{\epsilon}(\theta)}{T} + \epsilon.$$
    \item {\bf(linear regression error bounds)}
    For all input dimensions $d$ and noise variance $\sigma^2$, if for all $t$, $(X_t, Y_{t+1})$ are generated according to the linear regression processes, then for all $T$,
     $$\frac{d}{2T}W\left(\frac{2T}{d(8+\frac{d}{d-2}\sigma^2)}\right)\ \leq\ \Lc_{T} \ \leq\ \left(\frac{d}{2T}\ln\left(\frac{T}{\sigma^2 d}\right)\right)_+ + \frac{1}{2T}\ln\left(1+\frac{d}{T}\right),$$
     where $W$ is the Lambert W function.
    \item {\bf (logistic regression error upper bound)} For all input dimensions $d$, if for all $t$, $(X_t, Y_{t+1})$ is generated according to the logistic regression process, then for all $T,$
    $$ \Lc_T\ \leq\ \frac{d}{2T}\left(1 + \ln\left(1 + \frac{T}{4d}\right)\right).$$
    \item {\bf (finite-width MLP error upper bound)}
    For all input dimensions $d$, widths $N$, depths $L$, and noise variances $\sigma^2$, if for all $t$, $(X_t, Y_{t+1})$ are generated according to the finite-width MLP process, then for all $T$,
    $$\Lc_{T} \ \leq\ \frac{P}{2T}\left(1 + \ln\left(1 + \frac{2LT}{\sigma^2P}\right)\right),$$
    where $P = (L-2)N^2 + N + dN$ denotes the total parameter count of the network.
    \item {\bf (infinite-width MLP error upper bound)}
    For all input dimensions $d$, scale parameters $K$, if for all $t$, $(X_t,Y_{t+1})$ are generated according to the infinite-width MLP process, then for all $T$,
    $$\Lc_{T}\ \leq\ \frac{K}{T}\ln\left(1 + \frac{T}{\sigma^2 d}\right)\ln\left(\frac{T}{\sigma^2 d}\right) + \frac{2dK}{T}\left(1 + \ln\left(1 + \frac{T}{\sigma^2 d}\right) \ln\left(1 + \frac{T}{dK}\right)\right).$$
\end{itemize}
\end{summary}
\clearpage

\section{Learning from Sequences}
\label{se:continual-vs-convergent}

In the previous section, we focused on the special case of learning from a sequence of data which is iid when conditioned on $\theta$.  However, in general, machine learning systems will have to reason about data which does not obey this rigid structure.  For instance, suppose that $X_0, X_1, X_2, \ldots$ is a sequence of text tokens from a book.  It's clear that such a sequence would not be iid even when conditioned on $\theta$, as the order of the tokens plays a critical role in deriving meaning.  When we relax the iid assumption, a natural question is whether the sample complexity changes.  Sample complexity bounds established in the literature make assumptions about the mixing time of the data generating process \citep{mohri2008rademacher, kuznetsov2017generalization, roy2021empirical, pmlr-v235-ziemann24a}.  It is unclear whether these assumptions hold for practical problems.  In our framework, Theorem \ref{th:rd_bounds}, which upper and lower bounds error via the $T$-horizon rate-distortion function, does not make any assumptions about the mixing time.  In this section, we apply this theorem to general auto-regressive processes.

\subsection{Data Generating Process}

In this section, we will analyze learning from sequential data generated by a binary AR(K) process and by a transformer model with context length $K$.  In each case, the data sequence is a random process $(X_0, X_1, X_2, \ldots)$ and the learned model is parameterized by a random index $\theta$.

\subsection{Binary AR(K) Process}
\subsubsection{Data Generating Process}
We begin with a simple AR(K) process over a binary alphabet.  Our framework can easily adapt to learning from such \emph{sequential} data.  In this problem, we are interested in estimating random vectors $\theta = (\theta_1, \ldots, \theta_K) \in \Re^{d \times K}$ from data generated in the following way.  Let $(X_0, X_{1}\ldots,X_{K-1}) \sim {\rm Unif}(\{0, 1\}^K)$.  For all $t \geq K$, let
$$X_{t+1} =
\begin{cases}
    1 & \text{ w.p. } \sigma\left(\sum_{k=1}^{K} \theta_{k}^\top \phi_{t-k+1}\right)\\
    0 & \text{ otherwise,}
\end{cases},$$
where $\sigma$ denotes the sigmoid function and $\phi_t = \Phi_{X_t}$ for some known vectors $\Phi_0, \Phi_1 \in \sphere$.  For all $k$, we assume that each $\theta_k$ is independent and distributed $\normal(0, I_d/K)$.

\subsubsection{Preliminary Results}

As in all prior examples, our strategy is to leverage rate-distortion theory to arrive at an error upper bound.  We begin with the following result which upper bounds the rate-distortion function in the binary AR(K) problem setting.

\begin{lemma}{\bf (binary AR(K) $T$-horizon rate-distortion upper bound)}\label{le:ARK_rd}
    For all $d, K \in \mathbb{Z}_{++}$ and $\epsilon \in \Re_{++}$, if for all $t \in \mathbb{Z}_{+}, X_t$ is generated according to the binary AR(K) process, then for all $T$,
    $$\H_{\epsilon, T}(\theta_{1:K}) \ \leq\ \frac{dK}{2}\ln\left(1 + \frac{1}{8\epsilon}\right).$$
\end{lemma}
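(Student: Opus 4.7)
The plan is to mirror the logistic-regression argument (Lemma \ref{le:log_reg_rd}), but with a Gaussian ``noising'' construction adapted to the fact that the data-generating logit at time $t$ is a sum of $K$ inner products $\theta_k^\top \phi_{t-k+1}$. Specifically, I would construct the proxy $\tilde\theta_{1:K}$ by setting $\tilde\theta_k = \theta_k + Z_k$, where $Z_1,\ldots,Z_K$ are iid $\normal(0,(8\epsilon/K)\,I_d)$, independent of $(\theta_{1:K}, X_{0:\infty})$. This guarantees $\tilde\theta_{1:K} \perp H_T \mid \theta_{1:K}$, as required by the definition of $\tilde\Theta_{\epsilon,T}$. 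I then verify the distortion constraint and the rate bound separately.

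For the distortion, note that $X_0,\ldots,X_{K-1}$ are independent of $\theta$, so those terms in $\I(H_T;\theta_{1:K}\mid\tilde\theta_{1:K}) = \sum_{t=0}^{T-1} \I(X_t;\theta_{1:K}\mid\tilde\theta_{1:K},H_{t-1})$ vanish. For $t\ge K$, apply Lemma \ref{le:change_measure_ub} to replace the posterior predictive by the change-of-measure predictive under $\theta_{1:K}\leftarrow\tilde\theta_{1:K}$, then apply Lemma \ref{le:kl_ub} to obtain
\[
\I(X_{t+1};\theta_{1:K}\mid\tilde\theta_{1:K},H_t) \;\le\; \tfrac{1}{8}\,\E\!\left[\Big(\textstyle\sum_{k=1}^K Z_k^\top \phi_{t-k+1}\Big)^{\!2}\right].
\]
Conditioning on $H_t$ freezes the embeddings $\phi_{t-k+1}$, and since the $Z_k$ are independent with mean zero and each $\|\phi_j\|_2=1$, the cross terms vanish and the conditional second moment equals $K \cdot (8\epsilon/K) = 8\epsilon$. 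Hence each per-timestep term is at most $\epsilon$, so $\I(H_T;\theta_{1:K}\mid\tilde\theta_{1:K})/T \le \epsilon$, placing $\tilde\theta_{1:K}$ in $\tilde\Theta_{\epsilon,T}$.

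For the rate, the pairs $(\theta_k,\tilde\theta_k)$ are independent across $k$ by construction, so $\I(\theta_{1:K};\tilde\theta_{1:K}) = \sum_{k=1}^K \I(\theta_k;\tilde\theta_k)$. For each $k$, use $\I(\theta_k;\tilde\theta_k) = \diffentropy(\tilde\theta_k) - \diffentropy(Z_k)$ and apply Lemma \ref{le:max_entropy} (Gaussian maximizes differential entropy under the given second-moment constraint of $\tilde\theta_k$, which is $(1/K + 8\epsilon/K)I_d$) to obtain $\I(\theta_k;\tilde\theta_k) \le (d/2)\ln(1 + 1/(8\epsilon))$. Summing over $k$ yields the claimed bound.

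The only nontrivial wrinkle is that this is a \emph{sequential} setting, so Corollary \ref{cor:distortion_ub} does not apply and I cannot collapse the distortion to a single-step quantity. The remedy, however, is essentially automatic: the noising construction with \emph{independent} $Z_k$'s forces the conditional second moment $\E[(\sum_k Z_k^\top\phi_{t-k+1})^2 \mid H_t]$ to equal the constant $K\delta^2$ for every $t$ (independent of the history), so averaging over $t$ is trivial. This is the one place where the binary embedding assumption $\|\Phi_0\|_2=\|\Phi_1\|_2=1$ is used, and it is what makes the sequential bound look identical to the $K=1$ logistic-regression case up to the factor of $K$.
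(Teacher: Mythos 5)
Your proposal is correct and matches the paper's proof essentially step for step: the same Gaussian proxy $\tilde{\theta}_k = \theta_k + Z_k$ with $Z_k \sim \normal(0, (8\epsilon/K) I_d)$, the same reduction of the per-timestep distortion via Lemma \ref{le:change_measure_ub} followed by Lemma \ref{le:kl_ub}, the same second-moment computation giving $8\epsilon$ per step, and the same maximum-entropy bound on the rate (the paper computes the joint differential entropy directly rather than summing $\I(\theta_k;\tilde{\theta}_k)$ over $k$, but this is the identical calculation). Your observation that the sequential structure is handled automatically because the noise makes the conditional distortion constant in $t$ is exactly the point the paper relies on.
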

\begin{proof}
    For $k \in \{0, 1, \ldots, K-1\}$, let $\tilde{\theta}_k = \theta_k + Z_k$ where $Z_k \perp \theta_k$ and $Z_k\sim \normal(0, I_d \cdot 8\epsilon/K)$.  For all $t$, let $H_t$ denote $(X_0, X_1, \ldots, X_t)$.  Then,
    \begin{align*}
        \frac{1}{T}\sum_{t=0}^{T-1}\E\left[\KL(P^*_t\|\tilde{P}_t)\right]
        & = \frac{\I(H_T;\theta_{1:K}|\tilde{\theta}_{1:K})}{T}\\
        & = \frac{1}{T}\sum_{t=0}^{T-1} \I(X_{t+1};\theta_{1:K}|\tilde{\theta}_{1:K}, H_t)\\
        & \leq \frac{1}{T}\sum_{t=0}^{T-1} \I(X_{t+1};\theta_{1:K}|\tilde{\theta}_{1:K}, X_{t:t-K+1})\\
        & \leq \frac{1}{T} \sum_{t=0}^{T-1} \E\left[\KL\left(\Pr(X_{t+1}\in\cdot|\theta_{1:K}, X_{t:t-K+1}) \ \|\ \Pr(X_{t+1}\in\cdot|\tilde{\theta}_{1:K}, X_{t:t-K+1})\right)\right]\\
        & \overset{(a)}{\leq} \frac{1}{T} \sum_{t=0}^{T-1}\E\left[\KL\left(\Pr(X_{t+1}\in\cdot|\theta_{1:K}, X_{t:t-K+1}) \ \|\ \Pr(X_{t+1}\in\cdot|\theta_{1:K} \leftarrow \tilde{\theta}_{1:K}, X_{t:t-K+1})\right)\right]\\
        & = \frac{1}{T}\sum_{t=0}^{T-1}\E\left[\frac{\ln\left(\frac{1+e^{-\sum_{k=1}^{K} \theta_{k}^\top \phi_{t-k+1}}}{1+e^{-\sum_{k=1}^{K} \theta_{k}^\top \phi_{t-k+1}}}\right)}{1+e^{-\sum_{k=1}^{K} \theta_{k}^\top \phi_{t-k+1}}} + \frac{\ln\left(\frac{1+e^{\sum_{k=1}^{K} \theta_{k}^\top \phi_{t-k+1}}}{1+e^{\sum_{k=1}^{K} \theta_{k}^\top \phi_{t-k+1}}}\right)}{1+e^{\sum_{k=1}^{K} \theta_{k}^\top \phi_{t-k+1}}}\right]\\
        & \overset{(b)}{\leq} \frac{1}{T}\sum_{t=0}^{T-1}\frac{\E\left[\left(\sum_{k=1}^{K} \theta_{k}^\top \phi_{t-k+1} - \tilde{\theta}^\top_k \phi_{t-k+1}\right)^2\right]}{8}\\
        & = \frac{1}{T}\sum_{t=0}^{T-1}\frac{\E\left[\sum_{k=1}^{K}\phi_{t:t-K+1}^\top(\theta_k - \tilde{\theta}_k)(\theta_k- \tilde{\theta}_k)^\top\phi_{t:t-K+1}\right]}{8}\\
        & = \frac{\epsilon\cdot \E\left[ \sum_{k=1}^{K} \phi^\top_{t-k} \phi_{t-k+1}\right]}{K}\\
        & = \epsilon,
    \end{align*}
    where $(a)$ follows from Lemma \ref{le:change_measure_ub} and $(b)$ follows from Lemma \ref{le:kl_ub}.
    
    We now upper bound the rate.
    \begin{align*}
        \I(\theta_{1:K};\tilde{\theta}_{1:K})
        & = \diffentropy(\tilde{\theta}_{1:K}) - \diffentropy(\tilde{\theta}_{1:K}|\theta_{1:K})\\
        & \overset{(a)}{\leq} \frac{dK}{2}\ln\left(2\pi e \left(\frac{1}{K} + \frac{8\epsilon}{K}\right)\right) - \frac{dK}{2}\ln\left(2\pi e \frac{8\epsilon}{K}\right)\\
        & = \frac{dK}{2}\ln\left(1 + \frac{1}{8\epsilon}\right),
    \end{align*}
    where $(a)$ follows from Lemma \ref{le:max_entropy}.  The result follows.
\end{proof}

\subsubsection{Main Result}

We now present the main result of this section which upper bounds the error of learning under the binary AR(K) data generating process.  The result follows directly as a result of Lemma \ref{le:ARK_rd}.

\begin{theorem}{\bf (binary AR(K) error upper bound)}
    For all $d, K\in \Z_{++}$, if for all $t \in \Z_{+}, X_t$ is generated according to the binary AR(K) process, then for all $T$,
    $$\Lc_{T}\ \leq\ \frac{dK}{2T}\left(1 + \ln\left(1 + \frac{T}{4dK}\right)\right).$$
\end{theorem}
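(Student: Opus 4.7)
The plan is essentially mechanical: combine the upper-bound half of Theorem \ref{th:rd_bounds} with the rate-distortion upper bound just established in Lemma \ref{le:ARK_rd}, then choose $\epsilon$ to balance the two terms. This mirrors the strategy used for logistic regression in the iid section, which is unsurprising since the binary AR(K) process is its sequential analogue: the chain-rule argument inside the proof of Lemma \ref{le:ARK_rd} already absorbed the sequential structure by reducing total per-step distortion to a sum of sigmoid KL-terms bounded via the squared-error inequality of Lemma \ref{le:kl_ub}.

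First I would apply Theorem \ref{th:rd_bounds} to write
$$\Lc_T \ \leq\ \inf_{\epsilon \geq 0}\ \frac{\H_{\epsilon,T}(\theta_{1:K})}{T} + \epsilon,$$
and then substitute Lemma \ref{le:ARK_rd} to obtain
$$\Lc_T \ \leq\ \inf_{\epsilon \geq 0}\ \frac{dK}{2T}\ln\left(1 + \frac{1}{8\epsilon}\right) + \epsilon.$$

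To close out the argument, I would take the convenient (suboptimal) value $\epsilon = dK/(2T)$. Plugging in converts the logarithmic argument into $1 + T/(4dK)$ while contributing a matching additive $dK/(2T)$, so after factoring out $dK/(2T)$ the stated bound drops out. I do not anticipate any genuine obstacle here: all the substantive information-theoretic content---the change-of-measure inequality, the sigmoid KL upper bound, the unit-norm embeddings, the $I_d/K$ prior covariance, and the sequential chain-rule reduction---was already discharged in the proof of Lemma \ref{le:ARK_rd}. The only sanity check worth noting is that $\epsilon = dK/(2T)$ is nonnegative and hence a valid argument of the infimum, which is trivial, so the entire derivation collapses to one line of substitution followed by one line of algebra.
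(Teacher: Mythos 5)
Your proposal is correct and is exactly the paper's proof: apply the upper bound of Theorem \ref{th:rd_bounds}, substitute Lemma \ref{le:ARK_rd}, and set $\epsilon = dK/(2T)$, which yields $\frac{dK}{2T}\ln\left(1+\frac{T}{4dK}\right)+\frac{dK}{2T}$ as claimed. Nothing is missing.
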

\begin{proof}
    \begin{align*}
        \Lc_{T}
        & \overset{(a)}{\leq} \inf_{\epsilon \geq 0} \frac{\H_{\epsilon, T}(\theta_{1:K})}{T} + \epsilon\\
        & \overset{(b)}{\leq} \inf_{\epsilon \geq 0} \frac{dK\ln\left(1 + \frac{1}{8\epsilon}\right)}{2T} + \epsilon\\
        & \overset{(c)}{\leq} \frac{dK}{2T}\left(1 + \ln\left(1 + \frac{T}{4dK}\right)\right),
    \end{align*}
    where $(a)$ follows from Theorem \ref{th:rd_bounds}, $(b)$ follows from Lemma \ref{le:ARK_rd}, and $(c)$ follows by setting $\epsilon = (dK)/(2T)$.
\end{proof}

An interesting aspect of this result is that both the proof techniques and final result are hardly impacted by the fact that the sequence is \emph{not iid} when conditioned on $\theta_{1:K}$.  Existing tools for statistical analysis can struggle in the setting without the appropriate averages of iid quantities.  However, our analytical tools involving rate-distortion theory allow us to handle such problem instances with relative ease and produce reasonable upper bounds on error in learning settings involving sequences of data.  In the following section, we extend these tools to analyze a more complex transformer data generating process. 

\subsection{Transformer Process}\label{subsec:transformer}
\subsubsection{Data Generating Process}

We now study a much more complex transformer data generating process over a vocabulary of size $d$.  We are interested in estimating the random matrices $\theta^{(1:L)} = (A^{(1)}, V^{(1)}, \ldots, A^{(L)}, V^{(L)})$ which are the weights of the transformer across $L$ layers.  Let $X_0:\Omega\mapsto \{1, 2, \ldots, d\}$ be a categorical random variable with arbitrary pmf.  Then, for all $t \geq 0$, let
$$X_{t+1} \sim {\rm Softmax}(f_{\theta^{(1:L)}}(X_{t:t-K+1})),$$
where $f_{\theta^{(1:L)}}$ is the layer $L$ transformer with weights $\theta^{(1:L)}$ which we will detail shortly.  Notably, we assume that $X_{t+1}$ only depends on the history via the most recent length $K$ context $X_{t:t-K+1}$.

We use $U_{t}^{(\ell)}$ to denote the output of layer $\ell$ of the transformer at time $t$.  Beginning at the input, let
$$U_{t}^{(0)} = \begin{bmatrix}
    \phi_{t-K+1} & \phi_{t-K+2} & \ldots & \phi_{t}
\end{bmatrix},$$
where $\phi_t = \Phi_{X_t}$ for some known vectors $\Phi_1, \ldots, \Phi_d \in \Re^r$ with unit L2 norm.  For $\ell > 0$, let
$$U_{t}^{(\ell)} = \text{Clip}\left(V^{(\ell)} U_{t}^{(\ell-1)} \text{Attn}^{(\ell)}\left(U_{t}^{(\ell-1)}\right)\right),$$
where $\text{Clip}(A)$ clips the columns of matrix $A$ to have L2 norm at most 1 and
$$\text{Attn}^{(\ell)}\left(U_{t}^{(\ell-1)}\right) = {\rm Softmax}\left( \frac{U^{(\ell-1)\top}_{t} A^{(\ell)} U_{t}^{(\ell-1)}}{\sqrt{r}}\right)$$
denotes the attention matrix of layer $(\ell)$ where ${\rm Softmax}$ denotes the softmax function applied elementwise along the columns.  The matrix $A^{(\ell)} \in \Re^{r\times r}$ can be interpreted as the product of the key and query matrices and without loss of generality, we assume that the elements of the matrices $A^{(\ell)}$ are distributed iid $\normal(0,1)$ (Gaussian assumption is not crucial but unit variance is).  The matrix $V^{(\ell)}$ resembles the value matrix and we assume that the rows of $V^{(\ell)}$ are distributed iid ${\rm Unif}(\sphere)$.  For $\ell < L$, we have that $V^{(\ell)} \in \Re^{r\times r}$, whereas $V^{(L)}\in\Re^{d\times r}$.

Finally, $f_{\theta^{(1:L)}}(X_{t:t-K+1}) = U_t^{(L)}[-1]$, which denotes the final column of the output matrix as is common in autoregressive generation with transformers.

\subsubsection{Preliminary Results}

In this section, we provide several Lemmas which culminate in a $T$-horizon rate-distortion upper bound for the multilayer transformer process.  The techniques mirror those introduced in the analysis of the finite-width MLP process.  Just as for the MLP process, we restrict our attention to approximations $\tilde{\theta} = (\tilde{\theta}^{(1)}, \ldots, \tilde{\theta}^{(L)})$ which are independent from each other.  Under this assumptions, we can upper bound the $T$-horizon distortion in the following way:

\begin{lemma}{\bf ($T$-horizon distortion upper bound)}\label{le:seq_real_input_inequality}
    For all $T, L\in\mathbb{Z}_{++}$ and $\ell \in \{1, \ldots, L\}$, if $\theta^{(i)} \perp \theta^{(j)}$, $\tilde{\theta}^{(i)} \perp \tilde{\theta}^{(j)}$, and $\theta^{(i)} \perp \tilde{\theta}^{(j)}$ for $i\neq j$, then

    % $$\E\left[\KL(\tilde{P}^{(\ell-1)}_0\|\tilde{P}^{(\ell)}_0)\right]\ \leq\ 
    % \E\left[\KL(\tilde{P}_0^{(\ell-1)}\| \Pr(Y_1\in\cdot|\theta^{(\ell+1:L)}, \tilde{\theta}^{(\ell)}, U^{(\ell)}_0))\right].$$
    $$
        \frac{1}{T}\sum_{t=0}^{T-1}\E\left[\KL(P^*_t\| \tilde{P}_t)\right]
        \ \leq \ \frac{1}{T}\sum_{t=0}^{T-1} \E\left[\KL(\Pr(H_{t+1}\in\cdot|\theta^{(1:L)}, X_0)\|\Pr(H_{t+1}\in\cdot|\theta^{(\ell+1:L)}, \tilde{\theta}^{(\ell)}, U_t^{(\ell)}))\right]
    $$
\end{lemma}
\begin{proof}
    Fix arbitrary $t < T$.  Recall that $\tilde{P}_t^{(\ell)} = \Pr(Y_{t+1}\in\cdot|\theta^{(\ell+1)}, \tilde{\theta}^{(1:\ell)}, H_t)$ is the prediction derived from approximating the first $\ell$ layers with $\tilde{\theta}^{(1:\ell)}$.  Then,
    \begin{align*}
        \E\left[\KL(\tilde{P}^{(\ell-1)}_t\|\tilde{P}^{(\ell)}_t)\right]
        & =
        \I(X_{t+1};\theta^{(\ell)}|\theta^{(\ell+1:L)}, \tilde{\theta}^{(1:\ell)}, H_{t})\\
        & \overset{(a)}{=} \I(H_{t+1}, \tilde{\theta}^{(1:\ell-1)};\theta^{(\ell)}|\theta^{(\ell+1:L)}, \tilde{\theta}^{(\ell)}) - \I(H_{t},\tilde{\theta}^{(1:\ell-1)} ;\theta^{(\ell)}|\theta^{(\ell+1:L)}, \tilde{\theta}^{(\ell)})\\
        & \overset{(b)}{\leq} \I(H_{t+1}, \tilde{\theta}^{(1:\ell-1)};\theta^{(\ell)}|\theta^{(\ell+1:L)}, \tilde{\theta}^{(\ell)}, X_0)\\
        & \overset{(c)}{\leq} \I(H_{t+1}, \theta^{(1:\ell-1)};\theta^{(\ell)}|\theta^{(\ell+1:L)}, \tilde{\theta}^{(\ell)}, X_0)\\
        & \overset{(d)}{=} \I(H_{t+1}, \theta^{(1:\ell-1)};\theta^{(\ell)}|\theta^{(\ell+1:L)}, \tilde{\theta}^{(\ell)}, X_0) - \I(\theta^{(1:\ell-1)};\theta^{(\ell)}|\theta^{(\ell+1:L)}, \tilde{\theta}^{(\ell)}, X_0)\\
        & \overset{(e)}{=} \I(H_{t+1};\theta^{(\ell)}|\theta^{(\ell+1:L)}, \theta^{(1:\ell-1)}, \tilde{\theta}^{(\ell)}, X_0)\\
        & = \E\left[\KL(\Pr(H_{t+1}\in\cdot|\theta^{(1:L)}, X_0)\| \Pr(H_{t+1}\in\cdot|\theta^{(\ell+1:L)}, \tilde{\theta}^{(\ell)}, U^{(\ell)}_0))\right],
    \end{align*}
    where $(a)$ follows from the chain rule of mutual information, $(b)$ follows from the independence assumptions, $(c)$ follows from the data processing inequality applied to the markov chain $\theta_i \perp \tilde{\theta}^{(1:\ell-1)}|(H_{t+1}, \theta^{(\ell+1:L)}, \theta^{(1:\ell-1)}, X_{0}^{K})$, $(d)$ follows from the fact that $\I(\theta^{(1:\ell-1)};\theta_i|\theta^{(\ell+1:L)},\tilde{\theta}_i, X_0) = 0$, and $(e)$ follows from the chain rule of mutual information.  The result follows by summing for all $t \in \{0, 1, ..., T-1\}$.
\end{proof}

The terms in the RHS quantify the expected prediction error incurred from approximating the weights at layer $\ell$ with $\tilde{\theta}$.  This approximation at layer $\ell$ incurs deviation which will propagate to later layers of the network.  The following result bounds this propagation via an upper bound on the squared Lipschitz constant of the transformer layers.  Note that we use $\|\cdot\|_\sigma$ to denote the \emph{operator norm} of a matrix and $\|\cdot\|_F$ to denote the \emph{frobenius norm}.

\begin{lemma}{\bf (transformer layer Lipschitz)}\label{le:transformer_lipschitz}
    For all $d,r,K,\ell\in\Z_{++}$, if
    $$\tilde{U}^{(\ell)} = {\rm Clip}\left(V^{(\ell)} \tilde{U}^{(\ell-1)} {\rm Attn}^{(\ell)}\left(\tilde{U}^{(\ell-1)}\right)\right),$$
    then
    $$\left\|U^{(\ell)} - \tilde{U}^{(\ell)}\right\|^2_F\ \leq\ 2K\|V^{(\ell)}\|^2_\sigma\left(1 + \frac{4K\|A^{(\ell)}\|^2_{\sigma}}{r}\right)\cdot \left\|U^{(\ell-1)} - \tilde{U}^{(\ell-1)}\right\|^2_F.$$    
\end{lemma}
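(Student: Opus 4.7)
The plan is to chain three Lipschitz-type facts: the clip is a contraction in Frobenius norm (being a columnwise projection onto the Euclidean unit ball), the attention matrix $\mathrm{Attn}^{(\ell)}$ is column-stochastic so has controlled operator norm, and the softmax is $1$-Lipschitz in $L_2$. Throughout, I would abbreviate $U := U^{(\ell-1)}$, $\tilde{U} := \tilde{U}^{(\ell-1)}$, $S := \mathrm{Attn}^{(\ell)}(U)$, $\tilde{S} := \mathrm{Attn}^{(\ell)}(\tilde{U})$, $V := V^{(\ell)}$, $A := A^{(\ell)}$, and exploit $(a+b)^2 \le 2a^2 + 2b^2$ twice.

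First, since the clip operator is 1-Lipschitz in $\|\cdot\|_F$, it suffices to bound $\|VUS - V\tilde{U}\tilde{S}\|_F^2$. Write $VUS - V\tilde{U}\tilde{S} = V(U-\tilde{U})S + V\tilde{U}(S-\tilde{S})$ and apply $\|x+y\|_F^2 \le 2\|x\|_F^2 + 2\|y\|_F^2$. Two structural facts are used repeatedly: (i) since $S$ is column-stochastic, each column has $L_2$ norm $\le 1$, so $\|S\|_\sigma \le \|S\|_F \le \sqrt{K}$; and (ii) since every previous layer was clipped, each column of $U$ and $\tilde{U}$ has $L_2$ norm $\le 1$, so $\|U\|_\sigma, \|\tilde{U}\|_\sigma \le \sqrt{K}$ (the base case $U^{(0)} = \phi_{t-K+1:t}$ holds since each embedding has unit norm by assumption). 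Sub-multiplicativity of $\|\cdot\|_\sigma$ then controls the first summand by $\|V(U-\tilde U)S\|_F^2 \le K\|V\|_\sigma^2 \|U-\tilde U\|_F^2$.

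The second summand requires bounding $\|S - \tilde{S}\|_F$. The columnwise softmax is $1$-Lipschitz in $L_2$: its Jacobian at a probability vector $p$ equals $\mathrm{diag}(p) - pp^\top$, which is PSD with trace $1 - \|p\|_2^2 \le 1$, hence operator norm at most $1$. Therefore
\[
\|S - \tilde S\|_F \ \le\ \tfrac{1}{\sqrt r}\,\|U^\top A U - \tilde U^\top A \tilde U\|_F.
\]
Splitting $U^\top A U - \tilde U^\top A \tilde U = U^\top A(U-\tilde U) + (U-\tilde U)^\top A \tilde U$ and applying $(a+b)^2 \le 2a^2 + 2b^2$ together with the operator-norm bounds on $U,\tilde U$ yields $\|S - \tilde S\|_F^2 \le \frac{4K\|A\|_\sigma^2}{r}\|U-\tilde U\|_F^2$. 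Combining with $\|V\tilde U(S-\tilde S)\|_F^2 \le K\|V\|_\sigma^2 \|S-\tilde S\|_F^2$ gives a second-summand bound of $\frac{4K^2\|A\|_\sigma^2}{r}\|V\|_\sigma^2 \|U-\tilde U\|_F^2$. Adding the two summand bounds and factoring out $2K\|V\|_\sigma^2$ produces exactly the claimed constant.

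The main obstacle is the softmax Lipschitz step: although the PSD-Jacobian argument is short, it is the only non-routine ingredient, and getting a clean $L_2$ constant of $1$ (rather than something larger that would introduce extra factors of $K$ or $r$) is what makes the final bound scale correctly in $r$. Everything else is operator-norm bookkeeping, and the induction hypothesis $\|U_\text{cols}\|_2 \le 1$ is preserved by the clip and so propagates automatically across layers.
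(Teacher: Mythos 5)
Your proposal is correct and follows essentially the same route as the paper's proof: Clip contraction, a two-term split of $VUS-V\tilde U\tilde S$ via $(a+b)^2\le 2a^2+2b^2$, the $1$-Lipschitzness of softmax, and the bounds $\|U\|^2_\sigma,\|\tilde U\|^2_\sigma,\|S\|_F^2\le K$, arriving at exactly the stated constant. The only cosmetic difference is that you work at the matrix level where the paper sums column-by-column, and you supply the PSD-Jacobian justification for the softmax Lipschitz constant that the paper merely asserts.
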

\begin{proof}
    \begin{align*}
        &\ \left\|U^{(\ell)} - \tilde{U}^{(\ell)}\right\|^2_F\\
        & = \left\|\text{Clip}\left(V^{(\ell)}U^{(\ell-1)}\sigma\left(\frac{U^{(\ell-1)\top} A^{(\ell)} U^{(\ell-1)}}{\sqrt{r}}\right)\right) - \text{Clip}\left(V^{(\ell)}\tilde{U}^{(\ell-1)}\sigma\left(\frac{\tilde{U}^{(\ell-1)\top} A^{(\ell)} \tilde{U}^{(\ell-1)}}{\sqrt{r}}\right)\right)  \right\|^2_F\\
        & \overset{(a)}{\leq} \left\|V^{(\ell)}U^{(\ell-1)}\sigma\left(\frac{U^{(\ell-1)\top} A^{(\ell)} U^{(\ell-1)}}{\sqrt{r}}\right) - V^{(\ell)}\tilde{U}^{(\ell-1)}\sigma\left(\frac{\tilde{U}^{(\ell-1)\top} A^{(\ell)} \tilde{U}^{(\ell-1)}}{\sqrt{r}}\right)  \right\|^2_F \\
        & \overset{(b)}{=}
        \sum_{k=1}^{K}\|V^{(\ell)}\|^2_{\sigma} \left\|U^{(\ell-1)}\sigma\left(\frac{U^{(\ell-1)\top} A^{(\ell)} U^{(\ell-1)}_{k}}{\sqrt{r}}\right) - \tilde{U}^{(\ell-1)}\sigma\left(\frac{\tilde{U}^{(\ell-1)\top} A^{(\ell)} \tilde{U}^{(\ell-1)}_k}{\sqrt{r}}\right)\right\|^2_2 \\
        & \leq \sum_{k=1}^{K}2\|V^{(\ell)}\|^2_{\sigma}\left\|U^{(\ell-1)}\sigma\left(\frac{U^{(\ell-1)\top} A^{(\ell)} U^{(\ell-1)}_k}{\sqrt{r}}\right) - \tilde{U}^{(\ell-1)}\sigma\left(\frac{U^{(\ell-1)\top} A^{(\ell)} U^{(\ell-1)}_k}{\sqrt{r}}\right)\right\|^2_2\\
        &\quad + \sum_{k=1}^{K} 2\|V^{(\ell)}\|^2_{\sigma}\left\|\tilde{U}^{(\ell-1)}\sigma\left(\frac{U^{(\ell-1)\top} A^{(\ell)} U^{(\ell-1)}_k}{\sqrt{r}}\right) - \tilde{U}^{(\ell-1)}\sigma\left(\frac{\tilde{U}^{(\ell-1)\top} A^{(\ell)} \tilde{U}^{(\ell-1)}_k}{\sqrt{r}}\right)\right\|^2_2\\
        & \overset{(c)}{\leq} \sum_{k=1}^{K} 2\|V^{(\ell)}\|^2_{\sigma}\|U^{(\ell-1)}-\tilde{U}^{(\ell-1)}\|^2_F\\
        &\quad + \sum_{k=1}^{K}\frac{2K}{r}\|V^{(\ell)}\|^2_{\sigma}\left\|U^{(\ell-1)\top} A^{(\ell)} U^{(\ell-1)}_k - \tilde{U}^{(\ell-1)\top} A^{(\ell)} \tilde{U}^{(\ell-1)}_k \right\|^2_2\\
        & \overset{(d)}{\leq} 2K\|V^{(\ell)}\|^2_\sigma \left\|U^{(\ell-1)}- \tilde{U}^{(\ell-1)}\right\|^2_F + \frac{4K}{r}\sum_{k=1}^{K} \|V^{(\ell)}\|^2_\sigma\left\|U^{(\ell-1)\top} A^{(\ell)} U^{(\ell-1)}_k - U^{(\ell-1)\top} A^{(\ell)} \tilde{U}^{(\ell-1)}_k \right\|^2_2\\
        &\quad + \frac{4K}{r}\sum_{k=1}^{K} \|V^{(\ell)}\|^2_\sigma\left\|U^{(\ell-1)\top} A^{(\ell)} \tilde{U}^{(\ell-1)}_k - \tilde{U}^{(\ell-1)\top} A^{(\ell)} \tilde{U}^{(\ell-1)}_k \right\|^2_2 \\
        & \overset{(e)}{\leq} 2K\|V^{(\ell)}\|^2_\sigma\left\|U^{(\ell-1)}- \tilde{U}^{(\ell-1)}\right\|^2_F + \frac{4K^2}{r}\sum_{k=1}^{K}\|V^{(\ell)}\|^2_\sigma\|A^{(\ell)}\|^2_\sigma \left\|U^{(\ell-1)}_k - \tilde{U}^{(\ell-1)}_k \right\|^2_2\\
        &\quad + \frac{4K}{r}\sum_{k=1}^{K} \|V^{(\ell)}\|^2_\sigma \|A^{(\ell)}\|^2_\sigma \left\|U^{(\ell-1)} - \tilde{U}^{(\ell-1)} \right\|^2_F\\
        & = 2K\|V^{(\ell)}\|^2_\sigma\left\|U^{(\ell-1)}- \tilde{U}^{(\ell-1)}\right\|^2_F + \frac{8K^2}{r}\|V^{(\ell)}\|^2_\sigma\|A^{(\ell)}\|^2_\sigma \left\|U^{(\ell-1)} - \tilde{U}^{(\ell-1)} \right\|^2_F\\
        & = 2K\|V^{(\ell)}\|^2_\sigma\left(1 + \frac{4K\|A^{(\ell)}\|^2_{\sigma}}{r}\right)\cdot \left\|U^{(\ell-1)} - \tilde{U}^{(\ell-1)}\right\|^2_F,
    \end{align*}
    where $(a)$ follows from the fact that Clip is a contraction mapping, where in $(b)$, $U^{(\ell-1)}_k$ denotes the $k$th column of $U^{(\ell-1)}\in\Re^{d\times K}$, $(c)$ follows from the fact $\|\tilde{U}^{(\ell-1)}\|^2_\sigma \leq K$ and the fact that softmax is $1$-Lipschitz, $(d)$ follows from the fact that $(x+z)^2 \leq 2(x-y)^2 + 2(y+z)^2$, and $(e)$ follows from the fact that $\|U^{(\ell-1)}U^{(\ell-1)\top}\|_\sigma \leq K$.
\end{proof}

In the above result, $\tilde{U}^{(\ell)}$ is the result of passing a perturbed input $\tilde{U}^{(\ell-1)}$ through layer $\ell-1$ of the transformer.  The result upper bounds the squared frobenius norm of $U^{(\ell)} - \tilde{U}^{(\ell)}$ via the squared frobenius norm of the input perturbation $U^{(\ell-1)} - \tilde{U}^{(\ell-1)}$.  For our analysis, the perturbed input $\tilde{U}^{(\ell-1)} = {\rm Clip}(\tilde{V}^{(\ell-1)} U^{(\ell-2)} \tilde{{\rm Attn}}^{(\ell-1)}(U^{(\ell-2)}))$, the result of passing the layer $\ell-2$ output $U^{(\ell-2)}$ through the approximation $(\tilde{A}^{(\ell-1)}, \tilde{V}^{(\ell-1)})$.  The following result bounds the expected squared frobenius norm of this perturbation.

\begin{lemma}\label{le:tsfm_dist_part}
    For all $d, r, K \in \Z_{++}$ and $\epsilon \geq 0$, if $V\in\Re^{r\times r}$ consists of elements distributed iid $\normal(0, 1/r)$, $A\in\Re^{r\times r}$ consists of elements distributed $\normal(0, 1)$, for all $\ell \in [L]$,
    $$\tilde{U}^{(\ell)} = {\rm Clip}\left(\tilde{V}^{(\ell)} U^{(\ell-1)} \tilde{{\rm Attn}}^{(\ell)}\left(U^{(\ell-1)}\right)\right),$$
    where for all $\ell < L$, $\E[\|V^{(\ell)}-\tilde{V}^{(\ell)}\|^2_F] \leq \epsilon,\ \E[\|A^{(\ell)}-\tilde{A}^{(\ell)}\|^2_F] \leq \epsilon$, and $\E[\|V^{(L)}-\tilde{V}^{(L)}\|^2_F] \leq \epsilon,\ \E[\|A^{(L)}-\tilde{A}^{(L)}\|^2_F] \leq r\epsilon/d$, then
    $$
        \E\left[\|U^{(\ell)} - \tilde{U}^{(\ell)}\|^2_F\right] \leq 
        \begin{cases}
            2K^2\left(1 + K\right)\epsilon & \text{ if } \ell < L\\
            2K\left(1+K\right)\epsilon & \text{ if } \ell = L\\
        \end{cases}.
    $$
\end{lemma}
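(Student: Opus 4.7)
The plan is to mimic the structure of Lemma~\ref{le:transformer_lipschitz}, except that here the perturbation is in the layer-$\ell$ weights $(V^{(\ell)}, A^{(\ell)}) \to (\tilde V^{(\ell)}, \tilde A^{(\ell)})$ while the input $U^{(\ell-1)}$ is held fixed. Since Clip is a column-wise projection onto the unit ball, it is $1$-Lipschitz in Frobenius norm, so I can strip it off. Inserting the intermediate $\tilde V^{(\ell)} U^{(\ell-1)} {\rm Attn}^{(\ell)}(U^{(\ell-1)})$ and applying $(a-b)^2 \leq 2(a-c)^2 + 2(c-b)^2$, the bound decomposes into
$$2\left\|(V^{(\ell)} - \tilde V^{(\ell)}) U^{(\ell-1)} {\rm Attn}^{(\ell)}(U^{(\ell-1)})\right\|_F^2 + 2\left\|\tilde V^{(\ell)} U^{(\ell-1)}\left({\rm Attn}^{(\ell)}(U^{(\ell-1)}) - \tilde{{\rm Attn}}^{(\ell)}(U^{(\ell-1)})\right)\right\|_F^2.$$

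For the first (pure-$V$) term I would use $\|AB\|_F \leq \|A\|_F \|B\|_\sigma$ together with the observation that each column of ${\rm Attn}^{(\ell)}(U^{(\ell-1)})$ is a probability distribution and each column of $U^{(\ell-1)}$ has $L_2$ norm at most $1$ (by Clip at the previous layer, or by the embedding assumption when $\ell = 1$). This gives $\|U^{(\ell-1)}\|_\sigma^2 \|{\rm Attn}^{(\ell)}(U^{(\ell-1)})\|_\sigma^2 \leq K^2$ generically, with a tighter $O(K)$ bound in the $\ell = L$ case via the convex-combination structure of $U^{(\ell-1)} {\rm Attn}^{(\ell)}(U^{(\ell-1)})$ (each column is a convex combination of unit-norm columns of $U^{(\ell-1)}$). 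Combined with the hypothesis $\E[\|V^{(\ell)} - \tilde V^{(\ell)}\|_F^2] \leq \epsilon$, this yields the $2K^2\epsilon$ and $2K\epsilon$ contributions, respectively.

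For the second ($A$-induced) term, I would invoke the fact that softmax is $1$-Lipschitz in $L_2$ column-wise to obtain
$$\|{\rm Attn}^{(\ell)}(U^{(\ell-1)}) - \tilde{{\rm Attn}}^{(\ell)}(U^{(\ell-1)})\|_F^2 \leq \frac{1}{r}\|U^{(\ell-1)\top}(A^{(\ell)} - \tilde A^{(\ell)}) U^{(\ell-1)}\|_F^2 \leq \frac{K^2}{r}\|A^{(\ell)} - \tilde A^{(\ell)}\|_F^2,$$
and then sandwich it with a $\|\tilde V^{(\ell)}\|_\sigma^2 \|U^{(\ell-1)}\|_\sigma^2$ factor. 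Using independence of $\tilde V^{(\ell)}$ from $(A^{(\ell)}, \tilde A^{(\ell)})$ to factor expectations, and controlling $\E[\|\tilde V^{(\ell)}\|_\sigma^2]$ by $O(r)$ for $\ell < L$ (respectively $O(d)$ for $\ell = L$) via Frobenius/triangle bounds on the Gaussian $V^{(\ell)}$ plus the perturbation budget, the asymmetric choices $\E[\|A^{(\ell)} - \tilde A^{(\ell)}\|_F^2] \leq \epsilon$ vs.\ $r\epsilon/d$ are precisely calibrated so that the $r$'s (or $d$'s) cancel the $1/r$ from the softmax scaling, leaving the desired $2K^3\epsilon$ and $2K^2\epsilon$ contributions.

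The main obstacle will be tracking constants precisely so the two cases close to $2K^2(1+K)\epsilon$ and $2K(1+K)\epsilon$. Saving the extra factor of $K$ in the $\ell = L$ bound requires exploiting both the unit column-norm of $U^{(\ell-1)} {\rm Attn}^{(\ell)}(U^{(\ell-1)})$ in the $V$-term, and a sufficiently tight spectral-norm control on the Gaussian $V^{(L)}$ so that the $d$ inherited from the larger output dimension does not propagate; simultaneously, one must verify that the $1$-Lipschitzness of the column-wise softmax in $L_2$ is invoked at the right place (before applying the submultiplicative $\|U^{(\ell-1)}\|_\sigma$ sandwich). Cleanly factoring expectations via the weight-independence structure is the remaining routine but essential ingredient.
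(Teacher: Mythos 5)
Your plan is essentially the paper's proof: strip off Clip by $1$-Lipschitzness, triangle-split into a $V$-perturbation term and a softmax ($A$-perturbation) term, bound the former by $\E\|V^{(\ell)}-\tilde V^{(\ell)}\|_F^2$ times $\sup_u\|u\,\sigma(\cdot)\|_F^2\le K^2$ (resp.\ $\le K$ at the final layer, where the attention output is a single probability column), and bound the latter via column-wise $1$-Lipschitzness of softmax, with the $1/r$ from the $\sqrt r$ scaling cancelling against $\E\|V^{(\ell)}\|_F^2=r$ (resp.\ $d$, together with the budget $r\epsilon/d$) exactly as you describe; the paper also uses the crude $\|\cdot\|_\sigma\le\|\cdot\|_F$ rather than a sharp Gaussian operator-norm estimate. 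The one substantive divergence is your choice of intermediate point $\tilde V^{(\ell)}U^{(\ell-1)}{\rm Attn}^{(\ell)}(U^{(\ell-1)})$, which puts the \emph{perturbed} matrix $\tilde V^{(\ell)}$ in front of the attention difference: you then need $\E\|\tilde V^{(\ell)}\|_F^2\le r+\epsilon$ rather than the exact $\E\|V^{(\ell)}\|_F^2=r$, which leaves an extra $2K^3\epsilon^2/r$ that strictly overshoots the stated constant $2K^2(1+K)\epsilon$. The paper splits instead at $V^{(\ell)}u\,\sigma(u^\top\tilde A^{(\ell)}u/\sqrt r)$, so the true $V^{(\ell)}$ multiplies the attention difference and the perturbed $\tilde A^{(\ell)}$ sits harmlessly inside the softmax (whose output is a probability vector regardless); swapping your intermediate point accordingly closes the constants with no other change.
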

\begin{proof}
    \begin{align*}
        &\ \E\left[\left\|U^{(\ell)} - \tilde{U}^{(\ell)}\right\|^2_F\right]\\
        & \leq \E\left[\sup_{u \in\mathcal{U}}\ \left\|V^{(\ell)} u {\rm Attn}^{(\ell)}(u) - \tilde{V}^{(\ell)} u \tilde{{\rm Attn}}^{(\ell)}(u)\right\|^2_F\right]\\
        & = \E\left[\sup_{u\in\mathcal{U}}\left\|V u \sigma\left(\frac{u^\top A u}{\sqrt{r}}\right) - \tilde{V} u \sigma\left(\frac{u^\top \tilde{A} u}{\sqrt{r}}\right)\right\|^2_F\right]\\
        & \overset{(a)}{\leq} 2\E\left[\sup_{u\in\mathcal{U}}\left\|\left(V - \tilde{V}\right) u \sigma\left(\frac{u^\top \tilde{A} u}{\sqrt{r}}\right)\right\|^2_F\right] + 2\E\left[\sup_{u\in\mathcal{U}} \left\|Vu\left(\sigma\left(\frac{u^\top A u}{\sqrt{r}}\right) 
 - \sigma\left(\frac{u^\top \tilde{A} u}{\sqrt{r}}\right)\right)\right\|^2_F\right]\\
        & \overset{(b)}{\leq} 2\E\left[\sup_{u\in\mathcal{U}}\left\|V- \tilde{V}\right\|^2_F \left\| u \sigma\left(\frac{u^\top \tilde{A} u}{\sqrt{r}}\right) \right\|^2_F\right] + 2\E\left[\sup_{u\in\mathcal{U}} \left\|V\right\|^2_F\left\|u\sigma\left(\frac{u^\top A u}{\sqrt{r}}\right) 
 - u\sigma\left(\frac{u^\top \tilde{A} u}{\sqrt{r}}\right)\right\|^2_F\right]\\
        & \overset{(c)}{\leq} 2\epsilon \cdot \sup_{u\in\mathcal{U}} \|u\|^2_F \cdot \left\|\sigma\left(\frac{u^\top \tilde{A} u}{\sqrt{r}}\right)\right\|^2_F + 2\E\left[\|V\|^2_F\cdot \sup_{u\in\mathcal{U}} \|u\|^2_F\left\|\sigma\left(\frac{u^\top A u}{\sqrt{r}}\right) 
 - \sigma\left(\frac{u^\top \tilde{A} u}{\sqrt{r}}\right)\right\|^2_F\right]\\
        & \overset{(d)}{\leq} 2\epsilon K^2 + 2\E\left[Kr\cdot \sup_{u\in\mathcal{U}} \left\|\frac{u^\top A u}{\sqrt{r}} 
 - \frac{u^\top \tilde{A} u}{\sqrt{r}}\right\|^2_F\right]\\
        & \overset{(e)}{=} 2\epsilon K^2 + 2K\cdot \E\left[\sup_{u\in\mathcal{U}}\sum_{i=1}^{K}\sum_{j=1}^{K}\left(u_i^\top(A-\tilde{A}) u_j\right)^2 \right]\\
        & \leq 2\epsilon K^2 + 2K\cdot \E\left[\sum_{i=1}^{K}\sum_{j=1}^{K}\left\|A-\tilde{A} \right\|^2_F \right]\\
        & = 2K^2\epsilon + 2K^3\epsilon,
    \end{align*}
    where $(a)$ follows from the fact that $\|a + b\|^2_F \leq 2\|a\|^2_F + 2\|b\|^2_F$ for all matrices $a, b$, $(b)$ follows from the fact that $\|ab\|^2_F \leq \|a\|^2_\sigma\|b\|^2_F$ and $\|a\|^2_\sigma \leq \|a\|^2_F$ for all matrices $a, b$, $(c)$ follows from the fact that $\E\left[\|V - \tilde{V}\|^2_F\right] = \epsilon$, $(d)$ follows from the fact that $\E[\| V\|^2_F] = r$, and the fact that softmax is $1$-Lipschitz, and where in $(e)$, $x_i$ denotes the $i$th column of matrix $x$.

    For $\ell = L$,
    \begin{align*}
        &\ \E\left[\left\|U^{(L)} - \tilde{U}^{(L)}\right\|^2_F\right]\\
        & \leq \E\left[\sup_{u \in\mathcal{U}}\ \left\|V^{(L)} u {\rm Attn}^{(L)}(u)[-1] - \tilde{V}^{(L)} u \tilde{{\rm Attn}}^{(L)}(u)[-1]\right\|^2_F\right]\\
        & = \E\left[\sup_{u\in\mathcal{U}}\left\|V^{(L)} u \sigma\left(\frac{u^\top A^{(L)} u[-1]}{\sqrt{r}}\right) - \tilde{V}^{(L)} u \sigma\left(\frac{u^\top \tilde{A}^{(L)} u[-1]}{\sqrt{r}}\right)\right\|^2_F\right]\\
        & \overset{(a)}{\leq} 2\E\left[\sup_{u\in\mathcal{U}}\left\|\left(V^{(L)} - \tilde{V}^{(L)}\right) u \sigma\left(\frac{u^\top \tilde{A}^{(L)} u[-1]}{\sqrt{r}}\right)\right\|^2_F\right]\\
        &\quad + 2\E\left[\sup_{u\in\mathcal{U}} \left\|V^{(L)}u\left(\sigma\left(\frac{u^\top A^{(L)} u[-1]}{\sqrt{r}}\right) 
 - \sigma\left(\frac{u^\top \tilde{A}^{(L)} u[-1]}{\sqrt{r}}\right)\right)\right\|^2_F\right]\\
        & \overset{(b)}{\leq} 2\E\left[\sup_{u\in\mathcal{U}}\left\|V^{(L)}- \tilde{V}^{(L)}\right\|^2_F \left\| u \sigma\left(\frac{u^\top \tilde{A}^{(L)} u[-1]}{\sqrt{r}}\right) \right\|^2_F\right]\\
        &\ + 2\E\left[\sup_{u\in\mathcal{U}} \left\|V^{(L)}\right\|^2_F\left\|u\sigma\left(\frac{u^\top A^{(L)} u[-1]}{\sqrt{r}}\right) 
 - u\sigma\left(\frac{u^\top \tilde{A}^{(L)} u[-1]}{\sqrt{r}}\right)\right\|^2_F\right]\\
        & \overset{(c)}{\leq} 2\epsilon \cdot \sup_{u\in\mathcal{U}} \|u\|^2_F \cdot \left\|\sigma\left(\frac{u^\top \tilde{A}^{(L)} u[-1]}{\sqrt{r}}\right)\right\|^2_F\\
        &\ + 2\E\left[\|V^{(L)}\|^2_F\cdot \sup_{u\in\mathcal{U}} \|u\|^2_F\left\|\sigma\left(\frac{u^\top A^{(L)} u[-1]}{\sqrt{r}}\right) 
 - \sigma\left(\frac{u^\top \tilde{A}^{(L)} u[-1]}{\sqrt{r}}\right)\right\|^2_F\right]\\
        & \overset{(d)}{\leq} 2\epsilon K + 2\E\left[Kd\cdot \sup_{u\in\mathcal{U}} \left\|\frac{u^\top A^{(L)} u[-1]}{\sqrt{r}} 
 - \frac{u^\top \tilde{A}^{(L)} u[-1]}{\sqrt{r}}\right\|^2_F\right]\\
        & \overset{(e)}{=} 2\epsilon K + \frac{2Kd}{r}\cdot \E\left[\sup_{u\in\mathcal{U}}\sum_{i=1}^{K}\left(u_i^\top(A^{(L)}-\tilde{A}^{(L)}) u[-1]\right)^2 \right]\\
        & \leq 2\epsilon K + \frac{2Kd}{r}\cdot \E\left[\sum_{i=1}^{K}\left\|A^{(L)}-\tilde{A}^{(L)} \right\|^2_F \right]\\
        & = 2K\epsilon + 2K^2\epsilon,
    \end{align*}
    where $(a)$ follows from the fact that $\|a + b\|^2_F \leq 2\|a\|^2_F + 2\|b\|^2_F$ for all matrices $a, b$, $(b)$ follows from the fact that $\|ab\|^2_F \leq \|a\|^2_\sigma\|b\|^2_F$ and $\|a\|^2_\sigma \leq \|a\|^2_F$ for all matrices $a, b$, $(c)$ follows from the fact that $\E\left[\|V - \tilde{V}\|^2_F\right] = \epsilon$, $(d)$ follows from the fact that $\E[\| V\|^2_F] = d$, and the fact that softmax is $1$-Lipschitz, and where in $(e)$, $u_i$ denotes the $i$th column of matrix $u$.
\end{proof}

With these preliminary results in place, we present the following upper bound on the $T$-horizon distortion.

\begin{lemma}{\bf (transformer $T$-horizon distortion upper bound)}\label{le:transformer_distortion}
    For all $d,r,t,K,L\in\Z_{++}$, and $\ell \in [L]$, if 
    $$\tilde{\theta}^{(\ell)}\ =\ (\tilde{V}^{(\ell)}, \tilde{A}^{(\ell)}) = (V^{(\ell)} + Z^{(\ell)}, A^{(\ell)} + B^{(\ell)}),$$
    where
    $$(Z^{(\ell)}, B^{(\ell)}) \perp (V^{(\ell)}, A^{(\ell)}),\quad Z^{(\ell)}\overset{iid}{\sim}
    \begin{cases}
        \normal(0, \epsilon/r^2) & \text{ if } \ell < L\\
        \normal(0, \epsilon/(rd) & \text{ if } \ell = L\\
    \end{cases};\quad B^{(\ell)}\overset{iid}{\sim}
        \normal(0, \epsilon/r^2),$$
    then
    $$\frac{1}{T}\sum_{t=0}^{T-1}\E\left[\KL(P^*_t \| \tilde{P}_t)\right]\ \leq \ \frac{1}{T} \sum_{t=0}^{T-1}\epsilon KL(t+1)\left(8K(1+16K)\right)^{L}.$$
\end{lemma}
\begin{proof}
    For all $d,r,t,K,L \in \Z_{++}$, $0 \leq \epsilon$, and $\ell \in [L]$, let 
    $$\tilde{\theta}^{(\ell)}\ =\ (\tilde{V}^{(\ell)}, \tilde{A}^{(\ell)}) = (V^{(\ell)} + Z^{(\ell)}, A^{(\ell)} + B^{(\ell)}),$$
    where
    $$(Z^{(\ell)}, B^{(\ell)}) \perp (V^{(\ell)}, A^{(\ell)}),\quad Z^{(\ell)}\overset{iid}{\sim}
    \begin{cases}
        \normal(0, \epsilon/r^2) & \text{ if } \ell < L\\
        \normal(0, \epsilon/(rd) & \text{ if } \ell = L\\
    \end{cases};\quad B^{(\ell)}\overset{iid}{\sim}
        \normal(0, \epsilon/r^2).$$

    Then,
    \begin{align*}
        \frac{1}{T}\sum_{t=0}^{T-1}\E\left[\KL(P^*_t \| \tilde{P}_t)\right]
        & = \frac{1}{T}\sum_{t=0}^{T-1}\sum_{\ell=1}^{L}\ \I(X_{t+1};\theta^{(\ell)}|\tilde{\theta}^{(1:L)},\theta^{(\ell+1:L)}, H_t)\\
        & \overset{(a)}{\leq} \frac{1}{T}\sum_{t=0}^{T-1}\sum_{\ell=1}^{L}\ \I(H_{t+1};\theta^{(\ell)}|\theta^{(\ell+1:L)},\theta^{(1:\ell-1)}, \tilde{\theta}^{(\ell)}, X_0)\\
        & = \frac{1}{T}\sum_{t=0}^{T-1} \sum_{\ell=1}^{L} \sum_{k=0}^{t}\  \I(X_{k+1};\theta^{(\ell)}|\theta^{(\ell+1:L)}, \theta^{(1:\ell-1)}, \tilde{\theta}^{(\ell)}, H_k)\\
        & = \frac{1}{T}\sum_{t=0}^{T-1} \sum_{\ell=1}^{L} \sum_{k=0}^{t}\ \E\left[\KL\left(\Pr(X_{k+1}\in\cdot|\theta^{(1:L)}, H_k)\ \|\ \Pr(X_{k+1}\in\cdot|\theta^{(1:L-1)}, \tilde{\theta}^{(L)}, H_k)\right)\right]
    \end{align*}
    where $(a)$ follows from Lemma \ref{le:seq_real_input_inequality}.  We now upper bound each term in the final equation above.

    We begin with the base case $\ell=L$.  For all $t$,
    \begin{align*}
        & \E\left[\KL\left(\Pr(X_{t+1}\in\cdot|\theta^{(1:L)}, H_t)\ \|\ \Pr(X_{t+1}\in\cdot|\theta^{(1:L-1)}, \tilde{\theta}^{(L)}, H_t)\right)\right]\\
        & \overset{(a)}{\leq} \E\left[\KL\left(\Pr(X_{t+1}\in\cdot|\theta^{(1:L)}, H_t)\ \|\ \Pr(X_{t+1}\in\cdot|\theta^{(1:L-1)}, \theta^{(L)}\leftarrow\tilde{\theta}^{(L)}, H_t)\right)\right]\\
        & \leq \E\left[\left\| f_{\theta^{(1:L)}}(X_{t:t-K+1}) - f_{\tilde{\theta}^{(L)}}\left(f_{\theta^{(1:L-1)}}(X_{t:t-K+1})\right) \right\|^2_2\right]\\
        & \overset{(b)}{\leq} 2K(1+K)\epsilon,
    \end{align*}
    where $(a)$ follows from Lemma \ref{le:change_measure_ub} and $(b)$ follows from Lemma \ref{le:tsfm_dist_part}.
    
    For $\ell < L$ and for all $t$, we have:
    \begin{align*}
        & \E\left[\KL\left(\Pr\left(X_{t+1}\in\cdot|\theta^{(1:L)}, H_t\right)\ \|\ \Pr\left(X_{t+1}\in\cdot|\theta^{(1:\ell-1)},\theta^{(\ell+1:L)}, \tilde{\theta}^{(\ell)}, H_t\right)\right)\right]\\
        & \overset{(a)}{\leq} \E\left[\KL\left(\Pr\left(X_{t+1}\in\cdot|\theta^{(1:L)}, H_t\right)\ \|\ \Pr\left(X_{t+1}\in\cdot|\theta^{(1:\ell-1)},\theta^{(\ell+1:L)}, \theta^{(\ell)} \leftarrow \tilde{\theta}^{(\ell)}, H_t\right)\right)\right]\\
        & \overset{(b)}{\leq} \E\left[\left\| f_{\theta^{(1:L)}}(X_{t:t-K+1}) - \left(f_{\theta^{(\ell+1:L)}}\circ f_{\tilde{\theta}^{(\ell)}} \circ f_{\theta^{(1:\ell-1)}}\right)(X_{t:t-K+1}) \right\|^2_2\right]\\
        & \overset{(c)}{\leq} \E\left[2K\|V^{(L)}\|^2_\sigma\left(1 + \frac{4K\|A^{(L)}\|^2_{\sigma}}{r}\right)\cdot \left\| f_{\theta^{(1:L-1)}}(X_{t:t-K+1}) - \left(f_{\theta^{(\ell+1:L-1)}}\circ f_{\tilde{\theta}^{(\ell)}} \circ f_{\theta^{(1:\ell-1)}}\right)(X_{t:t-K+1}) \right\|^2_2\right]\\
        & \overset{(d)}{\leq} \E\left[\left(\prod_{l=\ell+1}^{L} 2K\|V^{(l)}\|^2_\sigma\left(1 + \frac{4K\|A^{(l)}\|^2_{\sigma}}{r}\right)\right)\cdot \left\| f_{\theta^{(1:\ell)}}(X_{t:t-K+1}) - \left(f_{\tilde{\theta}^{(\ell)}} \circ f_{\theta^{(1:\ell-1)}}\right)(X_{t:t-K+1}) \right\|^2_2\right]\\
        & \overset{(e)}{\leq} \E\left[\left(\prod_{l=\ell+1}^{L} 2K\|V^{(l)}\|^2_\sigma\left(1 + \frac{4K\|A^{(l)}\|^2_{\sigma}}{r}\right)\right)\cdot 2K^2(1+K)\epsilon\right]\\
        &\overset{(f)}{\leq} \left(8K(1+16K)\right)^{L-\ell}\cdot2K^2(1+K)\epsilon\\
        & \leq \epsilon K \left(8K(1+16K)\right)^{L-\ell+1}
    \end{align*}
    where $(a)$ follows from Lemma \ref{le:change_measure_ub}, $(b)$ follows from Lemma \ref{le:kl_ub}, $(c)$ and $(d)$ follow from Lemma \ref{le:transformer_lipschitz}, $(e)$ follows from Lemma \ref{le:tsfm_dist_part} and the fact that $(V^{(i)}, A^{(i)} \perp (V^{(j)}, A^{(j)}))$ and $(V^{(i)}, A^{(i)} \perp (\tilde{V}^{(j)}, \tilde{A}^{(j)}))$ for $i\neq j$, and $(f)$ follows from the fact that $A^{(L)}\perp V^{(L)}$ and  $\E\left[\|A^{(L)}\|^2_\sigma\right] \leq 4r$ and $\E\left[\|V^{(L)}\|^2_\sigma\right] \leq 4$.

    As a result,
    \begin{align*}
        \frac{1}{T}\sum_{t=0}^{T-1}\E\left[\KL(P^*_t \| \tilde{P}_t)\right]
        & \leq \frac{1}{T}\sum_{t=0}^{T-1} \sum_{\ell=1}^{L} \sum_{k=0}^{t}\ \E\left[\KL\left(\Pr(X_{k+1}\in\cdot|\theta^{(1:L)}, H_k)\ \|\ \Pr(X_{k+1}\in\cdot|\theta^{(1:L-1)}, \tilde{\theta}^{(L)}, H_k)\right)\right]\\
        & \leq \frac{1}{T}\sum_{t=0}^{T-1} \sum_{\ell=1}^{L} \sum_{k=0}^{t}\ \epsilon K \left(8K(1+16K)\right)^{L-\ell+1}\\
        & \leq \frac{1}{T}\sum_{t=0}^{T-1} \epsilon K L (t+1)\left(8K(1+16K)\right)^{L-\ell+1}.
    \end{align*}
\end{proof}

We conclude this section with an upper bound for the $T$-horizon rate-distortion upper bound.

\begin{lemma}{\bf (transformer $T$-horizon rate-distortion upper bound)}\label{le:tsfm_rd}
    For all $d,r,t,K,L\in\Z_{++}$, if for all $t$, $X_{t}$ is generated by the transformer process, then
    $$\H_{\epsilon, T}(\theta^{(1:L)})\ \leq\ r\cdot \max\{r,d\}L\ln\left(1 + \frac{r\cdot \max\{r,d\}KLT(8K(1+16K))^L }{\epsilon}\right).$$
\end{lemma}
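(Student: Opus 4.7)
The plan is to mirror the strategy used for the deep neural network bound in Theorem \ref{th:nn_rd_bound}: construct an explicit Gaussian-perturbation compression $\tilde\theta^{(1:L)}$ of $\theta^{(1:L)}$ whose noise scale is calibrated so that the time-averaged conditional distortion falls below $\epsilon$, and then upper bound its rate via the maximum differential entropy inequality (Lemma \ref{le:max_entropy}).

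First I would set $\tilde V^{(\ell)} = V^{(\ell)} + Z^{(\ell)}$ and $\tilde A^{(\ell)} = A^{(\ell)} + B^{(\ell)}$ using precisely the Gaussian noise distributions prescribed in the hypothesis of the preceding transformer distortion lemma, with noise scale $\epsilon'>0$ to be chosen, and all noise independent of everything else. This immediately gives $\tilde\theta^{(1:L)} \perp H_\infty \mid \theta^{(1:L)}$, so membership in $\tilde\Theta_{\epsilon,T}$ reduces to checking the distortion constraint. Using the chain rule and the transformer distortion upper bound lemma at each $t$,
\begin{align*}
\frac{\I(H_T;\theta^{(1:L)}\mid\tilde\theta^{(1:L)})}{T}
&= \frac{1}{T}\sum_{t=0}^{T-1}\I(X_{t+1};\theta^{(1:L)}\mid\tilde\theta^{(1:L)},H_t) \\
&\leq \epsilon'\,KL\,\frac{T+1}{2}\,\bigl(8K(1+16K)\bigr)^{L}.
\end{align*}
Choosing $\epsilon' = \epsilon / \bigl(KLT(8K(1+16K))^L\bigr)$ renders this at most $\epsilon$ (for $T\ge 1$), placing $\tilde\theta^{(1:L)}$ in $\tilde\Theta_{\epsilon,T}$.

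Next I would bound the rate. By independence across layers and between $V^{(\ell)}$ and $A^{(\ell)}$ within each layer,
$$\I(\theta^{(1:L)};\tilde\theta^{(1:L)}) = \sum_{\ell=1}^{L}\bigl[\I(V^{(\ell)};\tilde V^{(\ell)}) + \I(A^{(\ell)};\tilde A^{(\ell)})\bigr].$$
Applying Lemma \ref{le:max_entropy} entrywise: each of the $r^2$ entries of an interior $V^{(\ell)}$ has variance $1/r$ against noise variance $\epsilon'/r^2$, yielding a summand at most $\tfrac{r^2}{2}\ln(1+r/\epsilon')$; the $dr$ entries of $V^{(L)}$ have per-coordinate variance $1/r$ (rows uniform on $\sphere$) against noise variance $\epsilon'/(rd)$, yielding $\tfrac{dr}{2}\ln(1+d/\epsilon')$; and each $A^{(\ell)}\in\Re^{r\times r}$ has unit-variance entries against noise variance $\epsilon'/r^2$, yielding $\tfrac{r^2}{2}\ln(1+r^2/\epsilon')$. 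Each of the $2L$ summands is bounded crudely by $\tfrac{r\max\{r,d\}}{2}\ln\!\bigl(1+r\max\{r,d\}/\epsilon'\bigr)$, so the total rate is at most $r\max\{r,d\}L\ln\!\bigl(1+r\max\{r,d\}/\epsilon'\bigr)$. Substituting the chosen $\epsilon'$ yields exactly the stated bound.

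The main obstacles are bookkeeping rather than conceptual. One subtlety is the asymmetric scaling that couples the three noise variances to $\epsilon'$ correctly so that the hypotheses of the transformer distortion upper bound lemma are met with the same parameter $\epsilon'$; this is forced by the lemma's requirement $\E[\|V^{(L)}-\tilde V^{(L)}\|_F^2]\le \epsilon'$, which constrains per-entry variance to $\epsilon'/(rd)$ rather than $\epsilon'/r^2$. A second subtlety is that the prior on $V^{(L)}$ is supported on the product of spheres and has no Lebesgue density, so Lemma \ref{le:max_entropy} is applied not to $V^{(L)}$ itself but to $\tilde V^{(L)} = V^{(L)} + Z^{(L)}$, which does possess a density; one bounds $\diffentropy(\tilde V^{(L)})-\diffentropy(Z^{(L)})$ by controlling only the covariance of $\tilde V^{(L)}$, which is finite by the unit-norm constraint. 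Finally, the factor $(t+1)$ in the distortion lemma produces the linear $T$ (not $T^2$) inside the logarithm only after the averaging $\tfrac{1}{T}\sum_{t=0}^{T-1}(t+1)=\tfrac{T+1}{2}$, which is exactly absorbed by the $KLT$ factor in the final bound.
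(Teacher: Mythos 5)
Your proposal is correct and follows essentially the same route as the paper's proof: the identical Gaussian-perturbation compression with the same layer-dependent noise scales, the same choice $\epsilon' = \epsilon/(KLT(8K(1+16K))^L)$, distortion verified via the transformer distortion lemma summed over $t$, and the rate bounded layer-by-layer with Lemma \ref{le:max_entropy}. The only (immaterial) difference is that you evaluate $\frac{1}{T}\sum_{t=0}^{T-1}(t+1)$ exactly as $\frac{T+1}{2}$ where the paper crudely bounds it by $T$; both land within the stated bound.
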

\begin{proof}
    For all $\ell \in [L]$, let
    $$\tilde{\theta}^{(\ell)}\ =\ (\tilde{V}^{(\ell)}, \tilde{A}^{(\ell)}) = (V^{(\ell)} + Z^{(\ell)}, A^{(\ell)} + B^{(\ell)}),$$
    where
    $$(Z^{(\ell)}, B^{(\ell)}) \perp (V^{(\ell)}, A^{(\ell)}),\quad Z^{(\ell)}\overset{iid}{\sim}
    \begin{cases}
        \normal(0, \epsilon'/r^2) & \text{ if } \ell < L\\
        \normal(0, \epsilon'/(rd) & \text{ if } \ell = L\\
    \end{cases};\quad B^{(\ell)}\overset{iid}{\sim}
        \normal(0, \epsilon'/r^2).$$
    Let $\epsilon' = \frac{\epsilon}{KLT(8K(1+16K))^L}$.  Then,
    \begin{align*}
        \I(\theta^{(1:L)};\tilde{\theta}^{(1:L)})
        & = \diffentropy(\tilde{\theta}^{(1:L)}) - \diffentropy(\tilde{\theta}^{(1:L)}|\theta^{(1:L)})\\
        & = \sum_{\ell=1}^{L} \diffentropy(\tilde{\theta}^{(\ell)}) - \diffentropy(\tilde{\theta}^{(\ell)}|\theta^{(\ell)})\\
        & = \sum_{\ell=1}^{L}\left(\diffentropy(\tilde{V}^{(\ell)}) - \diffentropy(\tilde{V}^{(\ell)}|V^{(\ell)}) + \diffentropy(\tilde{A}^{(\ell)}) - \diffentropy(\tilde{A}^{(\ell)}|A^{(\ell)}) \right)\\
        & \leq \sum_{\ell=1}^{L-1}\left(\frac{r^2}{2}\ln\left(2\pi e\left(\frac{1}{r} + \frac{\epsilon'}{r^2}\right)\right) - \frac{r^2}{2}\ln\left(2\pi e\left(\frac{\epsilon'}{r^2}\right)\right) \right)\\
        & + \left(\frac{rd}{2}\ln\left(2\pi e\left(\frac{1}{r} + \frac{\epsilon'}{rd}\right)\right) - \frac{rd}{2}\ln\left(2\pi e\left(\frac{\epsilon'}{rd}\right)\right) \right)\\
        & + \sum_{\ell=1}^{L}\left(\frac{r^2}{2}\ln\left(2\pi e\left(1 + \frac{\epsilon'}{r^2}\right)\right) - \frac{r^2}{2}\ln\left(2\pi e\left(\frac{\epsilon'}{r^2}\right)\right) \right)\\
        & \leq \frac{Lr\cdot\max\{r,d\}}{2}\ln\left(1 + \frac{\max\{r, d\} KLT(8K(1+16K))^L)}{\epsilon}\right)\\
        &\quad + \frac{Lr^2}{2}\ln\left(1 + \frac{r^2KLT(8K(1+16K)))^L}{\epsilon}\right)\\
        & \leq Lr\cdot \max\{r,d\}\ln\left(1 + \frac{r\cdot \max\{r,d\}KLT(8K(1+16K))^L }{\epsilon}\right),
    \end{align*}
    We now verify that the distortion is $\leq \epsilon.$
    \begin{align*}
        \frac{1}{T}\sum_{t=0}^{T-1}\E\left[\KL(P^*_t\|\tilde{P}_t)\right]
        & = \frac{1}{T}\sum_{t=0}^{T-1}\I(X_{t+1};\theta^{(1:L)}|\tilde{\theta}^{(1:L)}, H_t)\\
        & \overset{(a)}{\leq} \frac{1}{T}\sum_{t=0}^{T-1} \epsilon' KL(t+1)\left(8K(1+16K)\right)^{L}\\
        & \leq \epsilon' KLT\left(8K(1+16K)\right)^{L}\\
        & = \epsilon,
    \end{align*}
    where $(a)$ follows from Lemma \ref{le:transformer_distortion}.
\end{proof}

\subsubsection{Main Result}

We now present an upper bound on the optimal expected error when the data is generated by the transformer process.

\begin{theorem}{\bf (transformer error upper bound)}
    For all $d,r,L,K,T \in \Z_{++}$, if for all $t \in \Z_{+}$, $X_t$ is generated according to the transformer process, then
    $$\Lc_T\ \leq\ \frac{r\cdot\max\{r,d\}L^2\ln(8eK(1+16K))}{T} + \frac{r\cdot\max\{r,d\}L\ln\left(\frac{2KT^2}{L}\right)}{T}.$$
\end{theorem}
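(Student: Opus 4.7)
The plan is to invoke Theorem \ref{th:rd_bounds} together with the rate-distortion bound for the transformer from Lemma \ref{le:tsfm_rd} and then make a careful choice of $\epsilon$. Write $M = r\cdot\max\{r,d\}$ for brevity. From Theorem \ref{th:rd_bounds} we have
$$\Lc_T \;\leq\; \inf_{\epsilon \geq 0}\; \frac{\H_{\epsilon,T}(\theta^{(1:L)})}{T} + \epsilon,$$
and Lemma \ref{le:tsfm_rd} gives
$$\H_{\epsilon,T}(\theta^{(1:L)}) \;\leq\; ML\,\ln\!\left(1 + \frac{MKLT(8K(1+16K))^L}{\epsilon}\right).$$

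The key step is picking $\epsilon = ML^2/T$. With this choice, the argument of the logarithm simplifies to $1 + KT^2(8K(1+16K))^L/L$, which, since the second term dominates $1$, can be bounded above by $2KT^2(8K(1+16K))^L/L$. Splitting the logarithm into its two factors gives an $L\ln(8K(1+16K))$ contribution (multiplied by $ML/T$, producing the $ML^2\ln(8K(1+16K))/T$ term) and a $\ln(2KT^2/L)$ contribution (producing exactly the second term of the target bound). The residual $\epsilon = ML^2/T$ is then absorbed into the first term by noting $\ln(8K(1+16K)) + 1 = \ln(8eK(1+16K))$, which upgrades the constant inside the logarithm to include the factor of $e$ promised by the theorem statement.

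There is no genuine obstacle here; all the heavy lifting has been carried out in Lemma \ref{le:tsfm_rd} (which in turn rests on the layerwise Lipschitz estimate of Lemma \ref{le:transformer_lipschitz} and the distortion decomposition of Lemma \ref{le:seq_real_input_inequality}). The only thing requiring a bit of attention is the bookkeeping on the logarithm: one must use $\ln(1+x) \leq \ln(2x)$ (valid since $KT^2(8K(1+16K))^L/L \geq 1$ for the regime of interest) and then cleanly separate the $L$-dependent and $T$-dependent factors. The absorption $\epsilon + \frac{ML^2\ln(8K(1+16K))}{T} = \frac{ML^2\ln(8eK(1+16K))}{T}$ uses $L \geq 1$ and the identity $\ln e = 1$, and is where the constant $e$ in the final bound originates. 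No further estimates are needed beyond these algebraic manipulations.
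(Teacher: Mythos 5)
Your proposal is correct and follows essentially the same route as the paper: apply Theorem \ref{th:rd_bounds}, plug in Lemma \ref{le:tsfm_rd}, set $\epsilon = r\cdot\max\{r,d\}L^2/T$, bound $1+x$ by $2x$, split the logarithm into the $L\ln(8K(1+16K))$ and $\ln(2KT^2/L)$ pieces, and absorb the residual $\epsilon$ into the first term to produce the factor of $e$. No discrepancies.
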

\begin{proof}
    \begin{align*}
        \Lc_T
        & \overset{(a)}{\leq} \inf_{\epsilon\geq 0}\ \frac{\H_{\epsilon, T}(\theta^{(1:L)})}{T} + \epsilon\\
        & \overset{(b)}{\leq} \inf_{\epsilon\geq 0}\ \frac{r\cdot \max\{r,d\}L\ln\left(1 + \frac{r\cdot \max\{r,d\}KLT(8K(1+16K))^L }{\epsilon}\right)}{T} + \epsilon\\
        & \overset{(c)}{\leq} \frac{r\cdot\max\{r, d\}L\ln\left(1 + \frac{r\cdot\max\{r,d\}KLT(8K(1+16K))^L}{\frac{r\cdot\max\{r,d\}L^2}{T}}\right)}{T} + \frac{r\cdot\max\{r,d\}L^2}{T}\\
        & \leq \frac{r\cdot\max\{r, d\}L\ln\left(\frac{2KT^2(8K(1+16K))^L}{L}\right)}{T} + \frac{r\cdot\max\{r,d\}L^2}{T}\\
        & = \frac{r\cdot\max\{r, d\}L^2\ln\left(8eK(1 + 16K)\right)}{T} + \frac{r\cdot\max\{r,d\}L \ln\left(\frac{2KT^2}{L}\right)}{T},
    \end{align*}
    where $(a)$ follows from Theorem \ref{th:rd_bounds}, $(b)$ follows from Lemma \ref{le:tsfm_rd}, $(c)$ follows by setting $\epsilon = r\cdot\max\{r, d\}L^2/T$
\end{proof}

Notably, the result scales linearly in the product of the parameter count of the transformer and the depth of the transformer.  Unlike in the MLP setting, we are unable to eliminate the quadratic depth dependence.  This is due to the fact that it is unknown whether softmax attention obeys the condition that the expected squared lipschitz constant is $\leq 1$.  In this work we upper bounded this lipschitz value by $(2K+8K^2)$ which results in the additional dependence on depth.
\newpage
\begin{summary}
    \begin{itemize}
        \item In this section, we demonstrate that the analytic tools which we developed seamlessly transfer to the setting in which data is no longer iid under some unknown distribution.  To demonstrate concrete use of our tools, we derive error upper bounds for 2 settings involving learning from sequential data: 1) a simple binary AR(K) process, 2) a transformer process.
        \item {\bf (binary AR(K) error upper bound)}
        For all embedding dimensions $d$ and context lengths $K$, if for all $t \in \Z_{+}, X_t$ is generated according to the binary AR(K) process, then for all $T$,
        $$\Lc_{T}\ \leq\ \frac{dK}{2T}\left(1 + \ln\left(1 + \frac{T}{4dK}\right)\right).$$
        \item Consider a sequence which is generated by a transformer with vocabulary size $d$, embedding dimension $r$, depth $L$, and context length $K$.  The following result holds:
        \item {\bf (transformer error upper bound)}
        For all vocabulary sizes $d$, embedding dimensions $r$, depths $L$, and context lengths $K$, if for all $t \in \Z_{+}$, $X_t$ is generated according to the transformer process, then for all $T$,
        $$\Lc_T\ \leq\ \frac{r\cdot\max\{r,d\}L^2\ln(8eK(1+16K))}{T} + \frac{r\cdot\max\{r,d\}L\ln\left(\frac{2KT^2}{L}\right)}{T}.$$
    \end{itemize}
\end{summary}
\newpage

\section{Meta-Learning}

\begin{figure}[H]
    \centering
    \includegraphics[width=0.9\textwidth]{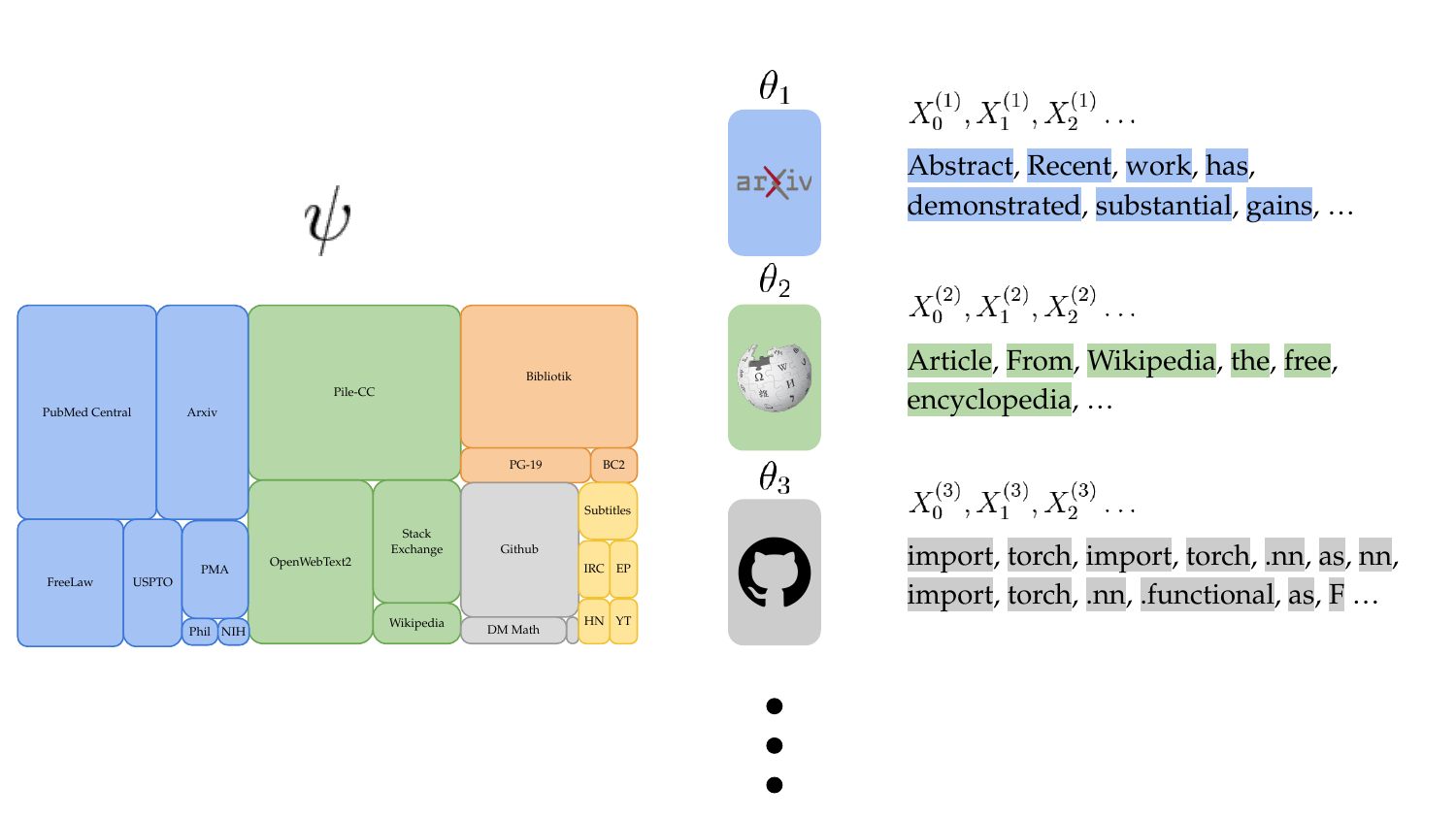}
    \caption{The above diagram depicts the pre-training of large language models as a meta-learning problem.  The meta-parameters $\psi$ specify a distribution over document types (tasks). Each document type (task) $\theta_m$ specifies an autoregressive random process. Each document $(X^{(m)}_0, X^{(m)}_1, X^{(m)}_2,\ldots)$, is represented by a sequence of tokens which is distributed according to the random process designated by the document type.}
    \label{fig:meta-learning}
\end{figure}

\subsection{Data Generating Process}

We now consider data generating processes with hierarchical structure.  There are $M$ data sequences, which we intuitively associate with {\it tasks}.  For each task $m \in [M]$, we denote the sequence by $(X_0^{(m)}, X_1^{(m)}, \ldots)$.  Each $m$th sequence is parameterized by random variable $\theta_m$.  Conditioned on $\psi$, $\theta_1, \ldots, \theta_M$ are iid.

The variable $\psi$, which we refer to as the \emph{meta-parameters}, represents the information that is shared \emph{across} the models.  Conditioned on $\psi$, $(\theta_1, \theta_2, \ldots)$ is iid.  Hence, there is no further information shared between models beyond that present in $\psi$.

As a concrete example, $\theta_1, \theta_2, \ldots, \theta_M$ could be the parameters of language models, each specialized to produce text for different tasks across diverse domain such as medicine, code, mathematics, law, etc.  And $\psi$ could encode the frequencies at which different document types appear in the text corpus used for training.  Then, $(X^{(m)}_0, X^{(m)}_1, X^{(m)}_2,\ldots)$ could be text tokens produced by a model parameterized by $\theta_m$.

\subsection{Meta-Learning Error}

We define notions of error for meta-learning, which directly parallel the definitions of Section \ref{sec:avg_reward}.  We consider learning under a setting for which there exist $M$ tasks and $T$ observations per task.  A learning algorithm produces for each $(m,t) \in [M] \times [T]$, a predictive distribution $P_{m,t}$ of $X_{t+1}^{(m)}$ after observing the concatenated history which we denote by
$$H_{m,t} = \left(X^{(1)}_{0:T}, X^{(2)}_{0:T}, \ldots, X^{(m-1)}_{0:T},X^{(m)}_{0:t}\right).$$
$H_{m,t}$ consists of all observations from tasks $1, \ldots, m-1$ and up to the $t$-th observation of task $m$.  We express our meta-learning algorithm in terms of a function $\pi$ for which $P_{m,t} = \pi(H_{m,t})$.  For all $M, T \in \Z_{++}$, we measure the error realized by our predictions $P_{m,t}$ with respect to the gold-standard clairvoyant prediction $P^*_{m,t} = \Pr(X^{(m)}_{t+1}\in\cdot|\theta, \psi, H_{m,t})$ via the expected KL divergence:
$$\frac{1}{MT} \sum_{m=1}^{M}\sum_{t=0}^{T-1}\E_{\pi} \left[ \KL\left( P^*_{m,t}\|P_{m,t} \right) \right].$$

\subsection{Optimal Achievable Error}

For all $(M,T)$, what $\pi$ minimizes error?  The following result states that the optimal algorithm assigns for all $(m,t)$, $P_{m,t} = \Pr(X^{(m)}_{t+1}\in\cdot|H_{m,t})$.  We will denote this \emph{posterior predictive distribution} as $\hat{P}_{m,t}$.

\begin{lemma}{\bf (posterior predictive distribution is optimal)}\label{le:meta_bayes_opt}
    For all $m,t\in \Z_{+}$,
    $$\E\left[ \KL( P^*_{m,t} \| \hat{P}_{m,t})\right] = \min_{\pi}\ \E_{\pi}\left[\KL(P^*_{m,t} \| P_{m,t})\right].$$
\end{lemma}
\begin{proof}
    \begin{align*}
        \E_{\pi}\left[\KL(P^*_{m,t} \| P_{m,t})\right]
        & = \E_{\pi}\left[\KL(P^*_{m,t} \| \hat{P}_{m,t})\right] + \E\left[ \ln \frac{\hat{P}_{m,t}(X^{(m)}_{t+1})}{P_{m,t}(X^{(m)}_{t+1})}\right]\\
        & = \E_{\pi}\left[\KL(P^*_{m,t} \| \hat{P}_{m,t})\right] + \E\left[\E\left[\ln \frac{\hat{P}_{m,t}(X^{(m)}_{t+1})}{P_{m,t}(X^{(m)}_{t+1})}\bigg|H_{m,t}\right]\right]\\
        & =  \E_{\pi}\left[\KL(P^*_{m,t} \| \hat{P}_{m,t})\right] + \E\left[\KL(\hat{P}_{m,t}\| P_{m,t})\right]
    \end{align*}
    The result follows from the fact that KL divergence is non-negative.
\end{proof}

Going forward, we will restrict our analysis to $\hat{P}_{m,t}$.  We will denote this error as:
$$\Lc_{M,T} = \frac{1}{MT}\sum_{m=1}^{M}\sum_{t=0}^{T-1} \E\left[ \KL(P^*_{m,t}\|\hat{P}_{m,t}) \right].$$
In the following section, we will establish general results which bound error via the total number of observations $M\cdot T$.

\subsection{General Theoretical Results}

We begin with a general result which expresses error in terms of two interpretable quantities.
\begin{restatable}{theorem}{metaSeqBayesError}{\bf (error decomposition)}\label{th:meta_est_error}
    For all $M,T \in \Z_{+}$,
    \begin{align*}
        \Lc_{M,T}
        & = \frac{1}{MT}\sum_{m=1}^{M}\sum_{t=0}^{T-1}\underbrace{\E\left[\KL(\Pr(X^{(m)}_{t+1}\in\cdot|\psi, H_{m,t})\ \|\ \Pr(X^{(m)}_{t+1}\in\cdot|H_{m,t}))\right]}_{{\rm meta\ error}}\\
        &\quad +\frac{1}{MT}\sum_{m=1}^{M}\sum_{t=0}^{T-1}\underbrace{\E\left[\KL(\Pr(X^{(m)}_{t+1}\in\cdot|\theta_m, \psi, H_{m,t})\ \|\ \Pr(X^{(m)}_{t+1}\in\cdot|\psi, H_{m,t}))\right]}_{{\rm intra-task\ error}}.
    \end{align*}
\end{restatable}
\begin{proof}
    \begin{align*}
        & \quad \frac{1}{MT}\sum_{m=1}^{M}\sum_{t=0}^{T-1} \E\left[ \KL(P^*_{m,t}\|\hat{P}_{m,t}) \right]\\
        & = \frac{1}{MT}\sum_{m=1}^{M}\sum_{t=0}^{T-1}\ \I(X_{t+1}^{(m)}; \psi, \theta_m|H_{m,t})\\
        & \overset{(a)}{=} \frac{1}{MT}\sum_{m=1}^{M}\I(X^{(m)}_{0:T};\psi, \theta_m|H_{m-1,T})\\
        & \overset{(b)}{=} \frac{1}{MT}\sum_{m=1}^{M}\I(X^{(m)}_{0:T};\psi|H_{m-1,T}) +\frac{1}{MT}\sum_{m=1}^{M}\I(X^{(m)}_{0:T};\theta_m|\psi, H_{m-1,T})\\
        & \overset{(c)}{=} \frac{1}{MT}\sum_{m=1}^{M}\I(X^{(m)}_{0:T};\psi|H_{m-1,T}) +\frac{1}{MT}\sum_{m=1}^{M}\I(X^{(m)}_{0:T};\theta_m|\psi)\\
        & \overset{(d)}{=} \frac{\I(H_{M,T};\psi)}{MT} + \sum_{m=1}^{M}\frac{\I(X^{(m)}_{0:T};\theta_m|\psi)}{MT}\\
        & = \frac{1}{MT}\sum_{m=1}^{M}\sum_{t=0}^{T-1}\underbrace{\E\left[\KL(\Pr(X^{(m)}_{t+1}\in\cdot|\psi, H_{m,t})\| \Pr(X^{(m)}_{t+1}\in\cdot|H_{m,t}))\right]}_{{\rm meta\ error}}\\
        &\quad +\frac{1}{MT}\sum_{m=1}^{M}\sum_{t=0}^{T-1}\underbrace{\E\left[\KL(\Pr(X^{(m)}_{t+1}\in\cdot|\theta_m, \psi, H_{m,t}) \| \Pr(X^{(m)}_{t+1}\in\cdot|\psi, H_{m,t}))\right]}_{{\rm intra-task\ error}},
    \end{align*}
    where $(a), (b)$, and $(d)$ follow from the chain rule of mutual information, and $(c)$ follows from the meta-learning conditional independence assumptions.
\end{proof}

The meta error represents error which is attributed to learning about the meta-parameters $\psi$.  The second term consists of the error which is incurred from learning about the task specific parameters $\theta_{1}, \ldots, \theta_{M}$ \emph{conditioned} on $\psi$.  This error encompasses the remaining information to be extracted for a particular task beyond what is present in $\psi$.

While this result is useful for conceptual understanding, it is abstract and we require further tools to produce more concrete bounds for specific model classes.  We again leverage rate-distortion theory to establish suitable error bounds.  Concretely, we define two rate-distortion functions.  The first is the $(M,T)$-horizon rate-distortion function for $\psi$.  First let $\tilde{\Psi}$ denote the set of random variables $\tilde{\psi}$ such that for all $m,t$, $X^{(m)}_{t+1}\perp \tilde{\psi}|(\psi, H_{m,t})$.  Then, let
$$\H_{\epsilon, M,T}(\psi) \ =\ \inf_{\tilde{\psi}\in\tilde{\Psi}_{\epsilon, M, T}}\ \I(\psi;\tilde{\psi}),$$
where
$$\tilde{\Psi}_{\epsilon, M, T} \ =\ \left\{\tilde{\psi} \in \tilde{\Psi}: \frac{1}{MT}\sum_{m=1}^{M}\sum_{t=0}^{T-1}\E\left[\KL(\Pr(X^{(m)}_{t+1}\in\cdot|\psi, H_{m,t}) \| \Pr(X^{(m)}_{t+1}\in\cdot|\tilde{\psi}, H_{m,t}))\right]\leq \epsilon\right\}.$$
For the intra-task parameters, let $\tilde{\Theta}_m$ denote the set of random variables $\tilde{\theta}_m$ such that for all $t$, $X^{(m)}_{t+1} \perp \tilde{\theta}_m|(\theta_m, H_{m,t})$.  Then, let
$$\H_{\epsilon, T}(\theta_m|\psi)\ =\ \inf_{\tilde{\theta}_m\in\tilde{\Theta}_{\epsilon, m, T}}\ \I(\theta_m;\tilde{\theta}_m|\psi),$$
where
$$\tilde{\Theta}_{\epsilon, m, T}\ =\ \left\{\tilde{\theta}\in\tilde{\Theta}_m: \frac{1}{T}\sum_{t=0}^{T-1}\frac{\I(X^{(m)}_{0:T};\theta_m|\tilde{\theta}, \psi)}{T} \leq \epsilon\right\}.$$
With these definitions in place, we establish error bounds in terms of the above $(M,T)$-horizon rate-distortion functions.

\begin{theorem}{\bf ($(M,T)$-horizon rate-distortion error bound)}\label{th:meta_rd}
    For all $M, T \in \Z_{+}$, and $m \in \{1, \ldots, M\}$,
    \begin{align*}
        \Lc_{M,T}
        &\ \leq\ \inf_{\epsilon\geq 0}\left(\frac{\H_{\epsilon,M,T}(\psi)}{MT} + \epsilon\right) + \inf_{\epsilon \geq 0}\left(\frac{\H_{\epsilon, T}(\theta_m|\psi)}{T} + \epsilon\right),
    \end{align*}
    and
    \begin{align*}
        \Lc_{M,T}
        &\ \geq\ \sup_{\epsilon\geq 0}\min\left\{\frac{\H_{\epsilon,M,T}(\psi)}{MT}, \epsilon\right\}\quad +\quad \sup_{\epsilon \geq 0}\min\left\{ \frac{\H_{\epsilon, T}(\theta_m|\psi)}{T}, \epsilon\right\}.
    \end{align*}
\end{theorem}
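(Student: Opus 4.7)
The plan is to apply Theorem \ref{th:meta_est_error} to split $\Lc_{M,T}$ into a meta-estimation piece and an intra-task piece, and then bound each piece independently by translating the argument of Theorem \ref{th:rd_bounds} to the appropriate (possibly conditional) setting. By exchangeability of $(\theta_1,\ldots,\theta_M)$ given $\psi$, the quantities $\I(X^{(m)}_{0:T};\theta_m|\psi)$ are identical across $m$, so the intra-task sum in Theorem \ref{th:meta_est_error} collapses to $\I(X^{(m)}_{0:T};\theta_m|\psi)/T$ for any fixed $m$, leaving only two terms to bound.

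For the upper bound on the meta piece, I will mimic the proof of Theorem \ref{th:rd_bounds}. Fix $\epsilon \geq 0$ and any $\tilde\psi \in \tilde\Psi_{\epsilon,M,T}$. The chain rule gives
\[
\I(H_{M,T};\psi)\ =\ \I(H_{M,T};\tilde\psi)\ +\ \I(H_{M,T};\psi|\tilde\psi)\ -\ \I(H_{M,T};\tilde\psi|\psi).
\]
The last term vanishes because $\tilde\psi\perp H_{M,T}|\psi$; the data-processing inequality applied to the Markov chain $\tilde\psi\to\psi\to H_{M,T}$ gives $\I(H_{M,T};\tilde\psi)\leq \I(\psi;\tilde\psi)$; and the distortion constraint bounds the middle term by $MT\epsilon$. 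Dividing by $MT$ and infimizing over $\tilde\psi$ and then $\epsilon$ yields the first summand of the upper bound. The analogous chain of inequalities, carried out \emph{inside} a conditioning on $\psi$, handles the intra-task piece: for $\tilde\theta_m\in\tilde\Theta_{\epsilon',T}$ one writes $\I(X^{(m)}_{0:T};\theta_m|\psi)$ as $\I(X^{(m)}_{0:T};\tilde\theta_m|\psi)+\I(X^{(m)}_{0:T};\theta_m|\tilde\theta_m,\psi)$, applies conditional DPI to the first term and the definition of $\tilde\Theta_{\epsilon',T}$ to the second, and infimizes.

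For the lower bound I use the ``not in the constraint set'' device from the proof of Theorem \ref{th:rd_bounds}. Let $\tilde H_{M,T}$ denote an independent copy of $H_{M,T}$ conditioned on $\psi$, so that $\tilde H_{M,T}\perp H_{M,T}\mid\psi$ and $\I(\psi;\tilde H_{M,T})=\I(\psi;H_{M,T})$. Fix $\epsilon\geq 0$. If $\I(H_{M,T};\psi)<\H_{\epsilon,M,T}(\psi)$, then the choice $\tilde\psi=\tilde H_{M,T}$ has rate strictly below the rate-distortion function, so $\tilde\psi\notin\tilde\Psi_{\epsilon,M,T}$. Since by construction $\tilde\psi$ satisfies the independence part of the constraint, the distortion constraint must be violated, i.e.\ $\I(H_{M,T};\psi|\tilde\psi)/(MT)>\epsilon$. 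Combined with $\I(H_{M,T};\psi)\geq \I(H_{M,T};\psi|\tilde\psi)$ (which follows from the chain rule and $\tilde\psi\perp H_{M,T}|\psi$) this gives $\I(H_{M,T};\psi)/(MT)\geq \min\{\H_{\epsilon,M,T}(\psi)/(MT),\,\epsilon\}$. Supremizing over $\epsilon$ produces the meta half of the lower bound, and an analogous construction with $\tilde X^{(m)}_{0:T}$, an independent copy of $X^{(m)}_{0:T}$ drawn from $\Pr(\cdot|\theta_m,\psi)$, produces the intra-task half.

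The main obstacle is verifying that the auxiliary variables used in the lower-bound step truly satisfy the independence conditions built into $\tilde\Psi_{\epsilon,M,T}$ and $\tilde\Theta_{\epsilon,T}$; the meta case is immediate from the construction, but the intra-task case requires a short check that drawing $\tilde X^{(m)}_{0:T}$ from the task-$m$ generating kernel keyed to $(\theta_m,\psi)$ yields the required conditional independence from the cross-task history, which in turn follows from the conditional-independence structure of the data-generating process in Figure \ref{fig:meta-learning}. Once this is established, both halves reduce cleanly to the template of Theorem \ref{th:rd_bounds}, and the two upper (resp.\ lower) bounds add to give the stated inequality.
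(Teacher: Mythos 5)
Your proposal is correct and follows essentially the same route as the paper's proof: decompose via Theorem \ref{th:meta_est_error}, collapse the intra-task sum by conditional exchangeability, bound each piece with the chain rule plus data-processing exactly as in Theorem \ref{th:rd_bounds}, and obtain the lower bound by taking the candidate compression to be an independent copy of the history. The only cosmetic difference is that you derive $\I(H_{M,T};\psi)\geq\I(H_{M,T};\psi|\tilde\psi)$ directly from the chain rule, whereas the paper argues term-by-term that conditioning reduces entropy; these are equivalent.
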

\begin{proof}
    We begin by showing the upper bound:
    \begin{align*}
        &\quad \frac{1}{MT}\sum_{m=1}^{M}\sum_{t=0}^{T-1}\E\left[\KL(P^*_{m,t}\|\hat{P}_{m,t})\right]\\
        & = \frac{\I\left(H_{M,T};\psi,\theta_{1:m} \right)}{MT}\\
        & = \frac{\I(H_{M,T};\psi)}{MT} + \frac{\I(X^{(m)}_{0:T};\theta_m|\psi)}{T}\\
        & = \frac{\I(H_{M,T};\psi,\tilde{\psi})}{MT} + \frac{\I(X^{(m)}_{0:T};\theta_m,\tilde{\theta}_m|\psi)}{T}\\
        & = \frac{\I(H_{M,T};\tilde{\psi})}{MT} + \frac{\I(H_{M,T};\psi|\tilde{\psi})}{MT} + \frac{\I(X^{(m)}_{0:T};\theta_m,\tilde{\theta}_m|\psi)}{T}\\
        & \overset{(a)}{\leq} \frac{\I(\psi;\tilde{\psi})}{MT} + \frac{\I(H_{M,T};\psi|\tilde{\psi})}{MT} + \frac{\I(X^{(m)}_{0:T};\theta_m,\tilde{\theta}_m|\psi)}{T}\\
        & = \frac{\I(\psi;\tilde{\psi})}{MT} + \frac{\I(H_{M,T};\psi|\tilde{\psi})}{MT} + \frac{\I(X^{(m)}_{0:T};\tilde{\theta}_m|\psi)}{T} + \frac{\I(X^{(m)}_{0:T};\theta_m|\tilde{\theta}_m, \psi)}{T}\\
        & \overset{(b)}{\leq} \frac{\I(\psi;\tilde{\psi})}{MT} + \frac{\I(H_{M,T};\psi|\tilde{\psi})}{MT} + \frac{\I(\theta_m;\tilde{\theta}_m|\psi)}{T} + \frac{\I(X^{(m)}_{0:T};\theta_m|\tilde{\theta}_m, \psi)}{T}\\
        & \overset{(c)}{\leq} \frac{\H_{\epsilon,M,T}(\psi)}{MT} + \epsilon + \frac{\H_{\epsilon',M,T}(\tilde{\theta}_m|\psi)}{T} + \epsilon',
    \end{align*}
    where $(a)$ and $(b)$ follow from the data processing inequality and $(c)$ follows from the definition of the rate-distortion functions.  The upper bound follows from the fact that inequality $(c)$ holds for all nonnegative $\epsilon$ and $\epsilon'$.
    
    We now prove the lower bound.
    Suppose that $\I(H_{M,T};\psi) < \H_{\epsilon,M,T}(\psi)$
    Let $\tilde{\psi} = \tilde{H}_{M,T} \notin \tilde{\Psi}_{\epsilon,M,T}$ where $\tilde{H}_{M,T}$ is another history sampled in the same manner as $H_{M,T}$.
    \begin{align*}
        \I(H_{M,T};\psi)
        & = \sum_{m=1}^{M}\sum_{t=0}^{T-1}\I(X^{(m)}_{t+1};\psi|H_{m,t})\\
        & \overset{(a)}{\geq} \sum_{m=1}^{M}\sum_{t=0}^{T-1}\I(X^{(m)}_{t+1};\psi|\tilde{H}_{M,T}, H_{m,t})\\
        & = \sum_{m=1}^{M}\sum_{t=0}^{T-1}\I(X^{(m)}_{t+1};\psi|\tilde{\psi}, X^{(m)}_1,\ldots, X^{(m)}_t)\\
        & \overset{(b)}{\geq} \epsilon MT,
    \end{align*}
    where $(a)$ follows from the fact that conditioning reduces entropy and that $X^{(m)}_{t+1}\perp \tilde{H}_{M,T}|(\psi, H_{m,t})$ and $(b)$ follows from the fact that $\tilde{\psi} \notin \tilde{\Psi}_{\epsilon,M, T}$.  Therefore, for all $\epsilon \geq 0$, $\I(H_{M,T};\psi) \geq \min\{ H_{\epsilon, M, T}(\psi), \epsilon MT\}$.

    Suppose that $\I(H_{T}^{(m)};\theta_{m}|\psi) < \H_{\epsilon,T}(\theta_{m}|\psi)$.  Let $\tilde{\theta}_{m} = \tilde{D}_m \notin \tilde{\Theta}_{\epsilon,T}$ where $\tilde{D}_{m}$ is another history sampled in the same manner as $X^{(m)}_{0:T}$.
    \begin{align*}
        \I(X^{(m)}_{0:T};\theta_m|\psi)
        & = \sum_{t=0}^{T-1}\I(X^{(m)}_{t+1};\theta_m|X^{(m)}_1,\ldots, X^{(m)}_t, \psi)\\
        & \overset{(a)}{\geq} \sum_{t=0}^{T-1}\I(X^{(m)}_{t+1};\theta_m|\tilde{D}_{m}, X^{(m)}_1,\ldots, X^{(m)}_t, \psi)\\
        & = \sum_{t=0}^{T-1}\I(X^{(m)}_{t+1};\theta_m|\tilde{\theta}_m, H_{m,t},\psi)\\
        & \overset{(b)}{\geq} \epsilon T,
    \end{align*}
    where $(a)$ follows from the fact that conditioning reduces entropy and that $X^{(m)}_{t+1}\perp \tilde{D}_{m}|(\psi, X^{(m)}_1,\ldots,X^{(m)}_t )$ and $(b)$ follows from the fact that $\tilde{\theta}_m \notin \tilde{\Theta}_{\epsilon,T}$.  Therefore, for all $\epsilon \geq 0$, $\I(X^{(m)}_{0:T};\theta_{m}|\psi) \geq \min\{ H_{\epsilon, M, T}(\theta_m), \epsilon T\}$.  The lower bound follows as a result.
\end{proof}
Theorem \ref{th:meta_rd} establishes an intimate connection between the error of meta learning and the aforementioned rate-distortion functions via upper and lower bounds.  This result will allow us to derive concrete error bounds for various meta-learning problems.  In the subsequent sections, we will study two such problems.  The first is a simple linear representation learning problem which is provided as exposition to acclimate the reader to the styles of analysis required for meta-learning.  The second setting involves a mixture of transformers which resembles LLM pre-training.

\subsection{Linear Representation Learning}

\begin{figure}[H]
    \centering
    \includegraphics[width=0.75\textwidth]{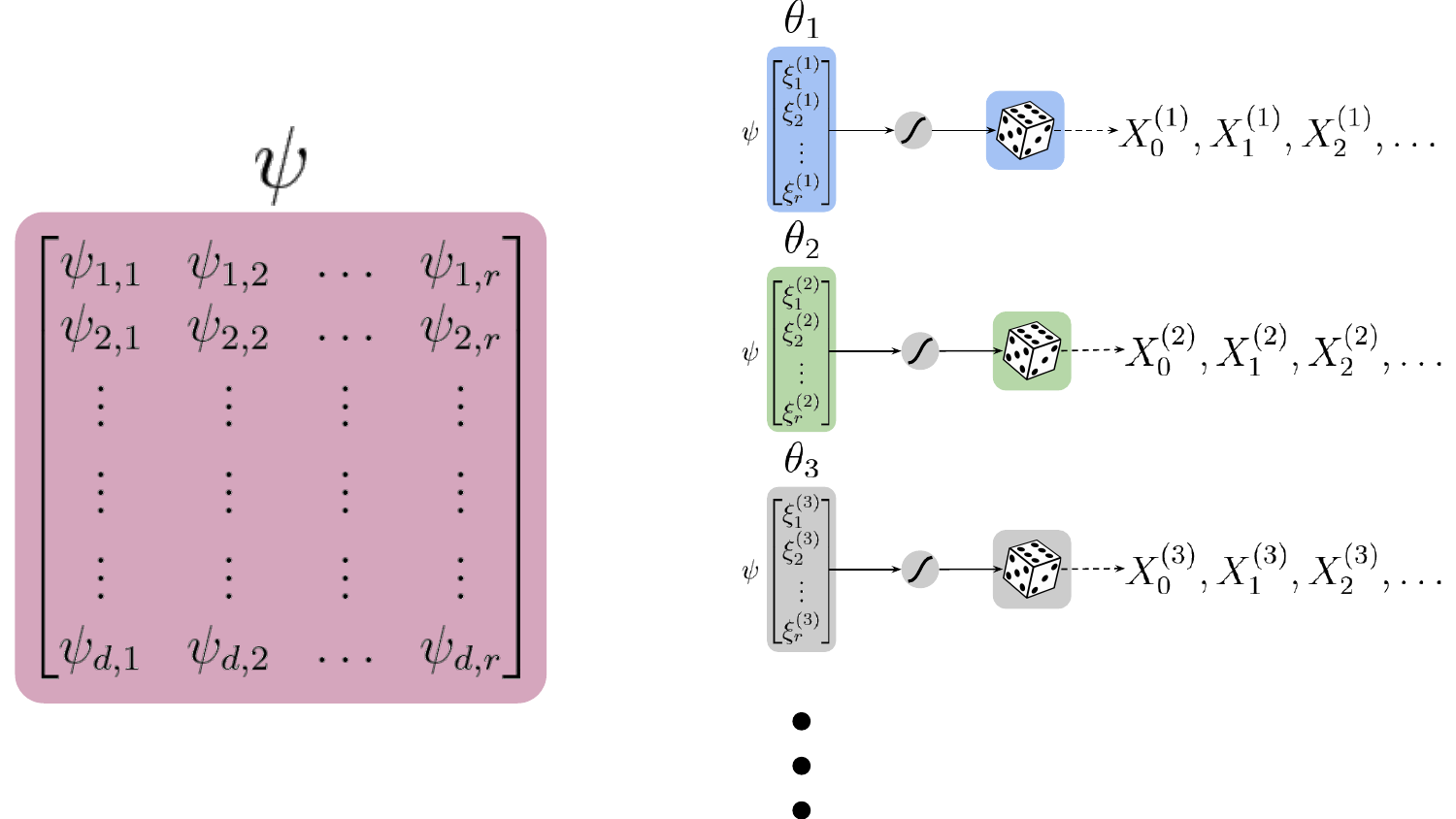}
    \caption{The above diagram depicts the linear representation learning problem.  The matrix $\psi\in\Re^{d\times r}$ represents the information which is shared across the tasks.  The intra-task information $\theta_m$ is the product $\psi \xi^{(m)}$, where $\xi^{(m)}\in\Re^r$.  The resulting vector $\theta_m$ is passed through a softmax to produce a categorical distribution over $d$ classes.  Each corresponding sequence $X^{(m)}_0, X^{(m)}_1, X^{(m)}_2, \ldots$ is produced via iid sampling of the aforementioned categorical distribution.}
    \label{fig:lin-rep-learning}
\end{figure}

\subsubsection{Data Generating Process}
We introduce a simple linear representation learning problem as a concrete example of meta-learning to demonstrate our method of analysis.  In this example, the intra-task observations are iid, but we begin with such an example for simplicity and to demonstrate this as a special case of general meta-learning under our framework.

For all $d,r\in \mathbb{Z}_{++}$, we let $\psi:\Omega\mapsto\Re^{d\times r}$ be distributed uniformly over the set of $d\times r$ matrices with orthonormal columns.  We assume that $d \gg r$.  For all $m$, let $\xi^{(m)}:\Omega\mapsto\Re^{r}$ be distributed iid $\normal(0, I_r/r)$.  We let $\theta_m = \psi\xi^{(m)}$.  For each $(m,t)$, let $X_{t+1}^{(m)}$ be drawn according to the following probability law:
$$X^{(m)}_{t+1} =
\begin{cases}
    1 & \text{w.p. } \sigma(\theta_{m})_1\\
    2 & \text{w.p. } \sigma(\theta_{m})_2\\
    \vdots & \vdots \\
    d & \text{w.p. } \sigma(\theta_{m})_d\\
\end{cases},$$
where $\sigma(\theta_{m})_j = e^{\theta_{m,j}}/\sum_{k=1}^{d} e^{\theta_{m,k}}$.  For each task $m$, the algorithm is tasked with estimating a vector $\theta_m$ from sampled observations $(X_1^{(m)}, X_2^{(m)},\ldots)$.  By reasoning about data from previous tasks, the algorithm can estimate $\psi$, which reduces the burden of estimating $\theta_m$ to just estimating $\xi^{(m)}$ for each task.  This is significant given the assumption that $d \gg r$.

\subsubsection{Preliminary Results}

We begin by establishing several lemmas which streamline our analysis.  We begin with the following upper bound on the rate-distortion function for the meta parameter.

\begin{lemma}{\bf (meta parameter rate-distortion upper bound)}\label{le:meta_rd_ub}
    For all $d,r,M,T\in\Z_{++}$, if for all $(m,t) \in [M]\times[T]$, $X_{t}^{(m)}$ is generated according to the linear representation learning process, then
    $$\H_{\epsilon, M,T}(\psi)\ \leq\ \frac{dr}{2MT}\ln\left(1 + \frac{d}{r\left(e^{\frac{2\epsilon T}{r}}-1\right)}\right).$$
\end{lemma}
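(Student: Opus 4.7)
The plan is to construct a Gaussian-noise compression of $\psi$ and bound its rate via the Gaussian max-entropy bound, while controlling the distortion using the meta-learning conditional-independence structure. I note that the displayed RHS appears to contain a stray $1/(MT)$ factor; the construction below naturally yields a bound of the form $\frac{dr}{2}\ln\bigl(1 + \frac{d}{r(e^{2\epsilon T/r}-1)}\bigr)$, consistent with the rate-distortion flavor of Lemma~\ref{le:lin_reg_rd_ub}.

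First, I would define $\tilde\psi = \psi + V$ with $V \in \Re^{d \times r}$ having entries drawn iid $\normal(0,\delta^2)$ for a $\delta^2$ to be chosen later, and $V \perp (\psi,\xi_{1:M})$. The bulk of the work is bounding the distortion $\I(H_{M,T};\psi\mid\tilde\psi)/(MT)$. I would first pull $\xi_{1:M}$ into the conditioning at no cost: since $\xi_{1:M} \perp \psi$ and $\tilde\psi$ depends only on $(\psi,V)$, we have $\I(\xi_{1:M};\psi\mid\tilde\psi)=0$, which together with the chain rule gives $\I(H_{M,T};\psi\mid\tilde\psi) \leq \I(H_{M,T};\psi\mid \xi_{1:M},\tilde\psi)$. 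Next, using that $X^{(m)}_{1:T} \perp (\xi^{(-m)}, H_{m-1,T}) \mid (\psi,\xi^{(m)})$ (the argument from Lemma~\ref{le:true_input_inequality}/Lemma~\ref{le:seq_real_input_inequality}, i.e.\ conditioning reduces entropy on the first term of the MI and is exact on the second), I would decompose across tasks to obtain $\I(H_{M,T};\psi\mid\xi_{1:M},\tilde\psi) \leq \sum_m \I(X^{(m)}_{1:T};\psi\mid \xi^{(m)},\tilde\psi)$.

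Second, within each task the observations $X^{(m)}_{1:T}$ are iid given $(\psi,\xi^{(m)})$, so the monotonicity argument of Lemma~\ref{le:monotonicity} (with the pair $(\psi,\xi^{(m)})$ playing the role of the latent and $\tilde\psi$ as extra conditioning) reduces the per-task contribution to $T$ copies of a per-sample term: $\I(X^{(m)}_{1:T};\psi\mid\xi^{(m)},\tilde\psi) \leq T\cdot \I(X^{(m)}_1;\psi\mid\xi^{(m)},\tilde\psi)$. The per-sample term is controlled by the change-of-measure inequality (Lemma~\ref{le:change_measure_ub}) and a softmax KL bound: $\I(X^{(m)}_1;\psi\mid\xi^{(m)},\tilde\psi) \leq \E\bigl[\KL(\sigma(\psi\xi^{(m)})\Vert\sigma(\tilde\psi\xi^{(m)}))\bigr]$, which I would bound using that the Hessian of log-sum-exp has operator norm at most one, giving $\tfrac{1}{2}\E[\|V\xi^{(m)}\|_2^2] = \tfrac{1}{2}d\delta^2$ since $\xi^{(m)} \sim \normal(0,I_r/r)$.

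Third, having a distortion bound of the form (per sample) $\tfrac{1}{2}d\delta^2 \leq \epsilon$, I would invert to choose $\delta^2$, then bound the rate using Lemma~\ref{le:max_entropy}. Since $\psi$ is uniform on the Stiefel manifold and its entries all have variance $1/d$, applying the Gaussian max-entropy bound yields $\I(\psi;\tilde\psi) = \diffentropy(\tilde\psi) - \diffentropy(V) \leq \tfrac{dr}{2}\ln\bigl(1 + \tfrac{1}{d\delta^2}\bigr)$. Substituting the chosen $\delta^2$ and simplifying gives the stated form.

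The main obstacle is matching the exponential factor $e^{2\epsilon T/r}$ in the target bound. The crude Hessian-based softmax KL step above yields only a polynomial-in-$\epsilon$ rate $\tfrac{dr}{2}\ln(1+1/(2\epsilon))$, not the much sharper exponential form. Recovering the latter appears to require treating the $T$ intra-task samples \emph{jointly} rather than factoring through monotonicity: one would aggregate them into an (approximately) Gaussian sufficient statistic for $\theta_m = \psi\xi^{(m)}$ with effective noise variance of order $1/T$, and then apply the exact Gaussian computation from Lemma~\ref{le:lin_reg_rd_ub} to the resulting vector-valued linear regression problem relating $\psi$ to the $M$ effective observations via the random "regressors" $\xi^{(m)}$. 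The bookkeeping to make that reduction rigorous---specifically, the passage from softmax likelihoods to an equivalent Gaussian-noise surrogate and the proper accounting of the $r$-dimensional bottleneck imposed by $\xi^{(m)}$---is where the core technical difficulty lies.
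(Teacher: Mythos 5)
Your observation about the stray $1/(MT)$ factor is correct: the paper's own proof bounds $\I(\psi;\tilde{\psi})/(MT)$ by the displayed quantity, so the bound on $\H_{M,T,\epsilon}(\psi)$ itself should read $\tfrac{dr}{2}\ln(\cdot)$, and the subsequent theorem uses it in exactly that way. Your construction $\tilde{\psi} = \psi + V$ with iid Gaussian entries and the rate computation via Lemma \ref{le:max_entropy} also match the paper. The problem is the distortion bound, and you have correctly diagnosed that your argument does not close: bounding each per-sample term by a softmax-Hessian estimate $\tfrac{1}{2}d\delta^2$ forces $\delta^2 = O(\epsilon/d)$ and yields only $\tfrac{dr}{2}\ln(1+\tfrac{1}{2\epsilon})$, which is a strictly weaker statement than the lemma (it loses the entire benefit of meta-learning: the claimed bound vanishes as $T\to\infty$ for fixed $\epsilon$, yours does not). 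So as written the proposal proves a different, weaker lemma.

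The missing idea is not a Gaussian surrogate for the likelihood; it is a data-processing collapse of the \emph{entire} intra-task history onto the task parameter. Conditioned on $\tilde{\psi}$, the stream $X^{(m)}_{0:T}$ depends on $\psi$ only through $\theta_m = \psi\xi^{(m)}$, so
$$\I\bigl(X^{(m)}_{0:T};\psi\mid\tilde{\psi}\bigr)\ \leq\ \I\bigl(\theta_m;\psi\mid\tilde{\psi}\bigr),$$
and the right-hand side \emph{saturates}: $\theta_m$ is an $r$-dimensional projection of $\psi$, so this quantity is at most $\tfrac{r}{2}\ln(1+d\delta^2)$ nats (the paper computes this via Lemma \ref{le:change_measure_ub} and a regularized degenerate-Gaussian KL calculation), \emph{independently of} $T$. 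Since the distortion in $\tilde{\Psi}_{\epsilon,M,T}$ is a time average, the per-task budget becomes $\tfrac{r}{2T}\ln(1+d\delta^2)\leq\epsilon$, which is what licenses the exponentially large noise level $\delta^2 = (e^{2\epsilon T/r}-1)/d$ and hence the stated rate. Your monotonicity-based route $\I(X^{(m)}_{1:T};\cdot)\leq T\cdot\I(X^{(m)}_1;\cdot)$ points the inequality in exactly the direction that destroys this saturation, and no sharpening of the single-sample softmax KL bound can recover it, because a single sample genuinely can carry $\Theta(d\delta^2)$ nats. The repair you sketch (an approximate Gaussian sufficient statistic with noise $\propto 1/T$) is both harder to make rigorous and unnecessary; the Markov chain $\psi \to \theta_m \to X^{(m)}_{0:T}$ does all the work exactly.
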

\begin{proof}
    Let $\tilde{\psi} = \psi + Z$ where $Z\in \Re^{d\times r}$ is independent of $\psi$ and consists of elements which are distributed iid $\normal(0, \epsilon')$ where $\epsilon' = (e^{\frac{2\epsilon T}{r}} -1)/d$
    We begin by upper bounding the rate.
    \begin{align*}
        \frac{\I(\psi;\tilde{\psi})}{MT}
        & = \frac{\diffentropy(\tilde{\psi}) - \diffentropy(\tilde{\psi}|\psi)}{MT}\\
        & \overset{(a)}{\leq} \frac{\frac{dr}{2}\ln\left(2\pi e \left(\epsilon' + \frac{1}{r}\right)\right) - \frac{dr}{2}\ln\left(2\pi e \epsilon'\right)}{MT}\\
        & = \frac{dr\ln\left(1 + \frac{1}{r\epsilon'}\right)}{2MT}\\
        & = \frac{dr}{2MT}\ln\left(1 + \frac{d}{r\left(e^{\frac{2\epsilon T}{r}}-1\right)}\right),
    \end{align*}
    where $(a)$ follows from the maximum differential entropy of a random variable of fixed variance being upper bounded by a Gaussian random variable.

    We now upper bound the distortion.  Let $V \sim \mathcal{N}(0, I_d)$ and let $V \perp \psi$.  Then,
    \begin{align*}
        &\quad \frac{1}{MT}\sum_{m=1}^{M}\sum_{t=0}^{T-1}\E\left[\KL(\Pr(X_{t+1}^{(m)}\in\cdot|\psi,H_{m,t})\| \Pr(X_{t+1}^{(m)}\in\cdot|\tilde{\psi},H_{m,t})\right]\\
        & = \frac{\I(H_{M,T};\psi|\tilde{\psi})}{MT}\\
        & = \frac{\H(H_{M,T}|\tilde{\psi}) - \H(H_{M,T}|\psi)}{MT}\\
        & = \frac{\sum_{m=1}^{M}\H(X_{0:T}^{(m)}|\tilde{\psi}, H_{m-1,T}) - \H(X_{0:T}^{(m)}|\psi, H_{m-1, T})}{MT}\\
        & \leq \frac{\H(X_{0:T}^{(1)}|\tilde{\psi}) - \H(X_{0:T}^{(1)}|\psi)}{T}\\
        & = \frac{\I(X_{0:T}^{(1)};\psi|\tilde{\psi})}{T}\\
        & \leq \frac{\I(\theta_1;\psi|\tilde{\psi})}{T}\\
        & = \frac{\E\left[\KL(\Pr( \theta_1\in\cdot|\psi)\|\Pr(\theta_1\in\cdot|\tilde{\psi}))\right]}{T}\\
        & \overset{(a)}{\leq} \frac{\E\left[\KL(\Pr(\theta_1\in\cdot|\psi)\|\Pr(\theta_1 \in\cdot|\psi\leftarrow\tilde{\psi}))\right]}{T}\\
        & \leq \frac{\E\left[\KL(\lim_{\delta\to 0}\Pr(\theta_1 + \sqrt{\delta} \cdot V \in\cdot|\psi)\|\lim_{\delta\to 0}\Pr(\theta_1 + \sqrt{\delta} \cdot V|\psi\leftarrow\tilde{\psi}))\right]}{T}\\
        & \overset{(b)}{=} \frac{1}{T}\E\left[\lim_{\delta\to 0 }\frac{1}{2}\ln\left(\frac{\left|\delta I_d + \frac{\tilde{\psi}\tilde{\psi}^\top}{r}\right|}{\left|\delta I_d + \frac{\psi\psi^\top}{r}\right|}\right) - d + {\rm Tr}\left(\left(\delta I_d + \frac{\tilde{\psi}\tilde{\psi}^\top}{r}\right)^{-1}\left(\delta I_d + \frac{\psi\psi^\top}{r}\right)\right)\right]\\
        & \overset{(c)}{\leq} \frac{1}{T}\E\left[\lim_{\delta\to 0 }\frac{1}{2}\ln\left(\frac{\left|\delta I_d + \frac{\tilde{\pi}\tilde{\psi}^\top}{r}\right|}{\left|\delta I_d + \frac{\psi\psi^\top}{r}\right|}\right)\right]\\
        & \overset{(d)}{=} \frac{1}{T}\E\left[\lim_{\delta\to 0 }\frac{1}{2}\ln\left(\frac{\left|\delta I_d\right|\cdot\left|I_r + \frac{\tilde{\psi}^\top\tilde{\psi}}{r\delta}\right|}{\left|\delta I_d \right|\cdot\left|I_r + \frac{\psi^\top\psi}{r\delta}\right|}\right)\right]\\
        & = \frac{1}{T}\E\left[\lim_{\delta\to 0 }\frac{1}{2}\ln\left(\frac{\left|I_r + \frac{\tilde{\psi}^\top\tilde{\psi}}{r\delta}\right|}{\left|I_r + \frac{I_r}{r\delta}\right|}\right)\right]\\
        & \overset{(e)}{\leq} \lim_{\delta\to 0 } \frac{1}{2T}\ln\left(\frac{\left|I_r + \frac{\E\left[\tilde{\psi}^\top\tilde{\psi}\right]}{r\delta}\right|}{\left|I_r + \frac{I_r}{r\delta}\right|}\right)\\
        & = \lim_{\delta\to 0 } \frac{1}{2T}\ln\left(\frac{\left|I_r + \frac{\E\left[I_r + d\epsilon' I_r\right]}{r\delta}\right|}{\left|I_r + \frac{I_r}{r\delta}\right|}\right)\\
        & = \lim_{\delta\to 0 } \frac{r}{2T}\ln\left(\frac{1 + \frac{1+d\epsilon'}{r\delta}}{1+\frac{1}{r\delta}}\right)\\
        & = \frac{r}{2T}\ln\left(1 + d\epsilon \right)\\
        & = \epsilon,
    \end{align*}
    where $(a)$ follows from Lemma \ref{le:change_measure_ub}, $(b)$ follows from continuity of the KL divergence between two multivariate normal distributions w.r.t the covariance matrix, $(c)$ follows from the fact that the trace term is upper bounded by $d$, $(d)$ follows from the matrix determinant lemma, 
    $\epsilon = \frac{1}{m}$, and $(e)$ follows from Jensen's inequality.  The result follows.
\end{proof}

Recall that for meta-learning, the total estimation error is upper bounded by the sum of rate-distortion functions for the meta parameter and individual tasks.  The following result establishes an upper bound on this intra-task rate-distortion function.

\begin{lemma}{\bf (intra-task rate-distortion upper bound)}\label{le:intra_rd_ub}
    For all $r, T \in \mathbb{Z}_{++}$,
    $$\H_{\epsilon, T}(\theta_m|\psi)\ \leq\ \frac{r}{2}\ln\left(1 +\frac{1}{\epsilon}\right).$$
\end{lemma}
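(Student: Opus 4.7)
My plan is to construct the compressor $\tilde{\theta}_m$ by adding Gaussian noise in the low-dimensional $\xi^{(m)}$-coordinate, exploiting the fact that conditional on $\psi$ the parameter $\theta_m$ lies in the $r$-dimensional column space of $\psi$ and is fully determined by $\xi^{(m)} \sim \normal(0, I_r/r)$. Concretely, I would set $\tilde{\theta}_m = \psi\tilde{\xi}^{(m)}$, where $\tilde{\xi}^{(m)} = \xi^{(m)} + Z$ and $Z\sim\normal(0,\sigma^2 I_r/r)$ is drawn independently of everything else, for a noise variance $\sigma^2$ (of order $\epsilon$) to be fixed in the end. The independence of $Z$ from $(\theta_{1:M},\psi, H_{M,T})$ ensures the required conditional independence $\tilde{\theta}_m\perp H_{M,T}\mid(\theta_m,\psi)$, so $\tilde{\theta}_m$ is an admissible candidate in $\tilde{\Theta}_{\epsilon,T}$.

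The distortion analysis has three steps. First, since $X^{(m)}_{1:T}$ is iid conditional on $\theta_m$ (given $\psi$), the same monotonicity argument used in Corollary \ref{cor:distortion_ub} applies after conditioning throughout on $\psi$, giving
\[
\frac{\I(X^{(m)}_{0:T};\theta_m\mid\tilde{\theta}_m,\psi)}{T}\ \leq\ \I(X^{(m)}_1;\theta_m\mid\tilde{\theta}_m,\psi).
\]
Second, because $X^{(m)}_1$ is categorical with logits $\psi\xi^{(m)}$, the change-of-measure bound (Lemma \ref{le:change_measure_ub}) lets me replace the posterior predictive with $\mathrm{softmax}(\psi\tilde{\xi}^{(m)})$, yielding the expected KL between two categorical distributions whose logits are $\psi\xi^{(m)}$ and $\psi\tilde{\xi}^{(m)}$. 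Third, I would bound this KL by a quadratic in the logit difference using smoothness of the log-partition function (its Hessian $\mathrm{diag}(p)-pp^{\top}$ has operator norm at most $1$), obtaining $\KL\leq\tfrac{1}{2}\|\psi(\xi^{(m)}-\tilde{\xi}^{(m)})\|_2^2 = \tfrac{1}{2}\|Z\|_2^2$ since $\psi$ has orthonormal columns. Taking expectations gives a distortion bound of $\sigma^2/2$, and choosing $\sigma^2$ at most $2\epsilon$ enforces the constraint.

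For the rate, orthonormality of $\psi$ makes $\theta_m\mapsto\xi^{(m)} = \psi^{\top}\theta_m$ a deterministic bijection on the image, so $\I(\theta_m;\tilde{\theta}_m\mid\psi) = \I(\xi^{(m)};\tilde{\xi}^{(m)})$, a scalar Gaussian channel quantity. Applying Lemma \ref{le:max_entropy} as in prior sections gives $\I(\xi^{(m)};\tilde{\xi}^{(m)})\leq\tfrac{r}{2}\ln(1+1/\sigma^2)$, and substituting $\sigma^2 = 2\epsilon$ (or $\sigma^2 = \epsilon$ with a slightly sharper softmax KL constant) yields a rate of at most $\tfrac{r}{2}\ln(1+1/\epsilon)$ after absorbing the constant into the logarithm.

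The main obstacle will be the quadratic KL bound for the multinomial softmax. The binary case is handled by Lemma \ref{le:kl_ub} with constant $1/8$, but its direct analogue for the $d$-class softmax requires a separate smoothness argument on $\log Z$, and the resulting constant is not immediately $1$; sub-optimal constants can nonetheless be absorbed into $\ln(1+1/\epsilon)$ via the monotonicity $\ln(1+c/\epsilon)\leq\ln(1+1/\epsilon)$ for $c\leq 1$, which lets me conclude the stated bound without affecting its form.
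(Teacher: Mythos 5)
Your construction and argument match the paper's proof essentially step for step: the paper also sets $\tilde{\theta}_m = \psi\tilde{\xi}$ with $\tilde{\xi} = \xi^{(m)} + Z$, $Z\sim\normal(0,\epsilon I_r/r)$, bounds the rate via the Gaussian channel using orthonormality of $\psi$ to get $\tfrac{r}{2}\ln(1+1/\epsilon)$, and bounds the distortion by the change-of-measure lemma followed by a quadratic bound on the softmax KL, yielding $\E[\|Z\|_2^2]=\epsilon$. Your extra care with the $d$-class log-partition Hessian constant (the paper simply cites the binary Lemma \ref{le:kl_ub}) is a legitimate refinement, and as you note any such constant is absorbed into the logarithm, so the proposal is correct and takes the same route.
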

\begin{proof}
    Let $\tilde{\xi} = \xi^{(m)} + Z$ where $Z \perp \xi^{(m)}$ and $Z \sim \normal(0, \epsilon I_r/r)$.  Let $\tilde{\theta} = \psi\tilde{\xi}$.  We begin by upper bounding the rate.
    \begin{align*}
        \I(\theta_m;\tilde{\theta}|\psi)
        & \leq \I(\xi;\tilde{\xi}|\psi)\\
        & = \diffentropy(\tilde{\xi}|\psi)- \diffentropy(\tilde{\xi}|\psi,\xi)\\
        & = \diffentropy(\tilde{\xi})- \diffentropy(\tilde{\xi}|\xi)\\
        & = \diffentropy(\tilde{\xi})- \diffentropy(Z)\\
        & = \frac{r}{2}\ln\left(2\pi e (\frac{\epsilon}{r} + \frac{1}{r})\right) - \frac{r}{2}\ln\left(\frac{2\pi e\epsilon}{r}\right)\\
        & = \frac{r}{2}\ln\left(1 + \frac{1}{\epsilon}\right).
    \end{align*}
    We now upper bound the distortion:
    \begin{align*}
        & \quad \frac{1}{T}\sum_{t=0}^{T-1} \E\left[\KL(\Pr(X^{(m)}_{t+1}\in\cdot|\theta_m, \psi, H_{m,t}) \| \Pr(X^{(m)}_{t+1}\in\cdot|\tilde{\theta},\psi, H_{m,t})\right] \\
        & = \frac{\I(X^{(m)}_{0:T};\theta_m|\tilde{\theta}, \psi)}{T} \\
        &\leq \frac{\I(X^{(m)}_{0:T};\theta_m|\tilde{\theta},\psi)}{T} \\
        &\leq \I(X_{1}^{(m)};\theta_m|\tilde{\theta})\\
        & =\E\left[\KL\left(\Pr\left(X^{(m)}_1\in\cdot|\theta_m\right)\|\Pr\left(X^{(m)}_{1}\in\cdot|\tilde{\theta}\right)\right)\right]\\
        & \overset{(a)}{\leq} \E\left[\KL\left(\Pr\left(X^{(m)}_1\in\cdot|\theta_m\right)\|\Pr\left(X^{(m)}_{1}\in\cdot|\theta_m\leftarrow\tilde{\theta}\right)\right)\right]\\
        & \overset{(b}{\leq} \E\left[\|\tilde{\theta}-\theta_m\|^2_2\right]\\
        & = \E\left[\left(\xi^{(m)}-\tilde{\xi}\right)^\top\psi^\top\psi\left(\xi^{(m)}-\tilde{\xi}\right)\right]\\
        & = \E\left[\left(\xi^{(m)}-\tilde{\xi}\right)^\top \left(\xi^{(m)}-\tilde{\xi}\right)\right]\\
        & = \E\left[Z^\top Z\right]\\
        & = \epsilon
    \end{align*}
    where $(a)$ follows from Lemma \ref{le:bayes_opt}, and $(b)$ follows from Lemma \ref{le:kl_ub}.
\end{proof}

\subsubsection{Main Result}

With the rate-distortion upper bounds established from the previous section, we present the following result which upper bounds the error of linear representation learning.

\begin{theorem}{\bf (linear representation learning error bound)}
    For all $d,r,M,T\in\Z_{++}$, if for all $(m,t)$, $X^{(m)}_t$ is generated according to the linear representation learning process, then
    \begin{align*}
        \Lc_{M,T}
        &\ \leq\ \frac{dr\ln\left(e\left(1 + \frac{M}{r}\right)\right)}{2MT} + \frac{r\ln\left(e\left(1+\frac{2T}{r}\right)\right)}{2T}.
    \end{align*}
\end{theorem}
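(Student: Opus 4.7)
The plan is to apply Theorem \ref{th:meta_rd} to split $\Lc_{M,T}$ into a meta term bounded by $\inf_{\epsilon \geq 0}\ \H_{\epsilon,M,T}(\psi)/(MT) + \epsilon$ and an intra-task term bounded by $\inf_{\epsilon' \geq 0}\ \H_{\epsilon',T}(\theta_m|\psi)/T + \epsilon'$. I then plug in the two rate-distortion upper bounds proved in Lemmas \ref{le:meta_rd_ub} and \ref{le:intra_rd_ub}, and optimize $\epsilon$ and $\epsilon'$ separately.

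For the meta term, Lemma \ref{le:meta_rd_ub} contributes an expression involving $\ln(1 + d/(r(e^{2\epsilon T/r}-1)))$, which is awkward to optimize directly. The clean move is to apply the elementary inequality $e^x - 1 \geq x$ for $x \geq 0$, reducing the log term to $\ln(1 + d/(2\epsilon T))$. The choice $\epsilon = dr/(2MT)$ then yields $d/(2\epsilon T) = M/r$, and the sum collapses to $\frac{dr}{2MT}(1 + \ln(1 + M/r)) = \frac{dr\ln(e(1 + M/r))}{2MT}$.

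For the intra-task term, Lemma \ref{le:intra_rd_ub} gives $\frac{r}{2T}\ln(1 + 1/\epsilon') + \epsilon'$. Setting $\epsilon' = r/(2T)$ produces $1/\epsilon' = 2T/r$, so the expression simplifies to $\frac{r}{2T}(1 + \ln(1 + 2T/r)) = \frac{r\ln(e(1 + 2T/r))}{2T}$. Adding the two contributions yields exactly the claimed bound.

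No real obstacle stands in the way: all the heavy lifting is done by the upstream rate-distortion bounds. The only minor subtlety is the use of $e^x - 1 \geq x$ in the meta term, which is what allows the infimum over $\epsilon$ to be written in closed form matching the stated bound. The proof is essentially a bookkeeping exercise combining Theorem \ref{th:meta_rd} with Lemmas \ref{le:meta_rd_ub} and \ref{le:intra_rd_ub} at the two tuning choices $\epsilon = dr/(2MT)$ and $\epsilon' = r/(2T)$.
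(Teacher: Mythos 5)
Your proposal is correct and follows essentially the same route as the paper's proof: both invoke the rate-distortion decomposition of Theorem \ref{th:meta_rd}, plug in Lemmas \ref{le:meta_rd_ub} and \ref{le:intra_rd_ub}, and choose $\epsilon = \frac{dr}{2MT}$ and $\epsilon' = \frac{r}{2T}$, with the bound $e^x - 1 \geq x$ handling the meta term's exponential. No substantive differences.
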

\begin{proof}
    \begin{align*}
        \frac{1}{MT}\sum_{m=1}^{M}\sum_{t=0}^{T-1}\E\left[\KL(P^*_{m,t}\| \hat{P}_{m,t})\right]
        & \overset{(a)}{\leq} \inf_{\epsilon\geq 0}\ \frac{\H_{M,T\epsilon}(\psi)}{MT} + \epsilon + \inf_{\epsilon'\geq 0}\ \frac{\H_{T, \epsilon'}(\theta_m|\psi)}{T} + \epsilon'\\
        & \overset{(b)}{\leq} \inf_{\epsilon \geq 0}\ \frac{dr}{2MT}\ln\left(1 + \frac{d}{r\left(e^{\frac{2\epsilon T}{r}} -1\right)}\right) + \epsilon + \inf_{\epsilon'\geq 0} \frac{r}{2n}\ln\left(1 + \frac{1}{r\epsilon'}\right) + \epsilon'\\
        & \overset{(c)}{\leq} \frac{dr}{2MT}\ln\left(1 + \frac{d}{r\left(e^{\frac{d\epsilon}{M}} - 1\right)}\right) + \frac{dr}{2MT} + \frac{r\ln(1+\frac{2T}{r})}{2T} + \frac{r}{2T}\\
        & \leq \frac{dr\ln\left(1 + \frac{M}{r}\right)}{2MT} + \frac{dr}{2MT} + \frac{r\ln(1+\frac{2T}{r})}{2T} + \frac{r}{2T}\\
        & = \frac{dr\ln\left(e\left(1 + \frac{M}{r}\right)\right)}{2MT} + \frac{r\ln\left(e\left(1+\frac{2T}{r}\right)\right)}{2T},
    \end{align*}
    where $(a)$ follows from Theorem \ref{th:meta_est_error}, 
    $(b)$ follows directly from Lemmas \ref{le:meta_rd_ub} and \ref{le:intra_rd_ub}, and $(c)$ follows from setting $\epsilon = \frac{dr}{2MT}$ and $\epsilon' = \frac{r}{2T}$.
\end{proof}

The first term describes the error which is incurred in the process of estimating the meta-parameters $\psi$.  Notably, this term is linear in $dr$, the parameter count of $\psi$, and decays linearly in $MT$, the total number of observations.  This is both intuitive and desirable since each observation ought to provide information about $\psi$.  Meanwhile, the second term describes the error which is incurred in the process of estimating the intra-task parameters $\theta_{m}$ conditioned on $\psi$.  In our problem instance, $\theta = \psi\xi$ for $\xi \in \Re^{r}$, so the remaining uncertainty in $\theta$ when conditioning on $\psi$ is simply $\xi$.  Therefore, the intra-task error is linear in $r$, the parameter count of $\xi$, and decays linearly in $T$, the number of observations pertinent to task $m$.

In the following section, we will characterize the learning performance under a much more complex process which resembles LLM pre-training

\subsection{Mixture of Transformers}

\begin{figure}[H]
    \centering
    \includegraphics[width=0.8\textwidth]{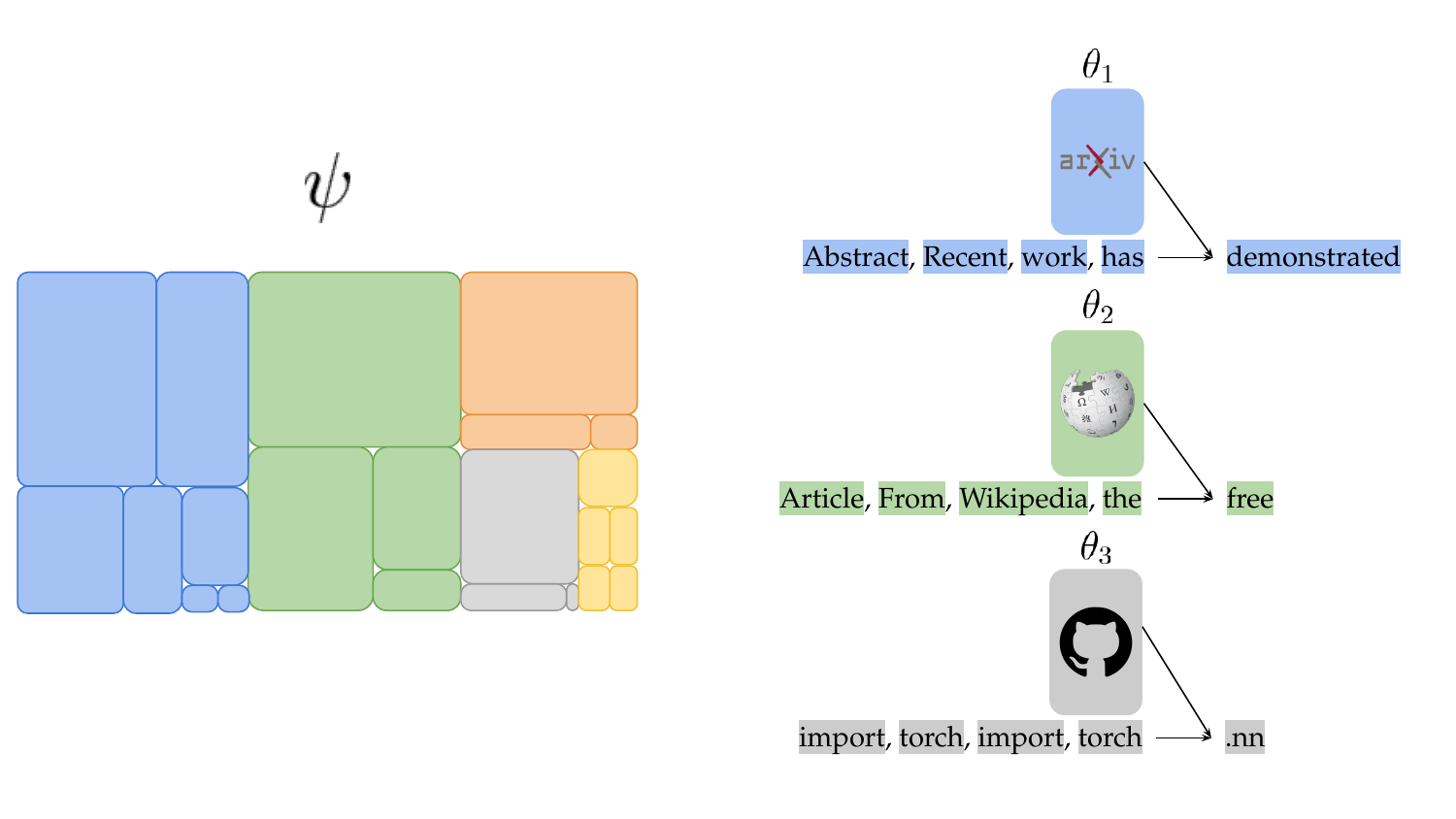}
    \caption{The above diagram depicts the mixture of transformers problem.  $\psi$ designates a mixture of many transformer models, each which is responsible for producing documents of a particular type.  Each document is generated by sampling a transformer model $\theta_m$ from the mixture $\psi$, and generating a sequence $(X_0^{(m)}, X_1^{(m)}, X_2^{(m)}, \ldots)$ autoregressively.  }
    \label{fig:incontext_learning}
\end{figure}

\subsubsection{Data Generating Process}

We consider a data generating process that produces $M$ documents.  For each $m$th document, we denote its sequence of tokens by $(X^{(m)}_1, \ldots, X^{(m)}_T)$.  Each token takes value in $[d]$, where $d$ is the size of the vocabulary.  Each of the $d$ token types is assigned a \emph{known} embedding vector, which we denote as $\Phi_j$ for $j \in [d]$.  We assume that, for all $j$, $\|\Phi_j\|_2 = 1$.  As shorthand, we let $\phi_t^{(m)} = \Phi_{X^{(m)}_t}$; this is the embedding associated with token $X^{(m)}_t$.

Each document is generated by a transformer model which is sampled iid from an unknown (random) categorical distribution $\alpha$.  There are $N$ categories, each corresponding to one transformer model.  The distribution $\alpha$ is sampled according to $\Pr(\alpha\in\cdot) = \text{Dirichlet}(N, [R/N, \ldots, R/N])$, with scale parameter $R \ll N$.

For each of the transformer models, let $K$ denote the context length, $L$ the depth, and $r$ the attention dimensions.  For all $i \in [n]$, let $\psi_i$ denote the parameters of the $i$th transformer model and let $\psi = (\alpha, \psi_1, \ldots, \psi_N)$.

For all $t$, $X^{(m)}_{t+1}$ is generated by transformer model $i_m$, where this index is sampled for document $m$.  Hence, $X^{(m)}_{t+1}$ is generated by a transformer model with weights $\psi_{i_m}$, given context $X^{(m)}_{t-K+1},\ldots, X^{(m)}_{t}$.  We use $U^{(m)}_{t,\ell}$ to denote the output of layer $\ell$ at time $t$ for document $m$.  For all $m, t$, we let $(U^{(m)}_{t,0} = \phi_{t-K+1:t})$ be the embeddings associated with the past $K$ tokens.  For $\ell > 0$, we let $U^{(m)}_{t,\ell}$ denote the output of layer $\ell$ of the transformer with input $U^{(m)}_{t,0}$.  For all $t \leq T, i < L, m \leq M$, let
$$\text{Attn}^{(\ell)}\left(U^{(m)}_{t,\ell-1}\right) = {\rm Softmax}\left( \frac{U^{(m)\top}_{t,\ell-1} A^{(m)}_\ell U^{(m)}_{t,\ell-1}}{\sqrt{r}}\right)$$
denote the attention matrix of layer $\ell$ for document $m$ where the softmax function is applied to each column.  The matrix $A^{(m)}_\ell \in \Re^{r\times r}$ can be interpreted as the product of the key and query matrices and we assume that the elements of the matrices $A^{(m)}_\ell$ are distributed iid $\normal(0,1)$.  Note that these weights belong to the transformer with parameters $\psi_{i_m}$.  

Subsequently, we let
$$U^{(m)}_{t,\ell} = \text{Clip}\left(V^{(m)}_\ell U^{(m)}_{t,\ell-1} \text{Attn}^{(\ell)}\left(U^{(m)}_{t,\ell-1}\right)\right),$$
where Clip ensures that each column of the matrix input has $L_2$ norm at most $1$.  The matrix $V^{(m)}_\ell$ resembles the value matrix and we assume that the elements of $V^{(m)}_\ell$ are distributed iid $\normal(0,1/d)$.

Finally, the next token is generated via sampling from the softmax of the final layer:
$$X^{(m)}_{t+1} \sim {\rm Softmax}\left(U^{(m)}_{t,L}[-1]\right),$$
where $U^{(m)}_{t,L}[-1]$ denotes the right-most column of $U^{(m)}_{t,L}$.  At each layer $\ell$, the parameters consist of the matrices $A^{(m)}_\ell, V^{(m)}_\ell$.  Therefore, $\theta_m = (i_m, \psi_{i_m})$.

\subsubsection{Preliminary Results}

To obtain tighter bounds for this problem instance, instead of considering two separate rate-distortion functions as in Theorem \ref{th:meta_rd}, we must instead analyze the joint $(M,T)$-horizon rate-distortion function.  Let $\tilde{\Theta}_{M}$ denote the set of random variables $\tilde{\theta}$ such that for all $t\in\mathbb{Z}_{+}$ and $m \in \{1, 2, \ldots, M\}$, $X^{(m)}_{t+1}\perp \tilde{\theta}|(\theta_{1:M}, \psi, H_{m,t})$.  Then let
$$\H_{\epsilon, M,T}(\psi, \theta_{1:M})\ =\ \inf_{\tilde{\theta}\in\tilde{\Theta}_{\epsilon, M,T}}\ \I(\psi,\theta_{1:M};\tilde{\theta}),$$
where
$$\tilde{\Theta}_{\epsilon, M,T}\ =\ \left\{\tilde{\theta}\in\tilde{\Theta}_M: \frac{1}{MT}\sum_{m=1}^{M}\sum_{t=0}^{T-1}\E\left[\KL(\Pr(X^{(m)}_{t+1}\in\cdot|\psi,\theta_{1:m},H_{m,t}) \| \Pr(X^{(m)}_{t+1}\in\cdot|\tilde{\theta},H_{m,t}))\right] \leq \epsilon\right\}.$$
We now present the following error bounds in terms of the joint $(M,T)$-horizon rate-distortion function.
\begin{theorem}{\bf (joint $(M,T)$-horizon rate-distortion error bound)}\label{th:meta_rd2}
    For all $M,T\in\Z_{+}$,
    $$\frac{1}{MT}\sum_{m=1}^{M}\sum_{t=0}^{T-1} \E\left[\KL(P^*_t\|\hat{P}_t)\right]\ \leq\ \inf_{\epsilon \geq 0}\ \frac{\H_{\epsilon,M,T}(\psi, \theta_{1:M})}{MT} + \epsilon,$$
    and
    $$\frac{1}{MT}\sum_{m=1}^{M}\sum_{t=0}^{T-1} \E\left[\KL(P^*_t\|\hat{P}_t)\right]\ \geq\ \sup_{\epsilon \geq 0}\ \min\left\{\frac{\H_{\epsilon,M,T}(\psi,\theta_{1:M})}{MT},\ \epsilon \right\}.$$
\end{theorem}
\begin{proof}
    We begin by establishing the upper bound.
    \begin{align*}
        \frac{1}{MT}\sum_{m=1}^{M}\sum_{t=0}^{T-1} \E\left[\KL(P^*_t\|\hat{P}_t)\right]
        & = \frac{\I(H_{M,T};\psi, \theta_{1:M})}{MT}\\
        & = \inf_{\tilde{\theta}\in\tilde{\Theta}_{\epsilon,M,T}}\ \frac{\I(H_{M,T};\psi, \theta_{1:M}, \tilde{\theta})}{MT}\\
        & = \inf_{\tilde{\theta}\in\tilde{\Theta}_{\epsilon,M,T}}\ \frac{\I(H_{M,T};\tilde{\theta})}{MT} + \frac{\I(H_{M,T};\theta_{1:M}|\tilde{\theta})}{MT}\\
        & \leq \frac{\H_{\epsilon,M,T}(\psi, \theta_{1:M})}{MT} + \epsilon.
    \end{align*}
    We now establish the lower bound.  Suppose that $\I(H_{M,T};\psi,\theta_{1:M}) < \H_{\epsilon,M,T}(\psi, \theta_{1:M})$.  Let $\tilde{\theta} = \tilde{H}_{M,T} \notin \tilde{\Theta}_{\epsilon,M,T}$ where $\tilde{H}_{M,T}$ is another independent history sampled in the same manner as $H_{M,T}$.
    \begin{align*}
        \frac{1}{MT}\sum_{m=1}^{M}\sum_{t=0}^{T-1} \E\left[\KL(P^*_t\|\hat{P}_t)\right]
        & = \I(H_{M,T};\psi, \theta_{1:M})\\
        & = \H(H_{M,T}) - \H(H_{M,T}|\psi, \theta_{1:M})\\
        & \overset{(a)}{=} \H(H_{M,T}) - \H(H_{M,T}|\psi, \theta_{1:M}, \tilde{\theta})\\
        & \overset{(b)}{\geq} \H(H_{M,T}|\tilde{\theta}) - \H(H_{M,T}|\psi, \theta_{1:M}, \tilde{\theta})\\
        & = \I(H_{M,T};\psi,\theta_{1:M}|\tilde{\theta})\\
        & \overset{(c)}{\geq} \epsilon M T
    \end{align*}
    where $(a)$ follows from conditional independence assumptions, $(b)$ follows from the fact that conditioning reduces entropy, and $c)$ follows from the fact that $\tilde{\theta} \notin \tilde{\Theta}_{\epsilon,M,T}$.  Therefore, for all $\epsilon \geq 0,\ \I(H_{M,T};\psi,\theta_{1:M}) \geq \min\{\H_{\epsilon,M,T}(\psi,\theta_{1:M}), \epsilon MT\}$.  The result follows.
\end{proof}

Using the above combined rate-distortion result instead of one that separates meta parameter versus intra-task rate-distortion functions (Theorem \ref{th:meta_rd}) will allow us to dramatically tighten our error bound.  This is because it suffices to learn about models that generate significant numbers of documents.

Consider a random variable $\tilde{\psi}$ which approximates $\psi$ but additionally identifies which transformer models \emph{actually} generated the documents.  Recall that $\tilde{\Psi}$ denotes the set of random variables over which we minimize to attain the meta parameter rate-distortion function.  Note that $\tilde{\psi} \notin \tilde{\Psi}$ because $\tilde{\psi}$ is not independent of $H_{M,T}$, even when conditioned on $\psi$.  However, $\tilde{\psi}$ is independent of $H_{M,T}$ when conditioned on $(\psi, \theta_{1:M})$, so $\tilde{\psi} \in \tilde{\Theta}_M$ and thus can be used to bound the joint rate-distortion function.

In the following result, we establish an upper bound on the joint $(M,T)$-horizon rate-distortion function for the mixture of transformers problem.

\begin{lemma}{\bf (mixture of transformers joint $(M,T)$-horizon rate-distortion upper bound)}\label{le:icl_rd}
    For all $d,r,K,L,M,T$, if for all $(m,t)\in[M]\times[T]$, $X^{(m)}_t$ is generated by the mixture of transformers process, then
    $$\H_{\epsilon,M,T}(\psi, \theta_{1:M}) \ \leq\ M\ln(N) + R\ln\left(1 + \frac{M}{R}\right)\cdot r\cdot\max\{r,d\}L\ln\left(1 + \frac{r\cdot \max\{r,d\}KLT(8K(1+16K))^L}{\epsilon}\right).$$
\end{lemma}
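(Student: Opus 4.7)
The plan is to construct an explicit proxy $\tilde{\theta}$ tailored to exploit the mixture structure, then bound the rate and distortion separately. The construction has two pieces: (i) record the exact mixture indices $i_{1:M}$ drawn for the $M$ documents, and (ii) only provide lossy approximations to those transformer parameters $\psi_j$ that actually got sampled at least once, using the Gaussian-perturbation construction from the proof of Lemma~\ref{le:tsfm_rd}. Formally, let $\{J_1,\ldots,J_{N_M}\}$ be the unique values in $i_{1:M}$ (where $N_M$ is random), set $\tilde{\psi}_{J_k} = (V^{(\ell)}_{J_k}+Z^{(\ell)}_{J_k},\,A^{(\ell)}_{J_k}+B^{(\ell)}_{J_k})_{\ell=1}^L$ with independent Gaussian noises chosen as in Lemma~\ref{le:tsfm_rd} with tolerance parameter $\epsilon$, and define $\tilde{\theta} = (i_{1:M},\tilde{\psi}_{J_1},\ldots,\tilde{\psi}_{J_{N_M}})$. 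Note that $\alpha$ itself is never approximated — knowing the realized indices makes it redundant for prediction.

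For the rate, I would apply the chain rule of mutual information,
$$\I(\psi,\theta_{1:M};\tilde{\theta}) \;=\; \I(\psi,\theta_{1:M};i_{1:M}) \;+\; \I(\psi,\theta_{1:M};\tilde{\psi}_{J_{1:N_M}}\mid i_{1:M}).$$
The first term equals $\H(i_{1:M}) \le M\ln N$ since $i_{1:M}$ is a deterministic function of $\theta_{1:M}$ taking values in $\{1,\ldots,N\}^M$. For the second term, given $i_{1:M}$ the unique index set $\{J_k\}$ is determined, the noises are independent across $k$, and the $\psi_j$'s are mutually independent, so this reduces to $\sum_k \I(\psi_{J_k};\tilde{\psi}_{J_k})$. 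Each summand is bounded by the per-transformer rate from Lemma~\ref{le:tsfm_rd}, i.e.\ $r\cdot\max\{r,d\}L\ln(1+r\cdot\max\{r,d\}KLT(8K(1+16K))^L/\epsilon)$. Taking expectation over $i_{1:M}$ and invoking the Dirichlet–multinomial concentration bound analogous to Lemma~\ref{le:dir_concen}, namely $\E[N_M]\le R\ln(1+M/R)$, yields the claimed rate.

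For the distortion, decompose
$$\I(H_{M,T};\psi,\theta_{1:M}\mid\tilde{\theta}) \;=\; \sum_{m=1}^{M}\I(X^{(m)}_{0:T};\psi,\theta_{1:M}\mid \tilde{\theta},H_{m-1,T}).$$
Given $\tilde{\theta}$ and $H_{m-1,T}$, the law of $X^{(m)}_{0:T}$ depends on $(\psi,\theta_{1:M})$ only through $(i_m,\psi_{i_m})$, and $i_m$ is already encoded in $\tilde{\theta}$. Using the conditional independence structure — in particular that $(X^{(m)}_{0:T},\psi_{i_m})$ is independent of $H_{m-1,T}$ and of the other components of $\tilde{\theta}$ once $(i_m,\tilde{\psi}_{i_m})$ is fixed — each term collapses to $\I(X^{(m)}_{0:T};\psi_{i_m}\mid\tilde{\psi}_{i_m},i_m)$, which by the per-transformer distortion guarantee built into the choice of noise variance in Lemma~\ref{le:tsfm_rd} is at most $T\epsilon$. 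Summing gives $\I(H_{M,T};\psi,\theta_{1:M}\mid\tilde{\theta})\le MT\epsilon$, so $\tilde{\theta}\in\tilde{\Theta}_{M,T,\epsilon}$.

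The main obstacle I anticipate is the bookkeeping around the random-length proxy: one must verify that $\tilde\theta$ is a well-defined random variable (variable-length objects are fine, but care is needed) and carefully justify the reduction of each per-document distortion term to the single-transformer quantity bounded by Lemma~\ref{le:tsfm_rd}. A secondary subtlety is importing the Dirichlet–multinomial concentration inequality — Lemma~\ref{le:dir_concen} is stated for a Dirichlet process, whereas here $\alpha\sim\mathrm{Dirichlet}(N,[R/N,\ldots,R/N])$; one should either cite the finite-$N$ Ewens/Chinese-restaurant bound $\E[N_M]\le R\ln(1+M/R)$ directly or argue it follows from the same proof technique with $K\leftarrow R$.
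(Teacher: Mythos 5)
Your proposal is correct and matches the paper's proof essentially step for step: the same proxy (exact indices $i_{1:M}$ plus Gaussian-perturbed weights for only the sampled mixture elements), the same chain-rule split of the rate into $\H(i_{1:M}) \le M\ln N$ plus an expected sum of per-transformer rates controlled by $\E[N_M] \le R\ln(1+M/R)$ and Lemma~\ref{le:tsfm_rd}, and the same per-document collapse of the distortion to the single-transformer bound. The finite-$N$ Dirichlet-multinomial subtlety you flag is handled in the paper by the appendix lemma for ${\rm DirMult}(n,[K/N,\ldots,K/N])$, exactly as you suggest.
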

\begin{proof}
    Let $\tilde{\theta} = (i_1, i_2, \ldots, i_M)$ and let $\tilde{\psi} = (\tilde{\psi}_j: j \in \{1, 2, \ldots, N\})$, where
    $$\tilde{\psi}_j =
    \begin{cases}
        \left(\tilde{V}_{j,\ell}, \tilde{A}_{j,\ell} : \ell \in \{1, 2, \ldots, L\} \right) & {\rm if\ } j \in \mathcal{I}_M\\
        \emptyset & {\rm otherwise}\\
    \end{cases},$$
    $$\left(\tilde{V}_{j,\ell}, \tilde{A}_{j,\ell}\right) = \left(V_{j,\ell} + Z_{j,\ell},\  A_{j,\ell} + B_{j,\ell}\right)$$
    $$(Z_{j,\ell}, B_{j,\ell}) \perp (V_{j,\ell}, A_{j,\ell}),\quad Z_{j,\ell}\overset{iid}{\sim}
    \begin{cases}
        \normal(0, \epsilon'/r^2) & \text{ if } \ell < L\\
        \normal(0, \epsilon'/(rd) & \text{ if } \ell = L\\
    \end{cases};\quad B_{j,\ell}\overset{iid}{\sim}
        \normal(0, \epsilon'/r^2),$$
    $\mathcal{I}_M$ denotes the set of unique outcomes in $(i_1, i_2, \ldots, i_M)$, and $\epsilon' = \frac{\epsilon}{KLT(8K(1+16K))^L}$.

    We begin by upper bounding the rate
    \begin{align*}
        \I\left(\tilde{\psi},\tilde{\theta} ;\psi,\theta_{1:M}\right)
        & =  \I(\tilde{\theta};\psi, \theta_{1:M}) + \I\left(\tilde{\psi};\psi, \theta_{1:M}|\tilde{\theta}\right)\\
        & = \I(\tilde{\theta};\psi, \theta_{1:M}) + \I\left(\tilde{\psi};\psi|\tilde{\theta}\right)\\
        & \leq \H(\tilde{\theta}) + \I\left(\tilde{\psi};\psi|\tilde{\theta}\right)\\
        & \leq M\ln(N) + \E\left[\sum_{i=1}^{N} \mathbbm{1}_{i\in\mathcal{I}_M} \cdot \I(\tilde{\psi}_i, \psi_i)\right]\\
        & \overset{(a)}{\leq} M\ln(N) + R\ln\left(1 + \frac{M}{R}\right)\cdot r\cdot\max\{r,d\}L\ln\left(1 + \frac{r\cdot \max\{r,d\}KLT(8K(1+16K))^L}{\epsilon}\right),
    \end{align*}
    where $(a)$ follows from Lemmas \ref{le:dir_concen} and \ref{le:tsfm_rd}
    
    We now upper bound the distortion:
    \begin{align*}
        \frac{1}{MT}\sum_{m=1}^{M}\sum_{t=0}^{T-1} \E\left[\KL(P^*_t\|\hat{P}_t)\right]
        & =\frac{\I(H_{M,T};\psi,\theta_{1:M}|\tilde{\theta})}{MT}\\
        & = \frac{\I(H_{M,T};\theta_{1:M}|\tilde{\theta})}{MT}\\
        & = \sum_{m=1}^{M} \frac{\I(X^{(m)}_{0:T};\theta_{1:M}|\tilde{\theta}, H_{m-1,T})}{MT}\\
        & \leq \frac{\I(X^{(1)}_{0:T};\theta_{1:M}|\tilde{\theta})}{T}\\
        & = \frac{\I(X^{(1)}_{0:T};\theta_1|\tilde{\psi}_{i_1})}{T}\\
        & \overset{(a)}{\leq} \epsilon,
    \end{align*}
    where $(a)$ follows from Lemma \ref{le:tsfm_rd}.
\end{proof}
\subsubsection{Main Result}

Equipped with the rate-distortion results from the previous section, we establish the following upper bound on the error of the mixture of transformers process.
\begin{theorem}{\bf (mixture of transformers error upper bound)}
    For all $d,r,L,K,R,N,M,T \in \Z_{++}$, if for all $(m,t) \in [M]\times [T]$, $X^{(m)}_t$ is generated according to the mixture of transformers process, then
    \begin{align*}
        \Lc_{M,T} \leq \frac{r\max\{r,d\}RL^2\ln\left(1+\frac{M}{R}\right)\ln\left(8Ke(1+16K)\right)}{MT} + \frac{r\max\{r,d\}RL\ln\left(1+\frac{M}{R}\right)\ln\left(\frac{2KMT^2}{L}\right)}{MT} + \frac{\ln(N)}{T}.
    \end{align*}
\end{theorem}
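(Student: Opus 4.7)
The plan is to simply chain together Theorem \ref{th:meta_rd2} with the rate-distortion upper bound from Lemma \ref{le:icl_rd} and optimize the tuning parameter $\epsilon$. Concretely, starting from
$$\Lc_{M,T}\ \leq\ \inf_{\epsilon \geq 0}\ \frac{\H_{M,T,\epsilon}(\psi,\theta_{1:M})}{MT} + \epsilon,$$
I would substitute the bound from Lemma \ref{le:icl_rd} to obtain
$$\Lc_{M,T}\ \leq\ \frac{\ln N}{T}\ +\ \inf_{\epsilon\geq 0}\ \frac{R\ln\!\left(1+\tfrac{M}{R}\right) r\max\{r,d\}L\,\ln\!\left(1 + \tfrac{r\max\{r,d\}KLT(8K(1+16K))^L}{\epsilon}\right)}{MT} + \epsilon.$$
The first summand already matches the $\ln(N)/T$ term in the claim, so the work reduces to tuning $\epsilon$ to control the second summand.

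The next step is to make the natural choice
$$\epsilon\ =\ \frac{r\max\{r,d\}\,R L^2 \ln\!\left(1+\tfrac{M}{R}\right)}{MT},$$
which is designed so that the bare $\epsilon$ term in the bound carries an extra factor of $L$ and will combine cleanly with the $L^2$ part extracted from the logarithm. Plugging in, the argument of the logarithm becomes
$$1 + \frac{KMT^2\,(8K(1+16K))^L}{RL\,\ln\!\left(1+\tfrac{M}{R}\right)},$$
so using $\ln(1+xy) \leq \ln x + \ln(1+y)$ (or the cruder $\ln(1+\alpha\beta) \leq \ln\alpha + \ln\beta$ applied to the dominant factor) I would split the logarithm as
$$L\,\ln\!\bigl(8K(1+16K)\bigr)\ +\ \ln\!\left(\tfrac{2KMT^2}{L}\right),$$
after verifying that $R\ln(1+M/R) \geq 1/2$, which is immediate for any reasonable regime.

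Combining these two pieces, the term proportional to $L\cdot L\ln(8K(1+16K))$ accumulates a $+1$ from the additive $\epsilon$, which absorbs into the constant $e$ inside the logarithm to yield $\ln\!\bigl(8Ke(1+16K)\bigr)$; this gives exactly the first summand of the claim. The term coming from the $\ln(2KMT^2/L)$ piece yields the second summand directly. The main obstacle is not conceptual but purely algebraic bookkeeping: packaging the nested logarithms cleanly so that the spurious $\ln[R\ln(1+M/R)]$ factor in the denominator disappears into a harmless constant, and confirming that the choice of $\epsilon$ makes the additive $\epsilon$ add a constant (i.e., the ``$e$'') rather than a new leading-order term. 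Once those routine simplifications are executed, the stated bound follows.
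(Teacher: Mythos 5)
Your proposal is correct and follows essentially the same route as the paper: apply Theorem \ref{th:meta_rd2}, substitute the rate-distortion bound of Lemma \ref{le:icl_rd}, choose $\epsilon \approx r\max\{r,d\}RL^2/(MT)$, and split the logarithm into the $L\ln(8K(1+16K))$ and $\ln(2KMT^2/L)$ pieces. Your only deviation is the extra $\ln(1+M/R)$ factor in the choice of $\epsilon$, which is a harmless (arguably slightly cleaner) variant since it makes the additive $\epsilon$ term match the ``$e$'' inside $\ln(8Ke(1+16K))$ exactly rather than requiring $\ln(1+M/R)\geq 1$ for the final absorption.
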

\begin{proof}
    \begin{align*}
        &\quad \frac{1}{MT}\sum_{m=1}^{M}\sum_{t=0}^{T-1} \E\left[\KL(P^*_t\|\hat{P}_t)\right]\\
        & \leq \inf_{\epsilon\geq 0} \frac{\H_{M,T,\epsilon}(\psi, \theta_{1:M})}{MT} + \epsilon\\
        & \leq \inf_{\epsilon\geq 0} \frac{\ln(N)}{T} + \frac{r\max\{r,d\}RL\ln\left(1+\frac{M}{R}\right)\ln\left(1+\frac{r\max\{r,d\}KLT(8K(1+16K))^L}{\epsilon}\right)}{MT} + \epsilon \\
        & \leq \frac{\ln(N)}{T} + \frac{r\max\{r,d\}RL\ln\left(1+\frac{M}{R}\right)\ln\left(1+\frac{KMT^2(8K(1+16K))^L}{L}\right)}{MT} + \frac{r\max\{r,d\}RL^2}{MT}\\
        & \leq \frac{\ln(N)}{T} + \frac{r\max\{r,d\}RL\ln\left(1+\frac{M}{R}\right)\ln\left(\frac{2KMT^2(8K(1+16K))^L}{L}\right)}{MT} + \frac{r\max\{r,d\}RL^2}{MT}\\
        & \leq \frac{r\max\{r,d\}RL^2\ln\left(1+\frac{M}{R}\right)\ln\left(8Ke(1+16K)\right)}{MT} + \frac{r\max\{r,d\}RL\ln\left(1+\frac{M}{R}\right)\ln\left(\frac{2KMT^2}{L}\right)}{MT} + \frac{\ln(N)}{T}
    \end{align*}
\end{proof}
Notably, the term which decays linearly in $T$ indicates that once the transformer weights of the sampled models are learned, the only information which must be deduced from each document is its index.  As a result, the error grows logarithmically in $N$ (the size of the mixture) and decays linearly in $T$ (the length of the intra-task sequence).  The first term scales linearly in the product of the parameter count and depth of the transformer model.  Furthermore it also scales linearly in $R\ln(1+M/R)$, the expected number of unique mixture elements from $M$ documents.  This component of the error decays linearly in $MT$, so it will decay both as the number of documents and the number of tokens per document grow.  As a result, for a pre-training dataset with sufficiently large $M$, this portion of the error could become negligible, which would indicate that the loss would eventually be dominated by $\ln(N)/T$.  This is intuitive as once the transformer weights have been learned, the algorithm just has to disambiguate which index the next document belongs to.

\newpage
\begin{summary}
\begin{itemize}
    \item Meta-learning consists of a random process $(X_t^{(m)}: m, t \in \Z_{++})$ of observations.  For any fixed $m$, $(X_0^{(m)}, X_1^{(m)}, \ldots)$ represents the data associated with a particular \emph{task} indexed by $m$.
    \item The \emph{meta-parameters} $\psi$, represent information which is shared \emph{across} tasks.
    \item The \emph{intra-task parameters} $(\theta_1, \theta_2, \ldots)$ are iid when conditioned on $\psi$ and for all $m$, $X_{0:\infty}^{(m)}\perp \psi|\theta_m$.
    \item A meta-learning algorithm $\pi$ produces for each $(m,t) \in [M] \times [T]$, a predictive distribution $P_{m,t}$ of $X_{t+1}^{(m)}$ after observing the concatenated history which we denote by
    $$H_{m,t} = \left(X^{(1)}_{0:T}, X^{(2)}_{0:T}, \ldots, X^{(m-1)}_{0:T},X^{(m)}_{0:t}\right).$$
    \item For all $M, T \in \Z_{++}$, we measure the error realized by our predictions $P_{m,t}$ with respect to the gold-standard clairvoyant prediction $P^*_{m,t} = \Pr(X^{(m)}_{t+1}\in\cdot|\psi,\theta_{1:M}, H_{m,t})$ via the expected KL divergence:
    $$\frac{1}{MT}\sum_{m=1}^{M-1}\sum_{t=0}^{T-1}\E_{\pi}\left[\KL(P^*_{m,t}\| P_{m,t})\right].$$
    \item {\bf (Bayesian posterior is optimal)}
    For all $m,t\in \Z_{+}$,
    $$\E\left[ \KL( P^*_{m,t} \| \hat{P}_{m,t})\right] = \min_{\pi}\ \E_{\pi}\left[\KL(P^*_{m,t} \| P_{m,t})\right].$$
    \item{\bf (error decomposition)}
    For all $M,T \in \Z_{+}$,
    \begin{align*}
        &\quad \frac{1}{MT}\sum_{m=1}^{M}\sum_{t=0}^{T-1} \E\left[ \KL(P^*_{m,t}\|\hat{P}_{m,t}) \right]\\
        & = \frac{1}{MT}\sum_{m=1}^{M}\sum_{t=0}^{T-1}\underbrace{\E\left[\KL(\Pr(X^{(m)}_{t+1}\in\cdot|\psi, H_{m,t})\ \|\ \Pr(X^{(m)}_{t+1}\in\cdot|H_{m,t}))\right]}_{{\rm meta\ error}}\\
        &\quad +\frac{1}{MT}\sum_{m=1}^{M}\sum_{t=0}^{T-1}\underbrace{\E\left[\KL(\Pr(X^{(m)}_{t+1}\in\cdot|\theta_m, \psi, H_{m,t})\ \|\ \Pr(X^{(m)}_{t+1}\in\cdot|\psi, H_{m,t}))\right]}_{{\rm intra-task\ error}}.
    \end{align*}
    \item Let $\tilde{\Psi}$ denote the set of random variables $\tilde{\psi}$ such that for all $m,t$, $X^{(m)}_{t+1}\perp \tilde{\psi}|(\psi, H_{m,t})$.  Then, let
    $$\H_{\epsilon, M,T}(\psi) \ =\ \inf_{\tilde{\psi}\in\tilde{\Psi}_{\epsilon, M, T}}\ \I(\psi;\tilde{\psi}),$$
    where
    $$\tilde{\Psi}_{\epsilon, M, T} \ =\ \left\{\tilde{\psi} \in \tilde{\Psi}: \frac{1}{MT}\sum_{m=1}^{M}\sum_{t=0}^{T-1}\E\left[\KL(\Pr(X^{(m)}_{t+1}\in\cdot|\psi, H_{m,t}) \| \Pr(X^{(m)}_{t+1}\in\cdot|\tilde{\psi}, H_{m,t}))\right]\leq \epsilon\right\}.$$
    \item Let $\tilde{\Theta}_m$ denote the set of random variables $\tilde{\theta}_m$ such that for all $t$, $X^{(m)}_{t+1} \perp \tilde{\theta}_m|(\theta_m, H_{m,t})$.  Then, let
    $$\H_{\epsilon, T}(\theta_m|\psi)\ =\ \inf_{\tilde{\theta}_m\in\tilde{\Theta}_{\epsilon, m, T}}\ \I(\theta_m;\tilde{\theta}_m|\psi),$$
    where
    $$\tilde{\Theta}_{\epsilon, m, T}\ =\ \left\{\tilde{\theta}\in\tilde{\Theta}_m: \frac{1}{T}\sum_{t=0}^{T-1}\frac{\I(X^{(m)}_{0:T};\theta_m|\tilde{\theta}, \psi)}{T} \leq \epsilon\right\}.$$
    \item {\bf ($(M,T)$-horizon rate-distortion error bound)}
    For all $M, T \in \Z_{+}$, and $m \in \{1, \ldots, M\}$,
    \begin{align*}
        \frac{1}{MT}\sum_{m=1}^{M}\sum_{t=0}^{T-1}\E\left[\KL(P^*_{m,t}\|\hat{P}_{m,t})\right]
        &\ \leq\ \inf_{\epsilon\geq 0}\ \frac{\H_{\epsilon,M,T}(\psi)}{MT} + \epsilon\ +\ \inf_{\epsilon' \geq 0}\ \frac{\H_{\epsilon', T}(\theta_m|\psi)}{T} + \epsilon',
    \end{align*}
    and
    \begin{align*}
        \frac{1}{MT}\sum_{m=1}^{M}\sum_{t=0}^{T-1}\E\left[\KL(P^*_{m,t}\|\hat{P}_{m,t})\right]
        &\ \geq\ \sup_{\epsilon\geq 0}\min\left\{\frac{\H_{\epsilon,M,T}(\psi)}{MT}, \epsilon\right\}\ +\ \sup_{\epsilon' \geq 0}\min\left\{ \frac{\H_{\epsilon', T}(\theta_m|\psi)}{T}, \epsilon'\right\}.
    \end{align*}
    \item  Let $\tilde{\Theta}_{M}$ denote the set of random variables $\tilde{\theta}$ such that for all $t\in\mathbb{Z}_{+}$ and $m \in \{1, 2, \ldots, M\}$,\\ $X^{(m)}_{t+1}\perp \tilde{\theta}|(\theta_{1:M}, \psi, H_{m,t})$.  Then let
    $$\H_{\epsilon, M,T}(\psi, \theta_{1:M})\ =\ \inf_{\tilde{\theta}\in\tilde{\Theta}_{\epsilon, M,T}}\ \I(\psi,\theta_{1:M};\tilde{\theta}),$$
    where
    $$\tilde{\Theta}_{M,T,\epsilon}\ =\ \left\{\tilde{\theta}\in\tilde{\Theta}_M: \frac{1}{MT}\sum_{m=1}^{M}\sum_{t=0}^{T-1}\E\left[\KL(\Pr(X^{(m)}_{t+1}\in\cdot|\psi,\theta_{1:m},H_{m,t}) \| \Pr(X^{(m)}_{t+1}\in\cdot|\tilde{\theta},H_{m,t}))\right] \leq \epsilon\right\}.$$
    \item {\bf (joint $(M,T)$-horizon rate-distortion error bound)}
    For all $M,T\in\Z_{+}$,
    $$\frac{1}{MT}\sum_{m=1}^{M}\sum_{t=0}^{T-1} \E\left[\KL(P^*_t\|\hat{P}_t)\right]\ \leq\ \inf_{\epsilon \geq 0}\ \frac{\H_{\epsilon,M,T}(\psi, \theta_{1:M})}{MT} + \epsilon,$$
    and
    $$\frac{1}{MT}\sum_{m=1}^{M}\sum_{t=0}^{T-1} \E\left[\KL(P^*_t\|\hat{P}_t)\right]\ \geq\ \sup_{\epsilon \geq 0}\ \min\left\{\frac{\H_{\epsilon,M,T}(\psi,\theta_{1:M})}{MT},\ \epsilon \right\}.$$
\end{itemize}
\end{summary}
\clearpage

\section{Misspecification}

All results presented in previous chapters pertain to an agent that carries out Bayesian inference with a correctly specified prior distribution.  In this chapter, we establish results that characterize the impact of prior misspecification.  We apply these results to study error with misspecified linear and neural network models.

\subsection{General Theoretical Results}

In order to model the misspecified prior, we introduce an alternative probability measure $\mathbb{Q}$ on $(\Omega, \mathbb{F})$.  This measure $\mathbb{Q}$ may assign a different marginal distribution to $\theta$ but distributions conditioned on $\theta$ are identical: $\mathbb{Q}(\cdot|\theta) = \Pr(\cdot|\theta)$.  In particular, for all $t$, 
$\mathbb{Q}(H_t \in \cdot | \theta) = \mathbb{P}(H_t \in \cdot | \theta)$.
The posterior of $\theta$ conditioned on $H_t=h$ under the misspecified prior is $\mathbb{Q}(\theta \in \cdot|H_t = h)$.  We will use the notation $\mathbb{Q}(\theta \in \cdot|H_t \leftarrow H_t)$ to express the random distribution obtained by sampling $h$ from $\Pr(H_t \in \cdot)$ and calculating the conditional probability $\mathbb{Q}(\theta \in \cdot|H_t = h)$.

For all $t$, let
$$\hat{Q}_t = \mathbb{Q}(Y_{t+1}\in\cdot|H_t\leftarrow H_t).$$
Note that this is the posterior predictive distribution with the history drawn according to $\Pr$ and Bayesian inference carried out according to $\mathbb{Q}$.  Our analysis will make use of a set $\Theta^\dagger$ consisting of random variables $\theta^\dagger$ for which $\Pr(\theta^\dagger \in \cdot)$ and $\mathbb{Q}(\theta \in \cdot)$ are equivalent measures; that is, each is absolutely continuous with respect to the other.  We refer to such a random variable $\theta^\dagger$ as a {\it constrained approximation} of $\theta$.  When $\Pr(\theta \in \cdot)$ and $\mathbb{Q}(\theta \in \cdot)$ are equivalent measures, we will typically take the constrained approximation $\theta^\dagger$ to be identical to $\theta$.  Otherwise, $\theta^\dagger$ will typically forgo some information expressed by $\theta$; the amount of information lost depends on the degree of model misspecification.

Recall that the $T$-horizon rate-distortion function of $\theta$ is 
$$\H_{\epsilon, T}(\theta) = \ \inf_{\tilde{\theta} \in \tilde{\Theta}_{\epsilon, T}}\ \I(\theta;\tilde{\theta}),$$
where
$$
\tilde{\Theta}_{\epsilon, T} = \left\{\tilde{\theta} \in \tilde{\Theta}: \frac{1}{T}\sum_{t=0}^{T-1}\E\left[\KL(\Pr(Y_{t+1}\in\cdot|\theta, H_t) \| \Pr(Y_{t+1}\in\cdot|\tilde{\theta}, H_t))\right] \leq \epsilon\right\}.
$$
Recall that $\tilde{\Theta}$ is the set of random variables $\tilde{\theta}$ such that, for all $t$, $\tilde{\theta} \perp Y_{t+1} | (\theta, H_t)$.  Recall that we interpret $\tilde{\theta}$ as an (unconstrained) approximation of $\theta$.  In this chapter, we will use a variation of this definition that pertains to approximation of $\theta^\dagger$ instead of $\theta$.  In particular, for any random variable $\theta^\dagger \in \Theta^\dagger$, let
$$\H^\dagger_{\epsilon, T}(\theta^\dagger) = \ \inf_{\tilde{\theta}^\dagger \in \tilde{\Theta}^\dagger_{\epsilon, T}}\ \I(\theta^\dagger;\tilde{\theta}^\dagger),$$\
where
\begin{align*}
    \tilde{\Theta}^\dagger_{\epsilon, T}
    & = \left\{\tilde{\theta}^\dagger \in \tilde{\Theta}: \frac{1}{T}\sum_{t=0}^{T-1}\E\left[\KL(\Pr(Y_{t+1}\in\cdot|\theta^\dagger, \tilde{\theta}^\dagger, H_t) \| \Pr(Y_{t+1}\in\cdot|\tilde{\theta}^\dagger, H_t))\right] \leq \epsilon\right\}.
\end{align*}

The following result upper bounds the error of $\hat{Q}_t$ in terms of an irreducible misspecification error and the $T$-horizon rate-distortion function.

\begin{theorem}\label{th:unrealizable}{\bf (misspecified learner error bound)}\label{th:unrealizable}
    For all $T \in \mathbb{Z}_{++}$,
    \begin{align*}
        \frac{1}{T}\sum_{t=0}^{T-1}\ \E\left[\KL\left(P^*_t\|\hat{Q}_t\right)\right]
        \leq\ \inf_{\theta^\dagger\in \Theta^\dagger} \left(\underbrace{\frac{1}{T}\sum_{t=0}^{T-1}\E\left[\KL(P^*_t\|P^\dagger_t)\right] + \frac{\KL(\Pr(\theta^\dagger\in\cdot) \| \mathbb{Q}(\theta\in\cdot) )}{T}}_{\rm misspecification\ error}\ +\ \underbrace{\inf_{\epsilon\geq 0}\ \left(\frac{\H^\dagger_{\epsilon, T}(\theta^\dagger)}{T} + \epsilon\right)}_{\rm statistical\ error}\right),
    \end{align*}
    where $P^\dagger_t = \Pr(Y_{t+1}\in\cdot|\theta\leftarrow \theta^{\dagger}, H_t)$. 
\end{theorem}
\begin{proof}
While the stated result applies to continuous random variables, to keep this proof simple, we will restrict attention to the discrete case, leaving extension to the reader.

Fix some $\theta^\dagger \in \Theta^\dagger$ and let $f$ denote a random function such that $\Pr(f(\theta^\dagger)\in\cdot) \overset{d}{=} \mathbb{Q}(\theta\in\cdot)$.  Such a function $f$ must exist since $\Pr(\theta^\dagger\in\cdot)$ and $\mathbb{Q}(\theta\in\cdot)$ are equivalent measures.  Let $\Theta_{\mathbb{Q}}$ be the support of $\mathbb{Q}(\theta\in \cdot)$ and let
    $$\mathbb{Q}(H_T) = \sum_{\nu\in\Theta_{\mathbb{Q}}} \mathbb{Q}(\theta=\nu) \cdot \Pr(H_T|\theta=\nu).$$
    Then,
    \begin{align*}
        & \sum_{t=0}^{T-1}\ \E\left[\KL(P^*_t\|\hat{Q}_t)\right]\\
        & = \sum_{t=0}^{T-1}\E\left[\ln\frac{\Pr\left(Y_{t+1}|\theta, H_t\right)}{\hat{Q}_t(Y_{t+1})}\right]\\
        & = \E\left[\ln \frac{\mathbb{P}(H_T|\theta)}{\mathbb{Q}(H_T)}\right]\\
        & = \E\left[\ln \frac{\mathbb{P}(H_T|\theta)}{\Pr(H_T|\tilde{\theta}^{\dagger})}\right]
        + \E\left[\ln \frac{\mathbb{P}(H_T|\tilde{\theta}^{\dagger})}{\mathbb{Q}(H_T)}\right]\\
        & = \E\left[\ln \frac{\mathbb{P}(H_T|\theta)}{\Pr(H_T|\tilde{\theta}^{\dagger})}\right] + \E\left[\ln \frac{\mathbb{P}(H_T|\tilde{\theta}^{\dagger})}{\sum_{\nu\in \Theta_{\mathbb{Q}}}\mathbb{Q}(\theta=\nu)\cdot\Pr(H_T|\theta=\nu)}\right]\\
        & = \E\left[\ln \frac{\mathbb{P}(H_T|\theta)}{\Pr(H_T|\tilde{\theta}^{\dagger})}\right] + \E\left[\ln \frac{\mathbb{P}(H_T|\tilde{\theta}^{\dagger})}{\sum_{\nu\in \Theta_{\mathbb{Q}}}\Pr(f(\theta^{\dagger})=\nu)\cdot\Pr(H_T|\theta=\nu)}\right]\\
        & =  \E\left[\ln \frac{\mathbb{P}(H_T|\theta)}{\Pr(H_T|\tilde{\theta}^{\dagger})}\right] + \E\left[\ln \frac{\sum_{\nu\in\Theta_{\mathbb{Q}}}\mathbb{P}(H_T,\theta^{\dagger}=\nu|\tilde{\theta}^{\dagger})}{\sum_{\nu\in \Theta_{\mathbb{Q}}}\Pr(f(\theta^{\dagger})=\nu)\cdot\Pr(H_T|\theta=\nu)}\right]\\
        & =  \E\left[\ln \frac{\mathbb{P}(H_T|\theta)}{\Pr(H_T|\tilde{\theta}^{\dagger})}\right] + \E\left[\sum_{h\in\mathcal{H}}\sum_{\nu'\in\Theta_{\mathbb{Q}}}\mathbb{P}(H_T=h,\theta^{\dagger}=\nu'|\tilde{\theta}^{\dagger})\ln \frac{\sum_{\nu\in\Theta_{\mathbb{Q}}}\mathbb{P}(H_T=h,\theta^{\dagger}=\nu|\tilde{\theta}^{\dagger})}{\sum_{\nu\in \Theta^{\dagger}}\Pr(f(\theta^{\dagger})=\nu)\cdot\Pr(H_T=h|\theta=\nu)}\right]\\
        & \overset{(a)}{\leq}   \E\left[\ln \frac{\mathbb{P}(H_T|\theta)}{\Pr(H_T|\tilde{\theta}^{\dagger})}\right] + \E\left[\sum_{h\in\mathcal{H}}\sum_{\nu\in\Theta_{\mathbb{Q}}}\mathbb{P}(H_T=h,\theta^{\dagger}=\nu|\tilde{\theta}^{\dagger})\ln \frac{\mathbb{P}(H_T=h,\theta^{\dagger}=\nu|\tilde{\theta}^{\dagger})}{\Pr(f(\theta^{\dagger})=\nu)\cdot\Pr(H_T=h|\theta=\nu)}\right]\\
        & =  \E\left[\ln \frac{\mathbb{P}(H_T|\theta)}{\Pr(H_T|\tilde{\theta}^{\dagger})}\right] + \E\left[\ln \frac{\Pr(H_T|\tilde{\theta}^\dagger)}{\Pr(H_T|\theta\leftarrow \theta^\dagger)}\right] + \E\left[\sum_{h\in\mathcal{H}}\sum_{\nu\in\Theta_{\mathbb{Q}}}\mathbb{P}(H_T=h,\theta^{\dagger}=\nu|\tilde{\theta}^{\dagger})\ln \frac{\Pr(\theta^\dagger=\nu|H_t=h,\tilde{\theta}^{\dagger})}{\Pr(f(\theta^{\dagger})=\nu)}\right]\\
        & = \sum_{t=0}^{T-1}\E\left[\KL(P^*_t\|P^\dagger_t)\right] + \E\left[\sum_{h\in\mathcal{H}}\sum_{\nu\in\Theta_{\mathbb{Q}}}\mathbb{P}(H_T=h,\theta^{\dagger}=\nu|\tilde{\theta}^{\dagger})\ln \frac{\Pr(\theta^\dagger=\nu|H_t=h,\tilde{\theta}^{\dagger})}{\Pr(f(\theta^{\dagger})=\nu)}\right]\\
        & = \sum_{t=0}^{T-1}\E\left[\KL(P^*_t\|P^\dagger_t)\right] + \E\left[\sum_{h\in\mathcal{H}}\sum_{\nu\in\Theta_{\mathbb{Q}}}\mathbb{P}(H_T=h,\theta^{\dagger}=\nu|\tilde{\theta}^{\dagger}) \left(\ln \frac{\Pr(\theta^\dagger=\nu|H_t=h,\tilde{\theta}^{\dagger})}{\Pr(\theta^{\dagger}=\nu)} + \ln \frac{\Pr(\theta^{\dagger}=\nu)}{\Pr(f(\theta^\dagger)=\nu)}\right)\right]\\
        & \overset{(b)}{=} \sum_{t=0}^{T-1}\E\left[\KL(P^*_t\|P^\dagger_t)\right] + \KL(\Pr(\theta^\dagger\in\cdot) \| \Pr(f(\theta^\dagger) \in \cdot) + \E\left[\KL(\Pr(\theta^\dagger\in\cdot|H_T,\tilde{\theta}^{\dagger})\|\Pr(\theta^\dagger\in\cdot))\right]\\
    \end{align*}
    where $(a)$ follows from the log sum inequality and $(b)$ follows from the fact that $\theta^\dagger$ and $f(\theta^\dagger)$ have the same support and hence the expression can be written as a KL divergence.  Further,
    \begin{align*}
        & \KL(\Pr(\theta^\dagger\in\cdot) \| \Pr(f(\theta^\dagger) \in \cdot) ) + \E\left[\KL(\Pr(\theta^\dagger\in\cdot|H_T,\tilde{\theta}^{\dagger})\|\Pr(\theta^\dagger\in\cdot))\right]\\
        & = \KL(\Pr(\theta^\dagger\in\cdot) \| \mathbb{Q}(\theta \in \cdot) ) + \I(\theta^\dagger;H_T,\tilde{\theta}^{\dagger})\\
        & = \KL(\Pr(\theta^\dagger\in\cdot) \| \mathbb{Q}(\theta \in \cdot) ) + \I(\theta^\dagger;\tilde{\theta}^{\dagger}) + \I(\theta^\dagger;H_T|\tilde{\theta}^{\dagger})\\
        & = \KL(\Pr(\theta^\dagger\in\cdot) \| \mathbb{Q}(\theta \in \cdot) ) + \I(\theta^\dagger;\tilde{\theta}^{\dagger}) +  \sum_{t=0}^{T-1}\I(Y_{t+1};\theta^\dagger|\tilde{\theta}^{\dagger}, H_t) \\
        & =  \KL(\Pr(\theta^\dagger\in\cdot) \| \mathbb{Q}(\theta \in \cdot) ) + \I(\theta^\dagger;\tilde{\theta}^{\dagger}) + \sum_{t=0}^{T-1}\E\left[\KL(\Pr(Y_{t+1}\in\cdot|\theta^\dagger,\tilde{\theta}^\dagger, H_t)\|\Pr(Y_{t+1}\in\cdot|\tilde{\theta}^\dagger, H_t))\right],
    \end{align*}
    The theorem statement follows from the fact that these relations hold for all $\tilde{\theta}^\dagger$, the definition of $\H^\dagger_{\epsilon, T}$, and by dividing both sides of the inequality by $T$.
\end{proof}
$\H^\dagger_{\epsilon, T}(\theta^\dagger)$ represents error incurred in the process of learning $\theta^\dagger$.  When $\epsilon$ is optimized for each $T$, this error vanishes as $T$ grows.  However, the misspecification error will often remain bounded away from zero for all $T$ as it represents the shortfall of making predictions using $\mathbb{Q}$ instead of $\mathbb{P}$.

In the special case where $\Pr(\theta\in\cdot)$ and $\mathbb{Q}(\theta \in\cdot)$ are equivalent measures, the misspecification error vanishes as $T$ grows.  The following corollary of Theorem \ref{th:unrealizable} establishes this.
\begin{restatable}{corollary}{misspecified}{\bf (misspecified learner with matching support error bound )}\label{th:prior_ub}
    For all $T\in\mathbb{Z}_{++}$, if $\Pr(\theta\in\cdot)$ and $\mathbb{Q}(\theta\in \cdot)$ are equivalent measures then 
    \begin{align*}
        \frac{1}{T}\sum_{t=0}^{T-1} \E\left[\KL\left(P^*_{t}\ \big\|\ \hat{Q}_{t}\right)\right]\ \leq\ \frac{\KL\left(\Pr(\theta\in\cdot)\|\mathbb{Q}(\theta\in\cdot)\right)}{T} + \inf_{\epsilon \geq 0} \left(\frac{\H_{\epsilon, T}(\theta)}{T} + \epsilon\right).
    \end{align*}
\end{restatable}
\begin{proof}
    Let $\theta^\dagger \overset{a.s.}{=} \theta$.  The result follows from Theorem \ref{th:unrealizable} since $\theta^\dagger \overset{a.s.}{=} \theta$ results in $\E[\KL(P^*_t\|P^\dagger_t)] = 0$ for all $t$ and $\H^\dagger_{\epsilon, T}(\theta) = \H_{\epsilon,T}(\theta)$.
\end{proof}

{\bf Remarks about $\theta^\dagger$}\\
Since the upper bound of Theorem \ref{th:unrealizable} holds for all $\theta^\dagger$ for which $\Pr(\theta^\dagger \in \cdot)$ and $\mathbb{Q}(\theta\in\cdot)$ are equivalent measures, deriving an upper bound reduces to applying Theorem \ref{th:unrealizable} with one such $\theta^\dagger$.  The choice of $\theta^\dagger$ may dramatically impact the resulting upper bound.  One may consider an extreme case in which $\theta^\dagger \perp \theta$, and hence $\H_{\epsilon, T}(\theta^\dagger) = 0$.  However, for this choice of $\theta^\dagger$, the misspecification error becomes very large, as 
$$\E\left[\KL(\Pr(Y_{t+1}\in\cdot|\theta, H_t) \| \Pr(Y_{t+1}\in\cdot|\theta\leftarrow\theta^\dagger, H_t))\right]$$
is very large when $\theta^\dagger$ is independent of $\theta$.

In the section to follow, we study two concrete examples involving misspecified linear models.  These examples serve to illustrate application of Theorems \ref{th:prior_ub} and \ref{th:unrealizable}.  Section \ref{se:neural-scaling-laws} then demonstrates that these theorems can be used to study misspecification of much more complex models such as neural networks.

\subsection{Linear Regression with Misspecification}

We now study two kinds of misspecified linear models.  One with a misspecified prior mean and one with a missing feature.

\subsubsection{Data Generating Process}

Recall the linear regression model of Section \ref{se:linear-regression}, which is parameterized by a random vector $\theta\in\Re^d$ of feature coefficients with prior distribution $\Pr(\theta\in\cdot) = \normal(0, I_d/d)$.  For each $t \in \Z_{+}$, inputs and outputs are generated according to a random vector $X_t \overset{iid}{\sim} \normal(0, I_d)$ and
$$Y_{t+1} = \theta^\top X_t + W_{t+1},$$
where $W_{t}\overset{iid}{\sim} \normal(0, \sigma^2)$ for known variance $\sigma^2$.

\subsubsection{Mean Misspecified Algorithm}

We first study the case of a misspecified mean.  In particular, we consider  $\mathbb{Q}(\theta\in\cdot) = \normal(\mu, I_d)$.  The following result upper bounds the error introduced by this misspecified prior.
\begin{theorem}{\bf (mean misspecification error bound)}
For all $d,T \in \mathbb{Z}_{++}$, if for all $t \in \{0,1,\ldots, T-1\}$, $(X_t, Y_{t+1})$ is generated by the linear regression process and $\mathbb{Q}(\theta\in\cdot) = \normal(\mu, I_d/d)$ then
     $$\frac{1}{T}\sum_{t=0}^{T-1} \E\left[\KL\left(P^*_t \| \hat{Q}_t\right)\right]\ \leq\ \frac{\|\mu\|^2_2}{2T} + \frac{d}{2T}\ln\left(\frac{T}{\sigma^2d}\right) + \frac{1}{2T}\ln\left(1 + \frac{d}{T}\right).$$
\end{theorem}
\begin{proof}
    Let $\theta^\dagger = \theta$.  Then,
    \begin{align*}
        \frac{1}{T}\sum_{t=0}^{T} \E\left[\KL\left(P^*_t \| \hat{Q}_t\right)\right]
        & \overset{(a)}{\leq} \frac{\KL\left(\Pr(\theta\in\cdot)\|\mathbb{Q}(\theta\in\cdot)\right)}{T} + \inf_{\epsilon \geq 0} \left(\frac{\H^\dagger_{\epsilon, T}(\theta)}{T} + \epsilon\right)\\
        & \overset{(b)}{=} \frac{\|\mu\|^2_2}{2T} + \inf_{\epsilon \geq 0} \left(\frac{\H_{\epsilon, T}(\theta)}{T} + \epsilon\right)\\
        & \overset{(c)}{\leq} \frac{\|\mu\|^2_2}{2T} + \frac{d}{2T}\ln\left(\frac{T}{\sigma^2d}\right) + \frac{1}{2T}\ln\left(1 + \frac{d}{T}\right),
    \end{align*}
    where $(a)$ follows from Corollary \ref{th:prior_ub}, $(b)$ follows from the formula of KL divergence between two multivariate normal distributions and the fact that $\H^\dagger_{\epsilon,T}(\theta)= \H_{\epsilon, T}(\theta)$, and $(c)$ follows from Theorem \ref{th:lin_reg_error_bounds}.
\end{proof}

Note that the error vanishes as $T$ grows.  This is intuitive, as the misspecified prior shares the support of the correctly specified prior.  In the following example, we will study a case where the support of the misspecified prior is a small subset.

\subsubsection{Missing Feature Misspecified Algorithm}

Consider an agent that ignores the final component of $\theta$.  Let $\1_i$ denote the $i$th standard basis vector. To model this, let $\mathbb{Q}(\theta\in\cdot) = \normal(0, (I_d - \1_d \1_d^\top)/d)$, noting that $I_d  - \1_d \1_d^\top$ is the identity matrix with the $d$th diagonal element changed to zero.  In this case, $\Pr$ and $\mathbb{Q}$ are \emph{not} equivalent measures since $\mathbb{Q}(\theta_d=0) = 1$.  The following result upper bounds the error incurred by this misspecified prior.
\begin{theorem}
    {\bf (missing feature error bound)}
    \label{th:misspecified_2}
    For all $d, T \in \Z_{++}$, if for all $t\in \{0, 1, \ldots, T-1\}$, $(X_t, Y_{t+1})$ is generated by the linear regression process and $\mathbb{Q}(\theta\in\cdot) = \normal(0, (I_d - \1_d \1_d^\top)/d)$ then
    $$\frac{1}{T}\sum_{t=0}^{T-1}\ \E\left[\KL\left(P^*_t\|\hat{Q}_t\right)\right]\ \leq\ \frac{d-1}{2T}\left(\ln(T) + \frac{1}{d\sigma^2}\right) + \frac{1}{2d\sigma^2}.$$
\end{theorem}
\begin{proof}
    For $i \in \{1, 2, \ldots, d-1\}$, let $\theta^\dagger_i = \theta_i$ and let $\theta^\dagger_d = 0$.  Then, 
    \begin{align*}
        \frac{1}{T}\sum_{t=0}^{T-1}\ \E\left[\KL\left(P^*_t\|\hat{Q}_t\right)\right]
        & \overset{(a)}{\leq} \frac{1}{T}\sum_{t=0}^{T-1}\E\left[\KL(P^*_t\|P^\dagger_t)\right] + \frac{\KL(\Pr(\theta^\dagger\in\cdot) \| \mathbb{Q}(\theta\in\cdot) )}{T} + \inf_{\epsilon\geq 0}\ \left(\frac{\H^\dagger_{\epsilon, T}(\theta^\dagger)}{T} + \epsilon\right)\\
        & = \E\left[\KL(P^*_0\|P^\dagger_0)\right] + \inf_{\epsilon\geq 0}\ \left(\frac{\H^\dagger_{\epsilon, T}(\theta^\dagger)}{T} + \epsilon\right)\\
        & = \frac{\E\left[\left(\theta^\top X_t - \theta^{\dagger \top} X_t\right)^2\right]}{2\sigma^2}+ \inf_{\epsilon\geq 0}\ \left(\frac{\H^\dagger_{\epsilon, T}(\theta^\dagger)}{T} + \epsilon\right)\\
        & = \frac{\E\left[\left(\theta - \theta^\dagger\right)^\top \left(\theta - \theta^\dagger\right)\right]}{2\sigma^2}+ \inf_{\epsilon\geq 0}\ \left(\frac{\H^\dagger_{\epsilon, T}(\theta^\dagger)}{T} + \epsilon\right)\\
        & = \frac{1}{2d\sigma^2}+ \inf_{\epsilon\geq 0}\ \left(\frac{\H^\dagger_{\epsilon, T}(\theta^\dagger)}{T} + \epsilon\right)
    \end{align*}
    where $(a)$ follows from Theorem \ref{th:unrealizable}.  We now bound the rate-distortion function.

    Let $\tilde{\theta}^\dagger \in\Re^d$ be a random vector such that for all $i \in \{1, 2, \ldots, d-1\}$, $\theta_i = \tilde{\theta}_i^\dagger + Z_i$, where $Z_i \perp \tilde{\theta}$ and $Z_{1:d-1} \sim \normal(0, \epsilon I_{d-1}/d)$.  Meanwhile assume that $\tilde{\theta}^\dagger_d = \theta_d$.  Then, 
    \begin{align*}
        \E\left[\KL(\Pr(Y_{t+1}\in\cdot|\tilde{\theta}^\dagger,\theta^\dagger, H_t) \| \Pr(Y_{t+1}\in\cdot|\tilde{\theta}^\dagger,H_t))\right]
        & =  \E\left[\KL(\Pr(Y_{t+1}\in\cdot|\theta, H_t) \| \Pr(Y_{t+1}\in\cdot|\tilde{\theta}^\dagger,H_t))\right]\\
        & \leq \E\left[\KL(\Pr(Y_{t+1}\in\cdot|\theta, H_t) \| \Pr(Y_{t+1}\in\cdot|\theta \leftarrow \tilde{\theta}^\dagger,H_t))\right]\\
        & = \frac{\E\left[\left(\theta^\top X_t - \theta^{\dagger\top} X_t\right)^2\right]}{2\sigma^2}\\
        & = \frac{d-1}{2d\sigma^2}\epsilon.
    \end{align*}
    Meanwhile we have,
    \begin{align*}
        \frac{\I(\theta^\dagger;\tilde{\theta}^\dagger)}{T}
        & = \frac{1}{T}\left(\diffentropy(\theta^\dagger) - \diffentropy(\theta^\dagger|\tilde{\theta}^\dagger)\right)\\
        & = \frac{1}{T}\left(\frac{d-1}{2}\ln\left(\frac{2\pi e}{d-1}\right) - \diffentropy(\theta^\dagger-\tilde{\theta}^\dagger|\tilde{\theta}^\dagger)\right)\\
        & = \frac{1}{T}\left(\frac{d-1}{2}\ln\left(\frac{2\pi e}{d-1}\right) - \diffentropy(Z_{1:d-1})\right)\\
        & = \frac{1}{T}\left(\frac{d-1}{2}\ln\left(\frac{2\pi e}{d-1}\right) - \frac{d-1}{2}\ln \left(\frac{\epsilon 2\pi e}{d-1} \right) - \frac{1}{2}\ln\left(\frac{2\pi e}{d-1}\right)\right)\\
        & = \frac{d-1}{2T} \ln \left(\frac{1}{\epsilon}\right).
    \end{align*}
    Therefore, the result follows:
    \begin{align*}
        \frac{1}{T}\sum_{t=0}^{T-1}\ \E\left[\KL\left(P^*_t\|\hat{Q}_t\right)\right]
        & \leq \frac{1}{2d\sigma^2}+ \inf_{\epsilon\geq 0}\ \left(\frac{\H^\dagger_{\epsilon, T}(\theta^\dagger)}{T} + \epsilon\right)\\
        & \leq \frac{1}{2d\sigma^2}+ \inf_{\epsilon\geq 0}\ \left(\frac{d-1}{2T} \ln \left(\frac{1}{\epsilon}\right) + \frac{d-1}{2d\sigma^2}\epsilon\right)\\
        & \overset{(a)}{\leq} \frac{1}{2d\sigma^2}+ \frac{d-1}{2T}\left(\ln \left(T\right) + \frac{1}{d\sigma^2}\right),
    \end{align*}
    where $(a)$ follows from setting $\epsilon= 1/T$.
\end{proof}

Note that the first term in this bound vanishes as $T$ grows.  The second term, on the other hand, represents irreducible error due to the fact that $\tilde{\theta}$ ignores the final feature.

Our analysis can be extended to treat a prior that misspecifies the noise variance $\sigma^2$.  And as we will see, when a feature is missing, this additional misspecification can greatly reduce error.  To model this, we use model parameters to encode not only $\theta$ but also $\sigma$.  We take the distribution of the random variable $\sigma$ to be a Dirac delta function centered at some value $\sigma_* \in \Re_{++}$.  For the misspecified prior, we let $\mathbb{Q}(\theta \in\cdot) = \normal(0, (I_{d}- \1_d \1_d^\top)/d)$, as before, and let $\mathbb{Q}(\sigma \in\cdot)$ be the Dirac delta function centered at $\sqrt{\frac{1}{d} + \sigma^2_*}$.  The following result establishes that the error with this misspecified prior is smaller than with $\mathbb{Q}(\sigma\in\cdot) = \Pr(\sigma\in\cdot)$.  In particular, the misspecification error $1/2d\sigma^2$ is replaced by $\ln(1 + 1/d\sigma^2) / 2$.  This represents an enormous reduction when $\sigma$ is very small.
\begin{theorem}
    {\bf (missing feature with inflated noise variance error bound)}
    For all $d, T \in \Z_{++}$, if for all $t\in \{0, 1, \ldots, T-1\}$, $(X_t, Y_{t+1})$ is generated by the linear regression process and $\mathbb{Q}(\theta\in\cdot) = \normal(0, (I_d - \1_d \1_d^\top)/d)$ and $\mathbb{Q}(\sigma=\sqrt{1/d + \sigma^2_*}) = 1$ then
    $$\frac{1}{T}\sum_{t=0}^{T-1}\ \E\left[\KL\left(P^*_t\|\hat{Q}_t\right)\right]\ \leq\ \frac{d-1}{2T}\left(\ln(T) + \frac{1}{d\sigma^2}\right) + \frac{1}{2}\ln \left(1 + \frac{1}{d\sigma^2}\right).$$
\end{theorem}
\begin{proof}
    Let $\theta^\dagger_i = \theta_i$ for $i \in \{1,2,\ldots, d-1\}$ and $\theta^\dagger_d \overset{a.s.}{=} 0$.  Let $\sigma^\dagger = \sqrt{\sigma^2 + \frac{1}{d}}$.
    Then,
    \begin{align*}
        \frac{1}{T}\sum_{t=0}^{T-1} \E\left[\KL(P^*_t \| \hat{Q}_t)\right]
        & \leq \E\left[\KL(P^*_t \| \Pr(Y_{1}\in\cdot|\theta\leftarrow\theta^\dagger, \sigma\leftarrow \sigma^\dagger, X_0))\right] + \frac{\H^\dagger_{\epsilon, T}(\theta^\dagger, \sigma^\dagger) }{T} + \epsilon\\
        & = \E\left[\KL(P^*_t \| \Pr(Y_{1}\in\cdot|\theta\leftarrow\theta^\dagger, \sigma\leftarrow \sigma^\dagger, X_0))\right] + \frac{\H^\dagger_{\epsilon, T}(\theta^\dagger) }{T} + \epsilon\\
        & = \frac{1}{2}\ln \left(1 + \frac{1}{d\sigma^2}\right) + \frac{\sigma^2 + \E\left[(\theta^\top X_0 - \theta^{\dagger \top} X_0)^2\right]}{2(\frac{1}{d}+\sigma^2)} - \frac{1}{2} + \frac{\H^\dagger_{\epsilon, T}(\theta^\dagger) }{T} + \epsilon\\
        & \overset{(a)}{\leq} \frac{1}{2}\ln \left(1 + \frac{1}{d\sigma^2}\right) + \frac{\sigma^2 + \E\left[(\theta^\top X_0 - \theta^{\dagger \top} X_0)^2\right]}{2(\frac{1}{d}+\sigma^2)} - \frac{1}{2} + \frac{d-1}{2T}\left(\ln(T) + \frac{1}{d\sigma^2}\right)\\
        & =  \frac{d-1}{2T}\left(\ln(T) + \frac{1}{d\sigma^2}\right) + \frac{1}{2}\ln \left(1 + \frac{1}{d\sigma^2}\right),
    \end{align*}
    where $(a)$ follows from Theorem \ref{th:misspecified_2}
\end{proof} 

\subsection{Neural Scaling Laws}
\label{se:neural-scaling-laws}

Theorem \ref{th:unrealizable} also offers insights into trade-offs arising in the development of more complex machine learning models.  One example where this result can be used to generate new insight arises in the study of neural scaling laws.  In particular, how to optimally balance misspecification versus statistical errors subject to computational constraints \citep{hoffmann2022training, kaplan2020scaling}.

Consider a simple model of computation in deep learning, where computation is measured in FLOPs, which is the product of the parameter count of the trained model and the size of the training dataset.  Increasing the parameter count reduces misspecification error but at the cost of statistical error.  Meanwhile, increasing the dataset size reduces statistical error.  In this section, we discuss the problem studied by \cite{jeon2024informationtheoreticfoundationsneuralscaling} of sizing the model versus the dataset to minimize loss subject to a \emph{fixed} FLOP count.

For a FLOP constraint $p \cdot T \leq C$, where $p$ denotes the parameter count of the model and $T$ the dataset size, there is a tension between $p$ and $T$ in minimizing the upper bound of Theorem \ref{th:unrealizable}.  This can be seen by first fixing a FLOP count $C$ and substituting $T= C/p$.  The support of the misspecified prior distribution $\mathbb{Q}(\theta\in\cdot)$ includes only learning models with at most $p$ parameters.  If there is a constrained approximation $\theta^\dagger$ that satisfies $\Pr(\theta^\dagger\in\cdot) = \mathbb{Q}(\theta\in\cdot)$, as there will be for the model we formulate, then the upper bound  of Theorem \ref{th:unrealizable} implies
$$
\frac{1}{T} \sum_{t=0}^{T-1} \E\left[\KL(P^*_t \| \hat{Q}_t)\right]\ \leq\ 
\underbrace{\frac{p\cdot \I(\theta^\dagger;H_T)}{C}}_{\rm statistical\ error}\ +\ \underbrace{\E\left[\KL\left(P^*_t\|P^\dagger_t\right)\right]}_{\rm misspecification\ error}.$$
As our analysis will establish, the statistical error \emph{increases} in $p$ whereas the misspecification error \emph{decreases} in $p$.  Under a fixed FLOP budget, the model designer ought to select $p$ to balance between the two sources of error.  In the following section, we present a neural network model for which we can mathematically analyze this optimal allocation.  

Our analysis will lead to a bound (Theorem \ref{th:eff_frontier}) on the error suffered by a finite-width neural network used to predict labels generate by an infinite width neural network.  By choosing the number of parameters $p^*$ of the finite-width neural network to minimize this error bound subject to the FLOP count constraint $p \cdot T \leq C$, we will establish a scaling law (Theorem \ref{th:eff_frontier}) of the form
$$p^* = \tilde{\Theta}(\sqrt{C}).$$
In other words, up to logarithmic factors, the optimal number of parameters $p^*$ grows with the square root of the FLOP count.

\subsubsection{Data Generating Process}

\begin{figure}[H]
    \centering
    \includegraphics[width=\linewidth]{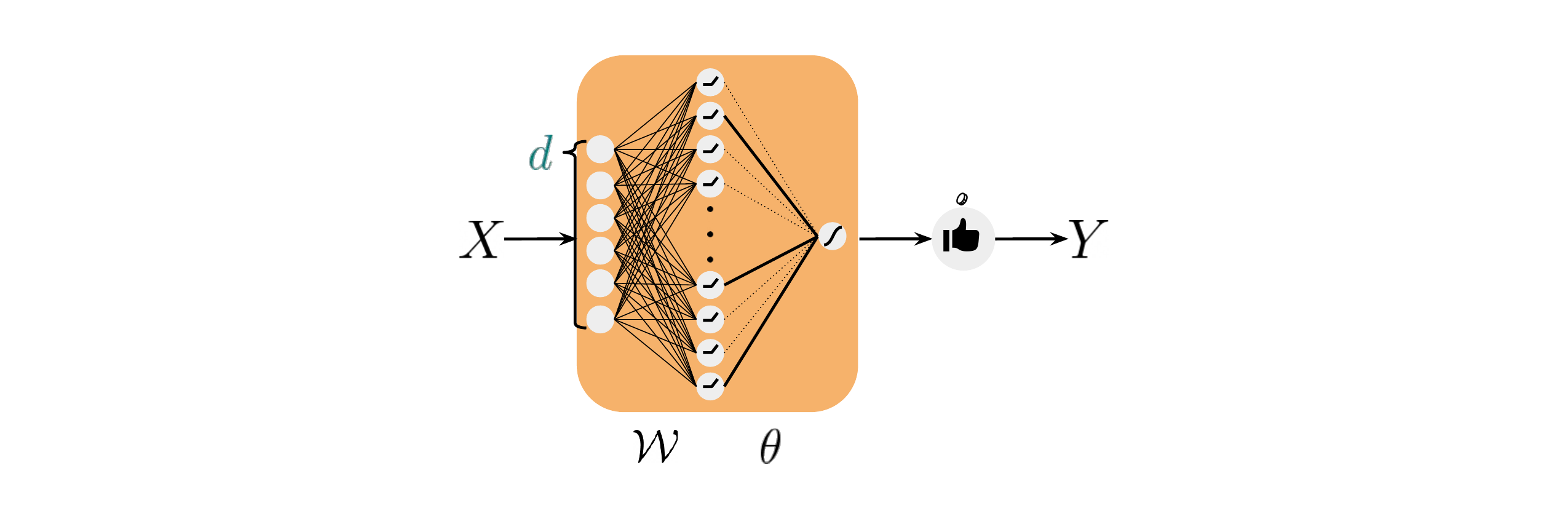}
    \caption{Our Dirichlet data generating process is specified by an input dimension $d$ and an infinite number of hidden units with ReLU activations.  The output is scalar and passed through a sigmoidal activation unit to produce a probability from which a binary label is drawn.}
    \label{fig:dirichlet}
\end{figure}

We consider a data generating process which resembles the infinite-width MLP process that we analyzed in Section \ref{sec:inf_mlp}.  The MLP is uniquely identified by an infinite-dimensional matrix $A \in \Re^{\infty\times d}$, which represents the first-layer weights, and an infinite-dimensional vector $\theta\in\Re^{\infty}$ of output-layer weights.  The neural network has input dimension $d$ and output dimension $1$.  In this analysis, we restrict attention to a particular prior distribution $\Pr((A,\theta)\in\cdot)$ over weights.  This prior distribution is characterized by a \emph{Dirichlet process}, which we now describe.

A Dirichlet process takes as input a \emph{scale parameter} $K$ and a base distribution $P$.  We take $P$ to be the uniform distribution $\text{Unif}(\sphere)$ over the unit sphere in $d$ dimensions.  Each realization of our Dirichlet process is a probability mass function with countable support in $\sphere$.  Hence, each realization can be expressed as a countable subset $A_1,A_2,A_3, \ldots \in \sphere$ and probabilities $\bar{\theta}_1,\bar{\theta}_2,\bar{\theta}_3, \cdots$ assigned to elements of that subset.  Notably, with probability one, $\bar{\theta} \geq 0$ and $\sum_{n=1}^{\infty} \bar{\theta}_n = 1$.

For all $t$, let $X_{t} \overset{iid}{\sim} \normal(0, I_d)$ and 
$$Y_{t+1} =  
\begin{cases}
    1 & {\rm \ w.p.\ }\ \sigma\left(\sqrt{K+1}\cdot \sum_{n=1}^{\infty} \theta_n \relu(A_n^\top X_t)\right) \\
    0 & {\rm \ otherwise,}
\end{cases}
$$
with
$$\theta_n = \begin{cases}
    \bar{\theta}_n & \text{ w.p. } 0.5 \\
    -\bar{\theta}_n & \text{ w.p. } 0.5,
\end{cases}$$
where $A_1,A_2,A_3, \ldots$ and $\bar{\theta}_1,\bar{\theta}_2,\bar{\theta}_3, \ldots$ express an independent sample from our Dirichlet process and $\sigma$ is the standard logistic function.

The scale parameter $K$ influences diffusiveness of  $\bar{\theta}_1,\bar{\theta}_2,\bar{\theta}_3, \ldots$.  As $K$ decreases, the distribution becomes increasingly sparse, with probability eventually concentrating on a single component $\bar{\theta}_n = 1$ when $K=0$.  As $K$ increases, the distribution tends toward uniform.

In the next section, we consider a width-$n$ approximation to this infinitely wide neural network.  The number of parameters that identify this approximation scales with $p = d n$.  As we will see, the data generating process exhibits two properties that are crucial for studying neural scaling laws: 1) error decreases as the dataset size $T$ increases and as the parameter count $p$ increases and $2)$ the reduction in error is a smooth function of $p$ and $T$.  \citet{hoffmann2022training} and \citet{kaplan2020scaling} both observe a smooth and monotonic improvement in the out-of-sample log loss with increasing computational resources and data.  In the following section, we will bound the error of predictions $\hat{Q}_t$ as a function of $n$.

% For condition $(1)$, the fact that our data generating process involves continuous random variables and data pairs which are drawn stochastically indicates that error will continue to decrease with the dataset size $T$.  Meanwhile, the fact that the data generating process is identified by an \emph{infinite}-width network ensures that even as we increase $p$, further error reduction is possible.  As for condition $(2)$, it is clear from Theorem \ref{th:unrealizable} that the estimation error decays smoothly in $T$.  Meanwhile, we will show that the misspecification error can be upper bounded by $(K+1)/n$, where $n$ is the width of the misspecified learning model.  This demonstrates that error reduction is smooth in $p$ and $T$.

\subsubsection{A Width-$n$ Misspecified Prior Distribution}

As in earlier sections, to represent misspecification we introduce an alternative probability measure $\mathbb{Q}$.  In particular, $\mathbb{Q}((\theta, A) \in\cdot)$ is the misspecified prior distribution for our infinite-dimensional MLP.  Under this misspecified prior, $A_1, \ldots, A_n$ are iid ${\rm Uniform}(\sphere)$, and $A_{n+1} = A_{n+2} = \cdots = 0$.  Further, $\overline{\theta}_{1:n}$ is independently sampled ${\rm Dirichlet}(K/n)$, where $K$ is the scale parameter introduced in the previous section, and $\overline{\theta}_{n+1} = \overline{\theta}_{n+2} = \cdots = 0$.  Finally, $\theta$ is determined by $\overline{\theta}$ as described in the previous section.

Note that, under our misspecified prior, $(\theta, A)$ can be interpreted as the parameters of an MLP of width $n$.  Furthermore, as $n$ grows, the misspecified prior $\mathbb{Q}((\theta, A) \in \cdot)$ converges in distribution to the true prior $\Pr((\theta, A) \in \cdot)$.

\subsubsection{A Random Variable of Equal Distribution}

To facilitate our analysis, we construct a random tuple $(\theta^\dagger, A^\dagger)$ such that $\mathbb{P}((\theta^\dagger, A^\dagger) \in\cdot) = \mathbb{Q}((\theta, A) \in\cdot)$.  Let $(\delta_1, \delta_2, \ldots)$ be an iid process for which $\Pr(\delta_j\in\cdot) =  {\rm Uniform}(\{0, 1, \ldots, n\})$.  To interpret $\delta_j$, consider a random partition of the positive integers into $n$ cells and think of $\delta_j$ as indexing the cell to which we assign $\theta_j$.  For all $i \in \{1, 2, \ldots, n\}$, $\theta^\dagger_i$ aggregates weights $\theta_j$ that are assigned to cell $i$ in the following way:
$$\theta^\dagger_i =  {\rm sign}\left( \sum_{j: \delta_j = i} \theta_j \right) \cdot \sum_{j: \delta_j = i} |\theta_j|.$$
Further, $\theta^\dagger_{n+1} = \theta^\dagger_{n+2} = \cdots = 0$.  There are many ways to sample $A^\dagger$ such that $\mathbb{P}((\theta^\dagger, A^\dagger) \in\cdot) = \mathbb{Q}((\theta, A) \in\cdot)$.  To fix on one, for $i \in \{1, 2, \ldots, n\}$, we let $A^\dagger_i = A_{j_i}$, where $j_i = \argmax_{j: \delta_j = i} |\theta_j|$.  Further, $A^\dagger_{n+1} = A^\dagger_{n+2} = \cdots = 0$.  It is straightforward to show that, by the aggregation property of the Dirichlet process, $\mathbb{P}((\theta^\dagger, A^\dagger) \in\cdot) = \mathbb{Q}((\theta, A) \in\cdot)$.

\subsubsection{Preliminary Results}
We first provide a result which bounds the mean squared error of our width-n approximation derived from $(\theta^\dagger, A^\dagger)$.  

\begin{lemma}{\bf (width-n approximation error bound)}\label{le:width_n_approx}
    For all $d, K, n \in \mathbb{Z}_{++}$, if $(\theta, A)$ are drawn according to the infinite-width MLP process and $(\theta^\dagger, A^\dagger)$ the finite-width approximation, then
    $$\E\left[ \left( \sum_{i = 1}^{\infty} \theta_i {\rm ReLU}(A_i^\top X_0) - \sum_{i=1}^{n} \theta^\dagger_i {\rm ReLU}(A_i^{\dagger\top} X_0) \right)^2 \right] \ \leq\ \frac{10}{n}.$$
\end{lemma}
\begin{proof}
    We begin by introducing the following abbreviated notation:
    $$\phi_i = {\rm ReLU}(A_i^\top X_0),\quad S_i = {\rm Sign}(\theta_i),\quad S^\dagger_i = {\rm sign}\left(\sum_{j:\delta_j=i} \theta_j\right).$$
    Furthermore, let
    $$\phi_i = \mu + \epsilon_i,$$
    where $\mu = \E[{\rm ReLU}(A_i^\top X_0)]$ and $\epsilon_i$ is zero-mean.  
    \begin{align*}
        & \E\left[\left(\sum_{i=1}^{\infty} \theta_i \phi_i - \sum_{i=1}^{n} \theta^\dagger_{i} \phi^\dagger_i\right)^2\right]\\
        & = \E\left[\left(\sum_{i=1}^{n} \theta^\dagger_i\left(\sum_{j: \delta_j = i} \frac{\theta_j}{\theta^\dagger_i} \phi_j\right) - \sum_{i=1}^{n} \theta^\dagger_{i} \phi^\dagger_i\right)^2\right]\\
        & = \E\left[\left( \sum_{i=1}^{n} \theta^\dagger_i \left(\left(\sum_{j: \delta_j = i} \frac{\theta_j}{\theta^\dagger_i} \phi_j\right) - \phi^\dagger_i \right) \right)^2\right]\\
        & \overset{(a)}{=} \sum_{i=1}^{n}\E\left[\theta^{\dagger 2}_i\right] \E\left[\left(\left(\sum_{j: \delta_j = i} S_j\cdot\left|\frac{\theta_j}{\theta^\dagger_i}\right| \phi_j\right) - S^\dagger_i \cdot \phi^\dagger_i \right)^2\right]\\
        & = \sum_{i=1}^{n}\E\left[\theta^{\dagger 2}_i\right] \E\left[\left(\left(\sum_{j: \delta_j = i} S_j\cdot\left|\frac{\theta_j}{\theta^\dagger_i}\right| (\mu + \epsilon_j)\right) - S^\dagger_i \cdot (\mu + \epsilon^\dagger_i) \right)^2\right]\\
        & = \sum_{i=1}^{n}\E\left[\theta^{\dagger 2}_i\right] \E\left[\left(\sum_{j: \delta_j = i} S_j\cdot\left|\frac{\theta_j}{\theta^\dagger_i}\right| \mu  - S^\dagger_i \cdot \mu  + \sum_{j:\delta_j = i} S_j\cdot\left|\frac{\theta_j}{\theta^\dagger_i}\right|\epsilon_j - S^\dagger_i \cdot \epsilon^\dagger_i \right)^2\right]\\
        & \overset{(b)}{=} \sum_{i=1}^{n}\E\left[\theta^{\dagger 2}_i\right] \E\left[\mu^2\left(\sum_{j: \delta_j = i} S_j\cdot\left|\frac{\theta_j}{\theta^\dagger_i}\right|  - S^\dagger_i \right)^2  + \left(\sum_{j:\delta_j = i} S_j\cdot\left|\frac{\theta_j}{\theta^\dagger_i}\right|\epsilon_j - S^\dagger_i \cdot \epsilon^\dagger_i \right)^2\right]\\
        & = \sum_{i=1}^{n}\E\left[\theta^{\dagger 2}_i\right] \E\left[\mu^2\left(1 - 2\sum_{j: \delta_j = i} S_i^\dagger S_j\cdot\left|\frac{\theta_j}{\theta^\dagger_i}\right| + \sum_{j:\delta_j=i} \left(\frac{\theta_j}{\theta^\dagger_i}\right)^2 \right)  + \sum_{j:\delta_j = i} \left(\frac{\theta_j}{\theta^\dagger_i}\right)^2\epsilon_j^2 + \epsilon^{\dagger 2}_i - 2 S^\dagger_i \cdot \frac{\theta_{j_i}}{|\theta^\dagger_i|} \epsilon^{\dagger 2}_i \right]\\
        & = \sum_{i=1}^{n}\E\left[\theta^{\dagger 2}_i\right] \E\left[\mu^2\left(1 - 2\sum_{j: \delta_j = i} S_i^\dagger S_j\cdot\left|\frac{\theta_j}{\theta^\dagger_i}\right| + \sum_{j:\delta_j=i} \left(\frac{\theta_j}{\theta^\dagger_i}\right)^2 \right)  + \epsilon^2 \left(\sum_{j:\delta_j = i} \left(\frac{\theta_j}{\theta^\dagger_i}\right)^2 + 1 - 2 S^\dagger_i \cdot \frac{\theta_{j_i}}{|\theta^\dagger_i|} \right) \right]\\
    \end{align*}
    where $(a)$ follows from the fact that the signs are independent and zero-mean across the partitions and Lukac's Proportion Theorem which allows us to split out the $\E[\theta^{\dagger2}_i]$ and $(b)$ follows from the fact that the cross terms are zero-mean.

    We now bound this sum of expectations by considering the events: $E_i : |\theta_{j_i}/\theta^\dagger_i| \geq 0.5$ for which $S^\dagger_i = S_{j_i}$ and the complement $\lnot E_i : |\theta_{j_i}/\theta^\dagger_i| < 0.5$.  
    \begin{align*}
         & \E\left[\left(\sum_{i=1}^{\infty} \theta_i \phi_i - \sum_{i=1}^{n} \theta^\dagger_{i} \phi^\dagger_i\right)^2\right]\\
         & \overset{(a)}{\leq} \sum_{i=1}^{n}\E\left[\theta^{\dagger 2}_i\right]\left( \Pr(E_i) \cdot \E\left[\left(\mu^2 + \epsilon^2\right)\left(\left(1 - \left|\frac{\theta_{j_i}}{\theta^\dagger_i}\right|\right)^2 + \sum_{j\neq j_i: \delta_j = i} \left(\frac{\theta_j}{\theta_i^\dagger}\right)^2 \right) \bigg| E_i \right] + \Pr(\lnot E_i) \cdot 4\right)\\
         & \overset{(b)}{\leq} \sum_{i=1}^{n}\E\left[\theta^{\dagger 2}_i\right]\left( \Pr(E_i) \cdot 2\E\left[\left(\mu^2 + \epsilon^2\right)\left(1 - \left|\frac{\theta_{j_i}}{\theta^\dagger_i}\right|\right)^2 \bigg| E_i \right]+ \Pr(\lnot E_i) \cdot 4\right)\\
         & \overset{(c)}{\leq} \sum_{i=1}^{n}\E\left[\theta^{\dagger 2}_i\right]\left( \Pr(E_i) \cdot 2\E\left[\left(\mu^2 + \epsilon^2\right)\left(1 - \left|\frac{\theta_{j_i}}{\theta^\dagger_i}\right|\right)^2 \right] + \Pr(\lnot E_i) \cdot 4\right)\\
         & \overset{(d)}{\leq} \sum_{i=1}^{n}\E\left[\theta^{\dagger 2}_i\right] \left( \frac{\frac{2K}{n}}{2 + \frac{K}{n}} + 4\left(1 - \left(\frac{1}{2}\right)^{K/n}\right)\right)\\
         & \overset{(e)}{\leq} \sum_{i=1}^{n}\E\left[\theta^{\dagger 2}_i\right] \left( \frac{2K}{2n + K} + \frac{4K}{n}\right)\\
         & \leq \sum_{i=1}^{n}\E\left[\theta^{\dagger 2}_i\right] \cdot \frac{5K}{n}\\
         & \overset{(f)}{=} \left(\frac{1 + \frac{K}{n}}{K+1}\right) \cdot \frac{5K}{n}\\
         & \leq \frac{10}{n}
    \end{align*}
    where $(a)$ follows from the fact that conditioned on the event $E_i$, $S^\dagger_i = S_{j_i} \perp S_j$ for all $j\neq j_i: \delta_j = i$.  $(b)$ follows from the fact that $\sum_{j\neq j_i:\delta_j=i}(\theta_j/\theta^\dagger_i)^2 \leq (\sum_{j\neq j_i:\delta_j=i} |\theta_j / \theta^\dagger_i|)^2 \leq (1 - |\theta_{j_i}/\theta^\dagger_i|)^2$, $(c)$ follows from the fact that the expected squared difference is larger when not conditioned on $E_i$, $(d)$ follows the fact that $|\theta_{j_i} / \theta^\dagger_{i}|$ is distributed ${\rm Beta}(1, K/n)$ and that $\mu^2 + \epsilon^2 < 1$,  $(e)$ follows from the fact that $(1 - 0.5^{K/n}) \leq K/n$ for $n \geq 1$, and $(f)$ follows from the fact that $|\theta^\dagger_i|$ is distributed ${\rm Dirichlet}(K/n)$ in $n$ dimensions. 
\end{proof}

With this bound on the width-n error in place, we derive the following upper bound on the misspecification error.

\begin{lemma}\label{le:miss_ub_3_new}{\bf (misspecification error bound)}
    For all $d, n, K\in\mathbb{Z}_{++}$ and $\epsilon \geq 0$,
    $$\E\left[\KL\left(P^*_0\|P^\dagger_0\right)\right]\ + \KL(\Pr((\theta^\dagger, A^\dagger)\in\cdot) \| \mathbb{Q}((\theta, A)\in\cdot)) \ \leq\ \  \frac{10(K+1)}{n}.$$
\end{lemma}
\begin{proof}
    \begin{align*}
        & \E\left[\KL\left(P^*_0\|P^\dagger_0\right)\right]\ + \KL(\Pr((\theta^\dagger, A^\dagger)\in\cdot) \| \mathbb{Q}(\theta, A\in\cdot))\\
        & \overset{(a)}{=} 
        \E\left[\KL\left(P^*_0\|P^\dagger_0\right)\right]\\
        & \overset{(b)}{\leq} \E\left[\left(\sqrt{K+1}\cdot\sum_{i=1}^{\infty}\theta_i {\rm ReLU}(A_i^\top X_0) - \sqrt{K+1}\cdot\sum_{i=1}^{\infty} \theta^\dagger_i {\rm ReLU}(A^\dagger_i X_0)\right)^2\right]\\
        & = (K+1)\E\left[\left(\sum_{i=1}^{\infty}\theta_i {\rm ReLU}(A_i^\top X_0) - \sum_{i=1}^{\infty} \theta^\dagger_i {\rm ReLU}(A^\dagger_i X_0)\right)^2\right]\\
        & \overset{(c)}{\leq} \frac{10(K+1)}{n},
    \end{align*}
    where $(a)$ follows from the aggregation property of Dirichlet process which implies $\mathbb{P}((\theta^\dagger, A^\dagger) \in \cdot) = \mathbb{Q}((\theta, A) \in\cdot)$, $(b)$ follows from Lemma \ref{le:kl_ub}, and $(c)$ follows from Lemma \ref{le:width_n_approx}.
\end{proof}

We now provide an upper bound for the $T$-horizon rate-distortion function.
\begin{lemma}{\bf (width-$n$ approximation $T$-horizon rate-distortion bound)}\label{le:misspecified_mlp_rd}
    For all $K, n \in \mathbb{Z}_{++}$, if $\theta, A$ are generated by the infinite-width MLP process and $\theta^\dagger, A^\dagger$ is the width-$n$ approximation then for all $\epsilon \geq 0$,
    $$\H^\dagger_{\epsilon,T}(\theta^\dagger, A^\dagger) \ \leq\ K\ln\left(1 + \frac{n}{\left(K+n\right)\epsilon}\right)\cdot\left(\ln\left(\frac{4Kn}{\left(K + n\right)\epsilon}\right) + d\ln \left(\frac{3(K+1)}{2\epsilon}\right)\right) + K \ln (2).$$
\end{lemma}
\begin{proof}
    We begin by bounding the rate.  Let $m = 2K/((K/n+1)\epsilon)$ and let $(C_1, C_2, \ldots, C_m)$ be an iid random process for which $\Pr(C_i) = {\rm Categorical}(|\theta^\dagger|)$.  For all $i \in \{1, 2, \ldots, n\}$, let
    $$\tilde{A}^\dagger_i = \argmin_{a \in \sphere_{\epsilon'}}\ \|A^\dagger_i - a\|^2_2,$$
    where $\epsilon' = 2\epsilon/(K+1)$ and $\sphere_{\epsilon'}$ is an $\epsilon'$-cover of $\sphere$ w.r.t the squared L2 distance.  For all $i$, we use $\mathcal{G}_i$ to denote the set $\{j: \delta_j = i\}$.  Recall that $j_i = \argmax_{j\in \mathcal{G}_i} |\theta_j|$.  Let
    $$\tilde{A}^\dagger = \left((\tilde{A}^\dagger_{C_1}, \tilde{A}^\dagger_{C_2}, \ldots, \tilde{A}^\dagger_{C_n}),\ (A_j : \forall j {\rm \ s.t.\ } \forall i,\ j\neq j_i)\right).$$
    Meanwhile let
    $$\tilde{\theta}^\dagger = \left( (C_1, C_2, \ldots, C_m),\ (S_{C_1}, S_{C_2}, \ldots, S_{C_m}),\ (S_j: \forall j {\rm \ s.t.\ } \forall i,\ j \neq j_i),\ \left(\left|\frac{\theta_j}{\theta^\dagger_i}\right|: \forall i \in \{1, \ldots, n\},\ j \in \mathcal{G}_i\right) \right).$$
    Let $\tilde{n}_m^\dagger = \sum_{i=1}^{n} \mathbbm{1}\left[\exists j \in \{1, 2, \ldots, m\} {\rm \ s.t.\ } C_j = i\right]$ denote the number of unique outcomes drawn from\\ $(C_{1:m})$.  We abbreviate $S_{C_1}, S_{C_2}, \ldots, S_{C_m}$ as $S_{C_{1:m}}$.  Finally, recall that for all $i$, $E_i = |\theta_{j_i}/\theta^\dagger_i| \geq 0.5$.  Then,
    \begin{align*}
        \I(\tilde{\theta}^\dagger, \tilde{A}^\dagger ; \theta^\dagger, A^\dagger)
        & \overset{(a)}{=} \I\left(C_{1:m},\tilde{A}^\dagger_{1:m}, S_{C_{1:m}}\ ;\ \theta^\dagger, A^\dagger\right)\\
        &\quad + \I\left((S_j: \forall j {\rm \ s.t.\ } \forall i,\ j \neq j_i);\theta^\dagger\bigg|C_{1:m}, \tilde{A}^\dagger_{1:m}, S_{C_{1:m}}, \left(\left|\frac{\theta_j}{\theta^\dagger_i}\right|: \forall i \in \{1, \ldots, n\},\ j \in \mathcal{G}_i\right) \right)\\
        & \overset{(b)}{\leq} \E\left[\tilde{n}^\dagger \ln \left( 2m \left(\frac{3}{\epsilon'}\right)^d \right)\right] + \sum_{i=1}^{n} \Pr(\lnot E_i) \cdot \ln(2)\\
        & \overset{(c)}{\leq} K\ln\left(1 + \frac{m}{K}\right) \cdot \left( \ln(2m) + d\ln\left(\frac{3}{\epsilon'}\right)\right) + n \left(1-\left(\frac{1}{2}\right)^{\frac{K}{n}}\right) \ln (2)\\
        & \leq K\ln\left(1 + \frac{n}{\left(K+n\right)\epsilon}\right)\cdot\left(\ln\left(\frac{4Kn}{\left(K + n\right)\epsilon}\right) + d\ln \left(\frac{3(K+1)}{2\epsilon}\right)\right) + K\ln(2),
    \end{align*}
    where $(a)$ follows from the fact that $(|\theta_j/\theta^\dagger_i|:  \forall i \in \{1, \ldots, n\},\ j \in \mathcal{G}_i) \perp (\theta^\dagger, A^\dagger)$ from Lukac's proportion theorem, $(b)$ follows from Theorem \ref{th:entropy_code} where $(C_{1:m}, \tilde{A}^\dagger_{1:m}, S_{C_{1:m}})$ are encoded as $\tilde{n}_m^\dagger$ words, each with length $\ln(2m)$ to denote the unique value in $(-1, -(m-1)/m, -(m-2)/m, \ldots, -1/m, 1/m, \ldots, (m-1)/m, 1)$ and length $\ln((3/\epsilon')^d)$ to denote the element of $\sphere_{\epsilon'}$ and the fact that if $E_i$, the conditional mutual information is $0$, and $(c)$ follows from Lemma \ref{le:num_unique}.

    We now bound the distortion. Let
    $$F(x) = \sqrt{K+1} \cdot \sum_{i=1}^{\infty}\theta_i {\rm ReLU}(A_i^\top x),$$
    and let
    $$\tilde{F}^\dagger(x) = \sqrt{K+1}\cdot\sum_{i=1}^{m} \left( \frac{\theta_{j_{C_i}}}{m|\theta^\dagger_{C_i}|}{\rm ReLU}(\tilde{A}^{\dagger\top}_{C_i} x) + \sum_{j\in\mathcal{G}_{C_i}\setminus\{j_{C_i}\}} \frac{\theta_j}{m|\theta^\dagger_{C_i}|}{\rm ReLU}(A^\top_j x) \right).$$

    Then,
    \begin{align*}
        & \frac{1}{T}\sum_{t=0}^{T-1} \E\left[\KL\left(\Pr(Y_{t+1}\in\cdot|\theta^\dagger, A^\dagger, \tilde{\theta}^\dagger, \tilde{A}^\dagger, X_t)\| \Pr(Y_{t+1}\in\cdot|\tilde{\theta}^\dagger, \tilde{A}^\dagger, X_t)\right)\right]\\
        & = \E\left[\KL\left(\Pr(Y_1\in\cdot|\theta^\dagger, A^\dagger, \tilde{\theta}^\dagger, \tilde{A}^\dagger, X_0)\| \Pr(Y_1\in\cdot|\tilde{\theta}^\dagger, \tilde{A}^\dagger, X_0)\right)\right]\\
        & \overset{(a)}{=} \E\left[\KL\left(\Pr(Y_1\in\cdot|\theta, A, X_0)\| \Pr(Y_1\in\cdot|\tilde{\theta}^\dagger, \tilde{A}^\dagger, X_0)\right)\right] \\
        & \overset{(b)}{\leq}\E\left[\KL\left(\Pr(Y_1\in\cdot|F(X_0)\| \Pr(Y_1\in\cdot|F(X_0) \leftarrow \tilde{F}^\dagger(X_0)\right)\right] \\
        & \overset{(c)}{\leq} \frac{\E\left[\left(F(X_0) - \tilde{F}^\dagger(X_0)\right)^2\right]}{8}\\
        & \leq  \frac{\E\left[\left( F(X_0)  -  \sqrt{K+1} \sum_{i=1}^{m} \sum_{j\in \mathcal{G}_{C_i}} \frac{\theta_j}{m |\theta^\dagger_{C_i}|} {\rm ReLU}(A_j^\top X_0) \right)^2\right]}{4}\\
        &\ + \frac{\E\left[\left( \sqrt{K+1} \sum_{i=1}^{m} \sum_{j\in \mathcal{G}_{C_i}} \frac{\theta_j}{m |\theta^\dagger_{C_i}|} {\rm ReLU}(A_j^\top X_0) - \tilde{F}^\dagger(X_0) \right)^2\right]}{4}\\
        & \overset{(d)}{\leq}  \frac{\mathbb{V}\left(\sqrt{K+1} \sum_{i=1}^{m} \sum_{j\in \mathcal{G}_{C_i}} \frac{\theta_j}{m |\theta^\dagger_{C_i}|} {\rm ReLU}(A_j^\top X_0) \ \Big|\ \theta,A,\theta^\dagger,X_0\right)}{4} + \frac{\E\left[\left( \sum_{i=1}^{m} \frac{1}{m}\sqrt{\epsilon'}\|X_0\|_2 \right)^2\right]}{4}\\
        & \overset{(e)}{\leq}  \mathbb{V}\left(\sqrt{K+1} \sum_{i=1}^{m} \sum_{j\in \mathcal{G}_{C_i}} \frac{\theta_j}{m |\theta^\dagger_{C_i}|} {\rm ReLU}(A_j^\top X_0)\right) - \mathbb{V}\left(F(X_0)\right) + \frac{(K+1)\epsilon'}{4}\\
        & \overset{(f)}{=} (K+1)\cdot \E\left[\left( \sum_{i=1}^{m} \sum_{j \in \mathcal{G}_{C_i}} \frac{\theta_j}{m|\theta^\dagger_{C_i}|} {\rm ReLU}(A_j^\top X_0) \right)^2 - \left( \sum_{i=1}^{\infty}\theta_i {\rm ReLU}(A_i^\top X_0) \right)^2 \right] + \frac{(K+1)\epsilon'}{4}\\
        & \overset{(g)}{=} (K+1)\cdot\E\left[ \sum_{i=1}^{m} \left(\sum_{j \in \mathcal{G}_{C_i}} \frac{\theta_j}{m|\theta^\dagger_{C_i}|} {\rm ReLU}(A_j^\top X_0)\right)^2 + \sum_{i}^{m}\sum_{k>i}^{m}\left( \sum_{j \in \mathcal{G}_{C_i}} \frac{\theta_j}{m|\theta^\dagger_{C_i}|} {\rm ReLU}(A_j^\top X_0)\right)\left(\sum_{l \in \mathcal{G}_{C_k}} \frac{\theta_l}{m|\theta^\dagger_{C_k}|} {\rm ReLU}(A_l^\top X_0)\right) \right]\\
        & - (K+1)\cdot\E\left[\sum_{i=1}^{\infty} \theta_i^2 {\rm ReLU}(A^\top_j X_0)^2\right] + \frac{(K+1)\epsilon'}{4}\\
        & \overset{(h)}{=} (K+1)\cdot\E\left[ \sum_{i=1}^{m} \sum_{j \in \mathcal{G}_{C_i}} \left(\frac{\theta_j}{m|\theta^\dagger_{C_i}|}\right)^2 {\rm ReLU}(A_j^\top X_0)^2 + \sum_{i}^{m}\sum_{k>i}^{m}\Pr(C_i = C_k)\cdot \sum_{j\in\mathcal{G}_i} \left(\frac{\theta_j}{m|\theta^\dagger_{C_i}|}\right)^2 {\rm ReLU}(A_j^\top X_0)^2 \right]\\
        & - (K+1)\cdot\E\left[\sum_{i=1}^{\infty} \theta_i^2 {\rm ReLU}(A^\top_j X_0)^2\right] + \frac{(K+1)\epsilon'}{4}\\
        & \overset{(i)}{\leq} (K+1)\cdot\E\left[\frac{1}{m^2}\sum_{i=1}^{m}\sum_{j\in\mathcal{G}_{C_i}} \left(\frac{\theta_j}{|\theta^\dagger_{C_i}|}\right)^2 + \frac{m-1}{m}\sum_{i=1}^{n} \left(\theta^\dagger_{i}\right)^2 \cdot \left(\sum_{j\in\mathcal{G}_1}\left(\frac{\theta_j}{\theta^\dagger_1}\right)^2\right) - \sum_{i=1}^{\infty} \theta_i^2 \right] + \frac{(K+1)\epsilon'}{4}\\
        & \overset{(j)}{=} \left(K+1\right)\left( \frac{1}{m(\frac{K}{n}+1)}  + \frac{m-1}{m}\cdot\frac{K+n}{n(K+1)}\cdot \frac{1}{\frac{K}{n}+1} - \frac{1}{K+1}\right) + \frac{(K+1)\epsilon'}{4}\\
        & = \left(K+1\right)\left( \frac{1}{m(\frac{K}{n}+1)}  - \frac{1}{m(K+1)} \right) + \frac{(K+1)\epsilon'}{4}\\
        & \leq \frac{K}{m\left(\frac{K}{n} + 1\right)} + \frac{(K+1)\epsilon'}{4},\\
        & = \epsilon,
    \end{align*}
    where $(a)$ follows from the fact that $(\theta^\dagger, A^\dagger, \tilde{\theta}^\dagger, \tilde{A}^\dagger)$ fully identifies $\theta, A$, $(b)$ follows from Lemma \ref{le:change_measure_ub}, $(c)$ follows from Lemma \ref{le:kl_ub}, $(d)$ follows from the definition of conditional variance, $(e)$ follows from the law of total variance, $(f)$ follows from the fact that the two expressions have the same expectation, $(g)$ and $(h)$ follows from the fact that the cross terms have mean $0$, $(i)$ follows from the fact that $\E[{\rm ReLU}(A_i^\top X_0)^2] \leq 1$, and $(j)$ follows from the fact that $|\theta_j/\theta^\dagger_{C_i}|$ is distributed ${\rm DirichletProcess}(K/n, {\rm Uniform}(\sphere))$, $|\theta^\dagger_i|$ is distributed ${\rm Dirichlet}([K/n, \ldots, K/n])$ in $n$ dimensions, and $\theta_i$ is distributed ${\rm Dirichlet Process}(K, {\rm Uniform}(\sphere))$.
\end{proof}

\subsubsection{Main Results}

Using the above results, we can now bound the error when learning with the width-$n$ approximation.
\begin{restatable}{theorem}{lossUb}\label{th:lossUb}{\bf (width-$n$ misspecified learner error upper bound)}
    For all $n,d,K,T \in \Z_{++}$, if for all $t \in \{0, 1, 2, \ldots, T-1\}$, $(X_t, Y_{t+1})$ is generated by the infinite-width MLP process and $\hat{Q}_t$ is generated by the width-$n$ approximation then
    \begin{align*}
        \frac{1}{T}\sum_{t=0}^{T-1}\E\left[\KL(P^*_t \| \hat{Q}_t)\right]
        &\ \leq\ \underbrace{\frac{11K}{n}}_{\rm misspecification\ error}\ +\ \underbrace{\frac{K(d+4)\ln^2\left(4T\right)}{T}}_{\rm statistical\ error}.
    \end{align*}
\end{restatable}
\begin{proof}
    \begin{align*}
        \inf_{\epsilon\geq 0}\ \frac{\H^\dagger_{\epsilon,T}(\theta^\dagger, A^\dagger)}{T} + \epsilon
        & \overset{(a)}{=} \inf_{\epsilon\geq 0}\ \frac{K}{T}\ln\left(1 + \frac{n}{\left(K+n\right)\epsilon}\right)\cdot\left(\ln\left(\frac{4Kn}{\left(K + n\right)\epsilon}\right) + d\ln \left(\frac{3(K+1)}{2\epsilon}\right)\right) + \frac{K \ln (2)}{T} + \epsilon\\
        & \overset{(b)}{\leq} \frac{K}{T}\ln\left(1 + \frac{Tn}{\left(K+n\right)(K+1)}\right)\cdot\left(\ln\left(\frac{4Tn}{\left(K + n\right)}\right) + d\ln \left(\frac{3T}{2}\right)\right) + \frac{K \ln (2)}{T} + \frac{K+1}{T}\\
        & \leq \frac{K(d+1)}{T}\ln\left(1 + \frac{T}{K+1}\right)\cdot\ln\left(4T\right) + \frac{3K}{T}\\
        & \leq \frac{K(d+4)}{T}\ln\left(1 + \frac{T}{K}\right)\cdot\ln\left(4T\right)\\
        & \leq \frac{K(d+4)}{T}\ln^2\left(4T\right),
    \end{align*}
    where $(a)$ follows from Lemma \ref{le:misspecified_mlp_rd} and $(b)$ follows by setting $\epsilon = (K+1)/T$.  The result follows from Theorems \ref{th:unrealizable} and Lemma \ref{le:miss_ub_3_new}.
\end{proof}

By sizing the finite-width approximation to minimize the preceding error bound, we obtain a scaling law.
\begin{restatable}{theorem}{effFront}{\bf(compute-optimal parameter count)}
\label{th:eff_frontier}
    For all $d, n, K, T\in \mathbb{Z}_{++}$ and FLOP counts $C \geq 5000$, if for all $t \in \{0, 1, \ldots, T-1\}$, $(X_t, Y_{t+1})$ is generated by the infinite-width MLP process, $\hat{Q}_t$ is generated by the width-$n$ approximation, and if $n^*$ minimizes the upper bound of Theorem \ref{th:lossUb} subject to $d\cdot n\cdot T \leq C$ then
    $$d\cdot n^* = \tilde{\Theta}\left(\sqrt{C}\right).$$
\end{restatable}
\begin{proof}
    By Theorem \ref{th:lossUb} we are looking for $n^*$ where
    \begin{equation}\label{eq:argmin}
        n^* = \argmin_{n\in\left[\frac{C}{d}\right]}\frac{11K}{n} + \frac{K(d+4)\ln^2(4t)}{t};\ \text{ s.t. } n\cdot d\cdot t \leq C.
    \end{equation}
    For all $C \geq 5000$, the loss of $n \geq 4C/(de^3)$ is suboptimal. For instance, if we let $n = \sqrt{C}/(d\ln(C))$ and if $C \geq 5000$ then for all $d$, 
    \begin{align*}
        \frac{d\cdot 11\cdot \ln(C)}{\sqrt{C}} + \frac{d+4\cdot \ln^2\left(4\ln(C)\cdot \sqrt{C}\right)}{\sqrt{C}\cdot \ln(C)} \leq \frac{11de^3}{4C} + \frac{d(d+4)\cdot 36}{de^3}.
    \end{align*}
    Since the expression in equation \ref{eq:argmin} is concave up with a unique minimizer, we know that $n^* < 4C/de^3$.  Then,
    \begin{align*}
        n^*
        & = \argmin_{n\in\left[\frac{4C}{de^3}\right]}\frac{11K}{n} + \frac{K(d+4)\ln^2(4t)}{t};\ \text{ s.t. } n\cdot d\cdot t \leq C\\
        & = \argmin_{n\in\left[\frac{4C}{de^3}\right]}\frac{11}{n} + \frac{(d+4)\ln^2(4t)}{t};\ \text{ s.t. } n\cdot d\cdot t \leq C\\
        & = \argmin_{n\in\left[\frac{4C}{de^3}\right]}\frac{11}{n}+ \frac{nd(d+4)\ln^2(\frac{4C}{nd})}{C} \\
        & \overset{(a)}{=} n \text{ s.t. } \frac{11}{n^2} = \left(\frac{d(d+4)\ln^2\left(\frac{4C}{nd}\right)}{C} - \frac{2d(d+4)\ln\left(\frac{4C}{nd}\right)}{C}\right)\\
        & = n \text{ s.t. } C = \frac{n^2d(d+4)\ln\left(\frac{4C}{nd}\right)}{11}\left(\ln\left(\frac{4C}{nd}\right) - 2\right),
    \end{align*}
    where $(a)$ follows from $1$st order optimality conditions

    Due to monotonicity, we can derive upper and lower bounds for the value of $n$ via lower and upper bounds of the above RHS respectively.  We begin with the upper bound for $n^*$:
    \begin{align*}
        n^* 
        & = n \leq \frac{4C}{de^3}\ \text{ s.t. } C = \frac{n^2d(d+4)\ln\left(\frac{4C}{nd}\right)}{11}\left(\ln\left(\frac{4C}{nd}\right)-2\right)\\
        & \overset{(a)}{\leq} n \text{ s.t. } C = \frac{d^2n^2}{11}\\
        & = \frac{\sqrt{11C}}{d}.
    \end{align*}
    where $(a)$ follows from the fact that for $n > 4C/(de^3)$, $d^2n^2/11$ lower bounds the above expression.  We now derive the lower bound for $n^*$:
    \begin{align*}
        n^* 
        & = n \text{ s.t. }  C = \frac{n^2d(d+4)\ln\left(\frac{4C}{nd}\right)}{11}\left(\ln\left(\frac{4C}{nd}\right)-2\right)\\
        & \overset{(a)}{\geq} n \text{ s.t. } C = n^2d^2\ln^2(4C)\\
        & = \tilde{\Omega}\left(\frac{\sqrt{C}}{d}\right). 
    \end{align*}
    where $(a)$ follows from the fact that $d^2n^2\ln^2(4C)$ upper bounds the above expression. The result follows.
\end{proof}

\subsubsection{Discussion}

Our theoretical analysis of neural scaling laws was inspired by extensive empirical analysis of transformer-based language models \citep{kaplan2020scaling, hoffmann2022training}.  Those two empirical studies posited differing functional forms -- both flawed -- to map training dataset size and learning model parameter count to out-of-sample loss.  The function proposed by \citet{kaplan2020scaling} overlooks an irreducible error term.  \citet{hoffmann2022training} corrected this and carried out a regression analysis to conclude that the optimal parameter count ought to grow \emph{linearly} in the optimal dataset size.

Theorem \ref{th:eff_frontier} establishes that the optimal parameter count is $d\cdot n^* = \tilde{\Theta}(\sqrt{C})$.  This asymptotic bound corroborates the insight of \cite{hoffmann2022training} that, up to logarithmic factors, the optimal parameter count grows \emph{linearly} in the optimal dataset size.  In spite of this congruence, our theoretical error bound differs significantly from the functional form posited by \citet{hoffmann2022training}.  We detail some of the differences now.  The empirical results of \citet{hoffmann2022training} suggest that the error decays as
\begin{equation}\label{eq:hoffmann}
    \E\left[\KL(P^*_{T}\| \hat{Q}_{T} )\right]\ \approx\ \frac{B}{p^{0.34}} +  \frac{A}{T^{0.28}},
\end{equation}
where $A$ and $B$ are constants.  This result is derived from empirical analysis of transformer models trained via stochastic gradient descent.  Meanwhile, the results of this section hold for the Bayesian posterior predictive with respect to a misspecified prior.  A natural question is: what is the gap between the error attained by current optimization algorithms and what is achievable based on information theory?

Theorem \ref{th:unrealizable} provides an interesting answer to this question.  Let $\Pr(\theta^\dagger\in\cdot) = \mathbb{Q}(\theta\in\cdot)$ and assume that the data pairs $(X_0, Y_1), \ldots, (X_{T}, Y_{T+1})$ are iid when conditioned on $\theta$.  Then, Theorem \ref{th:unrealizable} becomes:
$$\E\left[\KL(P^*_{T}\| \hat{Q}_{T} )\right] \ \leq\  \E\left[\KL\left(P_T^* \ \|\ P^\dagger_T\right)\right] + \inf_{\epsilon\geq 0} \left(\frac{\H^\dagger_{\epsilon}(\theta^\dagger)}{T} + \epsilon\right).$$
The first term is independent of $T$ and mirrors the first term in Equation \ref{eq:hoffmann}; error which diminishes as $p$ increases.  Meanwhile, we expect the second term, which represents statistical error, to be $\tilde{O}(1/T)$ (as in Theorem \ref{th:lossUb}).  On the other hand, the empirical rate expressed by Equation \ref{eq:hoffmann} suggests that, for stochastic gradient descent, statistical error decays at a rate of only $T^{0.28}$.  Developing algorithms that close this gap offers a promising direction for future research.

\newpage
\begin{summary}
    \begin{itemize}
        \item Let $\mathbb{Q}$ be an alternative probability measure on $(\Omega, \mathbb{F})$, which may assign a misspecified prior distribution $\mathbb{Q}(\theta \in \cdot) \neq \Pr(\theta \in \cdot)$ but with the correct conditional data distribution $\mathbb{Q}(\cdot|\theta) = \Pr(\cdot|\theta)$.
        \item The $t$th prediction generated based on the misspecified prior is $\hat{Q}_t = \mathbb{Q}(Y_{t+1}\in\cdot|H_t \leftarrow H_t)$.
        \item For all $\theta^\dagger$ such that $\Pr(\theta^\dagger\in\cdot)$ and $\mathbb{Q}(\theta\in\cdot)$ are equivalent measures, let
        $$\H^\dagger_{\epsilon, T}(\theta^\dagger) = \inf_{\tilde{\theta}^\dagger \in \tilde{\Theta}^\dagger_{\epsilon, T}}\ \I(\theta^\dagger; \tilde{\theta}^\dagger),$$
        where
        $$\tilde{\Theta}_{\epsilon, T} = \left\{ \tilde{\theta}^\dagger \in \tilde{\Theta} : \frac{1}{T}\sum_{t=0}^{T-1} \E\left[\KL(\Pr(Y_{t+1}\in\cdot|\theta^\dagger, \tilde{\theta}^\dagger, H_t) \| \Pr(Y_{t+1}\in\cdot|\tilde{\theta}^\dagger, H_t))  \right] \leq \epsilon \right\}.$$
        \item  {\bf (misspecified learner error bound)} For all $T \in \mathbb{Z}_{++}$,
        \begin{align*}
            & \frac{1}{T}\sum_{t=0}^{T-1}\ \E\left[\KL\left(P^*_t\|\hat{Q}_t\right)\right]\\
            & \leq \inf_{\theta^\dagger\in \Theta^\dagger} \left(\underbrace{\frac{1}{T}\sum_{t=0}^{T-1}\E\left[\KL(P^*_t\|P^\dagger_t)\right] + \frac{\KL(\Pr(\theta^\dagger\in\cdot) \| \mathbb{Q}(\theta\in\cdot) )}{T}}_{\rm misspecification\ error}\ +\ \underbrace{\inf_{\epsilon\geq 0}\ \left(\frac{\H^\dagger_{\epsilon, T}(\theta^\dagger)}{T} + \epsilon\right)}_{\rm statistical\ error}\right),
        \end{align*}
        where $P^\dagger_t = \Pr(Y_{t+1}\in\cdot|\theta\leftarrow \theta^{\dagger}, H_t)$.
    \end{itemize}
\end{summary}
\clearpage

\section{Conclusion}

\textbf{Summary.} This monograph provided a rigorous mathematical framework for analysis of achievable performance in machine learning.  By adopting a Bayesian framing, we were able to leverage connections to Shannon's information theory.  In doing so, we elucidated the intimate connection between learning and optimal lossy compression via rate-distortion theory, providing a streamlined process for deriving error bounds for a plethora of settings ranging from simple to complex.  Regardless of whether the data is iid or exhibits sequential or hierarchical structure, our general results apply and provide accurate insights.  We also extended our framework and results to the study of misspecified models. 

\textbf{Future Research.} Our framework leaves open many directions of future research.  Except for the case of linear regression, we established only rate-distortion upper bounds, leaving lower bounds as a topic for future research.  It would be interesting to understand, for example, the extent to which tools such as the Donsker-Varadhan lower bound can be leveraged to derive general rate-distortion lower bounds.  Beyond that, our framework and results may be leveraged to understand puzzling new phenomena as they continue to emerge in the field of machine learning.

\section*{Acknowledgements}

%\ben{acknowledge fellowships}
Financial support from the NSF GRFP fellowship and the Army Research Office (ARO) Grant W911NF2010055 is gratefully acknowledged.

\bibliography{references}
\clearpage

\appendix

\section*{Appendix}
\addcontentsline{toc}{section}{Appendix}

\section{Lower Bounds for Linear Regression}\label{apdx:lin_reg_lb}
We now introduce a lemma relating expected KL divergence to mean-squared error, a distortion measure that is prevalent in the literature. This relation will allow us to derive a lower bound for the rate-distortion function in the Gaussian linear regression setting.

\begin{lemma}
    \label{le:subgaussian1}
    For all $d \in \mathbb{Z}_{++}$ and $\sigma^2 \geq 0$, if $\theta:\Omega\mapsto\Re^d$ has iid components that are each $\subgauss$-subgaussian and symmetric, $X\sim\normal(0, I_d)$, and $Y \sim \normal(\theta^\top X, \sigma^2)$, then for all proxies $\proxytheta\in\proxyset$, $Y-\E[Y|\proxytheta, X]$ is $4\subgauss\|X\|^2_2 + \sigma^2$-subgaussian conditioned on $X$. 
\end{lemma}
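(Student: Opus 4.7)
My plan is to condition on $X$ and then decompose the residual as
\[
Y - \E[Y \mid \tilde{\theta}, X] \;=\; \bigl(\theta^\top X - \E[\theta^\top X \mid \tilde{\theta}, X]\bigr) + W,
\]
using the fact that the noise $W \sim \normal(0,\sigma^2)$ is independent of $(\theta, \tilde{\theta}, X)$ so $\E[W \mid \tilde{\theta}, X] = 0$. Because $W$ is $\sigma^2$-subgaussian and independent of the first summand conditional on $X$, the MGF factorizes and it suffices to prove that $A - Z$ is $4\subgauss \|X\|_2^2$-subgaussian given $X$, where $A := \theta^\top X$ and $Z := \E[A \mid \tilde{\theta}, X]$; adding the $\sigma^2$ from $W$ then yields the claimed subgaussian parameter.

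The first ingredient is that $A \mid X$ is $\subgauss \|X\|_2^2$-subgaussian: this is immediate since $\theta^\top X = \sum_i X_i \theta_i$, the $\theta_i$'s are independent, symmetric and $\subgauss$-subgaussian, and conditional on $X$ each $X_i \theta_i$ is $\subgauss X_i^2$-subgaussian, so the sum is $\subgauss\|X\|_2^2$-subgaussian. The next step is the centering step: I plan to write, for any $\lambda$,
\[
\E\!\left[e^{\lambda(A-Z)} \mid X\right] \;=\; \E\!\left[e^{\lambda A}\, e^{-\lambda Z} \mid X\right] \;\leq\; \sqrt{\E\!\left[e^{2\lambda A} \mid X\right]\,\E\!\left[e^{-2\lambda Z} \mid X\right]}
\]
by Cauchy-Schwarz, and then control the second factor through Jensen's inequality applied to the conditional expectation defining $Z$:
\[
e^{-2\lambda Z} \;=\; \exp\!\bigl(\E[-2\lambda A \mid \tilde{\theta}, X]\bigr) \;\leq\; \E\!\left[e^{-2\lambda A} \mid \tilde{\theta}, X\right],
\]
whose outer conditional expectation given $X$ is therefore at most $\E[e^{-2\lambda A} \mid X]$. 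Both $\E[e^{\pm 2\lambda A}\mid X]$ are bounded by $\exp(2\lambda^2 \subgauss \|X\|_2^2)$ by the subgaussianity established above, and taking the square root gives $\E[e^{\lambda(A-Z)}\mid X] \leq \exp(2\lambda^2 \subgauss \|X\|_2^2)$, which is exactly the $4\subgauss \|X\|_2^2$-subgaussian bound we want.

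The main subtlety—and the step I expect to be the principal obstacle to write cleanly—is justifying that the posterior-centered quantity $A - \E[A\mid\tilde\theta,X]$ inherits subgaussianity at all, since conditioning on $\tilde{\theta}$ typically destroys the product structure of $\theta$'s coordinates that underwrites the initial subgaussian bound. The Cauchy-Schwarz plus Jensen maneuver above sidesteps this by never trying to analyze the posterior law of $\theta$ directly; instead it only uses the marginal (conditional on $X$) subgaussian bound on $A$, at the price of a factor of $4$ in the subgaussian parameter. After that the conclusion is immediate: $A - Z$ is $4\subgauss \|X\|_2^2$-subgaussian conditional on $X$, $W$ is $\sigma^2$-subgaussian and independent of everything, and their sum is $(4\subgauss \|X\|_2^2 + \sigma^2)$-subgaussian conditional on $X$, as claimed.
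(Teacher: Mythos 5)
Your proof is correct, and its overall architecture matches the paper's: isolate the independent Gaussian noise $W$ so that the problem reduces to showing $\theta^\top X - \E[\theta^\top X\mid\tilde\theta,X]$ is $4\nu^2\|X\|_2^2$-subgaussian given $X$, and obtain that by applying Jensen's inequality to the conditional expectation defining the centering term, thereby reducing everything to the marginal MGF of $\theta^\top X$ at a doubled argument. The one place you diverge is in how the doubling (and hence the factor of $4$) is produced. The paper first uses the symmetry of $\theta^\top X$ conditional on $X$ to replace the uncentered factor $e^{\lambda\theta^\top X}$ by $e^{-\lambda\theta^\top X}$, then applies the tower property to turn the product into $\E[e^{-\lambda\theta^\top X}\mid\tilde\theta,X]^2$ and Jensen again to pass to $\E[e^{-2\lambda\theta^\top X}\mid\tilde\theta,X]$. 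You instead apply Cauchy--Schwarz directly to $e^{\lambda A}e^{-\lambda Z}$, which yields the same constant without invoking symmetry at that step (symmetry is still needed, as in the paper, to guarantee the components are centered so the subgaussian MGF bound applies in its standard form). Your variant is arguably cleaner and marginally more general; both arguments pay the identical factor of $4$, which neither removes.
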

\begin{proof}
    \begin{align*}
        \E\left[e^{\lambda(Y-\E[Y|\proxytheta, X])}\big|X\right]
        & \overset{(a)}{=} \E\left[e^{\lambda\left(Y-\E[Y|\theta,X]\right)}\big|X\right]\cdot\E\left[e^{\lambda\left(\E[Y|\theta,X] - \E[Y|\proxytheta, X]\right)}\big|X\right]\\
        & = e^{\frac{\lambda^2\sigma^2}{2}}\cdot\E\left[e^{\lambda\left((\theta-\E[\theta|\proxytheta])^\top X\right)}\big| X\right]\\
        & \overset{(b)}{\leq} e^{\frac{\lambda^2\sigma^2}{2}}\cdot\E\left[e^{-\lambda(\theta^\top X)}\cdot\E\left[e^{-\lambda(\theta^\top X)}|\proxytheta,X\right]\big|X\right]\\
        & \overset{(c)}{\leq} e^{\frac{\lambda^2\sigma^2}{2}}\E\left[\E[e^{-\lambda(\theta^\top X)}|\proxytheta, X]^2\big|X\right]\\
        &\leq e^{\frac{\lambda^2\sigma^2}{2}}\E\left[\E[e^{-2\lambda(\theta^\top X)}|\proxytheta, X]\big|X\right]\\
        & =e^{\frac{\lambda^2\sigma^2}{2}}\E\left[e^{-2\lambda(\theta^\top X)}\big|X\right]\\
        & \overset{(d)}{=}e^{\frac{\lambda^2\sigma^2}{2}}e^{2\lambda^2\subgauss\|X\|^2_2}\\
        & = e^{\frac{\lambda^2\left(\sigma^2+4\subgauss\|X\|^2_2\right)}{2}},
    \end{align*}
    where $(a)$ follows from $Y-\E[Y|\theta,X] = W$ which is independent from $\E[Y|\theta,X]-\E[Y|\proxytheta,X]$, $(b)$ follows from the fact that $\theta^\top X$ is symmetric conditioned on $X$ and Jensen's inequality, $(c)$ follows from the fact that $e^{\theta^\top X} = \E[e^{\theta^\top X}|\theta,\proxytheta, X]$, and $(d)$ follows from the fact that the components of $\theta$ are $\subgauss$-subgaussian.
\end{proof}

\begin{lemma}
    \label{le:subgaussian2}
    If $Y-\E[Y|\proxytheta, X]$ is $\subgauss$-subgaussian conditional on $X$ (uniformly), then for all $\alpha>1$, $Y-\E[Y|\proxytheta, X]$ is $\alpha\subgauss$-subgaussian conditional on $(\proxytheta, X)$.
\end{lemma}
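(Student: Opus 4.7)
Let $Z := Y - \E[Y|\tilde{\theta},X]$, and write $M(\lambda,\tilde{\theta},X) := \E[e^{\lambda Z}\mid \tilde{\theta},X]$ for the conditional moment generating function we must control. The hypothesis gives us the almost-sure bound $\E[e^{\lambda Z}\mid X] \le e^{\lambda^{2}\nu^{2}/2}$ for every $\lambda\in\R$. By the tower property,
\[
\E\!\left[M(\lambda,\tilde{\theta},X)\,\big|\,X\right] \;=\; \E[e^{\lambda Z}\mid X] \;\le\; e^{\lambda^{2}\nu^{2}/2}\quad\text{a.s.,}
\]
so $M(\lambda,\cdot,X)$ is integrable and bounded on average (in $\tilde{\theta}$) by the Gaussian envelope. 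The task reduces to upgrading this average bound on $\tilde{\theta}$ to an almost-sure pointwise bound with the small inflation $\alpha>1$.

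My first step would be to fix $\alpha>1$ and a countable dense set $\Lambda\subset\R$ (e.g.\ rationals). For each $\lambda\in\Lambda$, define the "bad" event $B_{\lambda} := \{(\tilde{\theta},X): M(\lambda,\tilde{\theta},X) > e^{\lambda^{2}\alpha\nu^{2}/2}\}$. A conditional Markov inequality applied to $M(\lambda,\tilde{\theta},X)$ given $X$ gives
\[
\Pr\!\left(B_{\lambda}\,\big|\,X\right) \;\le\; e^{\lambda^{2}(1-\alpha)\nu^{2}/2},
\]
which is strictly less than $1$ but not automatically zero. To push this to zero, I would exploit the freedom to rescale $\lambda$: use a Chernoff-style sharpening by noting that the $\alpha$-inflated bound $M(\lambda,\tilde\theta,X)\,e^{-\lambda^{2}\alpha\nu^{2}/2}$ is, as a function of $\lambda$ for fixed $(\tilde\theta,X)$, log-convex; its supremum over $\lambda\in\Lambda$ can therefore be related to its value at a single optimal $\lambda^{*}(\tilde\theta,X)$, and the Markov bound at that $\lambda^{*}$ gives decay fast enough (as $|\lambda^{*}|\to\infty$) to yield $\Pr(B_{\lambda^{*}}\mid X)=0$ almost surely. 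Alternatively, one can apply Doob's maximal inequality to the supermartingale $e^{\lambda Z}\,e^{-\lambda^{2}\alpha\nu^{2}/2}$ run over the filtration $\mathcal{F}_{X}\subset \mathcal{F}_{X,\tilde\theta}$ and take $\lambda$ large, which directly controls the exceedance probability uniformly.

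Taking a countable union over $\Lambda$ and a sequence $\alpha_{n}\downarrow \alpha$ yields a single null set off of which $M(\lambda,\tilde\theta,X)\le e^{\lambda^{2}\alpha\nu^{2}/2}$ holds for every $\lambda\in\Lambda$; continuity of $\lambda\mapsto M(\lambda,\tilde\theta,X)$ (from dominated convergence, since $Z$ has a finite MGF conditional on $X$) extends the bound to all $\lambda\in\R$. This is exactly the definition of $\alpha\nu^{2}$-subgaussian for the conditional distribution of $Z$ given $(\tilde\theta,X)$, so the conclusion follows.

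\paragraph{Main obstacle.} The delicate step is converting the integral bound $\E[M(\lambda,\tilde\theta,X)\mid X]\le e^{\lambda^{2}\nu^{2}/2}$ into a pointwise almost-sure bound with the factor $\alpha>1$. Conditioning can concentrate mass on atypical $\tilde\theta$ where $M(\lambda,\tilde\theta,X)$ spikes, and a naive Markov bound gives only probability, not almost-sure, control. The role of the strict inequality $\alpha>1$ is precisely to create the exponential slack $e^{\lambda^{2}(1-\alpha)\nu^{2}/2}$ in the tail bound, which, combined with the Chernoff sharpening (or maximal inequality), is what lets the exceedance probability be driven to zero uniformly over the relevant $\lambda$. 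Verifying that this slack is enough --- and identifying the cleanest way to execute the union-of-null-sets argument over both $\lambda$ and $\alpha_{n}\downarrow\alpha$ --- is where I would expect to spend most of the effort.
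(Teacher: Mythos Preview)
Your Markov step $\Pr(B_\lambda\mid X)\le e^{\lambda^{2}(1-\alpha)\nu^{2}/2}$ is correct, but neither mechanism you propose for driving it to zero at a \emph{fixed} $\lambda$ will work. The log-convexity claim fails: $\ln M(\lambda,\tilde\theta,X)-\lambda^{2}\alpha\nu^{2}/2$ is a convex function (the cumulant generating function) minus a convex function, hence neither convex nor concave in general; for bounded $Z$, say, the ratio $M(\lambda)e^{-\lambda^{2}\alpha\nu^{2}/2}$ peaks at some finite $\lambda^{*}$ and decays for large $|\lambda|$, so there is no way to transfer the large-$|\lambda|$ decay of $\Pr(B_\lambda)$ back to small $\lambda$. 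The Doob suggestion gives nothing beyond Markov: with the two-step filtration $\sigma(X)\subset\sigma(X,\tilde\theta)$ the process $n\mapsto\E[e^{\lambda Z}\mid\mathcal F_n]$ is a genuine martingale, and Doob's maximal inequality over those two time points reproduces exactly the Markov bound you already have. So the obstacle you correctly flag is real, and your plan does not close it.

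The paper's argument is structurally different: it is a proof by contradiction with a \emph{fixed} bad set. One supposes an event $\mathcal S$ with $\Pr(\tilde\theta\in\mathcal S)=p>0$ on which the $\alpha\nu^{2}$ bound fails, and lower-bounds the mixture MGF by the $\mathcal S$-contribution, $\E[e^{\lambda Z}\mid X]\ge p\cdot\E[e^{\lambda Z}\mid\tilde\theta\in\mathcal S,X]>p\,e^{\lambda^{2}\alpha\nu^{2}/2}$. Letting $|\lambda|$ grow until $(\alpha-1)\lambda^{2}\nu^{2}/2$ dominates $\ln(1/p)$ forces $\E[e^{\lambda Z}\mid X]>e^{\lambda^{2}\nu^{2}/2}$, contradicting the hypothesis. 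The essential difference from your route is that the paper fixes the bad set and varies $\lambda$, whereas you let the bad set $B_\lambda$ vary with $\lambda$; that is why your union-of-null-sets argument cannot close.
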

\begin{proof}
    Assume that for some $\alpha > 1$, there exists an event $\mathcal{S}$ s.t. $\Pr(\proxytheta \in \mathcal{S}) > 0$ and $\proxytheta \in \mathcal{S}$ implies that $Y-\E[Y|\proxytheta, X]$ is not $\alpha\subgauss$-subgaussian conditioned on $(\proxytheta, X)$. We have that
    \begin{align*}
        \E\left[e^{\lambda(Y-\E[Y|\proxytheta, X])}\big|X\right]
        & \geq \Pr(\proxytheta\in\mathcal{S})\cdot \E\left[e^{\lambda(Y-\E[Y|\proxytheta, X])}|\proxytheta\in\mathcal{S}, X\right]\\
        & \overset{(a)}{>} e^{\ln\Pr(\proxytheta \in \mathcal{S})}\cdot e^{\frac{\alpha\lambda^2\subgauss}{2}}\\
        & = e^{\frac{\lambda^2\left(\alpha\subgauss + \frac{2}{\lambda^2}\ln\Pr(\proxytheta\in\mathcal{S})\right)}{2}}\\
        & = e^{\frac{\lambda^2\left(\subgauss + (\alpha-1)\subgauss + \frac{2}{\lambda^2}\ln\Pr(\proxytheta\in\mathcal{S})\right)}{2}},
    \end{align*}
    where $(a)$ holds for all $\lambda$ s.t. $|\lambda| \geq |\lambda^*|$ for some $\lambda^*$. Such $\lambda^*$ exists because of the fact that $\proxytheta\in\mathcal{S}$ implies that $Y-\E[Y|\proxytheta, X]$ is not $\alpha\subgauss$-subgaussian conditioned on $(\proxytheta, X)$. As a result, for $\lambda$ such that $\lambda^2 > \max\left\{\frac{2\ln\Pr(\proxytheta\in\mathcal{S})}{(1-\alpha)\subgauss}, \lambda_*^2\right\}$, we have that
    $$\E\left[e^{\lambda(Y-\E[Y|\proxytheta, X])}\big|X\right] > e^{\frac{\lambda^2\subgauss}{2}},$$
    which is a contradiction since $Y-\E[Y|\proxytheta, X]$ is $\subgauss$-subgaussian conditional on $X$. Therefore the assumption that there exists $\alpha > 1$ and $\proxytheta$ s.t. $Y-\E[Y|\proxytheta,X]$ is not $\alpha\nu^2$-subgaussian conditional on $(X, \proxytheta)$ cannot be true. The result follows.
\end{proof}
\begin{lemma}
    \label{le:mse_kl_inequality}
    If $Y-\E[Y|\proxytheta, X]$ is $\subgauss$-subgaussian conditioned on $(\proxytheta, X)$, then
    $$\E\left[\frac{\left(\E[Y|\theta,X] - \E[Y|\proxytheta, X]\right)^2}{2\subgauss}\right] \leq \E\left[\KL(\Pr(Y\in\cdot|\theta, X)\|\Pr(Y\in\cdot|\proxytheta, X))\right].$$
\end{lemma}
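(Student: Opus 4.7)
The plan is to reduce the result to a pointwise bound, for each fixed realization of $(\theta, \proxytheta, X)$, and then take expectations at the end. Fix such a realization, and let $P = \Pr(Y\in\cdot|\theta, X)$, $Q = \Pr(Y\in\cdot|\proxytheta, X)$, with $\mu_1 = \E[Y|\theta,X]$ and $\mu_2 = \E[Y|\proxytheta, X]$. The goal then becomes
\[
\KL(P\|Q) \;\geq\; \frac{(\mu_1 - \mu_2)^2}{2\subgauss}
\]
almost surely, after which averaging both sides over $(\theta,\proxytheta,X)$ yields the claimed inequality.

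To prove the pointwise bound, I would invoke the Donsker--Varadhan variational representation of the KL divergence: for any (measurable, suitably integrable) $f$,
\[
\KL(P\|Q) \;\geq\; \int f \, dP - \ln \int e^{f} \, dQ.
\]
If needed, this can be derived in one line from the tilted measure $dQ' \propto e^f dQ$ together with nonnegativity of $\KL(P\|Q')$, which avoids invoking any outside result. I would then specialize to the linear test function $f(y) = \lambda (y - \mu_2)$ for a scalar $\lambda$ to be chosen. The first term becomes $\lambda(\mu_1 - \mu_2)$. The second term is exactly the log-MGF of $Y - \mu_2$ under $Q$, and here the hypothesis kicks in: since $Y - \E[Y|\proxytheta, X]$ is $\subgauss$-subgaussian conditioned on $(\proxytheta, X)$, we have $\ln \int e^{\lambda(y - \mu_2)} dQ(y) \leq \lambda^2 \subgauss / 2$. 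Combining,
\[
\KL(P\|Q) \;\geq\; \lambda(\mu_1 - \mu_2) - \frac{\lambda^2 \subgauss}{2}.
\]

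Optimizing the right-hand side over $\lambda$ (setting $\lambda = (\mu_1 - \mu_2)/\subgauss$) gives precisely $(\mu_1 - \mu_2)^2/(2\subgauss)$. Taking expectation over $(\theta,\proxytheta,X)$ yields the stated inequality, using linearity of expectation and the fact that $\subgauss$ is a constant.

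There is no genuinely hard step here; the only thing to be careful about is the matching of the two measures in the Donsker--Varadhan inequality with the measure under which the sub-Gaussian assumption is stated. The hypothesis is phrased as a conditional sub-Gaussian property given $(\proxytheta, X)$, and $Q$ is exactly the conditional law $\Pr(Y\in\cdot|\proxytheta, X)$, so the MGF bound applies directly to $\int e^{\lambda(y-\mu_2)} dQ(y)$. Everything else is the standard Chernoff-style optimization.
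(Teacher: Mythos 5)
Your proposal is correct and follows essentially the same route as the paper: Donsker--Varadhan with the linear test function $\lambda(y-\mu_2)$, the sub-Gaussian bound on the log-MGF under $Q=\Pr(Y\in\cdot|\proxytheta,X)$, optimization over $\lambda$, and a final expectation. The only cosmetic difference is that the paper explicitly invokes $Y\perp\proxytheta\,|\,(\theta,X)$ to rewrite $P$ as the law conditioned on $(\theta,\proxytheta,X)$ before applying the variational bound, which your pointwise framing handles implicitly.
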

\begin{proof}
We begin by stating the Donsker-Varadhan variational form of the KL-divergence. For all probability distributions $P(\cdot)$ and $Q(\cdot)$ over $\Re$ such that $P$ is absolutely continuous with respect to $Q$, for densities $dP, dQ$ w.r.t the Lebesgue measure, 
$$\KL(P(\cdot)\|Q(\cdot)) = \sup_{g:\Re\rightarrow\Re} \left(\int_{y \in \Re} g(y) dP(y) - \ln \int_{y \in \Re} e^{g(y)} dQ(y)\right),$$
where the supremum is taken over measurable functions for which $\int_{y\in\Re} g(y) dP(y)$ is well-defined and the expression on the right $\int_{y\in\Re} e^{g(y)} dQ(y)$ is finite.

Let $P(\cdot) = \Pr(Y\in\cdot|\theta, X_t), Q(\cdot) = \Pr(Y\in\cdot|\proxytheta, X),$ and $Z = Y - \E[Y|\proxytheta, X]$. Then, for arbitrary $\lambda \in \R$, applying the variational form of KL-divergence with $g(Y) = \lambda Z$ gives us
    \begin{align*}
        \KL(\Pr(Y\in\cdot|\theta, X)\|\Pr(Y\in\cdot|\proxytheta, X))
        & \overset{(a)}{=} \KL(\Pr(Y\in\cdot|\theta,\proxytheta,  X)\|\Pr(Y\in\cdot|\proxytheta, X))\\
        & \geq \lambda \E\left[Z|\theta, \proxytheta, X\right]-\ln\E\left[e^{\lambda Z}|\proxytheta, X \right]\\
        &\overset{(b)}{\geq} \lambda\left(\E[Y|\theta, X] -\E[Y|\proxytheta, X]\right) -\frac{\lambda^2\subgauss}{2},
    \end{align*}
    where $(a)$ follows from $Y\perp \proxytheta |(\theta, X)$ and $(b)$ follows from $Z$ being $\subgauss$-subgaussian conditioned on $(\proxytheta, X)$. Since the above holds for arbitrary $\lambda$, maximizing the RHS w.r.t $\lambda$ gives us:
    $$\KL(\Pr(Y\in\cdot|\theta, X)\|\Pr(Y\in\cdot|\proxytheta, X)) \geq \frac{\left(\E[Y|\theta, X] - \E[Y|\proxytheta, X]\right)^2}{2\subgauss}.$$
    The result follows from taking an expectation on both sides.
\end{proof}

We establish a lower bound by first finding a suitable lower bound for the distortion function. For subgaussian random vectors, the following lemma allows us to lower bound the expected KL-divergence distortion by a multiple of the mean squared error.
\begin{restatable}{lemma}{mseInfoInequality}
    \label{le:mse-mutual-info-inequality}
    For all $\proxytheta\in\proxyset$, $d \in \mathbb{Z}_{++}$ and $\sigma^2 \in \Re_{++}$, if $\theta:\Omega\mapsto\Re^d$ consists of iid components each of which are $1/d$-subgaussian and symmetric, $X  \sim \normal(0, I_d)$, and if $Y \sim \normal(\theta^\top X, \sigma^2)$, then
    $$\E\left[\frac{\|X\|^2_2}{2(4\|X\|_2^2 + d\sigma^2)}\right]\E\left[\|\theta - \E[\theta|\proxytheta]\|^2_2\right] \leq \I(Y;\theta|\proxytheta, X).$$
\end{restatable}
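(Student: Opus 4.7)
The plan is to chain the three preceding subgaussian lemmas and then exploit the rotational invariance of $X \sim \normal(0, I_d)$ to factor the resulting bound. First I would invoke Lemma \ref{le:subgaussian1} with $\subgauss = 1/d$ to conclude that $Y - \E[Y|\proxytheta, X]$ is $(4\|X\|_2^2/d + \sigma^2)$-subgaussian conditional on $X$. Then Lemma \ref{le:subgaussian2} promotes this to $\alpha(4\|X\|_2^2/d + \sigma^2)$-subgaussian conditional on $(\proxytheta, X)$ for every $\alpha > 1$.

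Second, I would re-run the Donsker-Varadhan step from the proof of Lemma \ref{le:mse_kl_inequality}, but now with the $X$-dependent subgaussian proxy $\nu^2(X) = \alpha(4\|X\|_2^2/d + \sigma^2)$. Since that step is applied pointwise in $(\proxytheta, X)$, optimizing over the DV parameter and then taking a full expectation yields
\begin{align*}
\E\left[\frac{(\E[Y|\theta, X] - \E[Y|\proxytheta, X])^2}{2\alpha(4\|X\|_2^2/d + \sigma^2)}\right] \leq \I(Y;\theta|\proxytheta, X).
\end{align*}
Because $Y = \theta^\top X + W$ and $\proxytheta \perp X \mid \theta$, the numerator reduces to $(\Delta^\top X)^2$, where $\Delta := \theta - \E[\theta|\proxytheta]$ is independent of $X$.

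Third, I would decompose $X = \|X\|_2 \cdot U$ with $U$ uniform on $\sphere$ and independent of $\|X\|_2$, so that $(\Delta^\top X)^2 = \|X\|_2^2 (\Delta^\top U)^2$. Using the independence of $\Delta$, $\|X\|_2$, and $U$ together with $\E[UU^\top] = I_d/d$, the expectation factors as
\begin{align*}
\E\left[\frac{(\Delta^\top X)^2}{4\|X\|_2^2 + d\sigma^2}\right] = \E\left[\frac{\|X\|_2^2}{4\|X\|_2^2 + d\sigma^2}\right] \cdot \frac{\E[\|\Delta\|_2^2]}{d}.
\end{align*}
Substituting back, canceling the factor of $d$ in the numerator introduced by $\nu^2(X) = \alpha(4\|X\|_2^2 + d\sigma^2)/d$, and sending $\alpha \to 1^+$ via monotone convergence delivers the stated inequality.

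The main obstacle I anticipate is the careful repetition of Lemma \ref{le:mse_kl_inequality}'s argument with a random, $X$-dependent subgaussian proxy, since that lemma as stated assumes a deterministic constant $\nu^2$. The remedy is to observe that the DV-derived pointwise bound on KL is already indexed by $(\proxytheta, X)$, so the denominator $\nu^2$ can legitimately be replaced by $\nu^2(X)$ before integration. The rotational-symmetry factorization and the $\alpha \to 1^+$ limit are then routine.
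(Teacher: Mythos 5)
Your proposal is correct and follows essentially the same route as the paper: chain Lemmas \ref{le:subgaussian1} and \ref{le:subgaussian2} to get the $\alpha(4\|X\|_2^2/d+\sigma^2)$-subgaussian proxy conditional on $(\proxytheta,X)$, apply the Donsker--Varadhan bound of Lemma \ref{le:mse_kl_inequality} pointwise with the $X$-dependent variance proxy, factor the expectation via rotational invariance of $X$ (conditioning on $\|X\|_2^2$), and send $\alpha\downarrow 1$. Your remark that Lemma \ref{le:mse_kl_inequality} must be applied with a random proxy is a fair observation about a step the paper also takes implicitly.
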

\begin{proof}
    $\theta^\top X$ is $\|X\|_2^2/d$-subgaussian conditioned on $X$ and by Lemma \ref{le:subgaussian1}, $Y - \E[Y|\proxytheta, X]$ is $4\|X\|_2^2/d+\sigma^2$-subgaussian conditioned on $\|X\|_2^2$. Lemma \ref{le:subgaussian2} then states that $Y - \E[Y|\proxy, X]$ is $\alpha(4\|X\|_2^2/d+\sigma^2)$-subgaussian conditioned on $(\proxy, X)$ for all $\alpha > 1$. Therefore,
    \begin{align*}
        \E\left[\KL(\Pr(Y\in\cdot|\theta,X)\|\Pr(Y\in\cdot|\proxy, X))\right]
        & \overset{(a)}{\geq} \lim_{\alpha\downarrow 1} \E\left[\frac{\left((\theta - \E[\theta|\proxy])^\top X\right)^2}{2\alpha\left(\frac{4\|X\|_2^2}{d} + \sigma^2\right)}\right]\\
        & = \E\left[\frac{\left((\theta - \E[\theta|\proxy])^\top X\right)^2}{2\left(\frac{4\|X\|_2^2}{d} + \sigma^2\right)}\right]\\
        & = \E\left[\E\left[\frac{\left((\theta - \E[\theta|\proxy])^\top X\right)^2}{2\left(\frac{4\|X\|_2^2}{d} + \sigma^2\right)}\Bigg| \|X\|^2_2\right]\right]\\
        & = \E\left[\frac{\frac{\|X\|^2_2}{d}\E\left[\|(\theta - \E[\theta|\proxy]\|^2_2\right]}{2\left(\frac{4\|X\|_2^2}{d} + \sigma^2\right)}\right]\\
        & \overset{(b)}{=} \E\left[\frac{\|X\|^2_2}{2(4\|X\|_2^2 + d\sigma^2)}\right]\E\left[\|\theta - \E[\theta|\proxy]\|^2_2\right]\\
    \end{align*}
    where $(a)$ follows from Lemma \ref{le:mse_kl_inequality} and $(b)$ follows from the fact that $X\sim \normal(0, I_d)$.
\end{proof}

\section{Theoretical Results for Dirichlet-Multinomial}\label{apdx:dir_mult}

\subsection{General Combinatorics Results}
We begin with the following general combinatorics results that we will eventually apply to bound the estimation error of the finite $N$ Dirichlet-Multinomial process.
\begin{lemma}\label{le:multinomial_terms}
    For all $m, j, n, K, N\in\mathbb{Z}_{++}$ s.t. $j < n$, if 
    $$C_m(j) = \underbrace{\sum_{i=j}^{n-1}\frac{\frac{K}{N}}{K+i}\cdot\sum_{i=j+1}^{n-1}\frac{\frac{K}{N}}{K+i}\cdot \ldots\cdot\sum_{i > j+m-1}^{n-1}\frac{\frac{K}{N}}{K+i}}_{m},$$
    then
    $$\frac{1}{m!}\frac{K^m}{N^m}\ln^m\left(\frac{K+n}{K+m-1+j}\right) \leq C_m(j) \leq \frac{1}{m!}\frac{K^m}{N^m}\ln^m\left(\frac{K+n}{K-1+j}\right).$$
\end{lemma}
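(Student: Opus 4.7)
The plan is to first interpret $C_m(j)$ as a sum over strictly increasing integer $m$-tuples. The nested sum notation with successive lower limits $j,j+1,\ldots,j+m-1$ and with the last summand indicating $i > j+m-1$ is most naturally read as $i_1 < i_2 < \cdots < i_m$ subject to $i_1\ge j$ and $i_m \le n-1$. Thus
$$C_m(j) \;=\; \frac{K^m}{N^m}\sum_{j\le i_1<i_2<\cdots<i_m\le n-1}\prod_{l=1}^{m}\frac{1}{K+i_l} \;=:\; \frac{K^m}{N^m}\,S.$$
With this reformulation the two bounds reduce to two clean combinatorial/integral inequalities.

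For the \textbf{upper bound}, I would expand the $m$-th power of $T := \sum_{i=j}^{n-1}\frac{1}{K+i}$ by the multinomial theorem. A strictly increasing tuple $(i_1,\ldots,i_m)$ is counted $m!$ times in $T^m$ (once per permutation), while tuples with repetitions contribute nonnegative amounts, so $S \le T^m/m!$. The standard upper Riemann-sum estimate gives $T \le \int_{j-1}^{n-1}\frac{dx}{K+x}=\ln\frac{K+n-1}{K+j-1}\le \ln\frac{K+n}{K+j-1}$, which yields the claimed upper bound after raising to the $m$-th power and multiplying by $K^m/N^m$.

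For the \textbf{lower bound}, the key step is the shift $k_l := i_l-(l-1)$, which turns the strict chain $i_1<\cdots<i_m$ into the weak chain $k_1\le\cdots\le k_m$ with $k_l\in[j,n-m]$. Since $\frac{1}{K+i_l}=\frac{1}{K+k_l+l-1}\ge \frac{1}{K+k_l+m-1}$ for $l\le m$, a further relabeling $k'_l := k_l+m-1 \in [j+m-1,n-1]$ gives
$$S \;\ge\; \sum_{j+m-1\le k'_1\le\cdots\le k'_m\le n-1}\prod_{l=1}^{m}\frac{1}{K+k'_l}.$$
Now expand $T':=\sum_{i=j+m-1}^{n-1}\frac{1}{K+i}$: grouping the $m$-tuples by their multisets gives $(T')^m=\sum_{\text{multisets}}\frac{m!}{\prod_v\mathrm{mult}(v)!}\prod_l\frac{1}{K+k'_l}\le m!\cdot\sum_{\text{multisets}}\prod_l\frac{1}{K+k'_l}$, because $\prod_v\mathrm{mult}(v)!\ge 1$. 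Hence the weakly increasing sum above is at least $(T')^m/m!$. Finally, the lower Riemann-sum estimate $T'\ge \int_{j+m-1}^{n}\frac{dx}{K+x}=\ln\frac{K+n}{K+j+m-1}$ closes the argument.

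The main obstacle is the lower bound, where one needs to couple two separate convexity-style inequalities: (i) the shift-and-monotonicity argument that converts the strict chain into a weak chain over the shifted range $[j+m-1,n-1]$, and (ii) the multinomial inequality $m!\cdot(\text{weak-order sum})\ge(\text{power of marginal sum})$ that comes from bounding multinomial coefficients by $m!$. The upper bound, by contrast, is essentially a single application of the multinomial expansion plus an integral estimate. Once the index bookkeeping in the substitution is done carefully, the rest is routine harmonic-sum integral approximation.
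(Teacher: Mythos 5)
Your proof is correct, and it takes a genuinely different route from the paper's. You correctly read the nested sum as the elementary symmetric polynomial, i.e.\ $C_m(j)=\frac{K^m}{N^m}\sum_{j\le i_1<\cdots<i_m\le n-1}\prod_l\frac{1}{K+i_l}$ (the paper's own recursion $C_{k+1}(j)=\sum_{i=j}^{n-1}\frac{K/N}{K+i}\,C_k(i+1)$ confirms this reading), and then sandwich it between $\frac{1}{m!}(T')^m$ and $\frac{1}{m!}T^m$ for two single harmonic sums, each estimated by one Riemann-sum comparison. The paper instead argues by induction on $m$, replacing the outer sum by an integral at each step and using the antiderivative of $\frac{1}{K+x}\ln^k\bigl(\frac{K+n}{K+c+x}\bigr)$ to climb from $\ln^k$ to $\ln^{k+1}$. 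Your version cleanly separates the three ingredients -- the $1/m!$ from permutation over/undercounting, the $\ln^m$ from the $m$-th power of one sum, and the offsets $j-1$ versus $j+m-1$ from the direction of the Riemann estimate and the strict-to-weak chain shift -- and it sidesteps a delicate point in the paper's inductive lower bound, where the comparison $\frac{1}{K+x}\ln^k\frac{K+n}{K+k+x}\ge\frac{1}{K+k+x}\ln^k\frac{K+n}{K+k+x}$ silently requires the logarithm to be nonnegative on the whole range of integration. One caveat, which applies equally to the paper's proof and to the statement itself: the last step of your lower bound (raising $T'\ge\ln\frac{K+n}{K+j+m-1}$ to the $m$-th power) needs $j+m-1\le n-1$, i.e.\ $m\le n-j$; when $m>n-j$ one has $C_m(j)=0$ while the claimed lower bound is positive for even $m$, so the lemma is implicitly restricted to $m\le n-j$, which is all that is used downstream.
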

\begin{proof}
    We first prove the upper bound via induction. Base Case: $n=1$
    \begin{align*}
        C_1(j) 
        & = \sum_{i=j}^{n-1}\frac{\frac{K}{N}}{K+i}\\
        & \leq \frac{K}{N}\int_{j-1}^{n}\frac{1}{K+x}dx\\
        & = \frac{K}{N}\ln\left(\frac{K+n}{K-1+j}\right)
    \end{align*}
    Assume inductive hypothesis is true for $m=k$.
    \begin{align*}
        C_{k+1}(j) 
        & = \sum_{i=j}^{n-1}\frac{\frac{K}{N}}{K+i} \cdot C_k(i+1)\\
        & \leq \frac{K}{N}\int_{j-1}^{n}\frac{1}{K+x}\cdot C_k(x+1)\\
        & \leq \frac{K^{k+1}}{N^{k+1}}\frac{1}{k!}\int_{j-1}^{n}\frac{1}{K+x}\cdot \ln^k\left(\frac{K+n}{K+x}\right)dx\\
        & \leq -\frac{K^{k+1}}{N^{k+1}}\frac{1}{(k+1)!}\cdot \ln^{k+1}\left(\frac{K+n}{K+x}\right)\bigg|^{n}_{j-1}\\
        & = \frac{K^{k+1}}{N^{k+1}}\frac{1}{(k+1)!}\ln^{k+1}\left(\frac{K+n}{K-1+j}\right).
    \end{align*}
    We now prove the lower bound also via induction. Base Case: $m=1$
    \begin{align*}
        C_1(j) 
        & = \sum_{i=j}^{n-1}\frac{\frac{K}{N}}{K+i}\\
        & \geq \frac{K}{N}\int_{j}^{n}\frac{1}{K+x}dx\\
        & = \frac{K}{N}\ln\left(\frac{K+n}{K+j}\right)
    \end{align*}
     Assume inductive hypothesis is true for $m=k$.
    \begin{align*}
        C_{k+1}(j) 
        & = \sum_{i=j}^{n-1}\frac{\frac{K}{N}}{K+i} \cdot C_k(i+1)\\
        & \geq \frac{K}{N}\int_{j}^{n}\frac{1}{K+x}\cdot C_k(x+1)\\
        & \geq \frac{K^{k+1}}{N^{k+1}}\frac{1}{k!}\int_{j}^{n}\frac{1}{K+x}\cdot \ln^k\left(\frac{K+n}{K+k+x}\right)dx\\
        & \geq -\frac{K^{k+1}}{N^{k+1}}\frac{1}{(k+1)!}\cdot \ln^{k+1}\left(\frac{K+n}{K+k+x}\right)\bigg|^{n}_{j}\\
        & =\frac{K^{k+1}}{N^{k+1}}\frac{1}{(k+1)!}\ln^{k+1}\left(\frac{K+k+n}{K+k+j}\right)\\
        & \geq \frac{K^{k+1}}{N^{k+1}}\frac{1}{(k+1)!}\ln^{k+1}\left(\frac{K+n}{K+k+j}\right).
    \end{align*}
    The result follows.
\end{proof}

\begin{lemma}\label{le:alt_terms}
    For all $i, n, K, N\in\mathbb{Z}_{++}$, if $2 \leq K \leq \sqrt{N}$ and $n \leq N$,  then
    $$\frac{1}{(2i)!}\frac{K^{2i}}{N^{2i}}\ln^{2i}\left(\frac{K+n}{K-1}\right)-\frac{1}{(2i+1)!}\frac{K^{2i+1}}{N^{2i+1}}\ln^{2i+1}\left(\frac{K+n}{K+2i}\right) \geq 0.$$
\end{lemma}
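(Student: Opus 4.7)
The plan is to reduce the claim to an elementary logarithm bound via monotonicity. First I would observe that when $n < 2i$, the argument $\frac{K+n}{K+2i}$ is less than $1$, so its $\ln^{2i+1}$ (an odd power) is negative; the subtracted term is then non-positive while the first term is manifestly non-negative, so the inequality holds trivially. This reduces attention to the regime $n \geq 2i$ in which both logarithms are non-negative.

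In that regime I would exploit the monotonicity of $x \mapsto K+n$ divided by $x$: since $K \geq 2$ gives $K - 1 \leq K + 2i$, we have $\ln\!\left(\frac{K+n}{K-1}\right) \geq \ln\!\left(\frac{K+n}{K+2i}\right) \geq 0$, so
\[
\frac{1}{(2i)!}\frac{K^{2i}}{N^{2i}}\ln^{2i}\!\left(\frac{K+n}{K-1}\right) \;\geq\; \frac{1}{(2i)!}\frac{K^{2i}}{N^{2i}}\ln^{2i}\!\left(\frac{K+n}{K+2i}\right).
\]
It therefore suffices to establish the weaker statement with $\frac{K+n}{K+2i}$ in both logs. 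Cancelling the common (positive) factor $\frac{1}{(2i)!}\frac{K^{2i}}{N^{2i}}\ln^{2i}\!\left(\frac{K+n}{K+2i}\right)$ reduces the task to showing
\[
1 \;\geq\; \frac{K}{(2i+1)N}\,\ln\!\left(\frac{K+n}{K+2i}\right).
\]

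The final step uses $\ln(1+x) \leq x$: because $K+2i \geq K$ and $n \leq N$,
\[
\ln\!\left(\frac{K+n}{K+2i}\right) \;\leq\; \ln\!\left(1+\tfrac{n}{K}\right) \;\leq\; \tfrac{n}{K} \;\leq\; \tfrac{N}{K},
\]
hence $\frac{K}{(2i+1)N}\ln\!\left(\frac{K+n}{K+2i}\right) \leq \frac{1}{2i+1} \leq 1$, which closes the argument.

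There is no real obstacle here; the claim is an elementary monotonicity-plus-$\ln(1+x)\leq x$ computation once the two cases ($n<2i$ and $n\geq 2i$) are separated. I note that the hypothesis $K \leq \sqrt{N}$ is not actually invoked by this argument — only $K \geq 2$ and $n \leq N$ are used — so that stronger assumption is presumably inherited from the context of Lemma \ref{le:dir_concen} rather than needed for the inequality itself.
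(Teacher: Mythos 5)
Your proof is correct. It reaches the same structural endpoint as the paper --- factor out $\tfrac{1}{(2i)!}\tfrac{K^{2i}}{N^{2i}}\ln^{2i}(\cdot)$ and verify a scalar inequality of the form $1 \geq \tfrac{K}{(2i+1)N}\ln(\cdot)$ --- but the route differs in two respects. The paper avoids your case split entirely by applying monotonicity to the \emph{subtracted} (odd-power) term: it starts from $1 - \tfrac{K}{(2i+1)N}\ln\bigl(\tfrac{K+n}{K-1}\bigr) \geq 0$, multiplies through by $\ln^{2i}\bigl(\tfrac{K+n}{K-1}\bigr) \geq 0$, and then replaces $\ln^{2i+1}\bigl(\tfrac{K+n}{K-1}\bigr)$ by the smaller $\ln^{2i+1}\bigl(\tfrac{K+n}{K+2i}\bigr)$, a substitution that is valid regardless of the sign of the latter because odd powers preserve order. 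You instead split on the sign of $\ln\bigl(\tfrac{K+n}{K+2i}\bigr)$ and apply monotonicity to the first (even-power) term. The second difference is the closing scalar bound: the paper must control $\ln\bigl(\tfrac{K+n}{K-1}\bigr)$ and does so via the condition $n \leq (K-1)e^{3N/K}-K$, whereas you only need to control the smaller $\ln\bigl(\tfrac{K+n}{K+2i}\bigr) \leq \ln(1+n/K) \leq n/K \leq N/K$, which is both simpler and makes your observation that $K \leq \sqrt{N}$ is never invoked transparent (the paper's version also survives without it, but less obviously). The only cosmetic point: your ``cancelling the common positive factor'' step degenerates when $n = 2i$, where the factor is zero; the target inequality is then $0 \geq 0$, so nothing breaks, but it is worth saying so explicitly rather than dividing by a quantity that may vanish.
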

\begin{proof}
    \begin{align*}
        0
        & \overset{(a)}{\leq} 1 - \frac{1}{2i+1}\frac{K}{N}\ln\left(\frac{K+n}{K-1}\right)\\
        & = \ln^{2i}\left(\frac{K+n}{K-1}\right)-\frac{1}{2i+1}\frac{K}{N}\ln^{2i+1}\left(\frac{K+n}{K-1}\right)\\
        & \leq \ln^{2i}\left(\frac{K+n}{K-1}\right)-\frac{1}{2i+1}\frac{K}{N}\ln^{2i+1}\left(\frac{K+n}{K+2i}\right)\\
        & = \frac{1}{(2i)!}\frac{K^{2i}}{N^{2i}}\ln^{2i}\left(\frac{K+n}{K-1}\right)-\frac{1}{(2i+1)!}\frac{K^{2i+1}}{N^{2i+1}}\ln^{2i+1}\left(\frac{K+n}{K+2i}\right)\\
    \end{align*}
    where $(a)$ follows for all $n \leq \left(K-1\right)e^{\frac{3N}{K}}-K$ which is implied by $n \leq N$ for $K \geq 2$.
\end{proof}

\begin{lemma}\label{le:multi_lb}
    For all $n, K, N\in\mathbb{Z}_{++}$, if $2 \leq K \leq \sqrt{N}$ and $n \leq N$, then 
    $$\prod_{i=0}^{n-1}\left(1 - \frac{\frac{K}{N}}{K+i}\right) \geq 1 - \frac{K}{N}\ln\left(1 + \frac{n}{K}\right).$$
\end{lemma}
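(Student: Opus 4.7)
The plan is to expand the product as an alternating sum of elementary symmetric polynomials and then pair consecutive terms using the bounds of Lemmas~\ref{le:multinomial_terms} and~\ref{le:alt_terms}. Setting $a_i = \frac{K/N}{K+i}$, the product expands as
$$\prod_{i=0}^{n-1}(1 - a_i) = \sum_{m=0}^{n}(-1)^m C_m(0),$$
where $C_m(0)$ is the degree-$m$ elementary symmetric polynomial in $(a_0,\ldots,a_{n-1})$, matching the recursive definition in Lemma~\ref{le:multinomial_terms} at $j=0$. A key observation is that $\frac{K}{N}\ln(1+n/K)$ is exactly the lower bound $\check{C}_1 := \frac{K}{N}\ln\frac{K+n}{K}$ supplied by Lemma~\ref{le:multinomial_terms} for $C_1(0)$, so the target inequality reads $\sum_{m=0}^n (-1)^m C_m(0) \ge 1 - \check{C}_1$.

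Next, I would regroup the alternating expansion so that the $m=0$ and $m=1$ terms are isolated and the remaining terms form pairs $(C_{2i}(0) - C_{2i+1}(0))$ for $i \ge 1$. Using the bounds $C_m(0) \le \hat{C}_m := \frac{1}{m!}(K/N)^m \ln^m\frac{K+n}{K-1}$ and $C_m(0) \ge \check{C}_m := \frac{1}{m!}(K/N)^m \ln^m\frac{K+n}{K+m-1}$ from Lemma~\ref{le:multinomial_terms}, together with Lemma~\ref{le:alt_terms} (which gives $\hat{C}_{2i} \ge \check{C}_{2i+1}$ for $i \ge 0$), each paired contribution is made non-negative. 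Notably the $i = 0$ instance of Lemma~\ref{le:alt_terms} is precisely $1 - \frac{K}{N}\ln(1 + n/K) \ge 0$, confirming that the claimed lower bound is at least non-trivial and serving as the anchor of the argument.

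The main technical obstacle is reconciling the gap between $C_1(0)$ and its lower bound $\check{C}_1$: since $C_1(0) \ge \check{C}_1$ with strict inequality in general, the Weierstrass-type bound $1 - C_1(0)$ is weaker than required, and the higher-order terms must absorb the Euler--Maclaurin-type discrepancy
$$C_1(0) - \check{C}_1 = \frac{K}{N}\sum_{i=0}^{n-1}\frac{1}{(K+i)(K+i+1)},$$
obtained by telescoping $\frac{1}{K+i} - \int_{i}^{i+1}\frac{dx}{K+x}$. Delicately choosing which of $\hat{C}_m$ or $\check{C}_m$ to apply inside each pair, and exploiting $2 \le K \le \sqrt{N}$, should allow each pair $C_{2i}(0) - C_{2i+1}(0)$ to contribute enough slack to cover this residual.

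If the pairing argument proves too delicate, I would fall back to induction on $n$ using the recursion $P_n = (1 - a_{n-1})P_{n-1}$ alongside $\ln(1 + n/K) - \ln(1 + (n-1)/K) = \ln\bigl(1 + \tfrac{1}{K+n-1}\bigr)$. After algebraic rearrangement, the inductive step reduces to a one-variable analytic inequality of the form $\frac{K}{N}\ln\bigl(1 + \tfrac{1}{K+n-1}\bigr) \ge \tfrac{K/N}{K+n-1}\,P_{n-1} - (\text{slack from IH})$, which can be handled by a second-order Taylor bound $\ln(1+x) \ge x - x^2/2$ together with the hypothesis $K \le \sqrt{N}$ that keeps the quadratic correction manageable.
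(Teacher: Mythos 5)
Your plan follows essentially the same route as the paper (alternating expansion into elementary symmetric polynomials $C_m(0)$, bounded term-by-term via Lemmas~\ref{le:multinomial_terms} and~\ref{le:alt_terms}), and you have correctly located the crux: the target $1-\frac{K}{N}\ln(1+n/K)=1-\check C_1$ exceeds the Weierstrass bound $1-C_1(0)$, so the even-order terms must absorb the positive residual $C_1(0)-\check C_1$. But that absorption step --- which is the entire content of the lemma --- is left as ``should allow,'' and it cannot be carried out, because the statement as written is false. For $n=1$ the product is exactly $1-\frac1N$, there are no higher-order terms to call on, and $\frac1N>\frac{K}{N}\ln\bigl(1+\frac1K\bigr)$ since $\ln(1+x)<x$; the inequality also fails numerically for, e.g., $n=2$, $K=2$, $N=4$, where the left side is $0.625$ and the right side is $1-\tfrac12\ln 2\approx 0.6534$. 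Consequently neither your pairing argument nor your inductive fallback can be completed for all $n\le N$; the claim would have to be weakened (an additive lower-order correction, or a restriction to $n$ large enough that the even terms dominate the residual). For what it is worth, the paper's own proof has the same defect: its step $(a)$ substitutes the subtracted odd terms by their \emph{lower} bounds and the added even terms by their \emph{upper} bounds from Lemma~\ref{le:multinomial_terms}, which produces an upper bound on the product, not the claimed lower bound.

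A secondary issue: you invoke Lemma~\ref{le:alt_terms} to make ``each paired contribution non-negative,'' but that lemma compares $\hat C_{2i}$ (an upper bound on $C_{2i}(0)$) with $\check C_{2i+1}$ (a lower bound on $C_{2i+1}(0)$); the inequality $\hat C_{2i}\ge\check C_{2i+1}$ says nothing about the sign of $C_{2i}(0)-C_{2i+1}(0)$. To conclude that difference is non-negative you would need $\check C_{2i}\ge\hat C_{2i+1}$, or more simply a direct Maclaurin-type estimate $C_{m+1}(0)\le\frac{C_1(0)\,C_m(0)}{m+1}\le C_m(0)$ using $C_1(0)\le n/N\le 1$ --- and even then the regrouped sum only recovers $1-C_1(0)$, i.e.\ Weierstrass, which, as noted above, is strictly weaker than what the lemma asserts.
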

\begin{proof}
    \begin{align*}
        \prod_{i=0}^{n-1}\left(1 - \frac{\frac{K}{N}}{K+i}\right)
        & \overset{(a)}{\geq} 1 - \sum_{i=0}^{\lfloor\frac{n-1}{2}\rfloor}\frac{1}{(2i+1)!}\frac{K^{2i+1}}{N^{2i+1}}\ln^{2i+1}\left(\frac{K+r}{K+2i}\right)\\
        &\quad + \sum_{i=1}^{\lceil\frac{n-1}{2}\rceil}\frac{1}{(2i)!}\frac{K^{2i}}{N^{2i}}\ln^{2i}\left(\frac{K+r}{K-1}\right)\\
        & \overset{(b)}{\geq} 1 - \frac{K}{N}\ln\left(1 + \frac{n}{K}\right)\\
        &\quad + \sum_{i=1}^{\lfloor\frac{n-1}{2}\rfloor}\frac{1}{(2i)!}\frac{K^{2i}}{N^{2i}}\ln^{2i}\left(\frac{K+n}{K-1}\right)-\frac{1}{(2i+1)!}\frac{K^{2i+1}}{N^{2i+1}}\ln^{2i+1}\left(\frac{K+n}{K+2i}\right)\\
        & \overset{(c)}{\geq} 1 - \frac{K}{N}\ln\left(1 + \frac{n}{K}\right),
    \end{align*}
    where $(a)$ follows from Lemma \ref{le:multinomial_terms}, and $(b)$ follows from Lemma \ref{le:alt_terms}
\end{proof}

\subsection{Lemmas pertaining to Dirichlet Multinomial}
The following result upper bounds the expected number of unique classes drawn from a Dirichlet-multinomial distribution with $n$ draws and $\alpha = [K/N, \ldots, K/N]\in\Re^N$.

\begin{lemma}
    \label{le:num_unique}
    For all $n, K, N \in \mathbb{Z}_{++}$ s.t. $K \leq \sqrt{N}$ and $n \leq N$, if $\proxytheta' \sim {\rm DirMult}(n, \alpha)$ for $\alpha = \left[K/N, \ldots, K/N\right] \in \Re^{N}$, then
    $$\E\left[\sum_{i=1}^{N} \mathbbm{1}_{[\proxytheta'_i > 0]}\right] \leq K\ln\left(1 + \frac{n}{K}\right).$$
\end{lemma}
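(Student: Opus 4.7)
The plan is to reduce the expected count to a single per-class probability via linearity and symmetry, compute that probability in closed form using the Polya urn / Beta marginal representation of the Dirichlet-Multinomial, and then invoke the combinatorial inequality Lemma \ref{le:multi_lb} that has already been proved in the excerpt.

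First I would write
$$\E\left[\sum_{i=1}^{N}\mathbbm{1}_{[\tilde\theta'_i>0]}\right]\ =\ \sum_{i=1}^{N}\Pr(\tilde\theta'_i>0)\ =\ N\cdot\Pr(\tilde\theta'_1>0),$$
where the second equality uses the symmetry of the Dirichlet parameter $\alpha=[K/N,\ldots,K/N]$. So it suffices to upper bound $\Pr(\tilde\theta'_1>0)$, equivalently to lower bound $\Pr(\tilde\theta'_1=0)$, the probability that class $1$ is never drawn in $n$ samples.

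Next I would compute $\Pr(\tilde\theta'_1=0)$ in closed form. Conditioning on $p\sim\mathrm{Dirichlet}(\alpha)$, the probability that class $1$ is never drawn is $(1-p_1)^n$, and $p_1\sim\mathrm{Beta}(K/N,K-K/N)$. Using the Beta integral,
$$\Pr(\tilde\theta'_1=0)\ =\ \E[(1-p_1)^n]\ =\ \frac{B(K/N,K-K/N+n)}{B(K/N,K-K/N)}\ =\ \prod_{j=0}^{n-1}\frac{K-K/N+j}{K+j}\ =\ \prod_{j=0}^{n-1}\left(1-\frac{K/N}{K+j}\right),$$
where the third equality follows from $\Gamma(x+n)/\Gamma(x)=\prod_{j=0}^{n-1}(x+j)$. (Equivalently, one can derive the same product directly from the Polya urn sequential sampling representation of the Dirichlet-Multinomial.)

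Finally, applying Lemma \ref{le:multi_lb} to this product, whose hypotheses $2\le K\le\sqrt{N}$ and $n\le N$ match ours, yields
$$\Pr(\tilde\theta'_1=0)\ \geq\ 1-\frac{K}{N}\ln\left(1+\frac{n}{K}\right),\qquad\text{hence}\qquad \Pr(\tilde\theta'_1>0)\ \leq\ \frac{K}{N}\ln\left(1+\frac{n}{K}\right),$$
and multiplying by $N$ gives the claimed bound. The only nontrivial step is identifying $\Pr(\tilde\theta'_1=0)$ with the product $\prod_{j=0}^{n-1}(1-(K/N)/(K+j))$, but this is a standard Beta-integral computation (or equivalently the Polya urn formula) and fits the exact form consumed by Lemma \ref{le:multi_lb}, so the heavy combinatorial work has effectively already been done upstream.
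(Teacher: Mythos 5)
Your proposal is correct and follows essentially the same route as the paper: reduce to $N\cdot\Pr(\tilde{\theta}'_1>0)$ by symmetry, identify $\Pr(\tilde{\theta}'_1=0)$ with the product $\prod_{j=0}^{n-1}\bigl(1-\tfrac{K/N}{K+j}\bigr)$ (the paper does this via the aggregation property and the Dirichlet-multinomial pmf rather than the Beta integral, but the computation is the same), and then invoke Lemma \ref{le:multi_lb}. The only caveat, which the paper's own proof shares, is that Lemma \ref{le:multi_lb} formally requires $K\geq 2$ while the statement being proved only assumes $K\leq\sqrt{N}$, so strictly speaking that hypothesis should be added or the case $K=1$ handled separately.
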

\begin{proof}
    \begin{align*}
        \E\left[\sum_{i=1}^{N} \mathbbm{1}_{[\proxytheta'_i > 0]}\right]
        & = N\cdot\Pr(\proxytheta'_i > 0)\\
        & \overset{(a)}{=} N\cdot\left(1 - \Pr(\proxytheta'_1=0, \proxytheta'_2+\cdots+\proxytheta'_N = n)\right)\\
        & = N\cdot\left(1 - \frac{\Gamma(K)\Gamma(n+1)}{\Gamma(n+K)}\cdot\frac{\Gamma\left(\frac{K}{N}\right)}{\Gamma\left(\frac{K}{N}\right)\Gamma\left(1\right)}\cdot\frac{\Gamma\left(n+\frac{K}{N}(N-1)\right)}{\Gamma\left(\frac{K}{N}(N-1)\right)\Gamma\left(n+1\right)}\right)\\
        & = N\cdot\left(1 -\frac{\Gamma(K)}{\Gamma(n+K)}\cdot\frac{\Gamma\left(n+K-\frac{K}{N}\right)}{\Gamma\left(K-\frac{K}{N}\right)}\right)\\
        & = N\cdot\left(1 -\frac{\prod_{i=0}^{n-1}\left(K-\frac{K}{N}+i\right)}{\prod_{i=0}^{n-1}K+i}\right)\\
        & = N\cdot\left(1 - \prod_{i=0}^{n-1}\left(1 - \frac{\frac{K}{N}}{K+i}\right)\right)\\
        & \overset{(b)}{\leq} N\cdot\left(1 - \left(1 - \frac{K}{N}\ln\left(1 + \frac{n}{K}\right)\right)\right)\\
        & = K\ln\left(1 + \frac{n}{K}\right).
    \end{align*}
    where $(a)$ follows from the aggregation property of the Dirichlet-multinomial distribution and $(b)$ follows from Lemma \ref{le:multi_lb}.
\end{proof}
Note that this upper bound is \emph{independent} of $N$. In the next section, we will apply the dominated convergence theorem to bound the number of unique basis functions drawn by our learning model.

The extension to a Dirichlet Process $(N \to\infty)$ follows trivially.
\begin{lemma}\label{le:num_unique_inf}
    For all $n, K \in \mathbb{Z}_{++}$, if $\theta$ is distributed according to a Dirichlet process with base distribution ${\rm Uniform}(\sphere)$ with scale parameter $K$ and $\tilde{\theta} \sim {\rm Multinomial}(\theta)$, then
    $$\E\left[\sum_{w\in\mathcal{W}}\mathbbm{1}_{[\proxytheta_w > 0]}\right] \leq K\ln\left(1+\frac{n}{K}\right).$$
\end{lemma}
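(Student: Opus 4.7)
The plan is to obtain Lemma \ref{le:num_unique_inf} as the $N\to\infty$ limit of Lemma \ref{le:num_unique}. Since the upper bound $K\ln(1+n/K)$ in Lemma \ref{le:num_unique} does not depend on $N$, it suffices to exhibit a sequence of symmetric finite-dimensional Dirichlet-Multinomials whose number of distinct categories converges in expectation to the number of distinct atoms drawn from the Dirichlet process.

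For the finite approximation, I would fix $N$ large enough that $K\leq \sqrt{N}$ and $n\leq N$, draw $(p_1^{(N)},\ldots,p_N^{(N)})\sim \mathrm{Dirichlet}(K/N,\ldots,K/N)$ together with independent atoms $v_1,\ldots,v_N\overset{iid}{\sim} \mathrm{Uniform}(\sphere)$, and form the random probability measure $\theta^{(N)} = \sum_{i=1}^{N} p_i^{(N)}\delta_{v_i}$. A standard construction of the Dirichlet process, via a sequence of partitions of $\sphere$ of vanishing diameter, shows that $\theta^{(N)}$ converges weakly to $\theta$. If one samples $n$ times from $\theta^{(N)}$ via category indices $c_1,\ldots,c_n\overset{iid}{\sim}\mathrm{Categorical}(p_1^{(N)},\ldots,p_N^{(N)})$ and reports $v_{c_1},\ldots,v_{c_n}$, the atoms $v_i$ are almost surely pairwise distinct because the base distribution has no point masses; hence the number of unique sampled atoms equals the number of unique indices, which by Lemma \ref{le:num_unique} has expectation at most $K\ln(1+n/K)$.

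To pass to the limit, let $U_N$ denote the number of unique atoms sampled from $\theta^{(N)}$ and $U$ the analogous count for the Dirichlet process. Coupling the two processes via their exchangeable predictive probability functions --- the finite-$N$ Polya urn on the one hand and the Chinese Restaurant Process on the other --- one verifies step-by-step that the probability of producing a previously-unseen atom at step $i$ converges to the CRP rate $K/(K+i-1)$ as $N\to\infty$. This yields convergence in distribution of $U_N$ to $U$, and since $0\leq U_N\leq n$ almost surely, bounded convergence upgrades this to $\E[U_N]\to\E[U]$, giving the desired bound.

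The main obstacle is the convergence step: the finite-$N$ predictive rule for drawing a ``new'' atom has a slightly less transparent form than the clean CRP rule, so some bookkeeping is needed to verify pointwise convergence of the one-step conditional probabilities and to assemble the step-by-step couplings into a joint coupling of all $n$ samples. Once that is in place, the uniform boundedness $U_N\leq n$ makes the final expectation convergence immediate.
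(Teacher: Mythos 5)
Your proposal follows essentially the same route as the paper: approximate the Dirichlet process by symmetric $\mathrm{Dirichlet}(K/N,\ldots,K/N)$ distributions, invoke Lemma \ref{le:num_unique} for the $N$-independent bound $K\ln(1+n/K)$, and pass to the limit using the uniform bound $U_N\leq n$ (the paper applies dominated convergence after conditioning on the atom set $\mathcal{W}$, while you justify the limit via a coupling of the finite-$N$ Polya urn with the Chinese Restaurant Process). Your treatment of the convergence step is more explicit than the paper's, but the argument is the same in substance and is correct.
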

\begin{proof}
    \begin{align*}
        \E\left[\sum_{w\in\mathcal{W}}\mathbbm{1}_{[\proxytheta_w > 0]}\right]
        & = \E\left[\E\left[\sum_{w\in\mathcal{W}}\mathbbm{1}_{[\proxytheta_w>0 ]}\bigg|\mathcal{W}\right]\right]\\
        & \overset{(a)}{=} \E\left[\E\left[\lim_{N\rightarrow\infty}\sum_{w\in\tilde{\mathcal{W}}}\mathbbm{1}_{[\proxytheta_w' > 0]}\bigg|\mathcal{W}\right]\right]\\
        & \overset{(b)}{=}
        \lim_{N\rightarrow\infty}\E\left[\E\left[\sum_{w\in\tilde{\mathcal{W}}}\mathbbm{1}_{[\proxytheta_{w}' > 0]}\bigg|\mathcal{W}\right]\right]\\
        & \overset{(c)}{\leq} \lim_{N\rightarrow\infty}K\ln\left(1+\frac{n}{K}\right)\\
        & = K\ln\left(1+\frac{n}{K}\right),
    \end{align*}
    where in $(a)$, $\mathcal{W}_N$ is a subset of the first $N$ elements of $\mathcal{W}$ and $\tilde{X}_N$ is ${\rm DirMult}(n, \alpha_N)$ where $\alpha_N = [K/N, \ldots, K/N] \in\Re^N$ and $\mathcal{W}_N$ is the set of classes, $(b)$ follows from the dominated convergence theorem since $|\sum_{w\in\tilde{\mathcal{W}}}\mathbbm{1}_{[\proxytheta_w' > 0]}|\leq n$, and $(c)$ follows from Lemma \ref{le:num_unique}.
\end{proof}

\dirMulEnt*
\begin{proof}
    Let $N_m$ denote the number of \emph{unique} classes from the $m$ independent categorical samples $\tilde{A}_1, \ldots, \tilde{A}_m$.  Then,
    \begin{align*}
        \H(\tilde{A})
        & \overset{(a)}{\leq} \E\left[N_m \cdot \left(\ln(m) + \ln(|\sphere_\epsilon|)\right) \right]\\
        & \overset{(b)}{\leq} K\ln\left(1 + \frac{m}{K}\right)\left(\ln m + d\ln\frac{3}{{\epsilon}^2}\right)\\
    \end{align*}
    where $(a)$ follows from Theorem \ref{th:entropy_code} and the fact that a realization $\tilde{A}$ can be mapped to a codeword of length $N_m (\ln(2m) + \ln(|\sphere_\epsilon|))$ by using $\ln(m)$ nats to encode the number of times a class was drawn and $\ln(|\sphere_\epsilon|)$ nats to encode the class and $(b)$ follows from Lemma \ref{le:num_unique_inf} and the fact that $|\sphere_{\epsilon}| \leq (3/{\epsilon}^2)^d$.
\end{proof}

\end{document}